\newcommand{\rr}{\mathbb{R}}
\newcommand{\ical}{\mathcal{I}}
\newcommand{\scal}{\mathcal{S}}
\newcommand{\acal}{\mathcal{A}}
\newcommand{\gcal}{\mathcal{G}}
\newcommand{\wtr}{\widetilde R}
\newcommand{\wtrs}{\widetilde r}
\newcommand{\polar}{\mathop{Polar}}
\newcommand{\norm}[1]{\left\|#1\right\|}
\newcommand{\inner}[1]{\left\langle#1\right\rangle}
\newcommand{\ceil}[1]{\left\lceil#1\right\rceil}
\newcommand{\rbra}[1]{\left(#1\right)}
\newcommand{\cbra}[1]{\left\{#1\right\}}
\def\argmin{\mathop{\rm arg\,min}\limits}
\def\min{\mathop{\rm min}\limits}
\def\max{\mathop{\rm max}\limits}
\let\emptyset\varnothing
\begin{document}
\pagestyle{headings}
\mainmatter

%
%
\def\DAGM12SubNumber{109}  

%
%
\title{Robust PCA: Optimization of the Robust Reconstruction Error over the Stiefel Manifold}

%
%
%
\titlerunning{Robust PCA: Optimization of the Robust Reconstruction Error}
\authorrunning{A. Podosinnikova, S. Setzer, and M. Hein}
\author{Anastasia Podosinnikova, Simon Setzer, and Matthias Hein}
\institute{Computer Science Department, Saarland University, Saarbr\"ucken, Germany}

\maketitle

\begin{abstract}
It is well known that Principal Component Analysis (PCA) is strongly affected by outliers and a lot of effort has been put into robustification of PCA. In this paper we present a new algorithm for robust PCA minimizing the trimmed reconstruction error. By directly minimizing over the Stiefel manifold, we avoid  deflation as often used by projection pursuit methods. In distinction to other methods for robust PCA, our method has no free parameter and is computationally very efficient. We illustrate the performance on various datasets including an application to background modeling and subtraction. Our method performs better or similar to current state-of-the-art methods while being faster.
\end{abstract}

\section{Introduction}
PCA is probably the most common tool for exploratory data analysis, dimensionality reduction and clustering, e.g.,~\cite{Jol2002}. It can either be seen as finding the best low-dimensional subspace approximating the data or as finding the subspace of highest variance. However, due to the fact that the variance is not robust, PCA can be strongly influenced by outliers. Indeed, even one outlier can change the principal components (PCs) drastically. This phenomenon motivates the development of robust PCA methods which recover the PCs of the uncontaminated data. This problem received a lot of attention in the statistical community and recently became a problem of high interest in machine learning.

In the statistical community, two main approaches to robust PCA have been proposed. The first one is based on the robust estimation of the covariance matrix, e.g.,~\cite{HamEtAl1986},~\cite{HubRon2009}. Indeed, having found a robust covariance matrix one can determine robust PCs by performing the eigenvalue decomposition of this matrix. However, it has been shown that robust covariance matrix estimators with desirable properties, such as positive semidefiniteness and affine equivariance, have a breakdown point\footnote{The breakdown point~\cite{HubRon2009} of a statistical estimator is informally speaking the fraction of points which can be arbitrarily changed and the estimator is still well defined.} upper bounded by the inverse of the dimensionality~\cite{HamEtAl1986}. The second approach is the so called projection-pursuit~\cite{Hub1985},~\cite{LiChe1985}, where one maximizes a robust scale measure, instead of the standard deviation,  over all possible directions. Although, these methods have the best possible breakdown 
point of~$0.5$, they lead to non-convex, typically, non-smooth problems and current state-of-the-art are greedy search algorithms~\cite{CroEtAl2007}, which show poor performance in high dimensions. Another disadvantage is that robust PCs are computed one by one using deflation techniques~\cite{Mac2009}, which often leads to poor results for higher PCs.

In the machine learning and computer vision communities, matrix factorization approaches to robust PCA were mostly considered, where one looks for a decomposition of a data matrix into a low-rank part and a sparse part, e.g.,~\cite{CanEtAl2009},~\cite{MatGia2012},~\cite{MacTro2011},~\cite{XuCara2012}. The sparse part is either assumed to be scattered uniformly~\cite{CanEtAl2009} or it is assumed to be row-wise sparse corresponding to the model where an entire observation is corrupted and discarded. While some of these methods have strong theoretical guarantees, in practice, they depend on a regularization parameter which is non-trivial to choose as robust PCA is an unsupervised problem and default choices, e.g.,~\cite{CanEtAl2009},~\cite{MacTro2011}, often do not perform well as we discuss in Section~\ref{sec:exp}. Furthermore, most of these methods are slow as they have to compute the SVD of a matrix of the size of the data matrix at each iteration.

As we discuss in Section~\ref{sec:rpca}, our formulation of robust PCA is based on the minimization of a robust version of the reconstruction error over the Stiefel manifold, which induces orthogonality of robust PCs. This formulation has multiple advantages. First, it has the maximal possible breakdown point of $0.5$ and the interpretation of the objective is very simple and requires no parameter tuning in the default setting. In Section~\ref{sec:trpca}, we propose a new fast TRPCA algorithm for this optimization problem. Our algorithm computes both orthogonal PCs and a robust center, hence, avoiding the deflation procedure and preliminary robust centering of data. While our motivation is similar to the one of~\cite{MatGia2012}, our optimization scheme is completely different. In particular, our formulation requires no additional parameter.

\section{Robust PCA}\label{sec:rpca}
\textit{Notation.} All vectors are column vectors and $I_p \in \rr^{p\times p}$ denotes the identity matrix. We are given data $X\in\rr^{n\times p}$ with $n$ observations in $\rr^p$ (rows correspond to data points). We assume that the data contains $t$ true observations $T\in\rr^{t\times p}$ and $n-t$ outliers $O\in\rr^{n-t \times p}$ such that $X=T\cup O$ and $T\cap O\ne\emptyset$. To be able to distinguish true data from outliers, we require the standard  in robust statistics assumption, that is $t\ge \ceil{\frac{n}{2}}$. The Stiefel manifold is denoted as  $\scal_k =\cbra{U\in\rr^{p \times k}\;|\; U^{\top} U=I}$ (the set of orthonormal $k$-frames in $\rr^p$). 

\textit{PCA.} 
Standard PCA~\cite{Jol2002} has two main interpretations. One can either see it as finding the $k$-dimensional subspace of maximum variance in the data or the $k$-dimensional affine subspace 
with minimal reconstruction error. In this paper we are focusing on the second interpretation. Given data $X\in\rr^{n\times p}$, the goal is to find the offset $m\in\rr^p$ and $k$ principal components $(u_1,\ldots,u_k)=U \in \scal_k$, which describe $\acal(m,U)=\cbra{z\in\rr^p\;\big|\; z= m+\sum_{j=1}^k s_j u_j,\;s_j\in\rr}$, the $k$-dimensional affine subspace, so that they minimize the reconstruction error
\begin{equation}\label{pca}
\cbra{\hat m, \hat U}=\argmin_{m\in\rr^p,\;U \in \scal_k,\;z_i\in\acal(m,U)} \;\frac{1}{n}\sum_{i=1}^n\norm{z_i - x_i}_2^2.
\end{equation}
It is well known that $\hat m=\frac{1}{n}\sum_{i=1}^n x_i$,  and the optimal matrix $\hat U \in \scal_k$ is generated by the top $k$ eigenvectors of the empirical covariance matrix. As $U \in \scal_k$ is an orthogonal projection, an equivalent formulation of~\eqref{pca} is given by 
\begin{equation}\label{pcare}
\cbra{\hat m,\hat U}=\argmin_{m\in\rr^p,\; U\in\scal_k}\frac{1}{n}\sum_{i=1}^n\norm{\rbra{UU^{\top}-I}\rbra{x_i-m}}_2^2.
\end{equation}

\textit{Robust PCA.} 
When the data $X$ does not contain outliers ($X=T$), we refer to the outcome of standard PCA, e.g.,~\eqref{pcare}, computed for the true data $T$ as $\{\hat m_T,\hat U_T\}$. When there are some outliers in the data $X$, i.e. $X=T\cup O$, the result $\{\hat m,\hat U\}$ of PCA can be significantly different from $\{\hat m_T,\hat U_T\}$ computed for the true data $T$. The reason is the non-robust squared $\ell_2$-norm involved in the formulation, e.g.,~\cite{HamEtAl1986},~\cite{HubRon2009}. It is well known that PCA has a breakdown point of zero, that is a single outlier can already distort the components arbitrarily. As outliers are frequently present in applications, robust versions of PCA are crucial for data analysis with the goal of recovering the true PCA solution $\{\hat m_T,\hat U_T\}$ from the contaminated data $X$. 

As opposed to standard PCA, robust formulations of  PCA based on the maximization of the variance (the projection-pursuit approach as extension of~\eqref{pca}), eigenvectors of the empirical covariance matrix (construction of a robust covariance matrix), or the minimization of the reconstruction error (as extension of~\eqref{pcare}) are not equivalent. Hence, there is no universal approach to robust PCA and the choice can depend on applications and assumptions on outliers. Moreover, due to the inherited non-convexity of standard PCA, they lead to NP-hard problems. The known approaches for robust PCA either follow to some extent greedy/locally optimal optimization techniques, e.g.,~\cite{CroEtAl2007},~\cite{LiChe1985},~\cite{TorBla2001},~\cite{XuCara2013}, or compute convex relaxations, e.g.,~\cite{CanEtAl2009},~\cite{MatGia2012},~\cite{MacTro2011},~\cite{XuCara2012}. 

In this paper we aim at a method for robust PCA based on the minimization of a robust version of the reconstruction error and adopt the classical outlier model where entire observations (corresponding to rows in the data matrix $X$) correspond to outliers. 
In order to introduce the trimmed reconstruction error estimator for robust PCA, we employ the analogy with the least trimmed squares estimator~\cite{Rou1984} for robust regression. We denote by $r_i(m,U)=\norm{\rbra{UU^{\top}-I}\rbra{x_i-m}}_2^2$ the reconstruction error of observation $x_i$ for the given affine subspace parameterized by $(m,U)$. Then the trimmed reconstruction error is defined to be the sum of the $t$-smallest reconstruction errors $r_i(m,U)$,
\begin{equation}\label{tre}
R(m,U)=\frac{1}{t}\sum_{i=1}^t r_{(i)}(m,U),
\end{equation}
where $r_{(1)}(m,U)\le\dots\le r_{(n)}(m,U)$ are in nondecreasing order and  $t$, with $\ceil{\frac{n}{2}}\leq t\leq n$, should be a lower bound on the number of true examples $T$. If such an estimate is not available as it is common in unsupervised learning, one can set by default $t=\ceil{\frac{n}{2}}$. With the latter choice it is straightforward to see that the corresponding PCA estimator has the maximum possible breakdown point of $0.5$, that is up to $50\%$ of the data points can be arbitrarily corrupted. With the default choice our method has no free parameter except the rank $k$.

The minimization of the trimmed reconstruction error~\eqref{tre} leads then to a simple and intuitive formulation of robust PCA 
\begin{equation}\label{rpca}
\cbra{m^*,U^*}=\argmin_{{m\in\rr^p,\;U\in\scal_k}}R(m,U) =\argmin_{{m\in\rr^p,\;U\in\scal_k}}\frac{1}{t}\sum_{i=1}^tr_{(i)}(m,U).
\end{equation}
Note that the estimation of the subspace $U$ and the center $m$ is done jointly. This is in contrast to~\cite{CanEtAl2009},~\cite{CroEtAl2007},~\cite{LiChe1985},~\cite{MacTro2011},~\cite{XuCara2013},~\cite{XuCara2012}, where the data has to be centered by a separate robust method which can lead to quite large errors in the estimation of the true PCA components. The same criterion~\eqref{rpca} has been proposed by~\cite{MatGia2012}, see also~\cite{XuYui1995} for a slightly different version. While both papers state that the direct minimization of~\eqref{rpca} would be desirable,~\cite{MatGia2012} solve a relaxation of~\eqref{rpca} into a convex problem while~\cite{XuYui1995} smooth the problem and employ deterministic annealing. Both approaches introduce an additional regularization parameter controlling the number of outliers. It is non-trivial to choose this parameter.

\section{TRPCA: Minimizing Trimmed Reconstruction Error on the Stiefel Manifold}\label{sec:trpca}
In this section, we introduce TRPCA, our algorithm for the minimization of the trimmed reconstruction error~\eqref{rpca}. We first reformulate the objective of~\eqref{rpca} as it is neither convex, nor concave, nor smooth, even if $m$ is fixed. While the resulting optimization problem is still non-convex, we propose an efficient optimization scheme on the Stiefel manifold with monotonically decreasing objective.
Note that all proofs of this section can be found in the supplementary material~\cite{Suppl}.
\subsection{Reformulation and First Properties}\label{sec:relax}
The reformulation of~\eqref{rpca} is based on the following simple identity. Let $\widetilde{x}_i = x_i-m$ and $U \in \scal_k$, then
\begin{equation}
 r_i(m,U)=\norm{\rbra{UU^{\top}-I}\rbra{x_i-m}}_2^2 = -\norm{U^{\top} \widetilde{x}_i}^2_2 + \norm{\widetilde{x}_i}^2_2 := \wtrs_{i}(m,U).
\end{equation}
The equality holds only on the Stiefel manifold. Let $\wtrs_{(1)}(m,U)\leq  \ldots \leq \widetilde{r}_{(n)}(m,U)$, then we
get the alternative formulation of~\eqref{rpca},
\begin{equation}\label{rpca2}
\cbra{m^*,U^*}=\argmin_{{m\in\rr^p,\;U\in\scal}} \wtr(m,U) = \frac{1}{t}\sum_{i=1}^t \widetilde{r}_{i}(m,U).
\end{equation}

While~\eqref{rpca2} is still non-convex, we show in the next proposition that for fixed $m$ the function $\wtr(m,U)$ is concave on $\rr^{p \times k}$. This will allow us to employ a simple optimization technique based on linearization of this concave function.
\begin{proposition}\label{concavity}
For fixed $m \in \rr^p$ the function $\wtr(m,U): \rr^{p \times k} \rightarrow \rr$ defined in~\eqref{rpca2} is concave in $U$.
\end{proposition}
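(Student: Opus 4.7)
The plan is to exploit two elementary facts: first, each individual term $\widetilde r_i(m,U)$ is concave in $U$ for fixed $m$; second, the operation ``sum of the $t$ smallest values'' can be rewritten as a pointwise minimum over a finite family, and the pointwise minimum preserves concavity.

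First I would observe that, with $\widetilde x_i = x_i - m$ held fixed, the map $U \mapsto \widetilde r_i(m,U) = \|\widetilde x_i\|_2^2 - \|U^\top \widetilde x_i\|_2^2$ is a constant minus a convex quadratic form in $U$, hence concave on $\rr^{p\times k}$. Consequently, for any subset $S \subseteq \{1,\ldots,n\}$ with $|S|=t$, the sum
\begin{equation*}
f_S(U) := \sum_{i \in S} \widetilde r_i(m,U)
\end{equation*}
is concave in $U$, being a finite sum of concave functions.

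Next I would use the identity
\begin{equation*}
\sum_{i=1}^t \widetilde r_{(i)}(m,U) \;=\; \min_{\substack{S \subseteq \{1,\ldots,n\}\\ |S|=t}} \sum_{i \in S} \widetilde r_i(m,U),
\end{equation*}
which holds because selecting the $t$ indices with the smallest values of $\widetilde r_i(m,U)$ minimizes the sum over all $t$-subsets. Thus $t \cdot \wtr(m,U)$ is the pointwise minimum of the finite family $\{f_S\}_{|S|=t}$ of concave functions. Since the pointwise minimum of concave functions is concave, $\wtr(m,U)$ is concave in $U$, and dividing by the positive constant $t$ preserves concavity.

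I do not expect any serious obstacle here; the one step worth stating carefully is the ``sum of $t$ smallest equals min over $t$-subsets'' identity, since the rest is just the standard closure of concavity under nonnegative sums and pointwise minima. Notably, the argument makes no use of the Stiefel constraint, so concavity holds on all of $\rr^{p\times k}$, as claimed.
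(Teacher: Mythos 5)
Your proof is correct and follows essentially the same route as the paper: each $\widetilde r_i(m,U)$ is concave as a constant minus a convex quadratic in $U$, and the sum of the $t$ smallest values is the pointwise minimum over all $\binom{n}{t}$ sums of $t$-subsets, hence concave. The only difference is that you spell out the ``min over $t$-subsets'' identity explicitly, which the paper states without elaboration.
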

\begin{proof}
We have $\wtrs_i(m,U)=-\norm{U^{\top}\widetilde{x}_i}^2_2+\norm{\widetilde{x}_i}_2^2$. As $\norm{U^{\top} \widetilde{x}_i}^2$ is convex, we deduce that $\wtrs_i(m,U)$ is concave in $U$.
The sum of the $t$ smallest concave functions out of $n\geq t$ concave functions is concave, as it can be seen as the pointwise minimum of all possible $\binom{n}{t}$ sums of $t$ of the concave functions, e.g.,~\cite{BoyVan2004}.
\end{proof}

The iterative scheme uses a linearization of $\wtr(m,U)$ in $U$. For that we need to characterize the superdifferential of the concave function $\wtr(m,U)$.
\begin{proposition}\label{pro:superdifferential}
Let $m$ be fixed. The superdifferential $\partial \wtr(m,U)$ of $\wtr(m,U): \rr^{p \times k} \rightarrow \rr$ is given as
\begin{equation}
 \partial \wtr(m,U)  = \Big\{ \sum_{i \in I} \alpha_i (x_i-m)(x_i-m)^{\top} U\,\Big|\, \sum_{i=1}^n \alpha_i = t,\; 0\leq \alpha_i \leq 1 \Big\},
\end{equation}
where $I=\{ i \,|\, \wtrs_i(m,U) \leq \wtrs_{(t)}(m,U)\}$
with $\wtrs_{(1)}(m,U)\leq \ldots \leq \wtrs_{(n)}(m,U)$.
\end{proposition}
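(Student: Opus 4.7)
My plan is to combine three ingredients: (a) the smoothness of each individual function $\wtrs_i(m,\cdot)$; (b) the expression of $\wtr(m,U)$ as a pointwise minimum over $t$-subsets; and (c) the superdifferential calculus for the pointwise minimum of finitely many concave functions.

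First I would record that, for fixed $m$, each summand $\wtrs_i(m,U) = \|\widetilde{x}_i\|_2^2 - \|U^\top \widetilde{x}_i\|_2^2$ is differentiable in $U$ with
$$\nabla_U \wtrs_i(m,U) = -2(x_i-m)(x_i-m)^\top U,$$
so $\partial \wtrs_i(m,U)$ is a singleton. The overall constant $-2/t$ will be absorbed into the scaling of the $\alpha_i$'s in the final expression.

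Second, I would rewrite $\wtr$ as a pointwise minimum over $t$-subsets $S\subseteq\{1,\ldots,n\}$,
$$t\cdot\wtr(m,U) = \min_{|S|=t}\;\sum_{i\in S}\wtrs_i(m,U),$$
which is exactly the representation already used in the proof of Proposition~\ref{concavity}. The active subsets $S^\ast$ at $(m,U)$ are precisely those $S^\ast$ of cardinality $t$ that contain every index with $\wtrs_i(m,U)<\wtrs_{(t)}(m,U)$ and fill the remaining slots using indices with $\wtrs_i(m,U)=\wtrs_{(t)}(m,U)$; each such $S^\ast$ is contained in the index set $I$ from the statement. Then I would apply the standard superdifferential formula $\partial f(U) = \mathrm{conv}\bigl(\bigcup_{j\in J(U)}\partial f_j(U)\bigr)$ for a pointwise minimum of concave $f_j$, see e.g.~\cite{BoyVan2004}. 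With $f_S(U) = \sum_{i\in S}\wtrs_i(m,U)$ and $J(U)$ equal to the set of active $S^\ast$, this expresses the supergradients of $t\cdot\wtr(m,U)$ as convex combinations of the vectors $\sum_{i\in S^\ast}\nabla_U\wtrs_i(m,U)$. Linearity lets me rewrite these as $\sum_{i\in I}\alpha_i(x_i-m)(x_i-m)^\top U$, where $\alpha$ is (up to the absorbed constant) a convex combination of the indicator vectors $\mathbf{1}_{S^\ast}$.

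The main obstacle is the final combinatorial step: identifying the convex hull of these indicator vectors with the polytope given in the statement. After fixing $\alpha_i=1$ on the strict-inequality indices $\{i : \wtrs_i(m,U)<\wtrs_{(t)}(m,U)\}$ and $\alpha_i=0$ on indices outside $I$, the remaining coordinates indexed by the ties form a hypersimplex whose vertices are exactly the $\{0,1\}$-selections of the correct cardinality; these correspond precisely to the admissible $S^\ast$. Matching this description against the constraints $\sum_i\alpha_i=t$, $0\le\alpha_i\le 1$ in the statement is the one point where care is required, but once the hypersimplex structure is in place the identification follows from the standard fact that such a polytope is the convex hull of its $\{0,1\}$-vertices.
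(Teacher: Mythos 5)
Your argument is correct, and it reaches the result by a genuinely different (more elementary) route than the paper. The paper does not pass through the finite minimum over $t$-subsets at this point: it writes $\wtr(m,U)=\min_{\alpha}\sum_{i=1}^n\alpha_i\wtrs_i(m,U)$ where $\alpha$ ranges over the \emph{continuous} compact polytope $\{\alpha\,:\,\sum_i\alpha_i=t,\ 0\le\alpha_i\le1\}$, and then invokes Theorem 4.4.2 of Hiriart-Urruty and Lemar\'echal on the subdifferential of an infimum over a compact parameter set, so that $\partial\wtr(m,U)=\mathrm{conv}\bigl(\bigcup_{\alpha\in I(U)}\sum_i\alpha_i\partial\wtrs_i(m,U)\bigr)$ with $I(U)$ the set of minimizing $\alpha$'s; since $I(U)$ is itself already a convex polytope and each $\wtrs_i$ is differentiable, the stated description drops out with no further combinatorics. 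You instead take the finite minimum over $t$-subsets, apply the elementary rule for the superdifferential of a pointwise minimum of finitely many concave functions, and then identify the convex hull of the indicator vectors of the active subsets with the constraint polytope via the hypersimplex structure. What your route buys is that it avoids the heavier infimum-over-a-compact-family theorem, at the price of the extra convex-hull/vertex identification at the end; what the paper's route buys is that this last step is subsumed, because minimizing a linear function of $\alpha$ over the polytope already produces the active face in convex form. Two small points of agreement worth noting: both proofs silently absorb the constant $-2/t$ coming from $\nabla_U\wtrs_i(m,U)=-2(x_i-m)(x_i-m)^{\top}U$ and the $\tfrac{1}{t}$ in the definition of $\wtr$ (you say so explicitly, the paper does not), and your careful bookkeeping of the active subsets (forcing $\alpha_i=1$ on indices with $\wtrs_i(m,U)<\wtrs_{(t)}(m,U)$ and $\alpha_i=0$ off $I$) in fact yields the precise active face, which is what the paper's $I(U)$ also encodes even though the proposition's displayed formula states the constraints more loosely.
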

\begin{proof}
We reduce it to a well known case. We can write $\wtr(m,U)$ as
\begin{equation}
\wtr(m,U) = \min_{0\leq \alpha_i\leq 1, \; i=1,\ldots,n, \; \sum\limits_{i=1}^n \alpha_i=t} \quad\sum_{i=1}^n \alpha_i \wtrs_i(m,U),
\end{equation}
that is a minimum of a parameterized set of concave functions. As the parameter set is compact and continuous (see Theorem 4.4.2 in~\cite{HirLem2001}), we have
\begin{equation}
\partial \wtr(m,U) = \mathrm{conv}\Big(\bigcup_{\alpha^j \in I(U)} \partial \big(\sum_{i=1}^n \alpha^{j}_i \wtrs_i(m,U)\big)\Big) \\
                = \mathrm{conv}\Big(\bigcup_{\alpha^j \in I(U)} \sum_{i=1}^n \alpha^{j}_i \partial \wtrs_i(m,U)\Big),
\end{equation}
where $I(U)=\{ \alpha \,|\, \sum_{i=1}^n \alpha_i \wtrs_i(m,U)=\wtr(m,U),\,\sum_{i=1}^n \alpha_i = t,\, 0\leq \alpha_i \leq 1, \,i=1,\ldots,n\}$
and $\mathrm{conv}(S)$ denotes the convex hull of $S$. Finally, using
that $\wtrs_i(m,U)$ is differentiable with $\partial \wtrs_i(m,U) = \{(x_i-m)(x_i-m)^{\top} U\}$ yields the result.
\end{proof}

\subsection{Minimization Algorithm}
Algorithm~\ref{alg:trpca} for the minimization of~\eqref{rpca2} is based on block-coordinate descent in $m$ and $U$. For the minimization in $U$ we use that $\wtr(m,U)$ is concave for fixed $m$. Let $G \in \partial \wtr(m,U^k)$, then by definition of the supergradient of a concave function,
\begin{equation} \label{eq:supergrad}
\wtr\rbra{m,U^{k+1}}  \leq \wtr\rbra{m,U^k} + \inner{ G, U^{k+1}-U^k}.
\end{equation}
The minimization of the linear upper bound on the Stiefel manifold can be done in closed form, see Lemma~\ref{le:polar} below. For that we use a modified version of a result of~\cite{JouEtAll2010}. 
Before giving the proof, we introduce the polar decomposition of a matrix $G \in\rr^{p\times k}$ which is defined to be $G=QP$, where $Q\in\scal$ is an orthonormal matrix of size $p\times k$ and $P$ is a symmetric positive semidefinite matrix of size $k\times k$. We denote the factor $Q$ of $G$ by $\polar(G)$. The polar can be computed in ${\cal O}(p k^2)$ for $p\geq k$~\cite{JouEtAll2010} as  $\polar(G)=UV^{\top}$ (see Theorem 7.3.2. in~\cite{MatrAnal}) using the SVD of $G$, $G=U \Sigma V^{\top}$.
However, faster methods have been proposed, see~\cite{HigSch1990}, which do not even require the computation of the SVD. 
\begin{lemma}\label{le:polar}Let $G\in\rr^{p\times k}$, with $k\le p$, and denote by $\sigma_i(G)$, $i=1,\dots,k$, the singular values of $G$. Then $\mathrm{min}_{U\in\scal_k}\inner{G,U}=-\sum_{i=1}^k\sigma_i(G)$, with minimizer $U^*=-\polar(G)$. If $G$ is of full rank, then $\polar(G)=G(G^{\top} G)^{-1/2}$.
\end{lemma}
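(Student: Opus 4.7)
The plan is to reduce the problem to a direct computation using the thin SVD of $G$. Write $G = U_G \Sigma V_G^\top$ with $U_G \in \mathbb{R}^{p \times k}$ having orthonormal columns, $\Sigma \in \mathbb{R}^{k \times k}$ diagonal containing the singular values $\sigma_1(G),\dots,\sigma_k(G)$, and $V_G \in \mathbb{R}^{k \times k}$ orthogonal. Then by the cyclic property of the trace,
\begin{equation*}
\langle G,U\rangle = \mathrm{tr}(U^\top G) = \mathrm{tr}(\Sigma V_G^\top U^\top U_G) = \sum_{i=1}^k \sigma_i(G)\, W_{ii},
\end{equation*}
where $W := V_G^\top U^\top U_G \in \mathbb{R}^{k\times k}$.

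The key step is to bound each diagonal entry $W_{ii}$. Denote by $v_i$ the $i$-th column of $V_G$ and by $u_i$ the $i$-th column of $U_G$. Then $W_{ii} = \langle Uv_i, u_i\rangle$. Since $U \in \mathcal{S}_k$ satisfies $U^\top U = I_k$, we have $\|Uv_i\|_2 = \|v_i\|_2 = 1$, and by Cauchy--Schwarz $|W_{ii}| \le \|Uv_i\|_2 \|u_i\|_2 = 1$. Combined with $\sigma_i(G) \ge 0$, this yields
\begin{equation*}
\langle G, U\rangle \;\ge\; -\sum_{i=1}^k \sigma_i(G),
\end{equation*}
which gives the claimed lower bound. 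Equality requires $W_{ii} = -1$ for every $i$ with $\sigma_i(G)>0$; by the equality case of Cauchy--Schwarz this forces $Uv_i = -u_i$, i.e.\ $U V_G = -U_G$, and hence $U^* = -U_G V_G^\top = -\mathrm{Polar}(G)$. This $U^*$ indeed lies in $\mathcal{S}_k$ because $U_G V_G^\top$ has orthonormal columns as a product of matrices with orthonormal columns and an orthogonal matrix.

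For the final claim, when $G$ has full rank $k$ all singular values are strictly positive, so $G^\top G = V_G \Sigma^2 V_G^\top$ is invertible with $(G^\top G)^{-1/2} = V_G \Sigma^{-1} V_G^\top$. Substituting gives $G(G^\top G)^{-1/2} = U_G \Sigma V_G^\top V_G \Sigma^{-1} V_G^\top = U_G V_G^\top = \mathrm{Polar}(G)$, as desired. The only delicate point in the whole argument is keeping track of the thin-SVD dimensions so that $W$ ends up square and the trace expansion produces exactly $\sum_i \sigma_i(G) W_{ii}$; once that is done, the Cauchy--Schwarz step is immediate and there is no real obstacle.
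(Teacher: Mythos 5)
Your proof is correct and follows essentially the same route as the paper: both pass to the SVD of $G$, rewrite $\inner{G,U}$ as $\sum_{i=1}^k \sigma_i(G) W_{ii}$, bound the diagonal entries by $1$ (you via an explicit Cauchy--Schwarz argument, the paper implicitly by reparametrizing over the Stiefel manifold), and identify the minimizer as $-\polar(G)$. The only cosmetic difference is that you use the thin SVD and derive the minimizer from the equality case rather than exhibiting $-U_GV_G^\top$ and checking directly that it attains the bound, which is a one-line verification either way.
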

\begin{proof}
Let $G=U\Sigma V^{\top}$ be the SVD of $G$, that is $U \in O(p)$, $V \in O(k)$, where $O(m)$ denotes the set of orthogonal matrices in $\rr^m$,
\begin{equation}
 \min_{O \in \scal_k} \inner{G,O}
= \min_{O \in \scal_k} \inner{\Sigma,U^{\top} O V}\\
                                  = \min_{W \in \scal_k} \sum_{i=1}^k \sigma_i(G) W_{ii} \geq - \sum_{i=1}^k \sigma_i(G).
\end{equation}
The lower bound is realized by $-UV^{\top} \in \scal_k$ which is equal to $-\polar(G)$. We have, $
- \inner{U\Sigma V^{\top}, UV^{\top}}=-\mathrm{trace}(\Sigma)=-\sum_{i=1}^k \sigma(G)_i.$
The final statement follows from the proof of Theorem 7.3.2. in~\cite{MatrAnal}.
\end{proof}

\begin{algorithm}
\caption{TRPCA}
   \label{alg:trpca}
\begin{algorithmic}
   \State {\bfseries Input:} $X$, $t$, $d$, $U^0\in\scal$, and $m^0$ median of $X$, tolerance $\varepsilon$
   \State {\bfseries Output:} robust center $m^k$ and robust PCs $U^k$
   \Repeat \; for $k = 1,2,\dots$
     \State Center data $\widetilde{X}^{k}=\cbra{\widetilde{x}_i^{k}=x_i-m^{k},\;i=1,\dots,n}$
     \State Compute supergradient $\gcal(U^k)$ of $\wtr(m^k,U^k)$ for fixed $m^k$
     \State Update $U^{k+1}=-\polar\rbra{\gcal(U^k)}$
     \State Update $m^{k+1}=\frac{1}{t}\sum_{i\in\ical^{k'}}x_i$, where $\ical^{k'}$ are the indices of the $t$ smallest \\ \hspace{0.4cm} $\wtrs_i(m^k,U^{k+1})$, $i=1,\ldots,n$
   \Until{relative descent below $\varepsilon$}
\end{algorithmic}
\end{algorithm}

Given that $U$ is fixed, the center $m$ can be updated simply as the mean of the points realizing the current objective of~\eqref{rpca2}, that is the points realizing the $t$-smallest reconstruction error.
Finally, although the objective of~\eqref{rpca2} is neither convex nor concave in $m$, we prove monotonic descent of Algorithm~\ref{alg:trpca}.
\begin{theorem}\label{thm:descent}
The following holds for Algorithm~\ref{alg:trpca}. At every iteration, either $\wtr(m^{k+1},U^{k+1})<\wtr(m^k,U^k)$ or the algorithm terminates.
\end{theorem}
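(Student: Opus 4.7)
The plan is to decompose one iteration into the $U$-update followed by the $m$-update and show monotone descent separately, using concavity of $\widetilde R(m,\cdot)$ for the first part and a direct least-squares argument for the second.

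For the $U$-step with $m^k$ fixed, I would use Proposition~\ref{concavity} together with the supergradient inequality already displayed in~\eqref{eq:supergrad}: for any supergradient $G\in\partial\widetilde R(m^k,U^k)$,
\begin{equation}
\widetilde R(m^k,U^{k+1})\le \widetilde R(m^k,U^k)+\inner{G,U^{k+1}-U^k}.
\end{equation}
Since the algorithm picks $U^{k+1}=-\polar(G)$, Lemma~\ref{le:polar} tells us that $U^{k+1}$ is the minimizer of $\inner{G,\cdot}$ over $\scal_k$. In particular $\inner{G,U^{k+1}}\le\inner{G,U^k}$, so the linear term is non-positive, giving $\widetilde R(m^k,U^{k+1})\le \widetilde R(m^k,U^k)$.

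For the $m$-step with $U^{k+1}$ fixed, I would use the identity $\widetilde r_i(m,U^{k+1})=(x_i-m)^\top(I-U^{k+1}U^{k+1\top})(x_i-m)$, which is a PSD quadratic form. Let $\ical^{k'}$ be the index set of the $t$ smallest $\widetilde r_i(m^k,U^{k+1})$; then
\begin{equation}
\widetilde R(m^k,U^{k+1})=\tfrac{1}{t}\sum_{i\in\ical^{k'}}\widetilde r_i(m^k,U^{k+1}).
\end{equation}
For the fixed index set $\ical^{k'}$, the function $m\mapsto\sum_{i\in\ical^{k'}}(x_i-m)^\top(I-U^{k+1}U^{k+1\top})(x_i-m)$ is convex in $m$, and setting its gradient to zero shows that the empirical mean $m^{k+1}=\tfrac{1}{t}\sum_{i\in\ical^{k'}}x_i$ is a minimizer (the PSD form has a nullspace along $\mathrm{range}(U^{k+1})$, but the mean is one of the optima). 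Hence
\begin{equation}
\tfrac{1}{t}\sum_{i\in\ical^{k'}}\widetilde r_i(m^{k+1},U^{k+1})\le \widetilde R(m^k,U^{k+1}).
\end{equation}
Finally, by definition of the trimmed sum, $\widetilde R(m^{k+1},U^{k+1})$ is the sum over the $t$ indices with smallest $\widetilde r_i(m^{k+1},U^{k+1})$, which is at most the sum over the particular index set $\ical^{k'}$. Chaining the three inequalities yields $\widetilde R(m^{k+1},U^{k+1})\le \widetilde R(m^k,U^k)$.

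The last step is to turn the weak inequality into the dichotomy in the statement. If strict descent fails, equality holds throughout the chain, so the relative descent equals zero, which is below any positive $\varepsilon$, and the stopping criterion of Algorithm~\ref{alg:trpca} triggers termination. The main subtlety I expect is the $m$-step: the overall objective is non-convex in $m$ because the active index set $\ical^{k'}$ depends on $m$, so one must be careful to fix the index set determined at $(m^k,U^{k+1})$ when taking the minimization, and then use the trimming property to pass back to $\widetilde R$ at $(m^{k+1},U^{k+1})$. The $U$-step is cleaner since concavity is global on $\rr^{p\times k}$ and only the restriction to $\scal_k$ matters, which the polar lemma handles exactly.
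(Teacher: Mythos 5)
Your proof is correct and follows essentially the same route as the paper's: supergradient linearization plus Lemma~\ref{le:polar} for the $U$-step, then minimization over the fixed active index set $\ical^{k'}$ followed by re-trimming for the $m$-step. Your added remarks — that the PSD quadratic form makes the mean only one of possibly many minimizers, and that equality throughout the chain triggers the stopping criterion — are accurate refinements of details the paper leaves implicit.
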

\begin{proof} 
Let $m^k$ be fixed and $G(U^k) \in \partial \wtr(m,U^k)$, then from \eqref{eq:supergrad} we have
\begin{equation}
 \wtr(m^k,U) \leq \wtr(m,U^k) - \inner{G(U^k),U^k} + \inner{G(U^k),U}.
\end{equation}
The minimizer $U^{k+1}=\argmin_{U \in \scal_k} \inner{G(U^k),U}$, over the Stiefel manifold can be computed via Lemma~\ref{le:polar}
as $U^{k+1}=-\polar(G(U^k))$. Thus we get immediately,
\begin{equation*}
\wtr(m^k,U^{k+1}) \leq \wtr(m^k,U^k).
\end{equation*}
After the update of $U^{k+1}$ we compute $\ical^{k'}$ which are the indices of the $t$ smallest $\wtrs_i(m^k,U^{k+1})$, $i=1,\ldots,n$.
If there are ties, then they are broken randomly.
For fixed $U^{k+1}$ and fixed $\ical^{k'}$ the minimizer of the objective
\begin{equation}
\sum_{i \in \ical^{k'}} -\norm{(U^{k+1})^{\top} (x_i - m)}^2_2 + \norm{x_i-m}^2_2,
\end{equation}
is given by $m^{k+1}=\frac{1}{t}\sum\limits_{i \in \ical^{k'}} x_i$, which yields,
$\sum\limits_{i \in \ical^{k'}} \wtrs_i(m^{k+1},U^{k+1}) \leq \wtr(m^k,U^{k+1})$.
After the computation of $m^{k+1}$, $\ical^{k'}$ need no longer correspond to the $t$ smallest
reconstruction errors $\wtrs_i(m^{k+1},U^{k+1})$. However, taking the $t$ smallest ones only further reduces the objective, $\wtr(m^{k+1},U^{k+1}) \leq \sum_{i \in \ical^{k'}} \wtrs_i(m^{k+1},U^{k+1})$.
This yields finally the result, $ \wtr(m^{k+1},U^{k+1}) \leq \wtr(m^k,U^k)$.
\end{proof}

The objective is non-smooth and neither convex nor concave. The Stiefel manifold is a non-convex constraint set. These facts make the formulation of critical points conditions challenging. Thus, while potentially stronger convergence results like convergence to a critical point are appealing, they are currently out of reach. However, as we will see in Section~\ref{sec:exp}, Algorithm~\ref{alg:trpca} yields good empirical results, even beating state-of-the-art methods based on convex relaxations or other non-convex formulations.

\subsection{Complexity and Discussion}
The computational cost of each iteration of Algorithm~\ref{alg:trpca} is dominated by ${\cal O}(pk^2)$ for computing the polar and ${\cal O}(pkn)$ for a supergradient of $\wtr(m,U)$ and, thus, has  total cost ${\cal O}(pk(k+n))$. We compare this to the cost of the proximal method in~\cite{CanEtAl2009},~\cite{WriEtAl2009}
for minimizing $\mathrm{min}_{X=A+E} \norm{A}_* + \lambda \norm{E}_1$. In each iteration, the dominating cost is ${\cal O}(\min\{pn^2,np^2\})$ for the SVD of a matrix of size $p \times n$. If the natural condition $k \ll \min\{p,n\}$ holds, we observe that the computational cost of TRPCA is significantly better. Thus even though we do $10$ random restarts with different starting vectors, our TRPCA is still faster than all competing methods, which can also be seen from the runtimes in Table~\ref{tab:runtime}.

In~\cite{MatGia2012}, a relaxed version of the trimmed reconstruction error is minimized:
\begin{equation}
\min_{m \in \rr^p,\,U \in S_k\, ,s \in \rr^k} \norm{X -  \mathbf{1}_n m^{\top} - Us - O}_F^2 + \lambda \norm{O}_{2,1},
\end{equation}
where $\norm{O}_{2,1}$ is added in order to enforce row-wise sparsity of $O$. The optimization is done via an alternating scheme. However, the disadvantage of this formulation is that it is difficult to adjust the number of outliers via the choice of $\lambda$ and thus requires multiple runs of the algorithm to find a suitable range, whereas in our formulation the number of outliers $n-t$ can be directly  controlled by the user or $t$ can be set to the default value $\ceil{\frac{n}{2}}$.

\section{Experiments}\label{sec:exp}
We compare our TRPCA (the code is available for download at ~\cite{Suppl}) algorithm with the following robust PCA methods: ORPCA~\cite{MatGia2012}, LLD\footnote{Note, that the LLD algorithm~\cite{MacTro2011} and the OPRPCA algorithm~\cite{XuCara2012} are equivalent.}~\cite{MacTro2011}, HRPCA~\cite{XuCara2013}, standard PCA, and true PCA on the true data $T$ (ground truth). For background subtraction, we also compare our algorithm with PCP~\cite{CanEtAl2009} and RPCA~\cite{TorBla2001}, although the latter two algorithms are developed for a different outlier model. 

To get the best performance of LLD and ORPCA, we run both algorithms with different values of the regularization parameters to set the number of zero rows (observations) in the outlier matrix equal to $\tilde t$ (which increases runtime significantly). The HRPCA algorithm has the same parameter $t$ as our method. 
 
We write $(0.5)$ in front of an algorithm name if the default value $\tilde t=\ceil{\frac{n}{2}}$ is used, otherwise, we use the ground truth information $\tilde t=|T|$.  As performance measure we use the reconstruction error relative to the reconstruction error of the true data (which is achieved by PCA on the true data only): 
\begin{equation}
 \begin{aligned}
\mathrm{tre}(U,m)=\frac{1}{t}\sum\nolimits_{\cbra{i\;|\;x_i\in T}}r_i(m,U) - r_i(\hat m_T,\hat U_T),
\end{aligned}
\end{equation}
where $\{\hat m_T, \hat U_T\}$ is the true PCA of $T$ and it holds that $tre(U,m)\ge0$. The smaller $tre(U,m)$, i.e., the closer the estimates $\cbra{m,U}$ to  $\{\hat m_T, \hat U_T\}$, the better. We choose datasets which are computationally feasible for all methods.

\begin{figure}
\centering
\begin{tabular}{ccc}
\includegraphics[width=.33\textwidth]{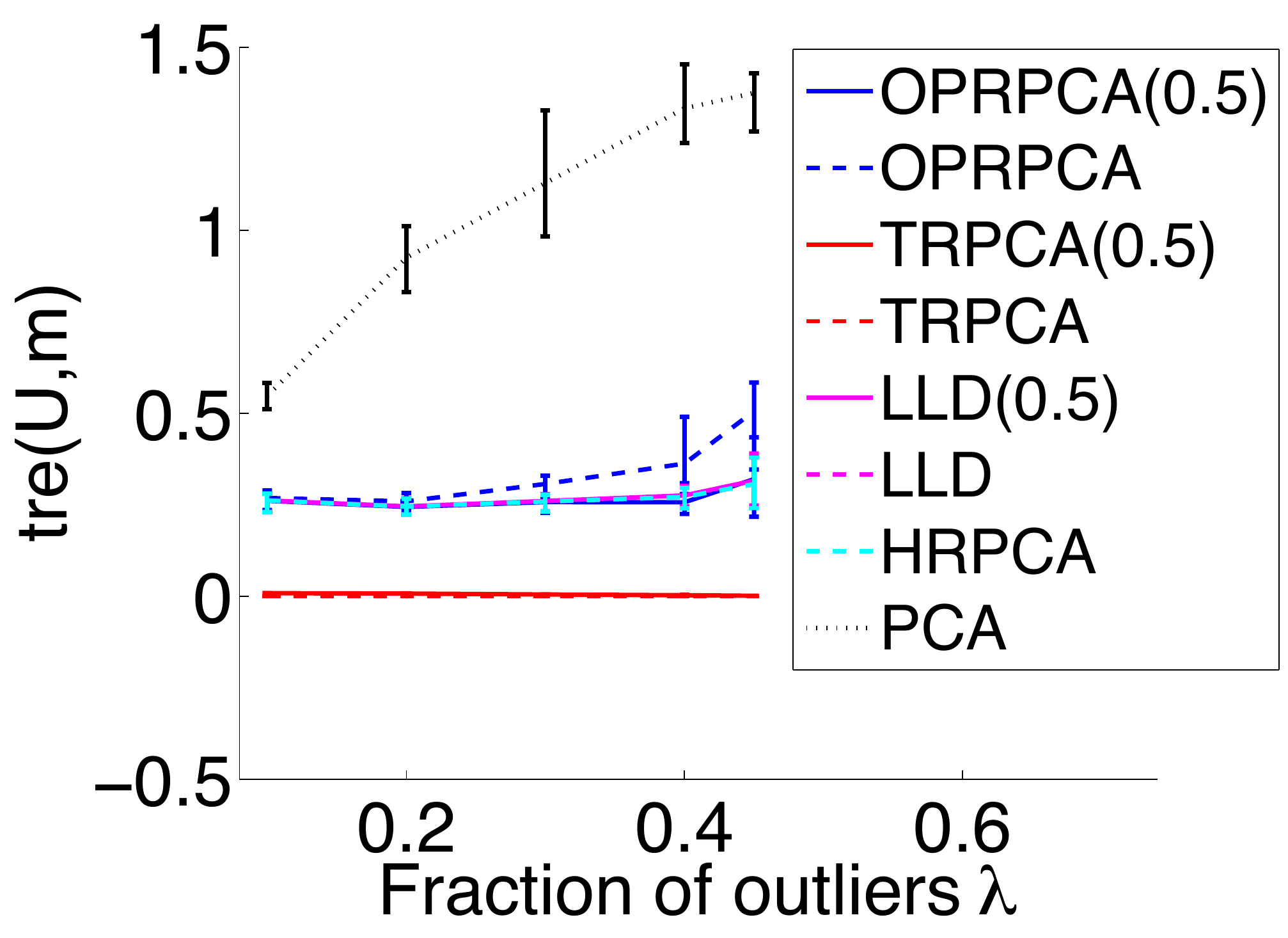} &
\includegraphics[width=.33\textwidth]{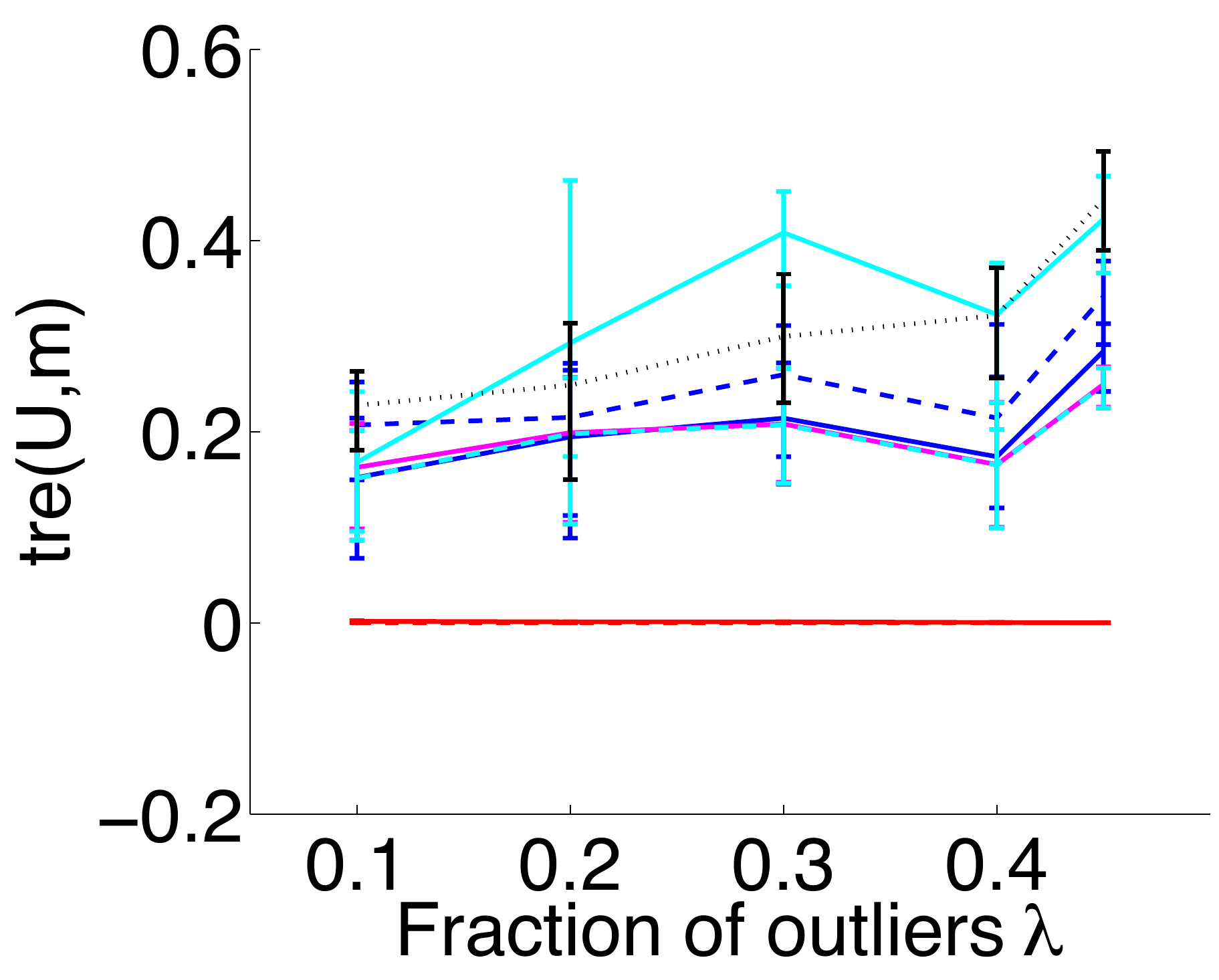} &
\includegraphics[width=.33\textwidth]{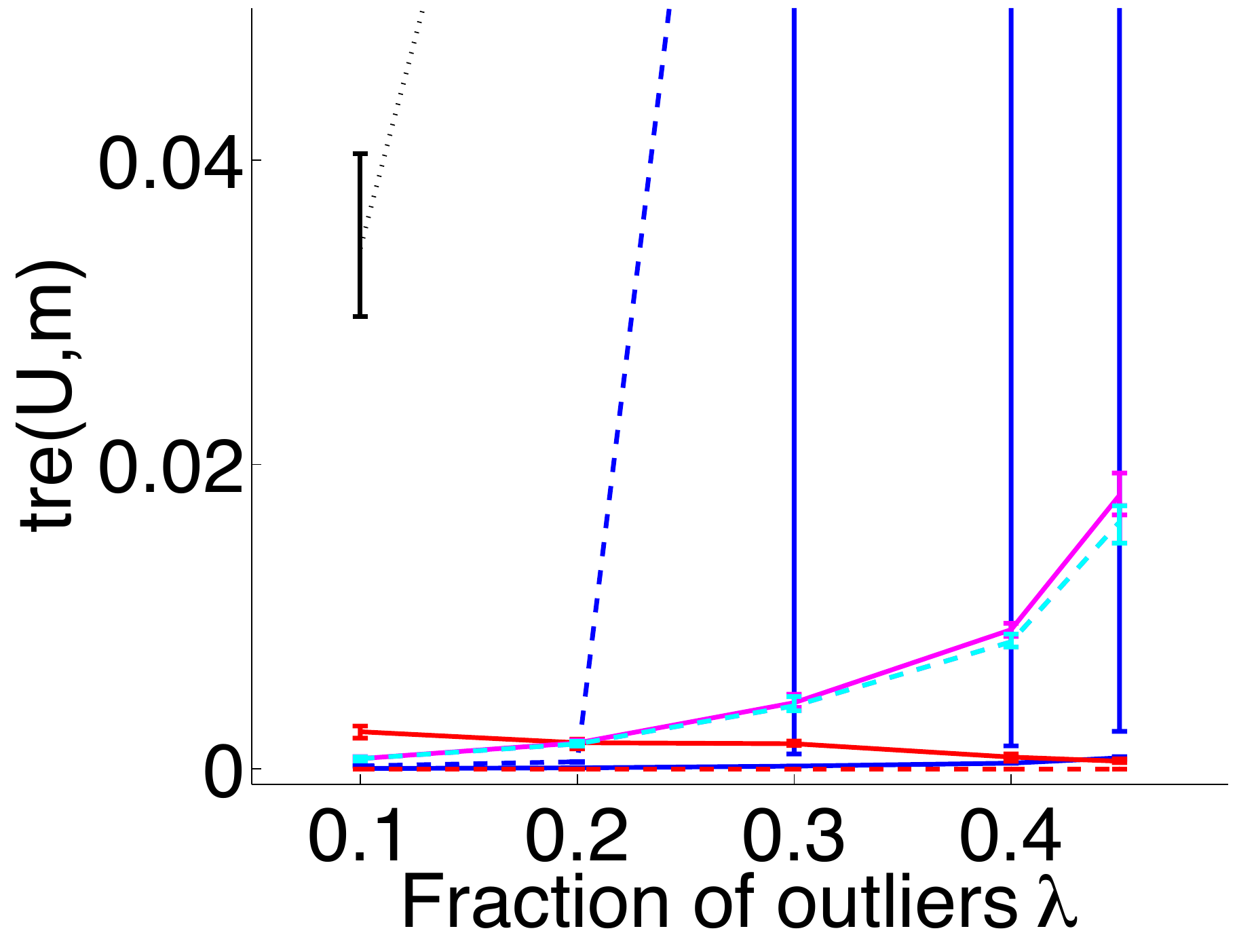} \\
\includegraphics[width=.33\textwidth]{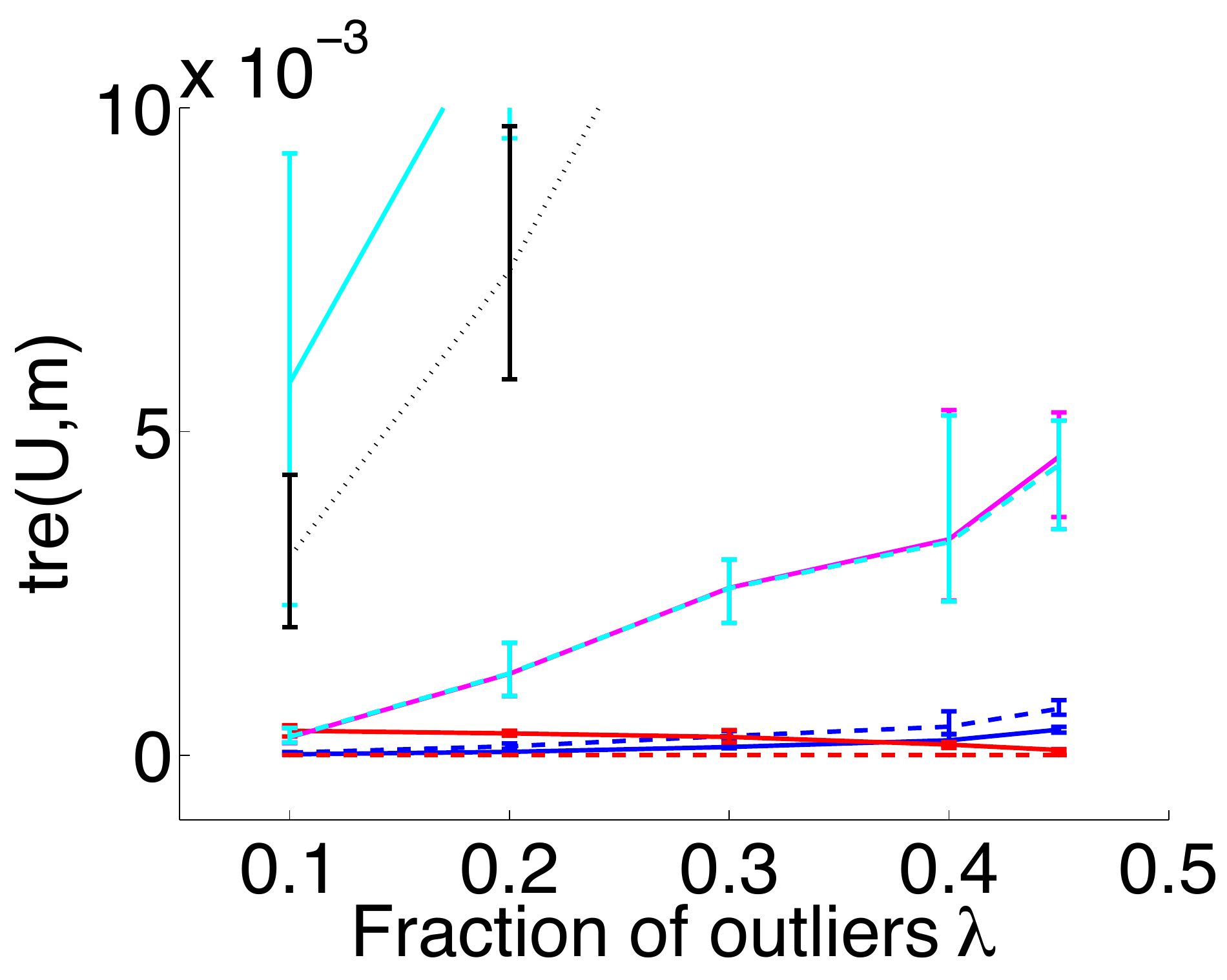} &
\includegraphics[width=.33\textwidth]{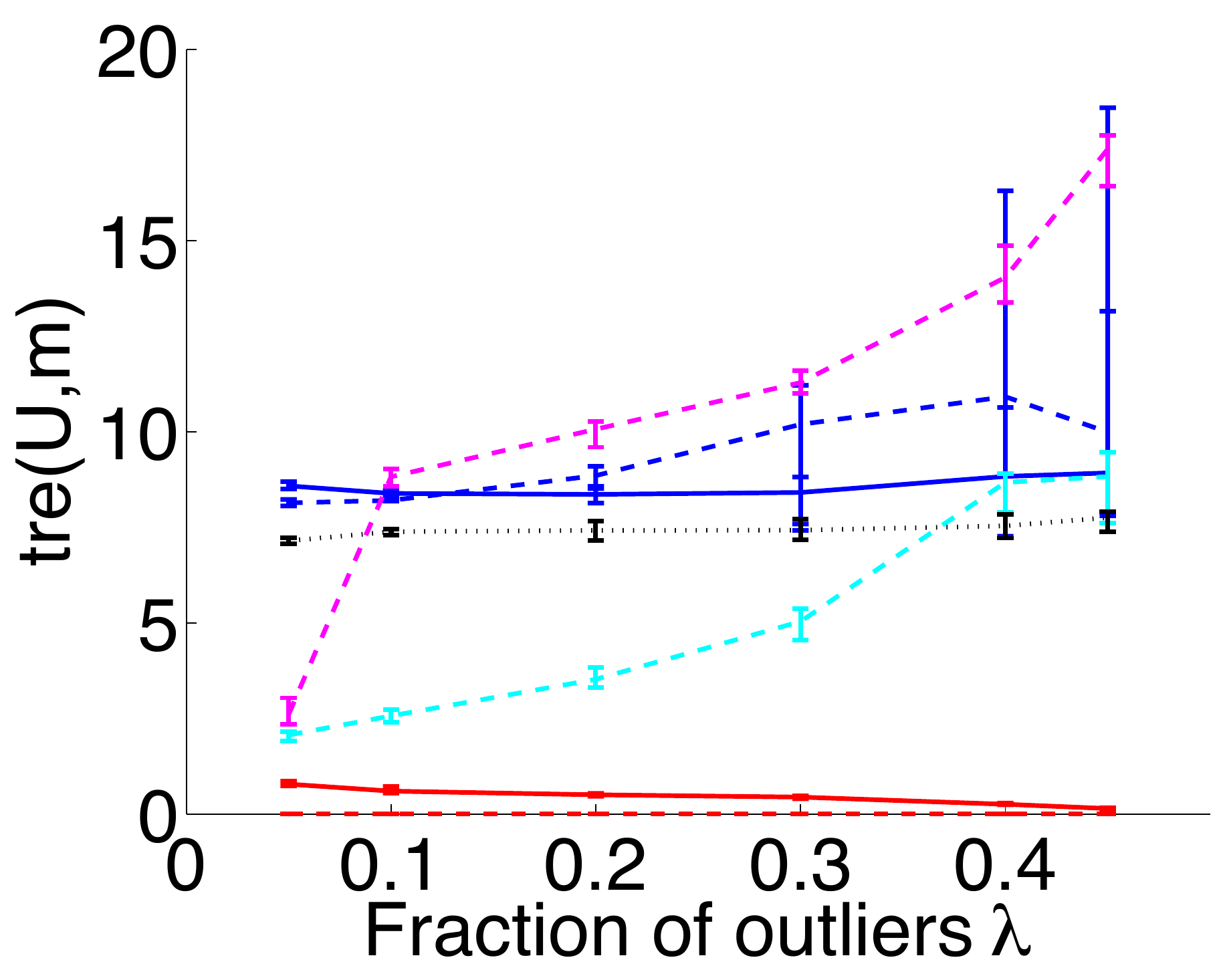} &
\includegraphics[width=.33\textwidth]{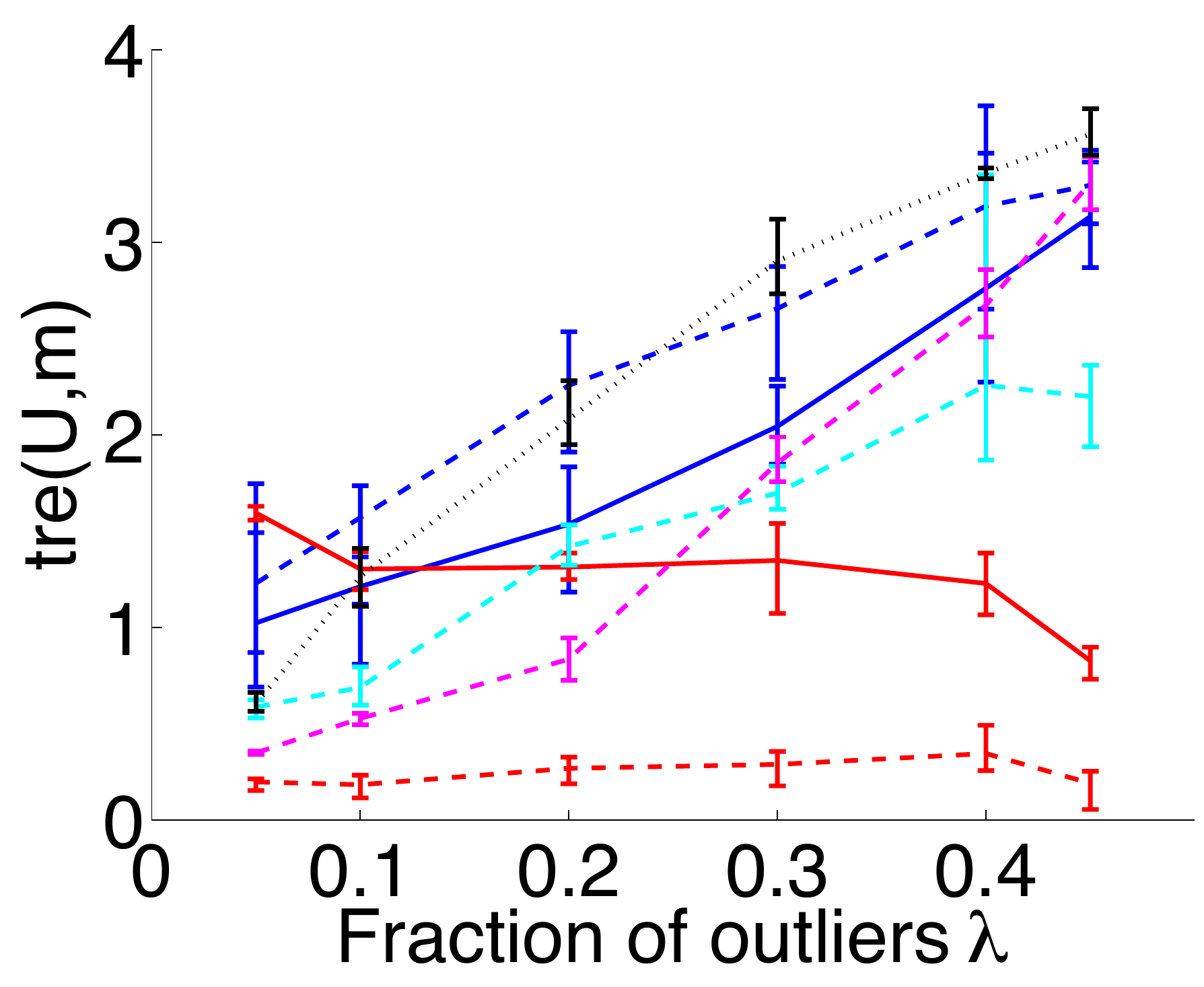}
\end{tabular}
\caption{ First row left to right: 1) Data1, $p=100$, $\sigma_o=2$; 2) Data1, $p=20$,
$\sigma_o=2$; 3) Data2, $p=100$, $\sigma_o=0.35$ ; Second row left to right: 1)
Data2, $p=20$, $\sigma_o=0.35$; 2) USPS10, $k=1$; 3) USPS10, $k=10$.
}
\label{fig:toy}
\end{figure} 

\subsection{Synthetic Data Sets}\label{sec:synthetic}
We sample uniformly at random a subspace of dimension $k$ spanned by $U \in S_k$ and generate the true data $T \in \rr^{t \times p}$ as $T=AU^{\top}+E$ where the entries of $A \in \rr^{t \times k}$ are sampled uniformly on $[-1,1]$ and the noise $E \in \rr^{t \times p}$ has Gaussian entries distributed as $\mathcal{N}(0,\sigma_T)$. We consider two types of outliers: (Data1) the outliers $O \in \rr^{o \times p}$ are uniform samples from $[0,\sigma_o]^p$, (Data2) the outliers are samples from a random half-space, let $w$ be sampled uniformly at random from the unit sphere and let $x \sim \mathcal{N}(0,\sigma_0\mathds{1})$  then an outlier $o_i \in \rr^p$ is generated as $o_i= x - \max\{\inner{x,w},0\}w$. For Data2, we also downscale true data by $0.5$ factor. We always set $n=t+o=200$, $k=5$, and $\sigma_T=0.05$ and construct data sets for different fractions of outliers $\lambda=\frac{o}{t+o}\in \cbra{0.1,\,0.2,\,0.3,\,0.4,\,0.45}$. For every $\lambda$ we sample 5 data sets and report mean and standard deviation of the relative true reconstruction error $\mathrm{tre}(U,m)$.

\subsection{Partially Synthetic Data Set}
We use USPS, a dataset of $16 \times 16$ images of handwritten digits. We use digits 1 as true observations $T$ and digits 0 as outliers $O$ and mix them in different proportions. 
We refer to this data set as USPS10 and the results can be found in Fig.~\ref{fig:toy}. 
Another similar experiment is on the MNIST data set of $28 \times 28$ images of handwritten digits. We use digits $1$ (or $7$) as true observations $T$ and all other digits $0,2,3,\dots,9$ as outliers $O$ (each taken in equal proportion). We mix true data and outliers in different proportions and the results can be found in Fig.~\ref{fig:mnist} (or Fig.~\ref{fig:mnist7}), where we excluded LLD due to its low computational time, see Tab.~\ref{tab:runtime}.
We notice that TRPCA algorithm with the parameter value $\tilde t=t$ (ground truth information) performs almost perfectly and outperforms all other methods, while the default version of TRPCA with parameter $\tilde t=\ceil{\frac{n}{2}}$ shows slightly worse performance. 
The fact that TRPCA estimates simultaneously the robust center $m$ influences positively the overall performance of the algorithm, see, e.g., the experiments for background subtraction and modeling in Section~\ref{sec:bms} and additional ones in the supplementary material. 
That is Fig.~\ref{fig:wsorig}-\ref{fig:rmores}.

\begin{figure}
\centering
\begin{tabular}{cc}
\includegraphics[width=.48\columnwidth]{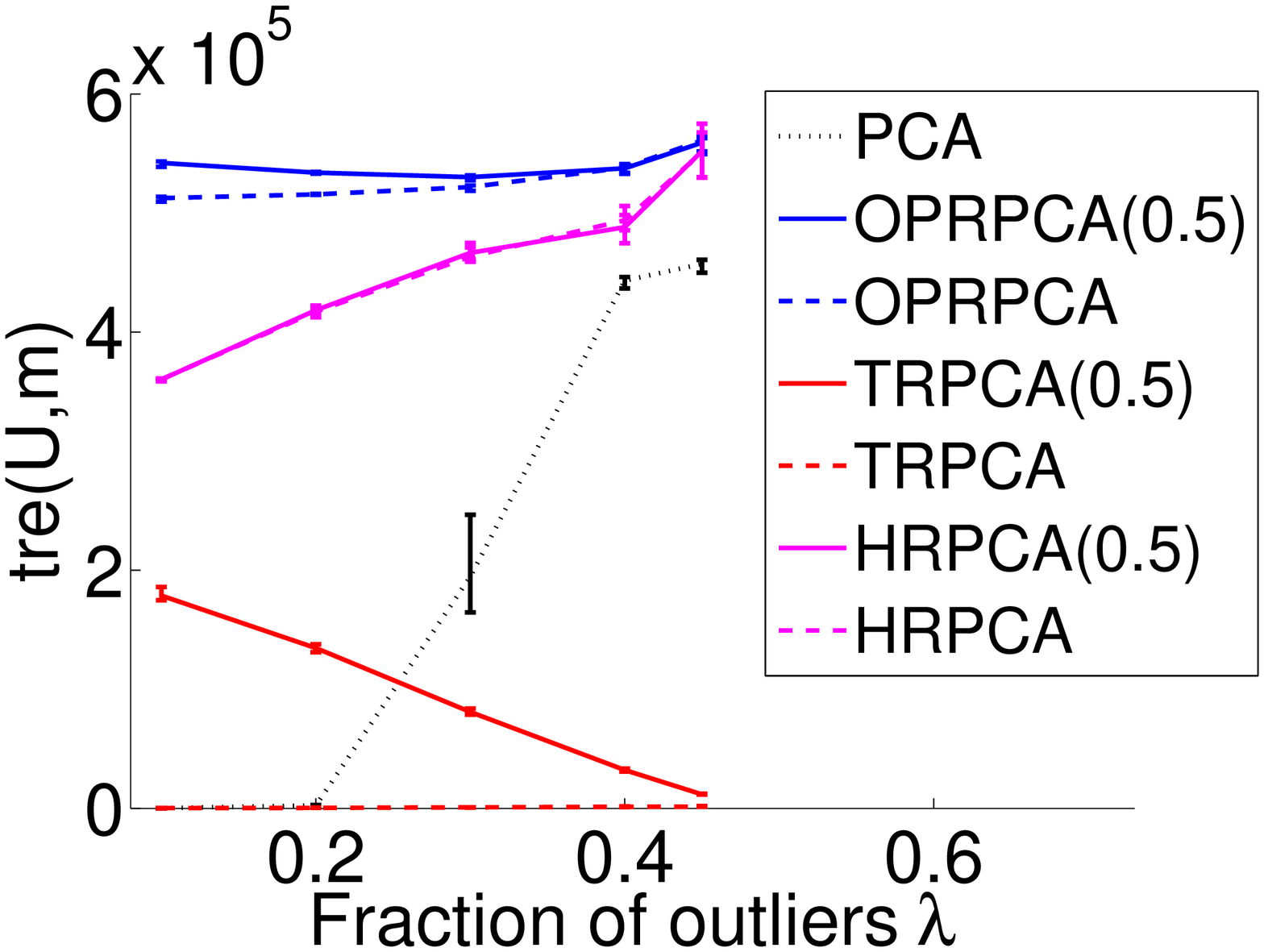} &
\includegraphics[width=.48\columnwidth]{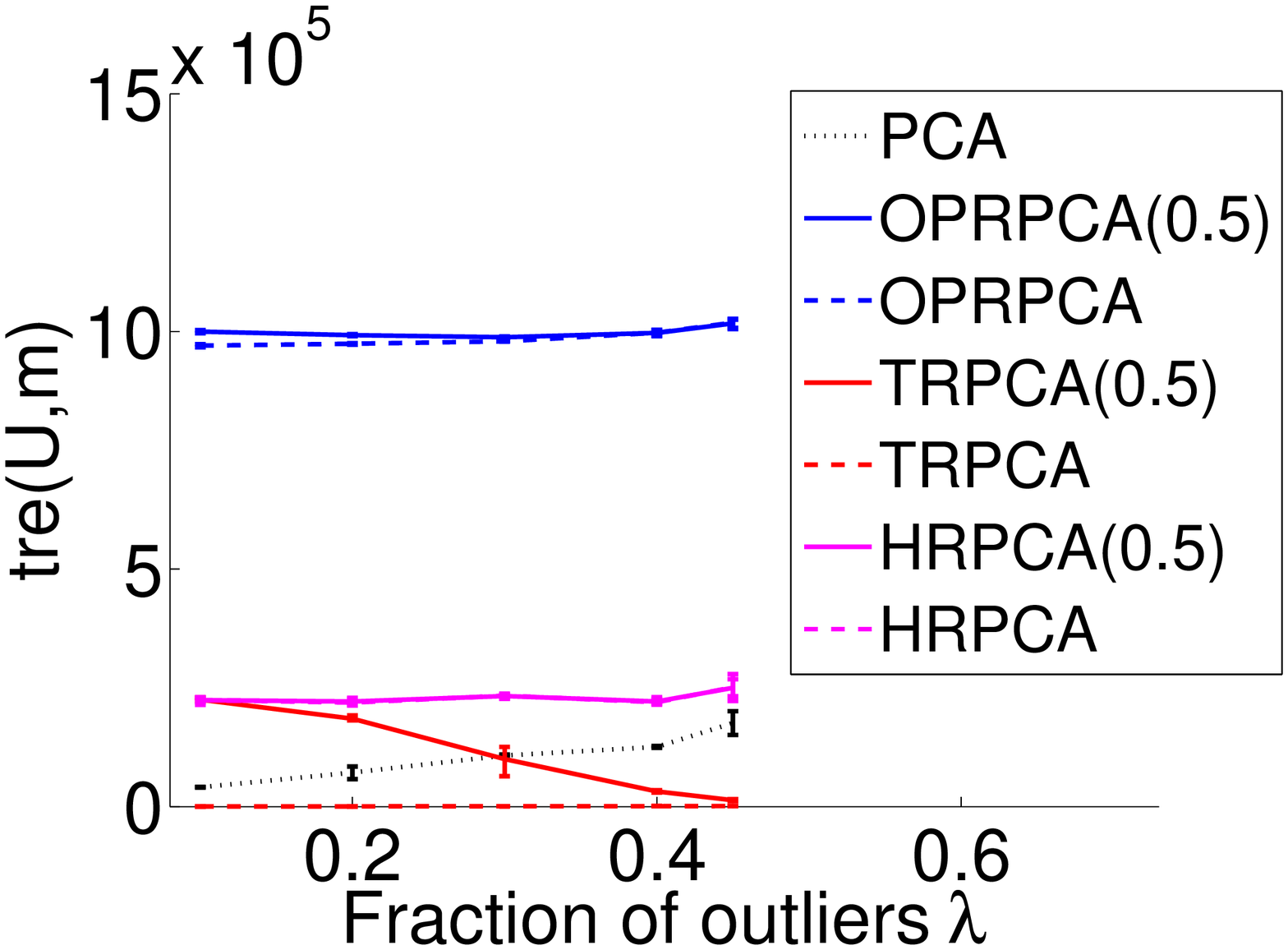}
\end{tabular}
\caption{
Experiment on the MNIST data set with digits 1 as true observations $T$ and all other digits $0,2,3,\dots,9$ as outliers. Number of recovered PCs is $k=1$ (left) and $k=5$ (right).
}
\label{fig:mnist}
\end{figure}
\begin{figure}
\centering
\begin{tabular}{cc}
\includegraphics[width=.48\columnwidth]{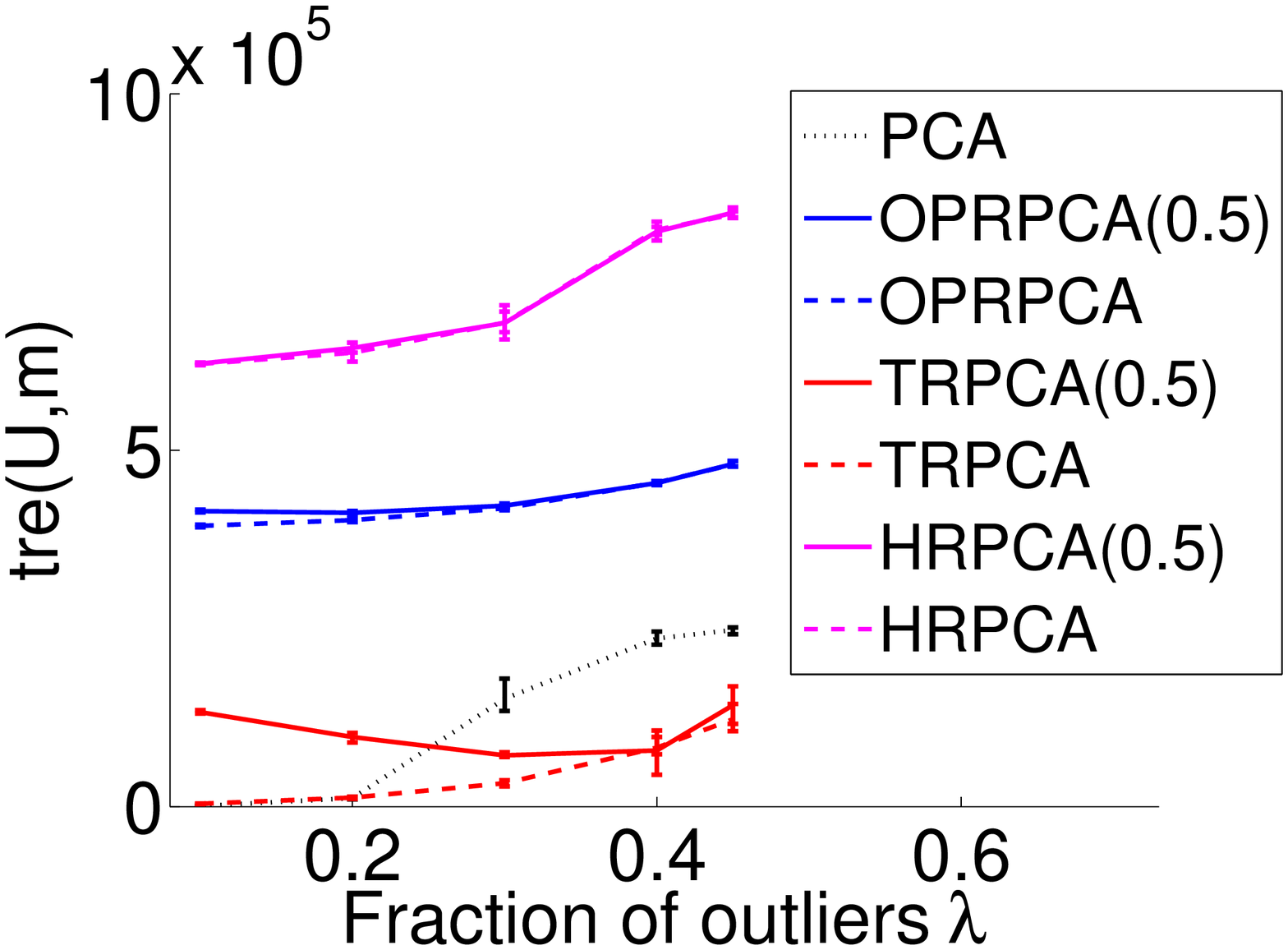} &
\includegraphics[width=.48\columnwidth]{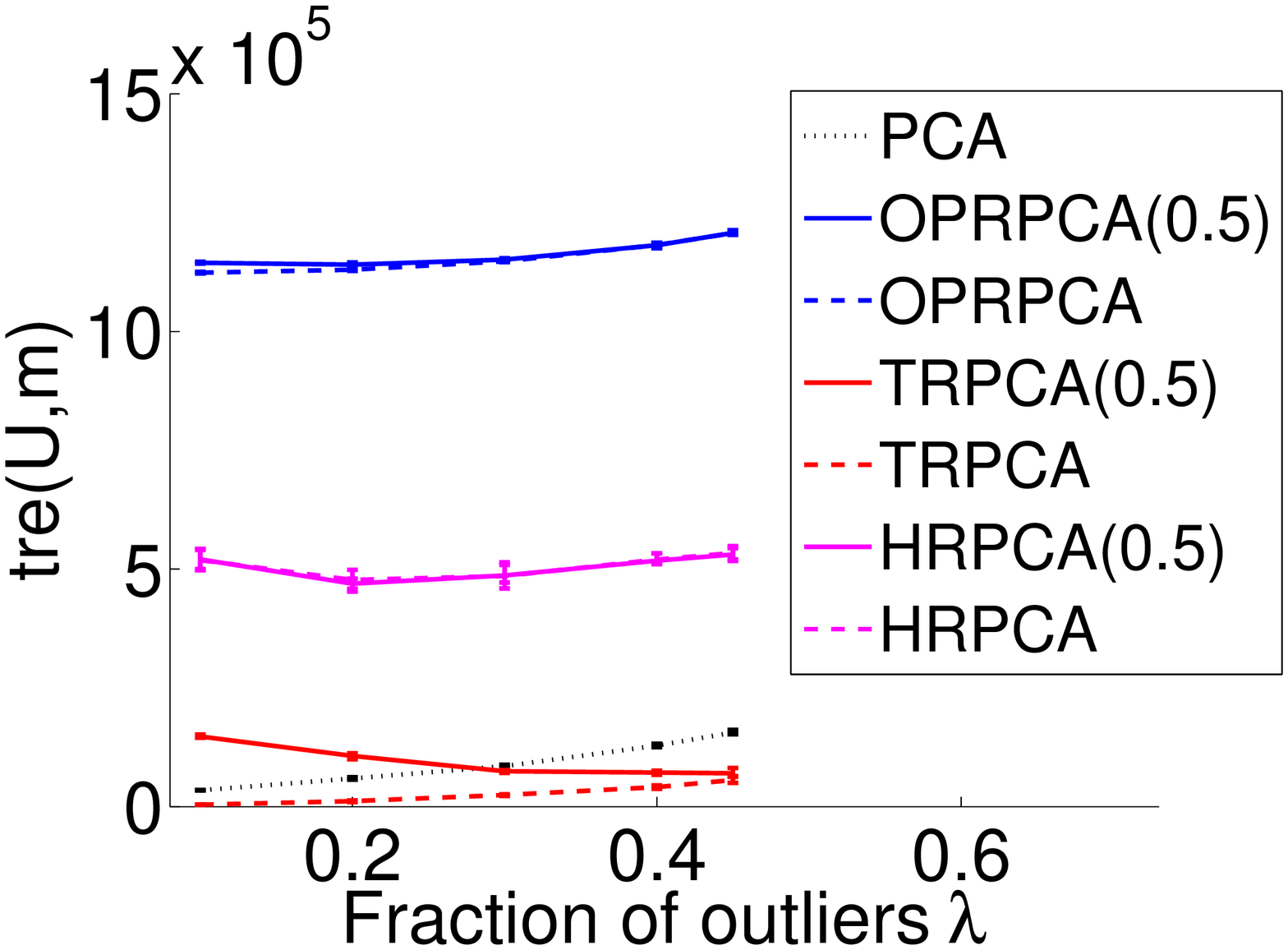}
\end{tabular}
\caption{
Experiment on the MNIST data set with digits 7 as true observations $T$ and all other digits $0,2,3,\dots,9$ as outliers. Number of recovered PCs is $k=1$ (left) and $k=5$ (right).
}
\label{fig:mnist7}
\end{figure}

\begin{figure}
\centering
\begin{tabular}{ccccc}
\includegraphics[width=.194\columnwidth]{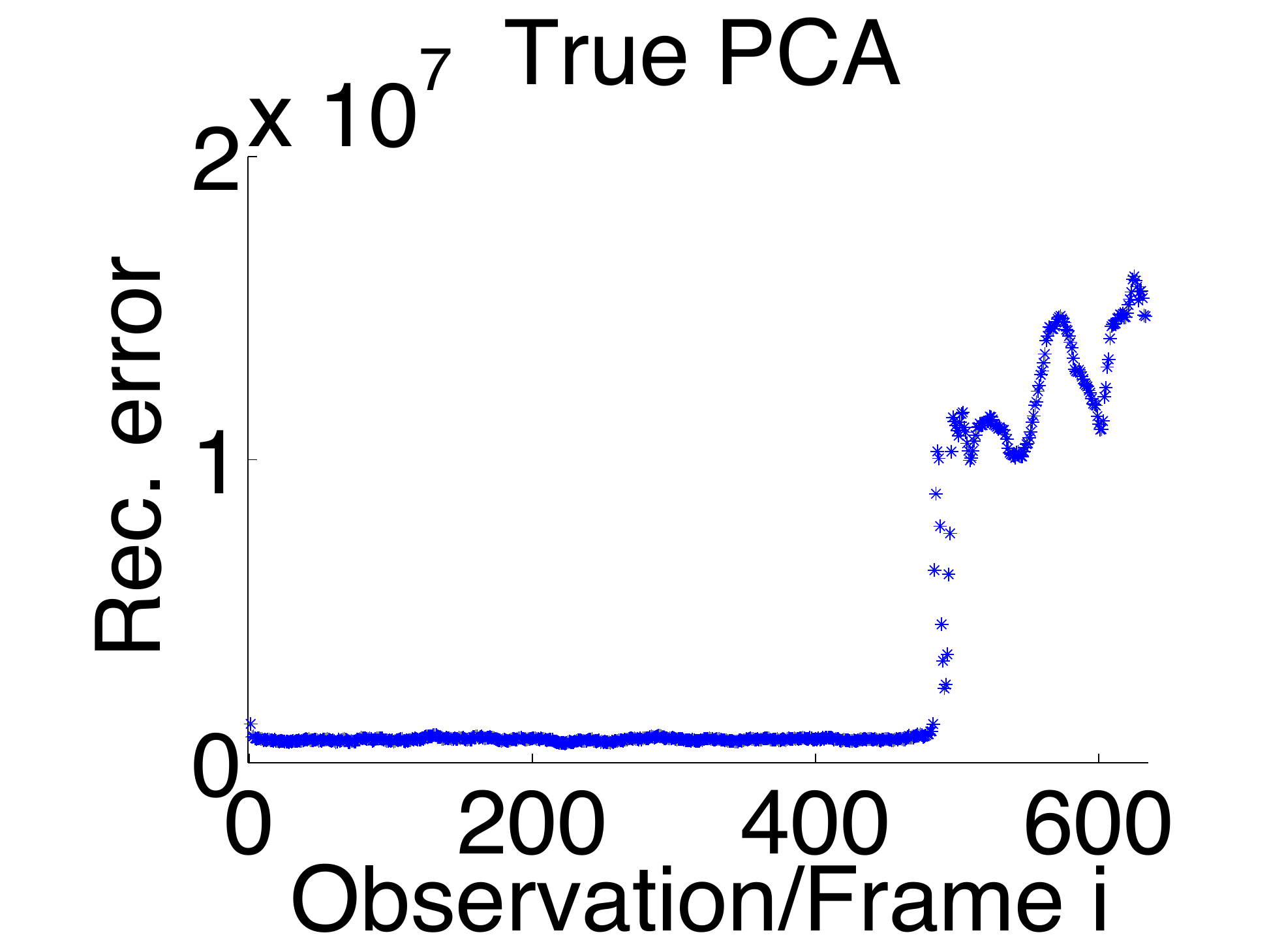} &
\includegraphics[width=.194\columnwidth]{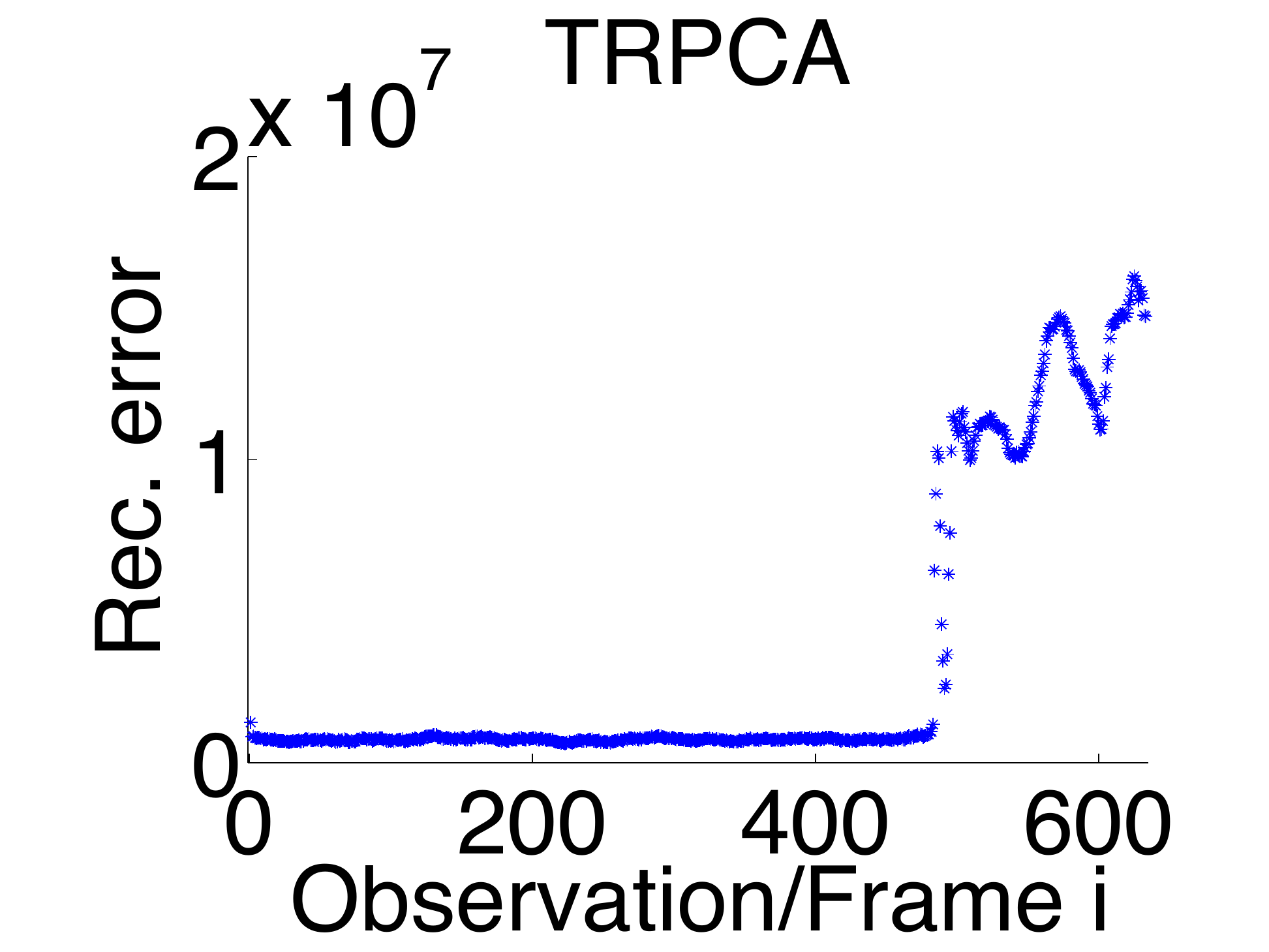} &
\includegraphics[width=.194\columnwidth]{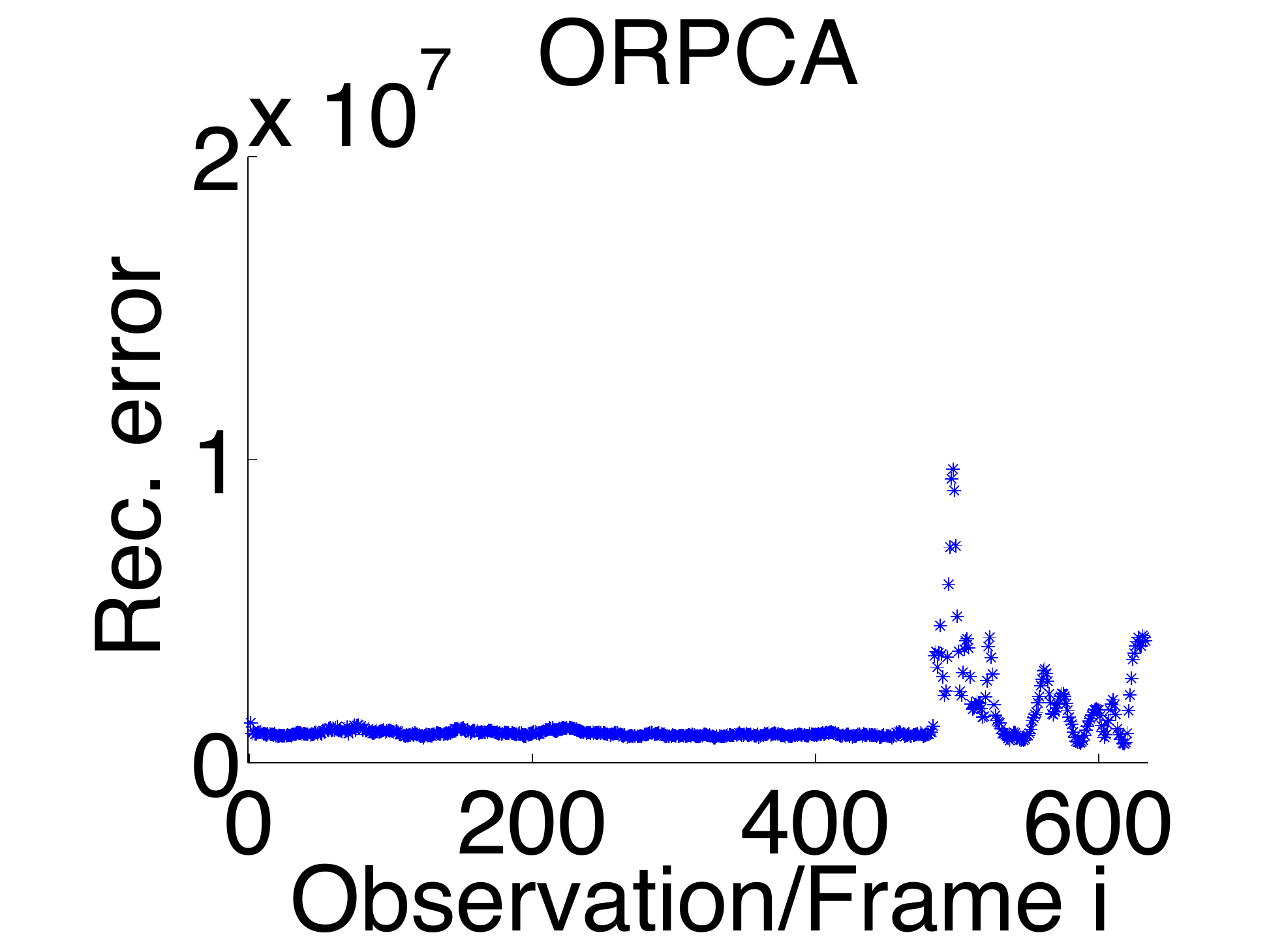} &
\includegraphics[width=.194\columnwidth]{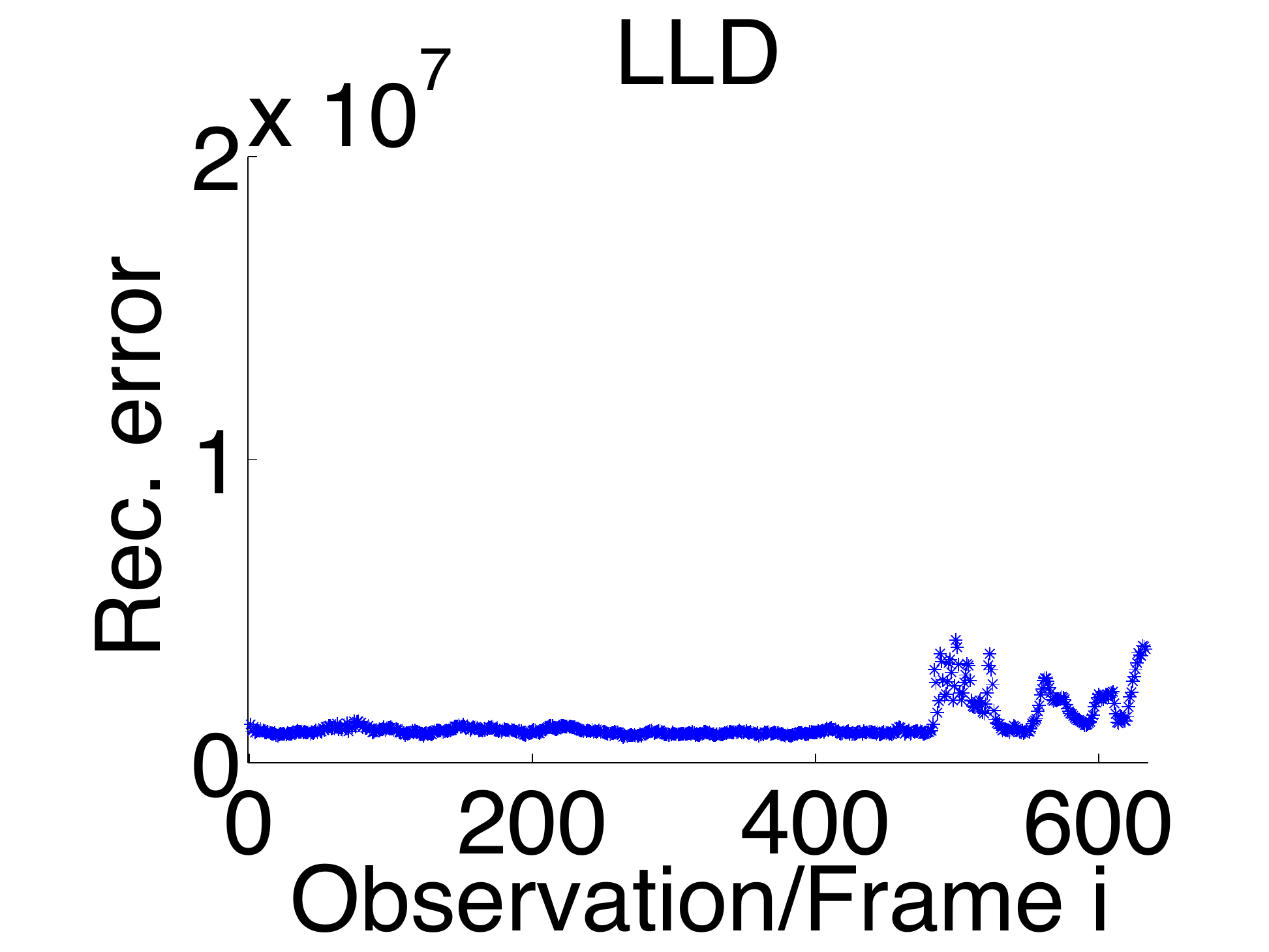} &
\includegraphics[width=.194\columnwidth]{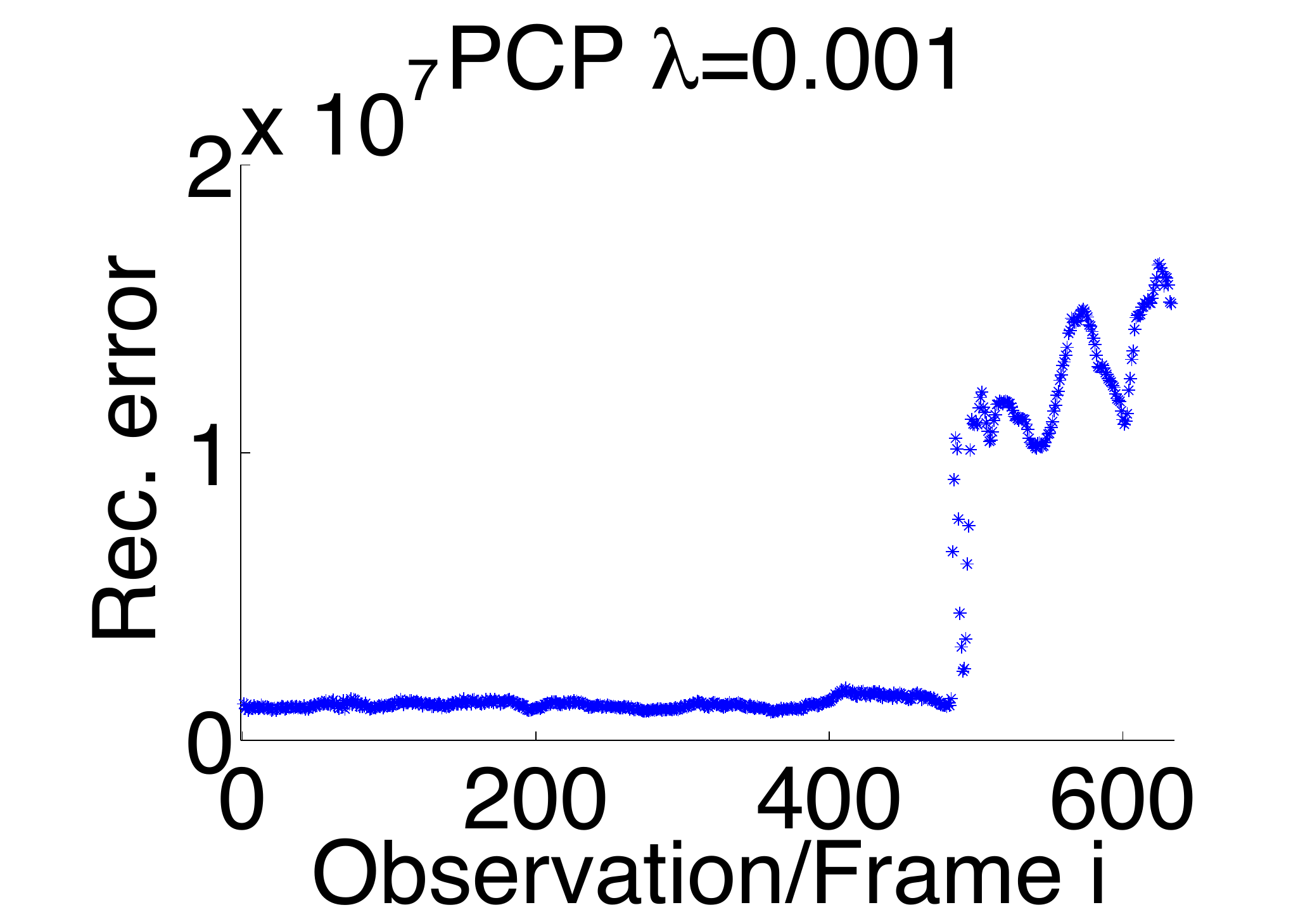} \\
\includegraphics[width=.194\columnwidth]{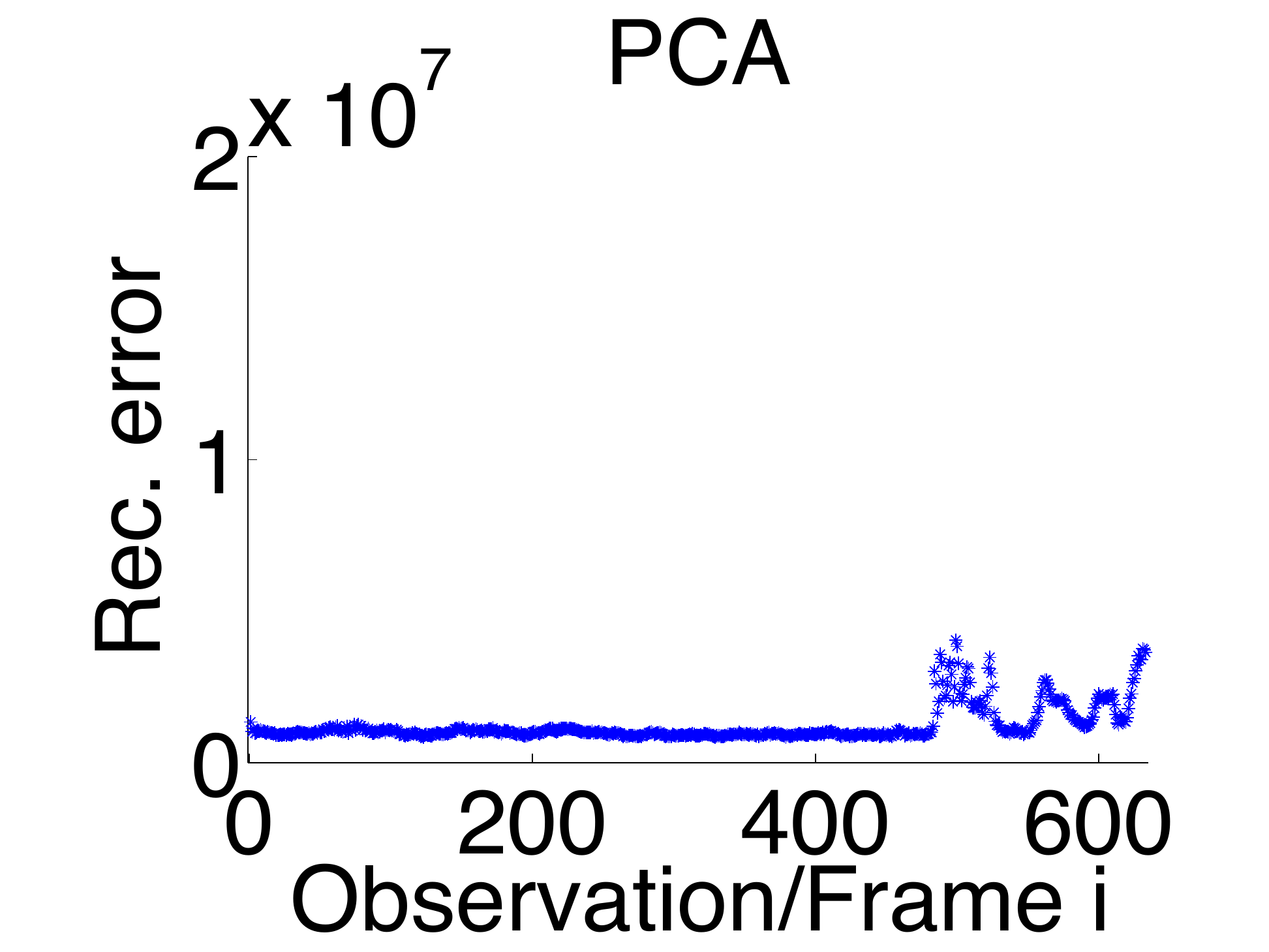} &
\includegraphics[width=.194\columnwidth]{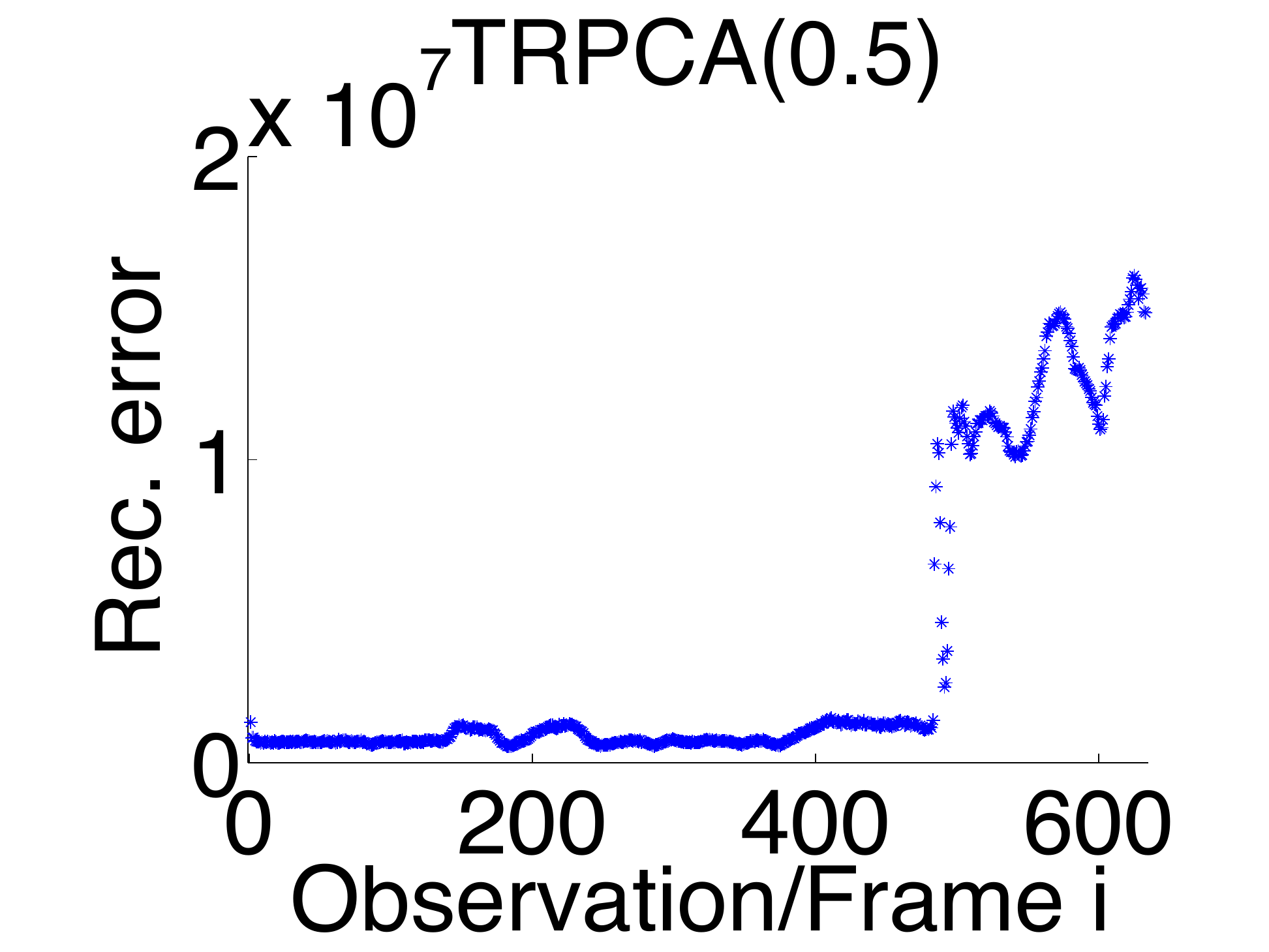} &
\includegraphics[width=.194\columnwidth]{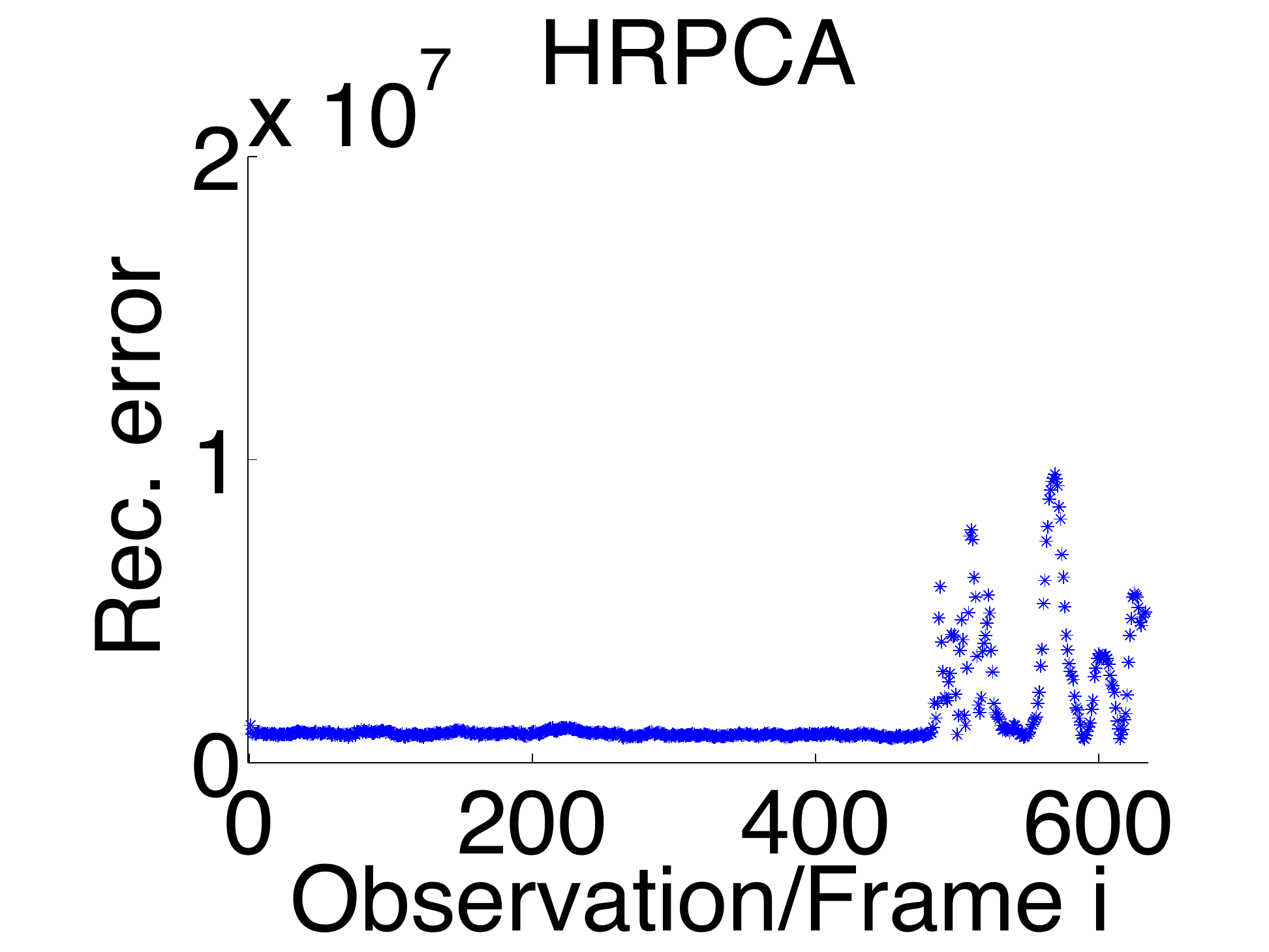}  &
\includegraphics[width=.194\columnwidth]{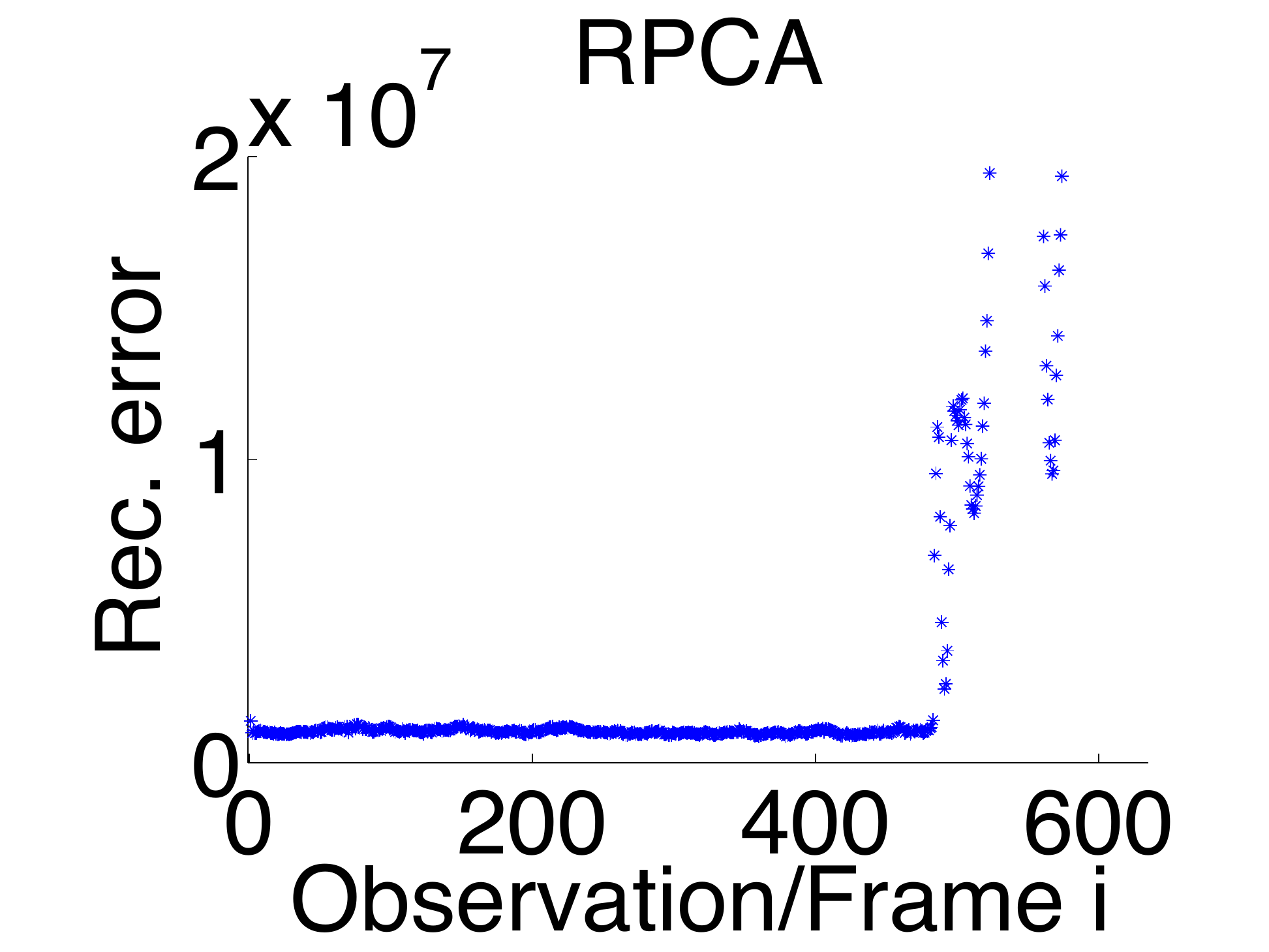} &
\includegraphics[width=.194\columnwidth]{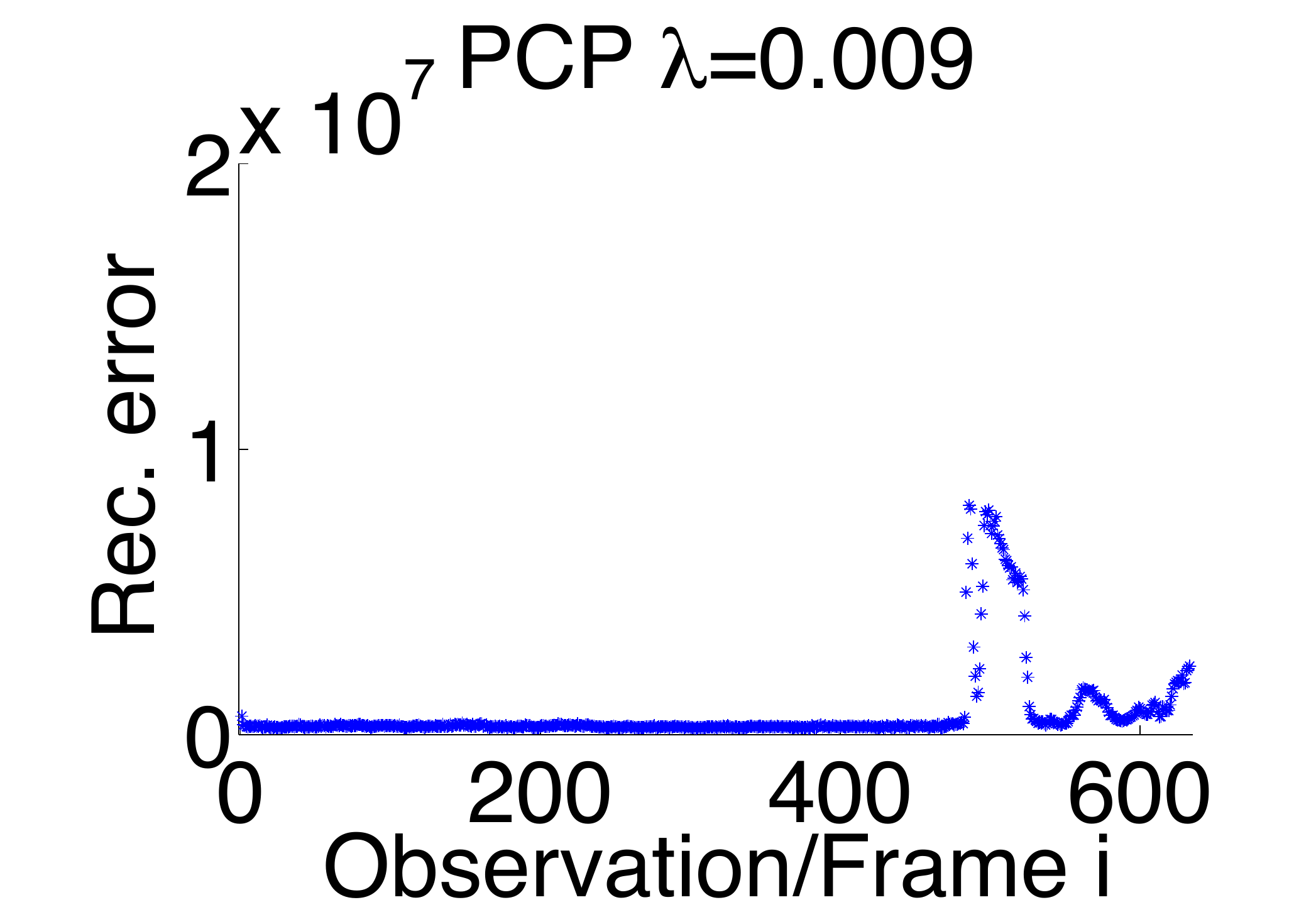}
\end{tabular}
\caption{Reconstruction errors, i.e., $||(x_i-m^*)-U^*\rbra{U^*}^{\top}(x_i-m^*)||_2^2$, on the y-axis, for each frame on the x-axes for $k=10$. Note that the person is visible in the scene from frame 481 until the end. We consider the background images as true data and, thus, the reconstruction error should be high after frame 481 (when the person enters).}
\label{fig:imagesresWS}
\end{figure}

\subsection{Background Modeling and Subtraction} \label{sec:bms}
In~\cite{TorBla2001} and~\cite{CanEtAl2009} robust PCA has been proposed as a method for background modeling and subtraction. While we are not claiming that robust PCA is the best method to do this, it is an interesting test for robust PCA. The data $X$ are the image frames of a video sequence. The idea is that slight change in the background leads to a low-rank variation of the data whereas the foreground changes cannot be modeled by this and can be considered as outliers. Thus with the estimates $m^*$ and $U^*$ of the robust PCA methods, the solution of the background subtraction and modeling problem is given as
\begin{equation}\label{foreground}
x_i^{b}=m^*+U^*(U^*)^{\top}(x_i-m^*)
\end{equation}
where $x_i^b$ is the background of frame $i$ and its foreground is simply $x_i^f=x_i-x_i^b$.

We experimentally compare the performance of all robust PCA methods on the water surface data set~\cite{WaterSurf}, which has moving water in its background. We choose this dataset of $n=633$ frames each of size $p=128\times 160=20480$ as it is computationally feasible for all the methods. In Fig.~\ref{fig:bfwsi560}, we show the background subtraction results of several robust PCA algorithms. We optimized the value $\lambda$ for PCP of~\cite{CanEtAl2009},~\cite{WriEtAl2009} by hand to obtain a good decomposition, see the bottom right pictures of Fig.~\ref{fig:bfwsi560}. How crucial the choice of $\lambda$ is for this method can be seen from the bottom right pictures. Note that the reconstruction error of both the default version of TRPCA and TRPCA(0.5) with ground truth information provide almost perfect reconstruction errors with respect to the true data, cf., Fig.~\ref{fig:imagesresWS}. Hence, TRPCA is the only method which recovers the foreground and background without mistakes. We refer to the supplementary material for more explanations regarding this experiment as well as results for another background subtraction data set. The runtimes of all methods for the water surface data set are presented in Table~\ref{tab:runtime}, which shows that TRPCA is the fastest of all methods.

\begin{table}
\footnotesize{
\centering
\caption{
Runtimes for the water surface data set for the algorithms described in Section~\ref{sec:exp}. For TRPCA/TRPCA(0.5) we report the average time of one initialization (in practice, $5-10$ random restarts are sufficient). For PCP we report the runtime for the employed parameter $\lambda=0.001$.
For all others methods, it is the time of one full run of the algorithm including the search for regularization parameters.
}

\begin{center}
\begin{tabular}{| l | r | r | r | r | r | r | r | r | r |} \hline
           & trpca  & trpca(.5) & orpca & orpca(.5) & hrpca & hrpca(.5) & lld & rpca & pcp($\lambda=0.001$)  \\  \hline
$k=1$ & $7$ & ${13}$ & $3659$ & $3450$ & $45990$ & $48603$ & $-$ & $1078$ & $-$ \\ \hline
$k=3$ & ${99}$ &  ${61}$ & $8151$ & $13852$  & $50491$ & $56090$ & $-$ & $730$ & $-$ \\ \hline
$k=5$ & ${64}$ & ${78}$ & $2797$ & $3726$ & $72009$ & $77344$ & $232667$ & $3615$ & $875$ \\ \hline
$k=7$ & ${114}$ & ${62}$ &  $4138$ & $3153$ & $67174$ & $90931$ & $-$ & $4230$ & $-$ \\ \hline
$k=9$ & ${119}$ & ${92}$ & $6371$ & $8508$ & $96954$ &  $106782$ & $-$ & $4113$ & $-$ \\ \hline
\end{tabular} \label{tab:runtime}
\end{center}
}
\end{table}

\section{Conclusion}
We have presented a new method for robust PCA based on the trimmed reconstruction error. Our efficient algorithm, using fast descent on the Stiefel manifold, works in the default setting ($t=\ceil{\frac{n}{2}}$) without any free parameters and is significantly faster than other competing methods. In all experiments TRPCA performs better or at least similar to other robust PCA methods, in particular, TRPCA solves challenging background subtraction tasks.\\[.2cm]

\noindent
\textbf{Acknowledgements.} M.H. has been partially supported by the ERC Starting Grant NOLEPRO and
M.H. and S.S. have been partially supported by the DFG Priority Program 1324, ``Extraction of quantifiable information from complex systems".

\begin{figure}
\centering
\begin{tabular}{cccc}
\includegraphics[width=.25\columnwidth]{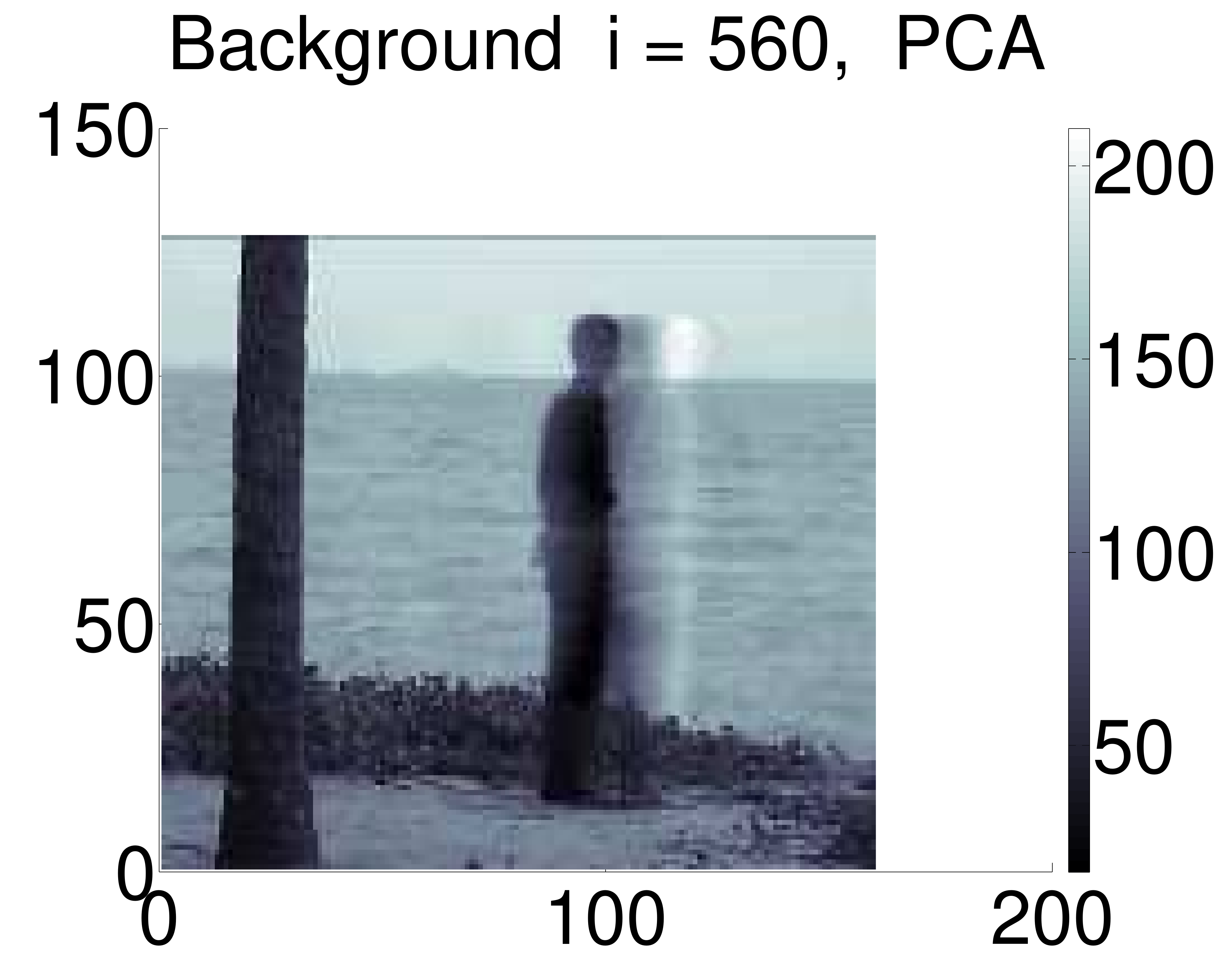} & 
\includegraphics[width=.25\columnwidth]{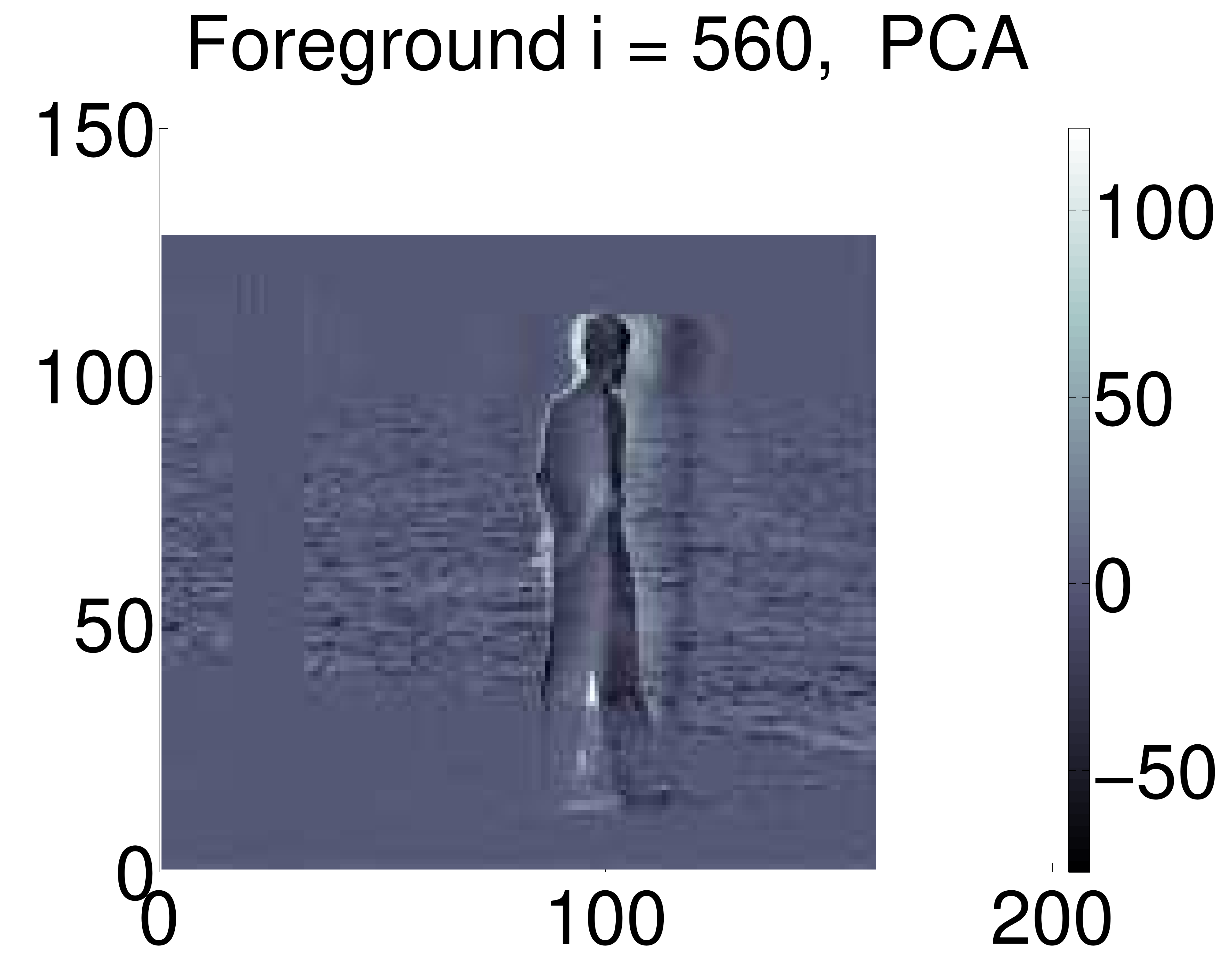} &
\includegraphics[width=.25\columnwidth]{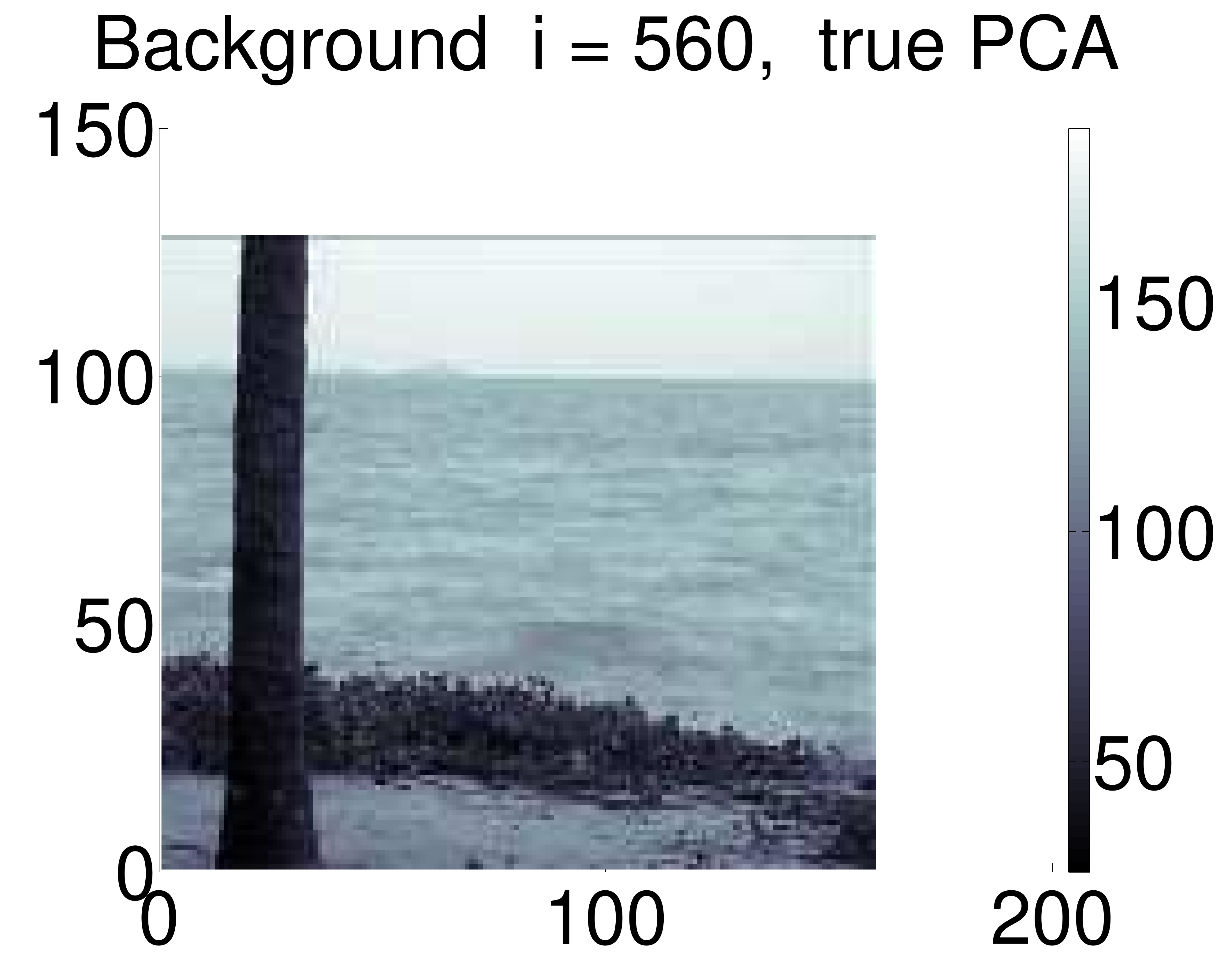} &
\includegraphics[width=.25\columnwidth]{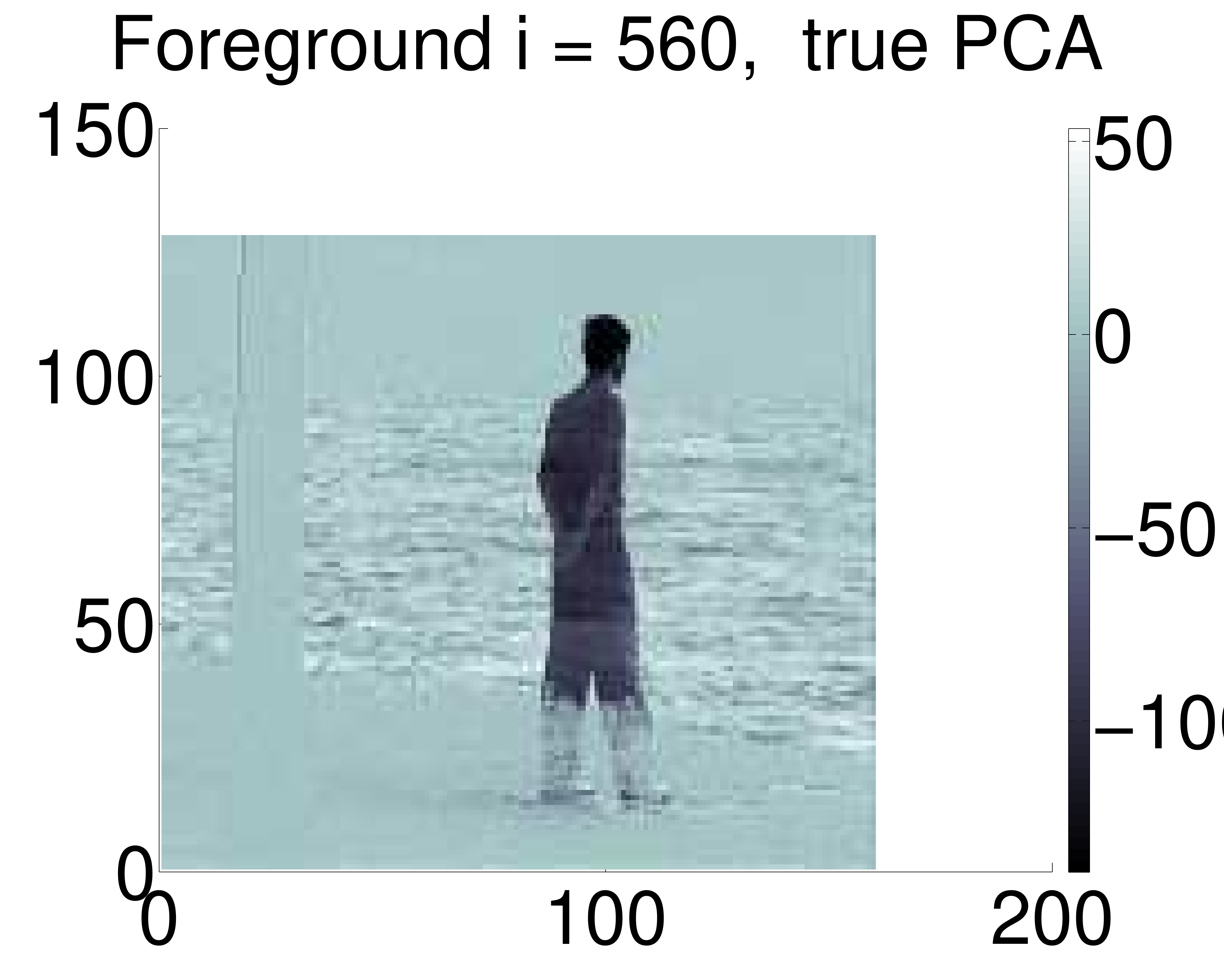} \\ 
\hline 
\includegraphics[width=.25\columnwidth]{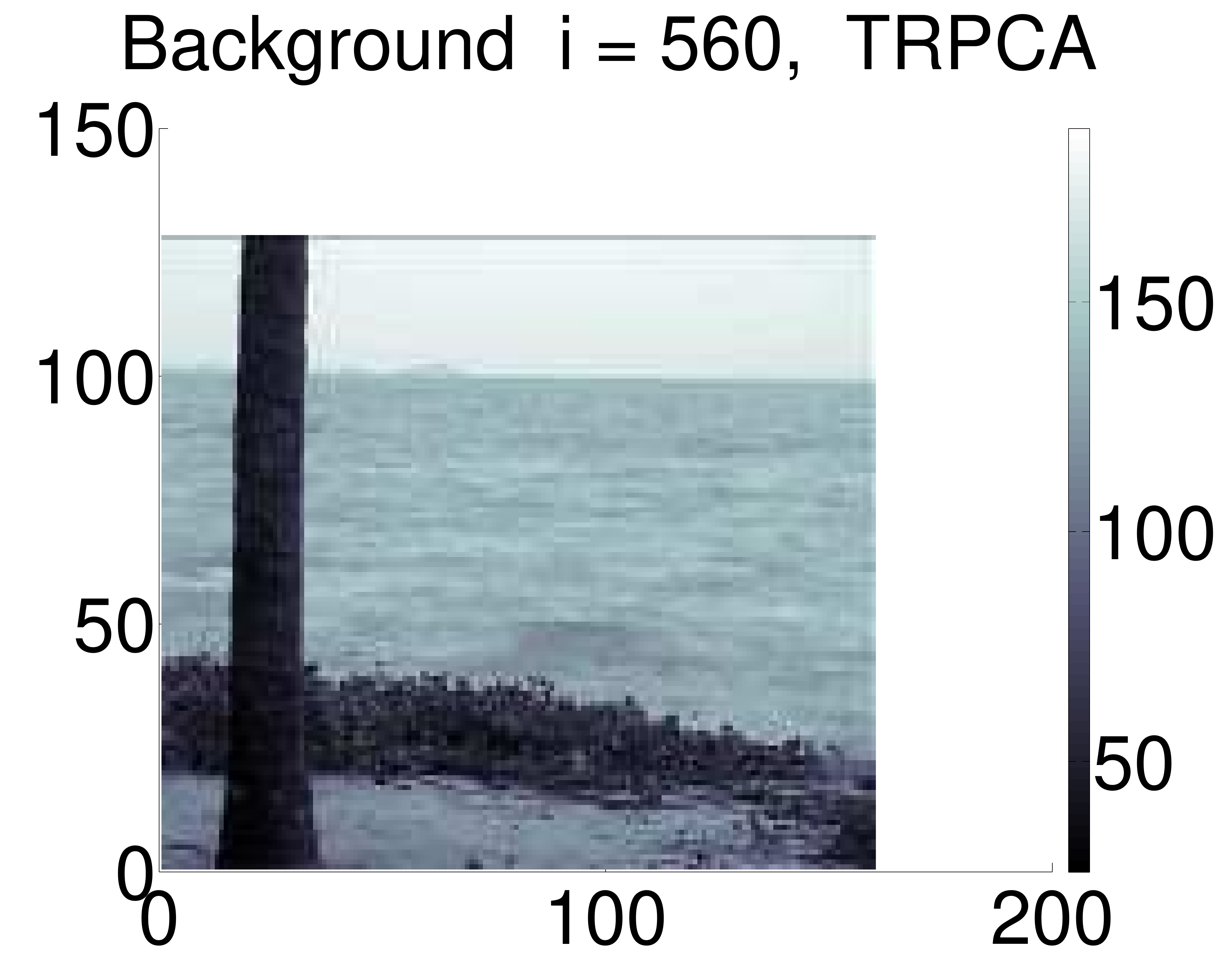} & 
\includegraphics[width=.25\columnwidth]{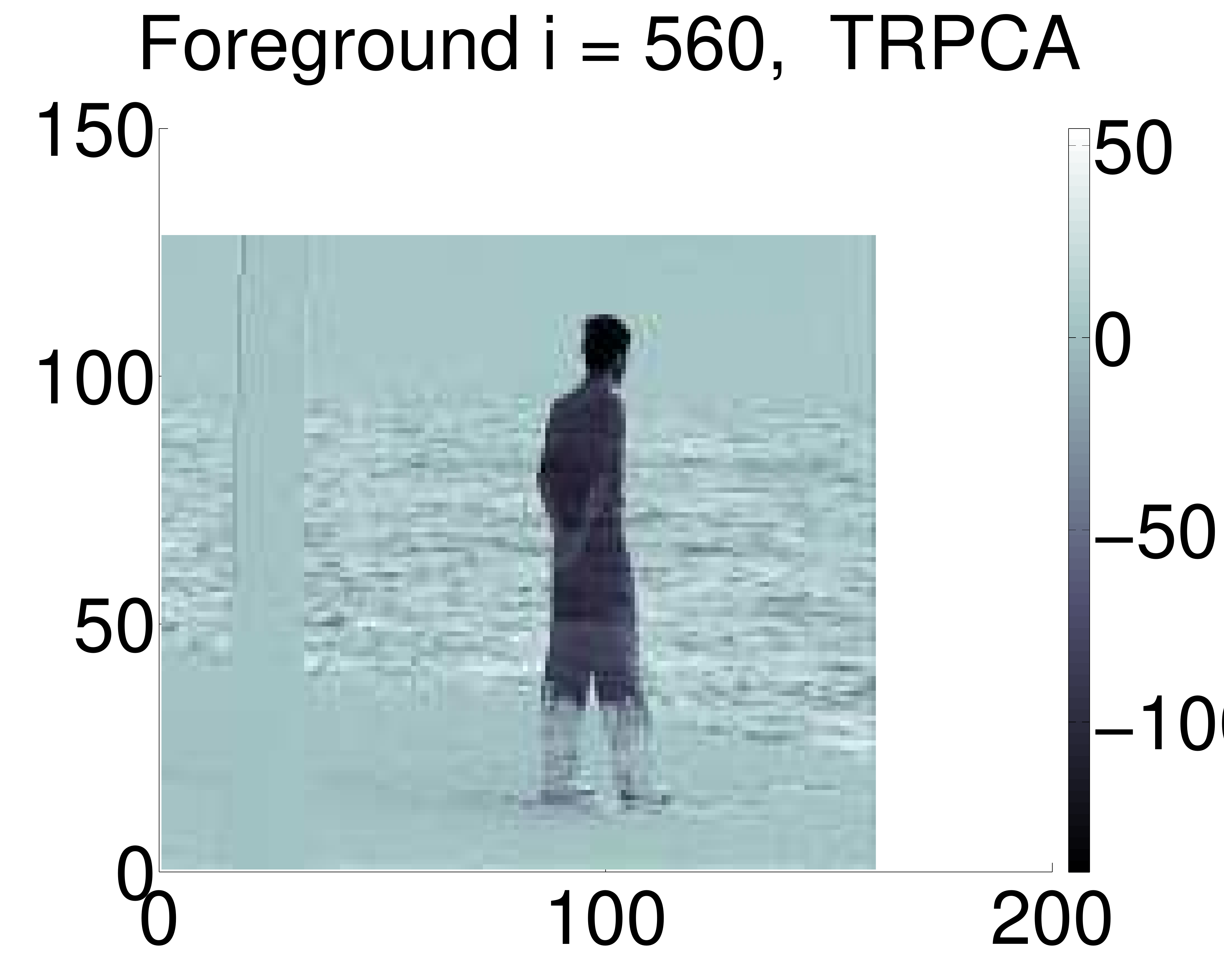} & 
\includegraphics[width=.25\columnwidth]{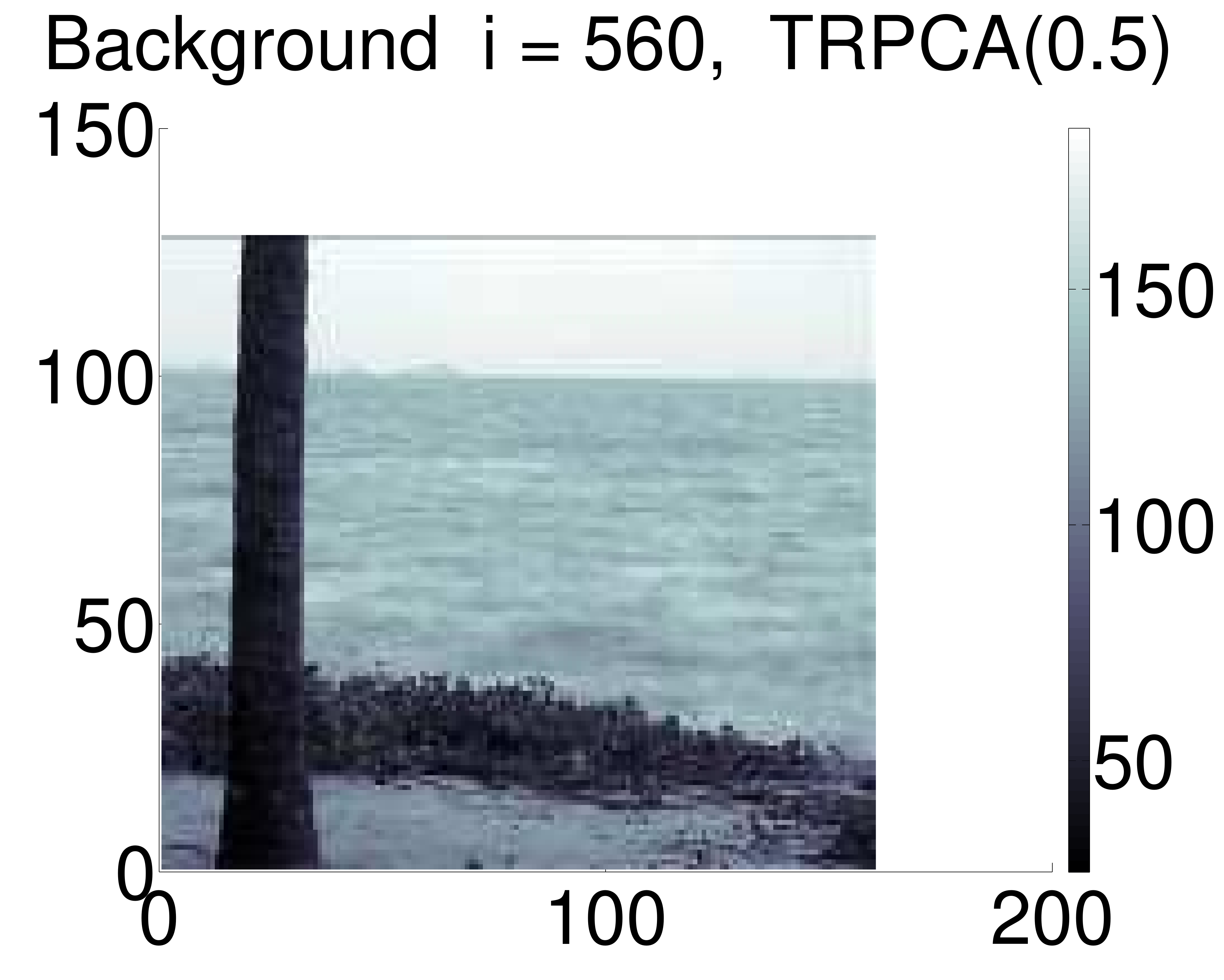} & 
\includegraphics[width=.25\columnwidth]{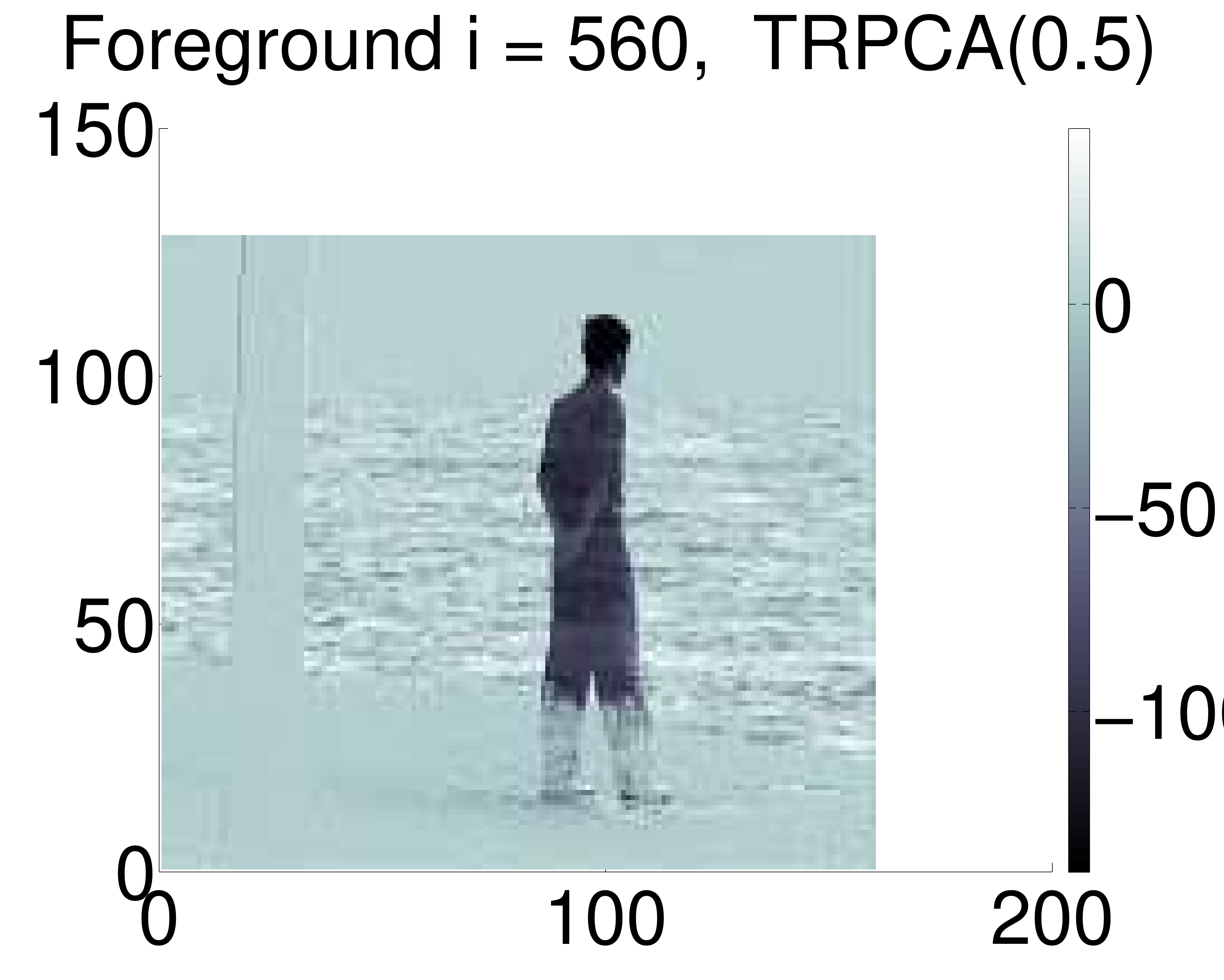} \\ 
\includegraphics[width=.25\columnwidth]{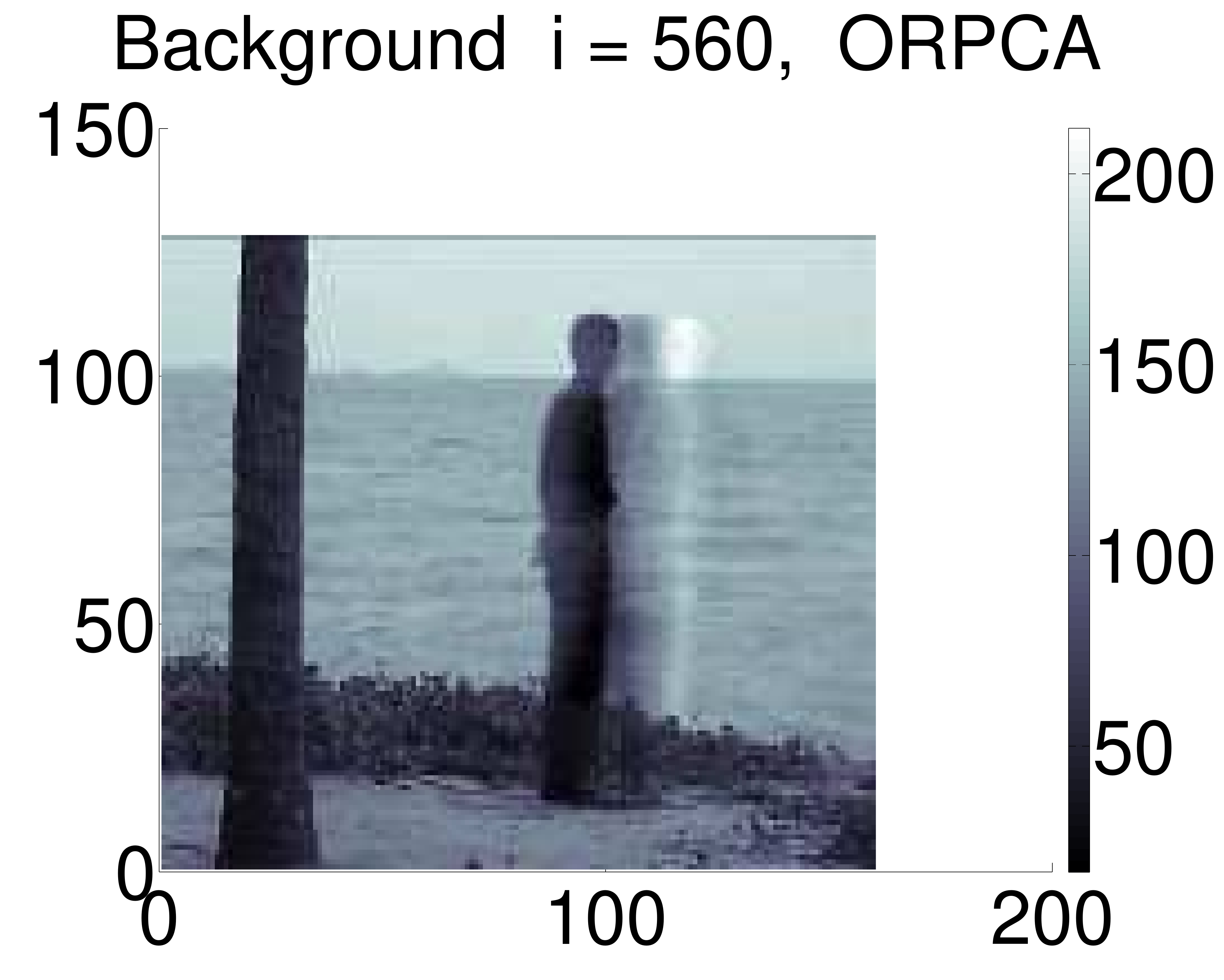}& 
\includegraphics[width=.25\columnwidth]{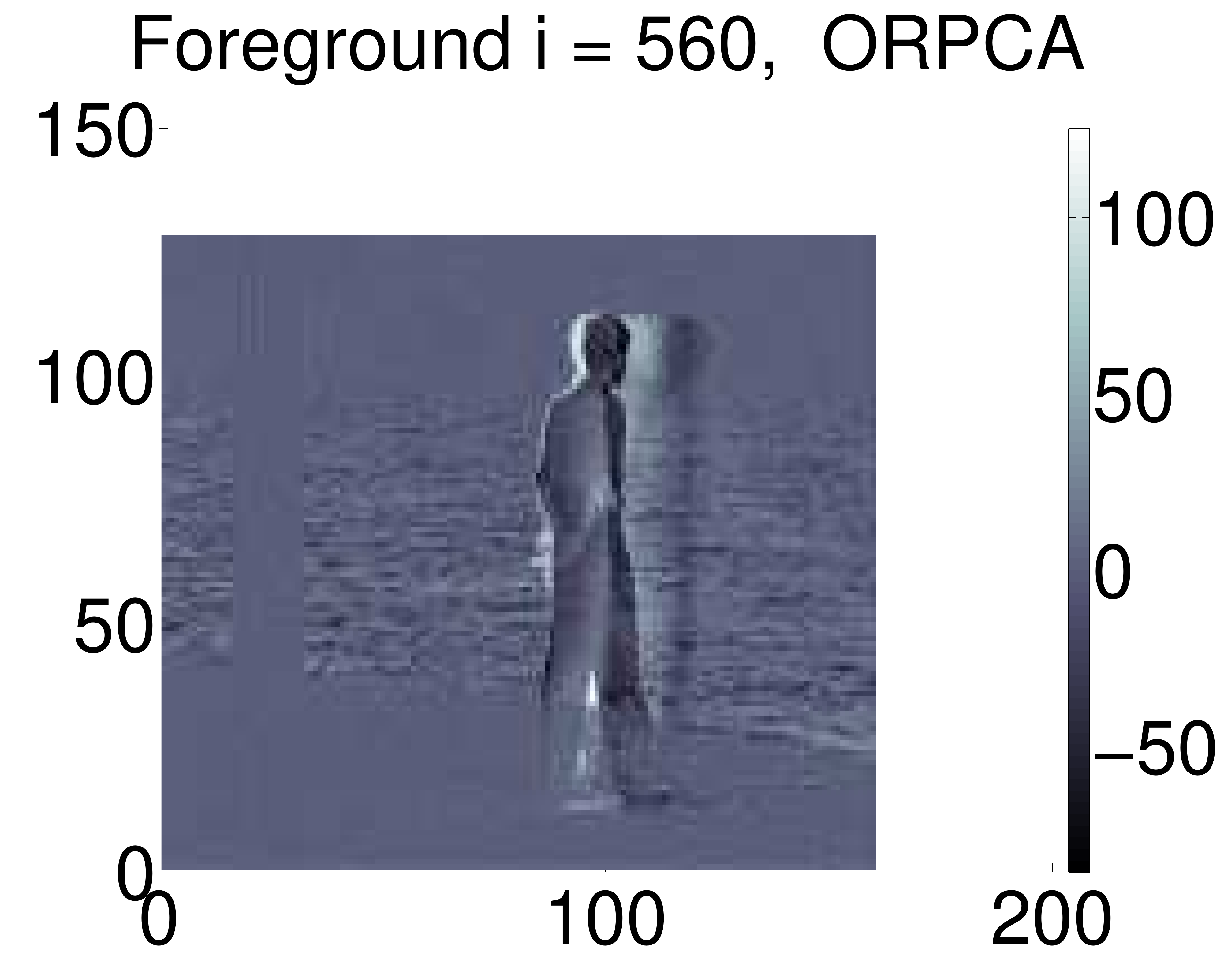} & 
\includegraphics[width=.25\columnwidth]{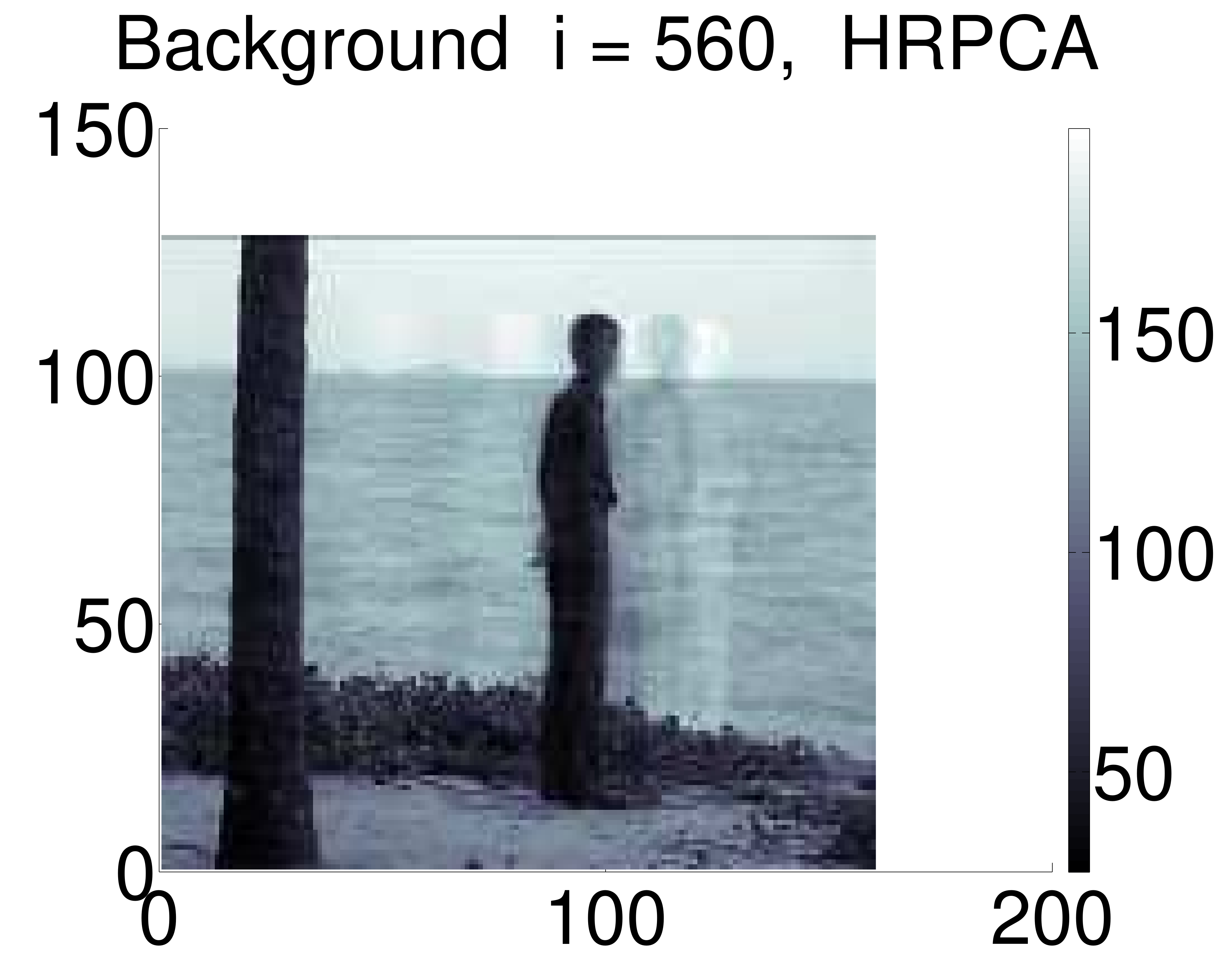} & 
\includegraphics[width=.25\columnwidth]{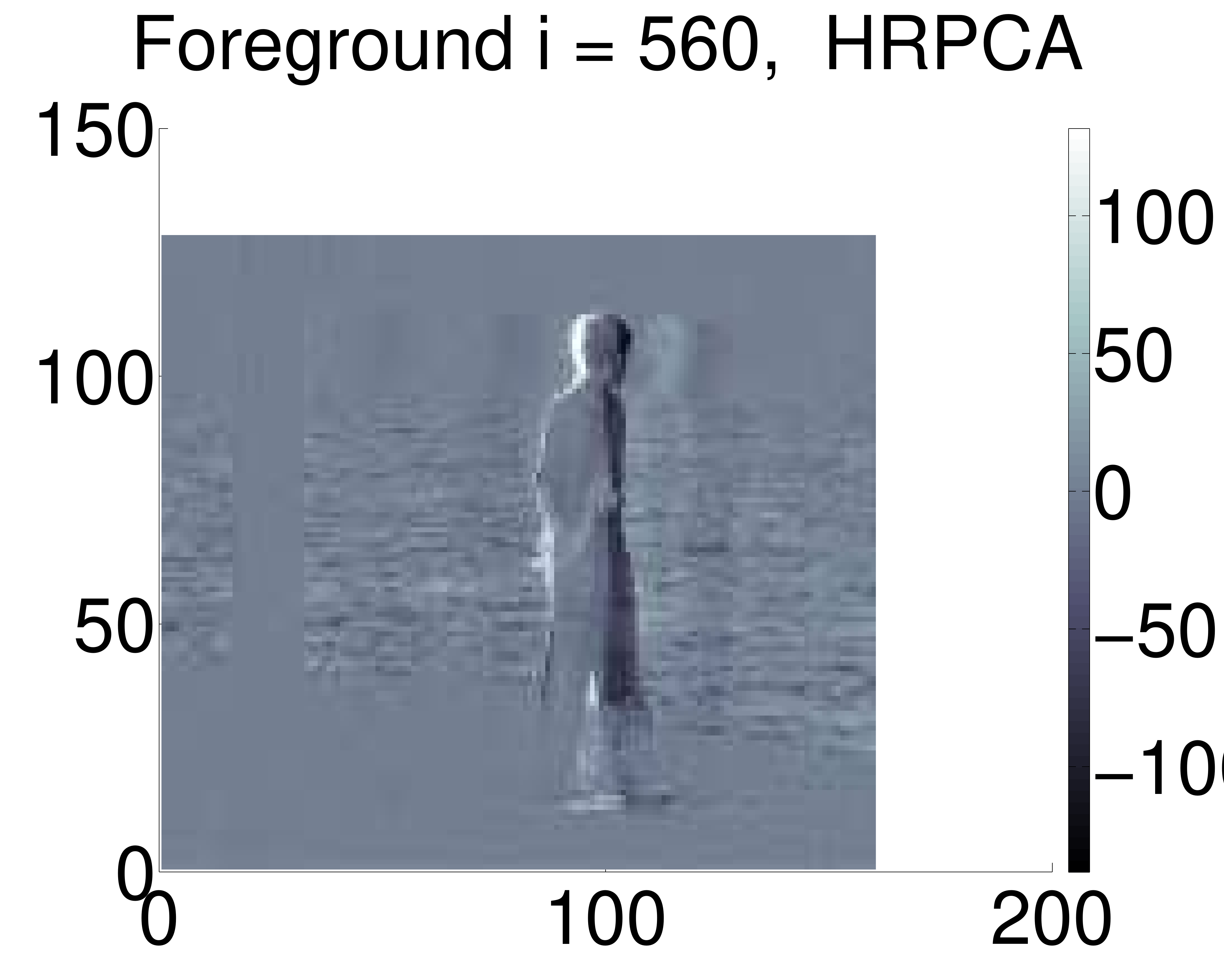} \\ 
\includegraphics[width=.25\columnwidth]{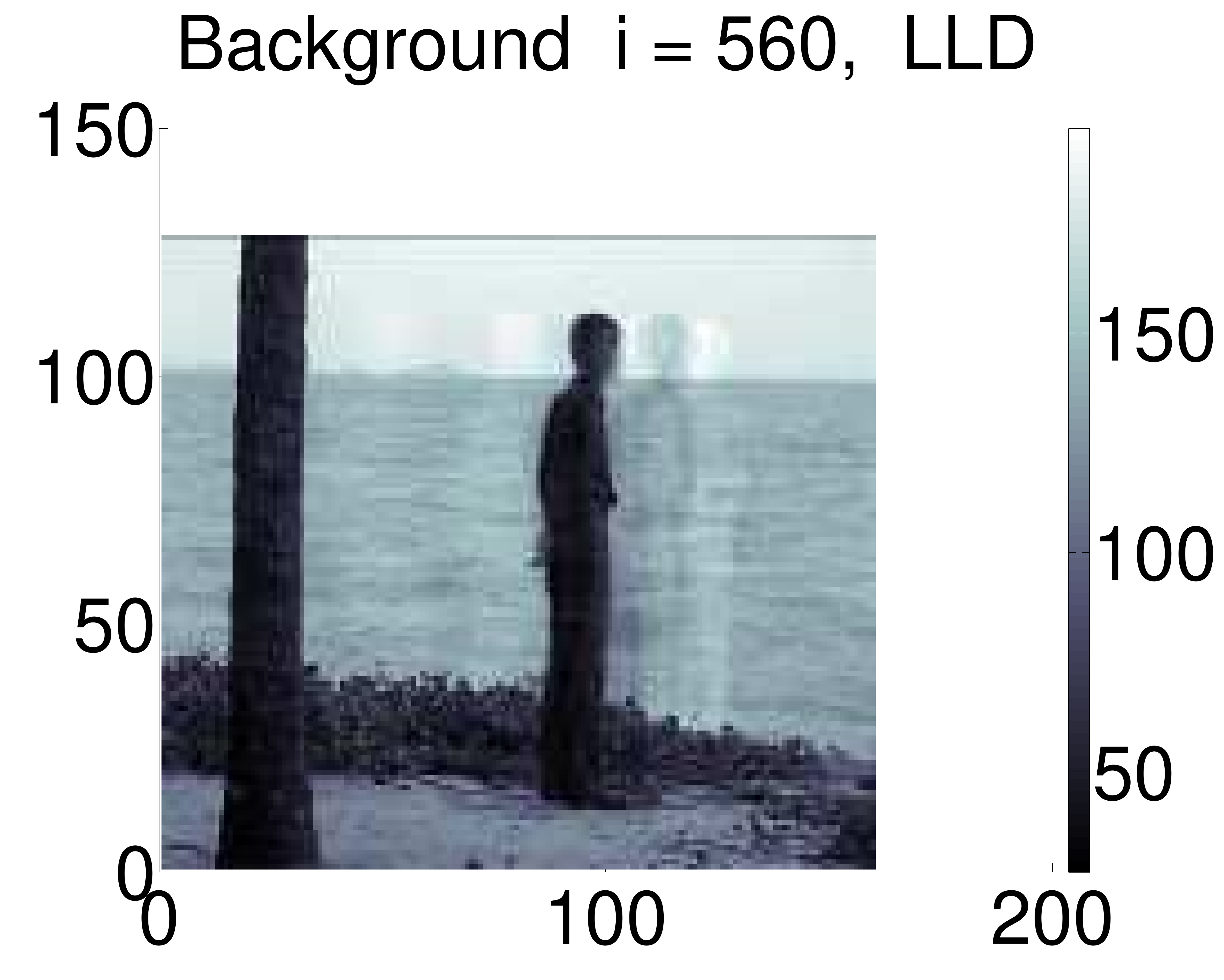} & 
\includegraphics[width=.25\columnwidth]{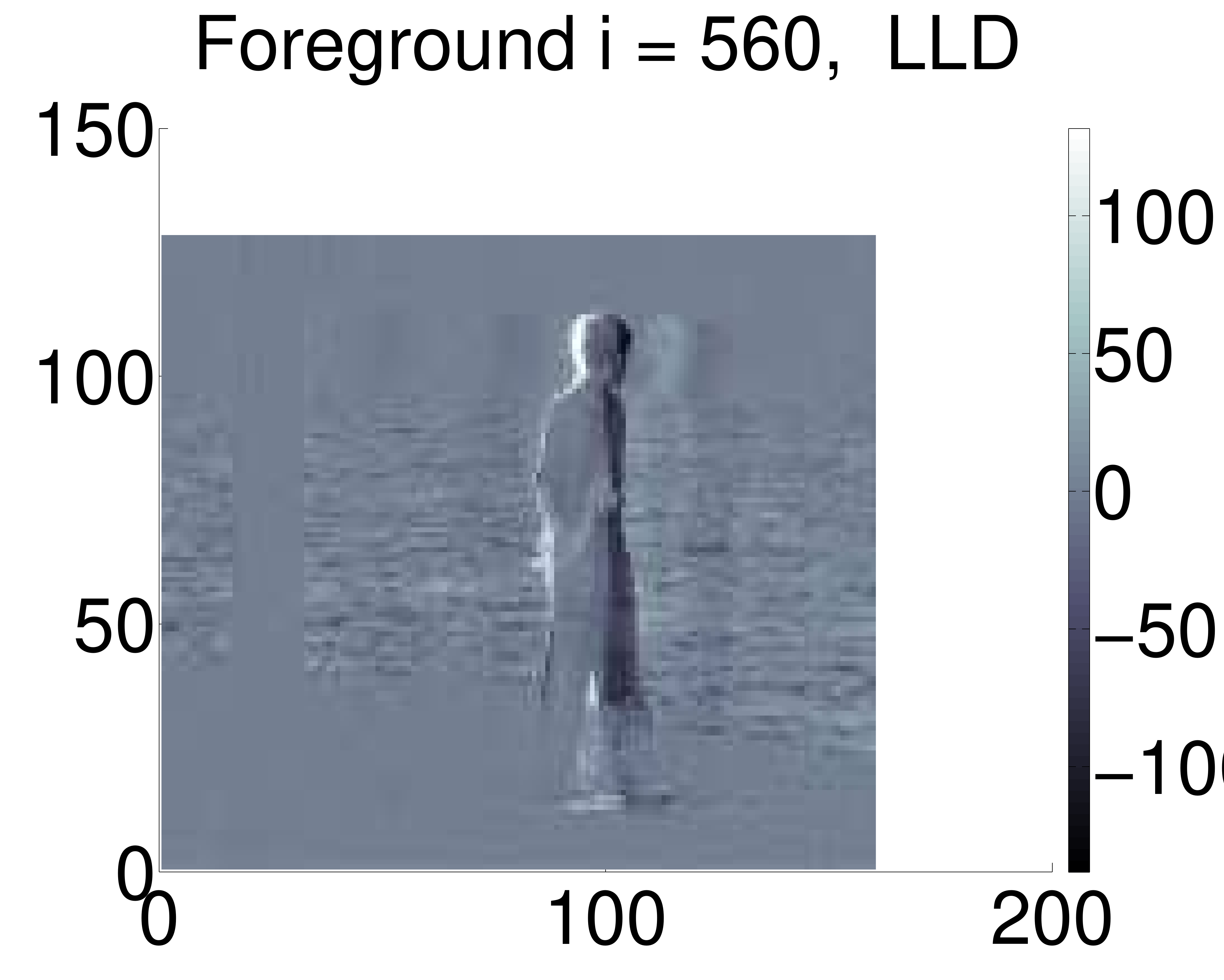}  &
\includegraphics[width=.25\columnwidth]{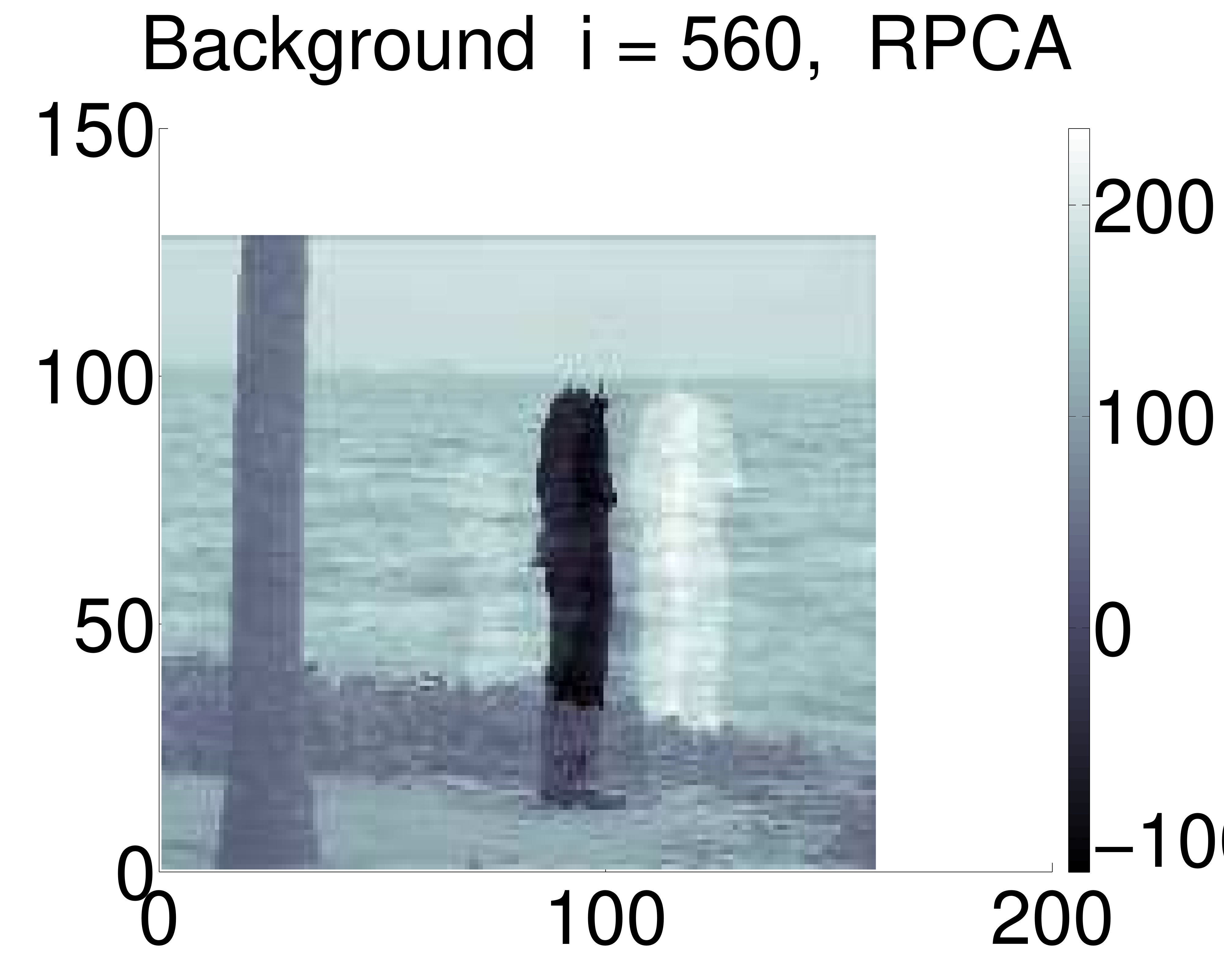} & 
\includegraphics[width=.25\columnwidth]{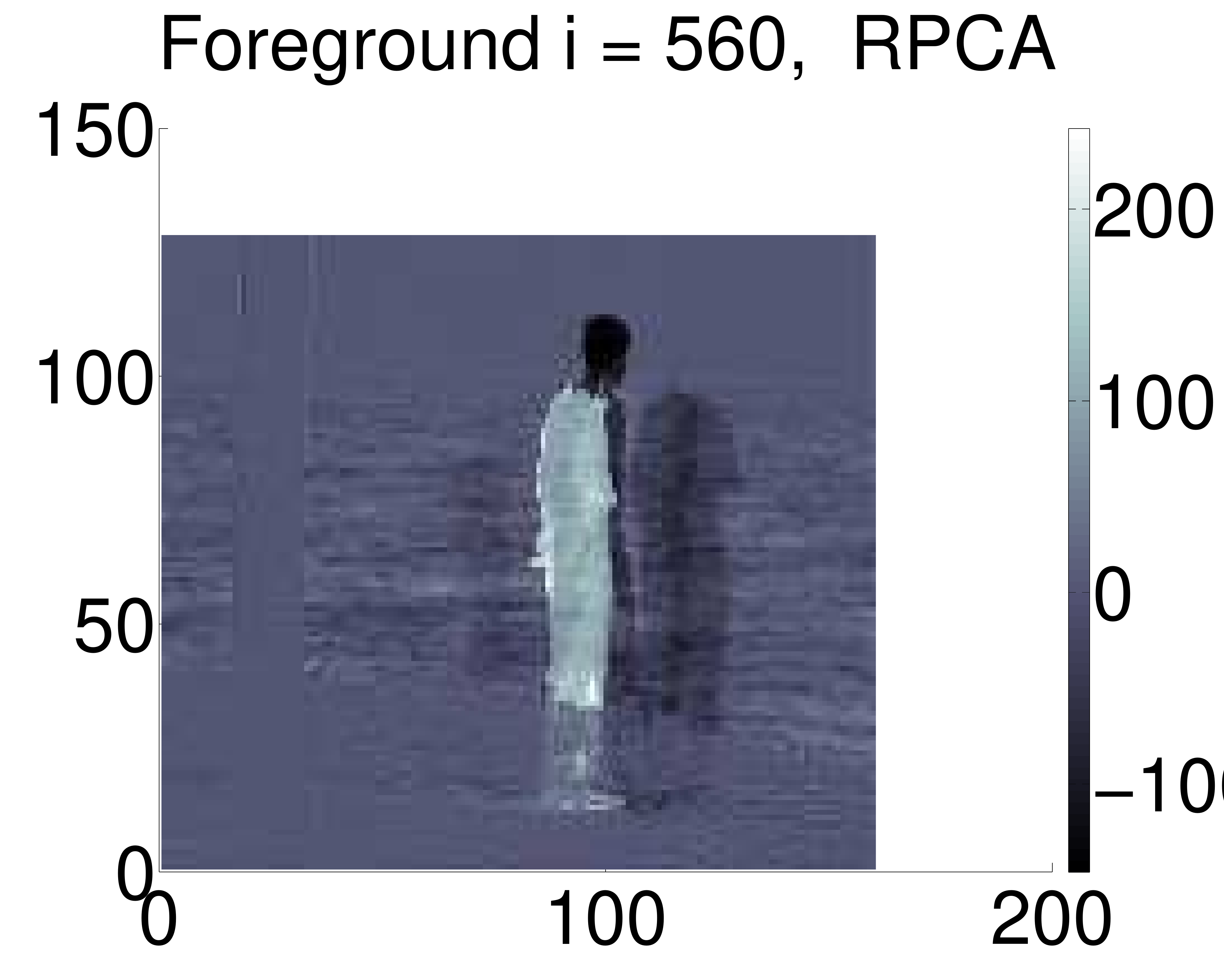} \\ 
\includegraphics[width=.25\columnwidth]{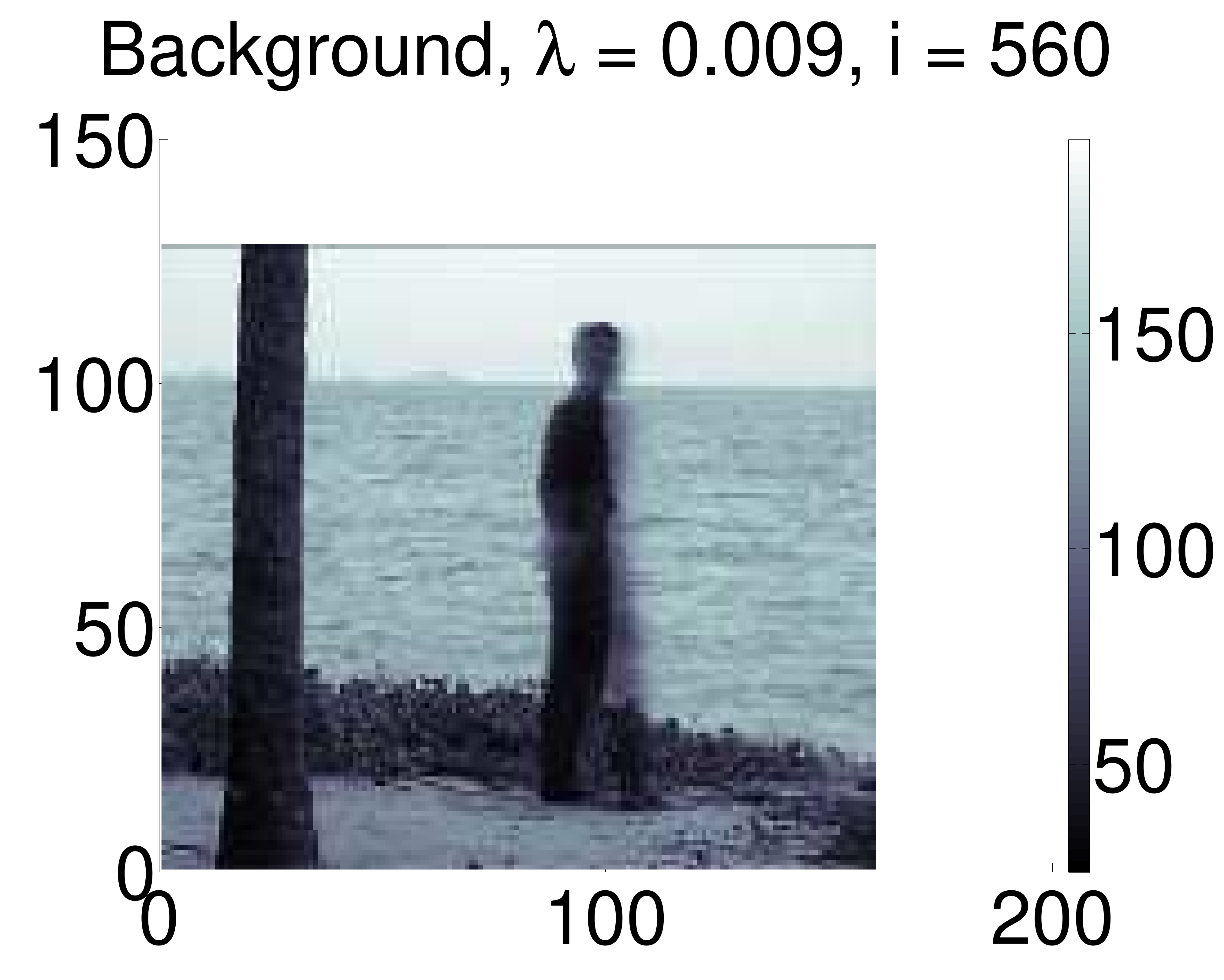} & 
\includegraphics[width=.25\columnwidth]{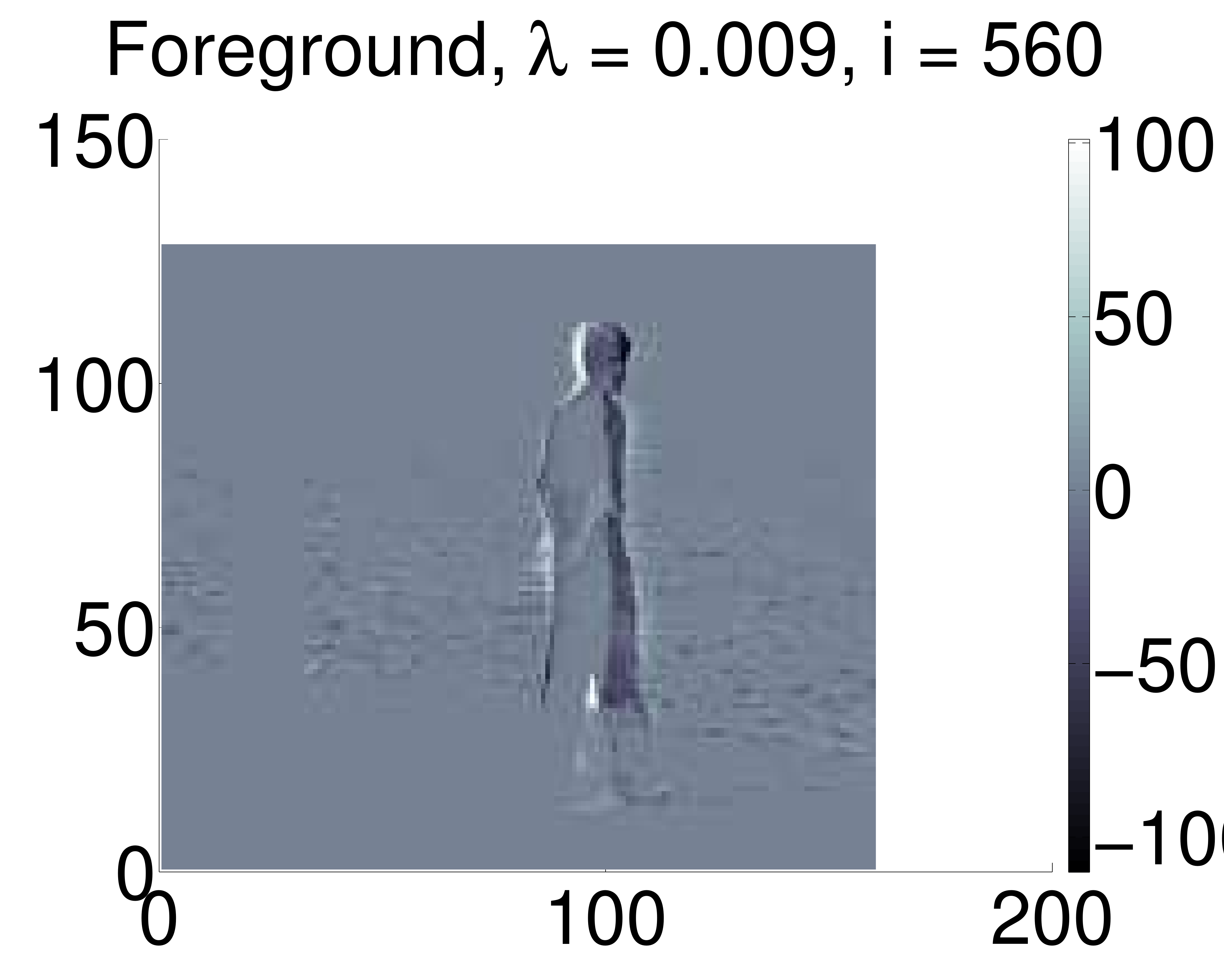} & 
\includegraphics[width=.25\columnwidth]{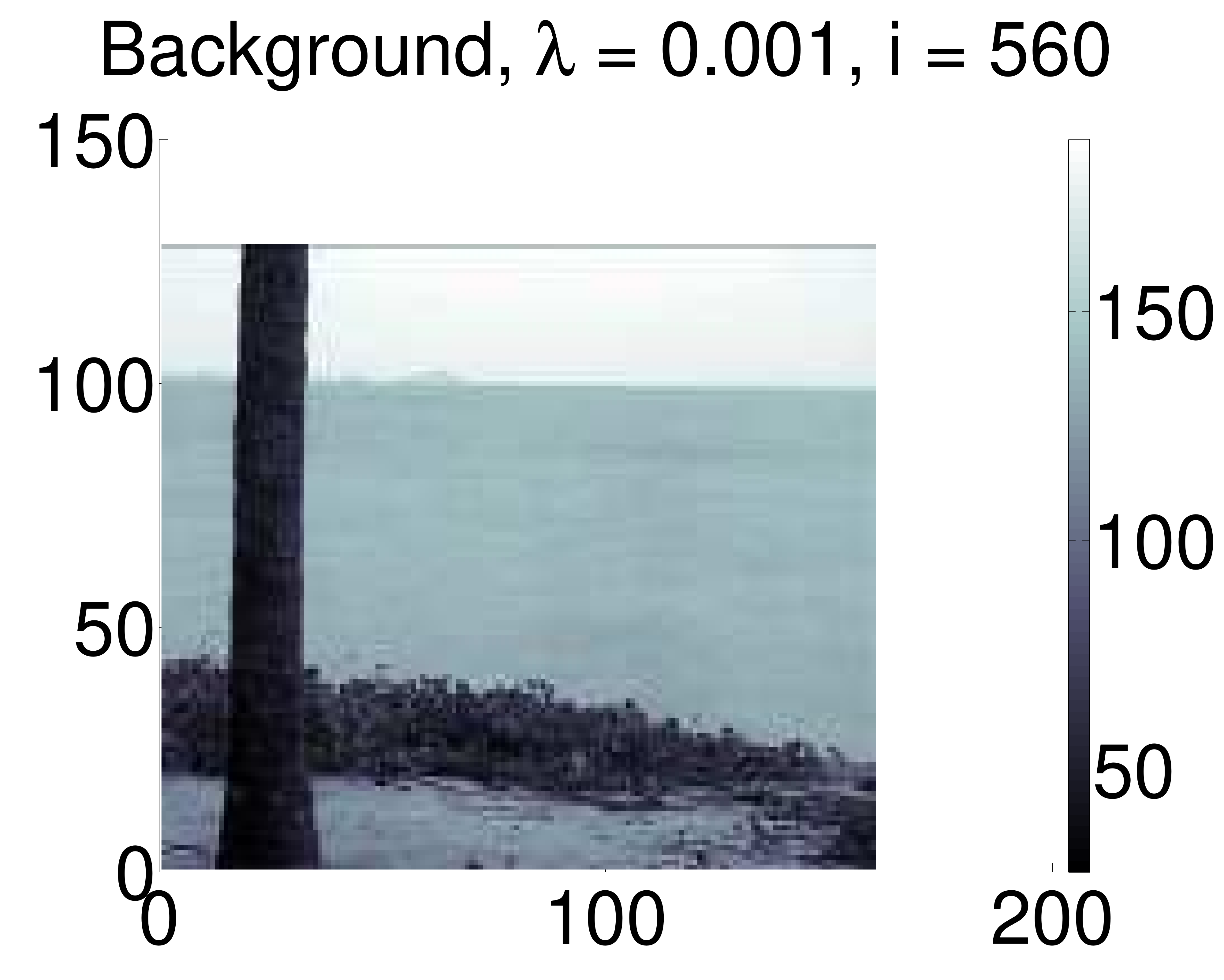} & 
\includegraphics[width=.25\columnwidth]{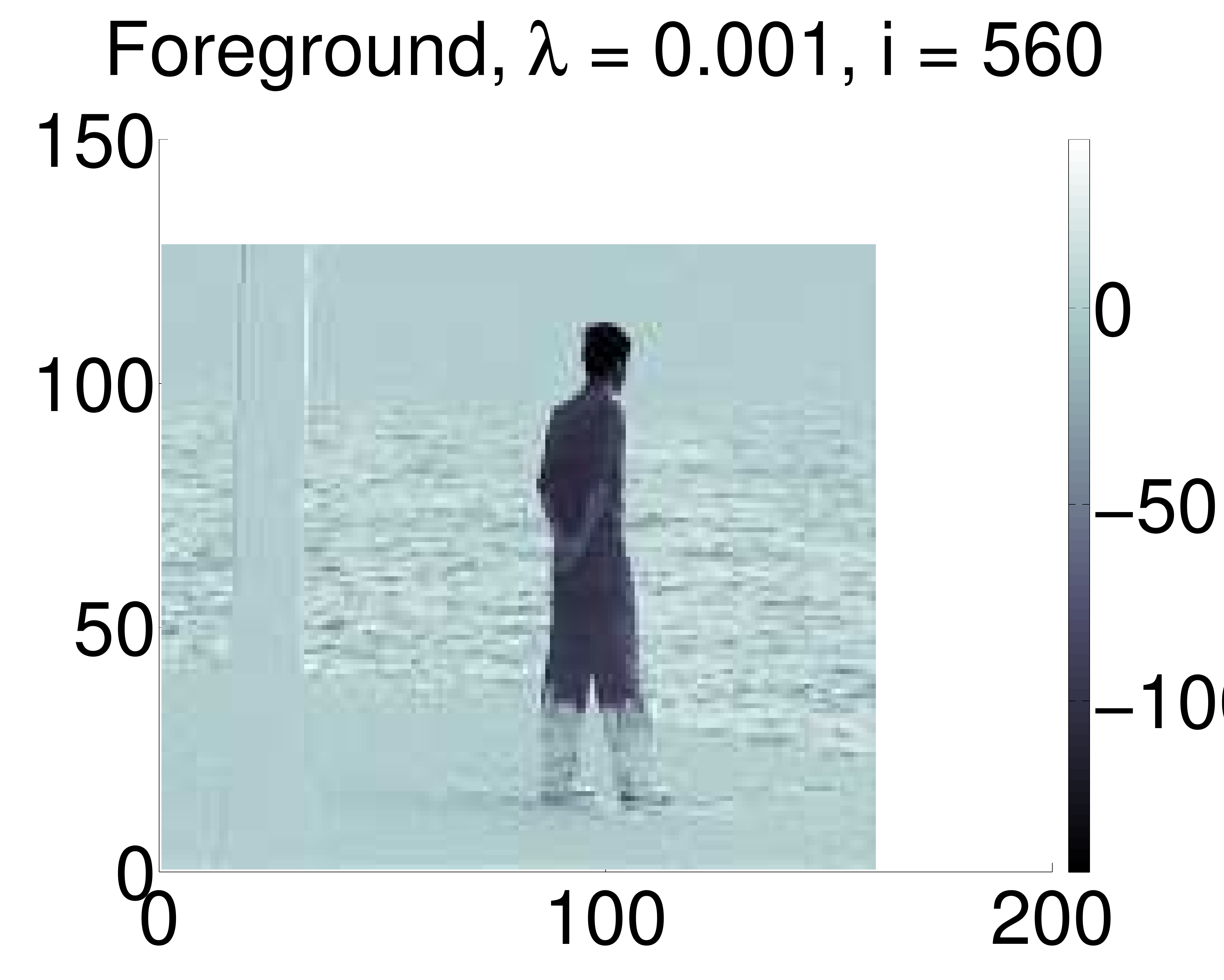}
\end{tabular}
\caption{Backgrounds and foreground for frame $i=560$ of the water surface data set. The last row corresponds to the PCP algorithm with values of $\lambda$ set by hand}
\label{fig:bfwsi560}
\end{figure}

\clearpage

\bibliographystyle{splncs03}
\bibliography{bibliography-supp}

\clearpage

\section{Supplementary material: Experiments}

In this supplementary material we present additional illustrations of the background subtraction experiments in Fig.~4-15. We consider the water surface data set and the moved object\footnote{See http://research.microsoft.com/en-us/um/people/jckrumm/wallflower/testimages.htm} data set. For both data sets the frames where no person is present represent the true data $T$ (background) and frames where the person is present are considered as outliers $O$.

\begin{figure}
\begin{tabular}{cccc}
\includegraphics[width=.25\columnwidth]{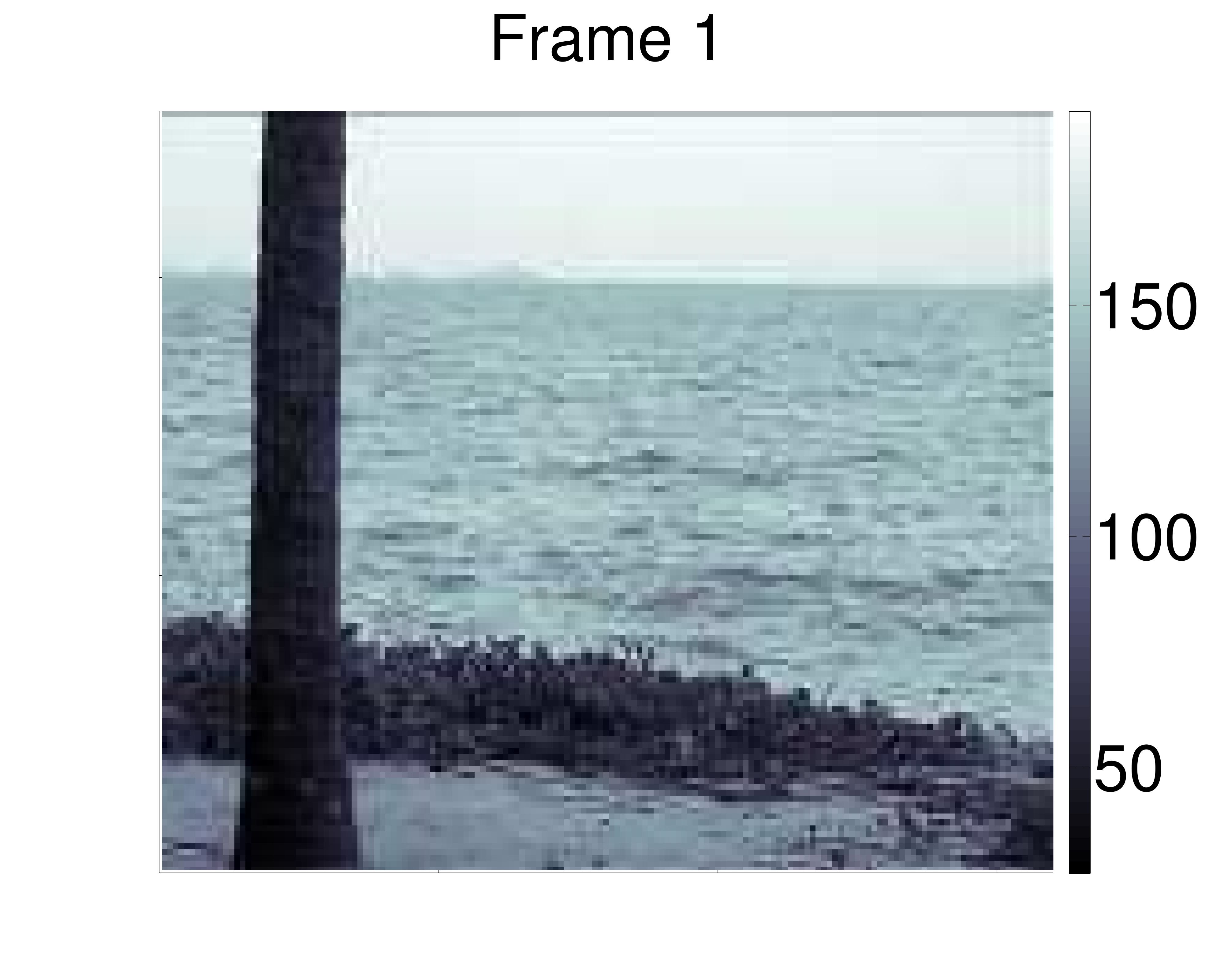} & 
\includegraphics[width=.25\columnwidth]{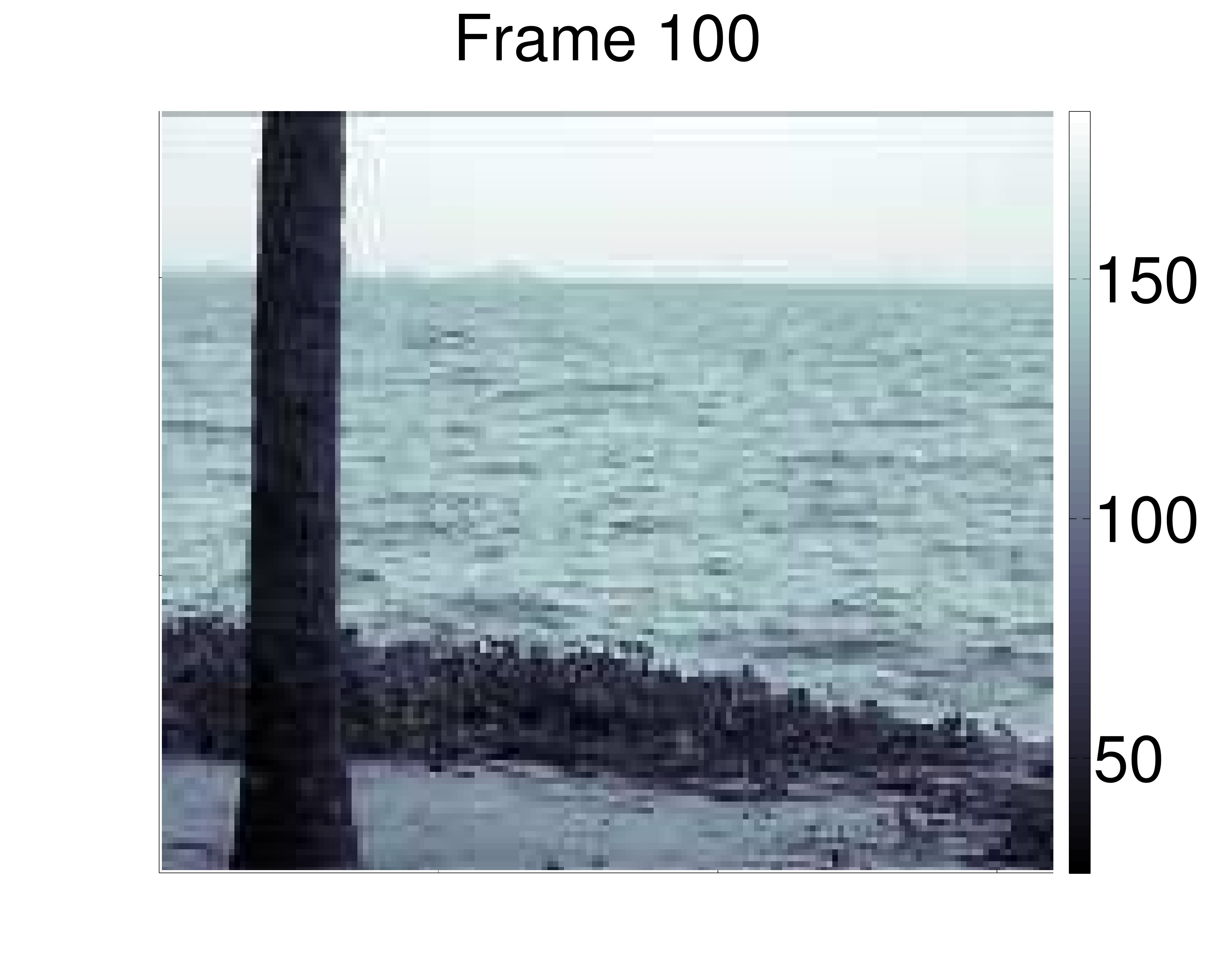} & 
\includegraphics[width=.25\columnwidth]{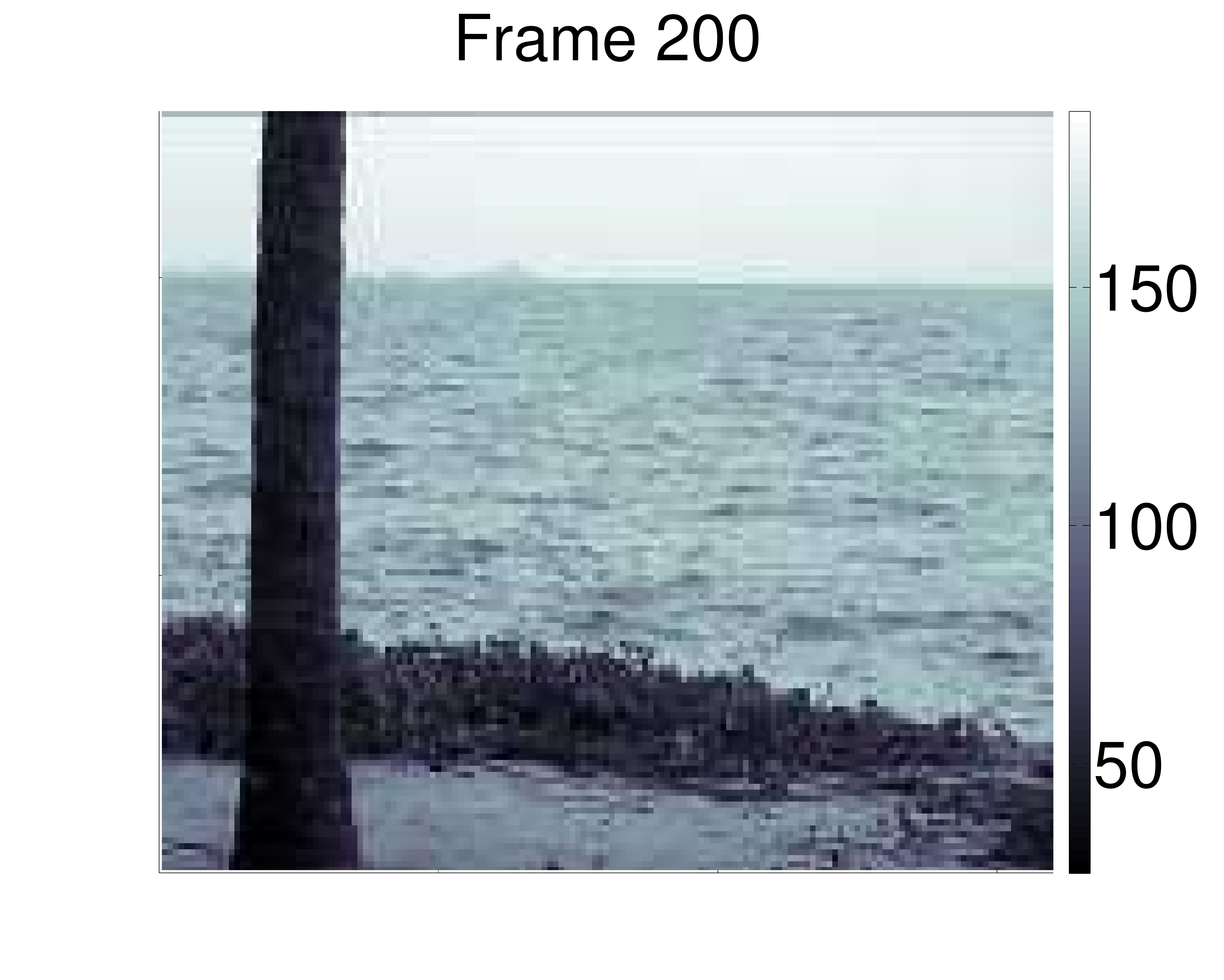} & 
\includegraphics[width=.25\columnwidth]{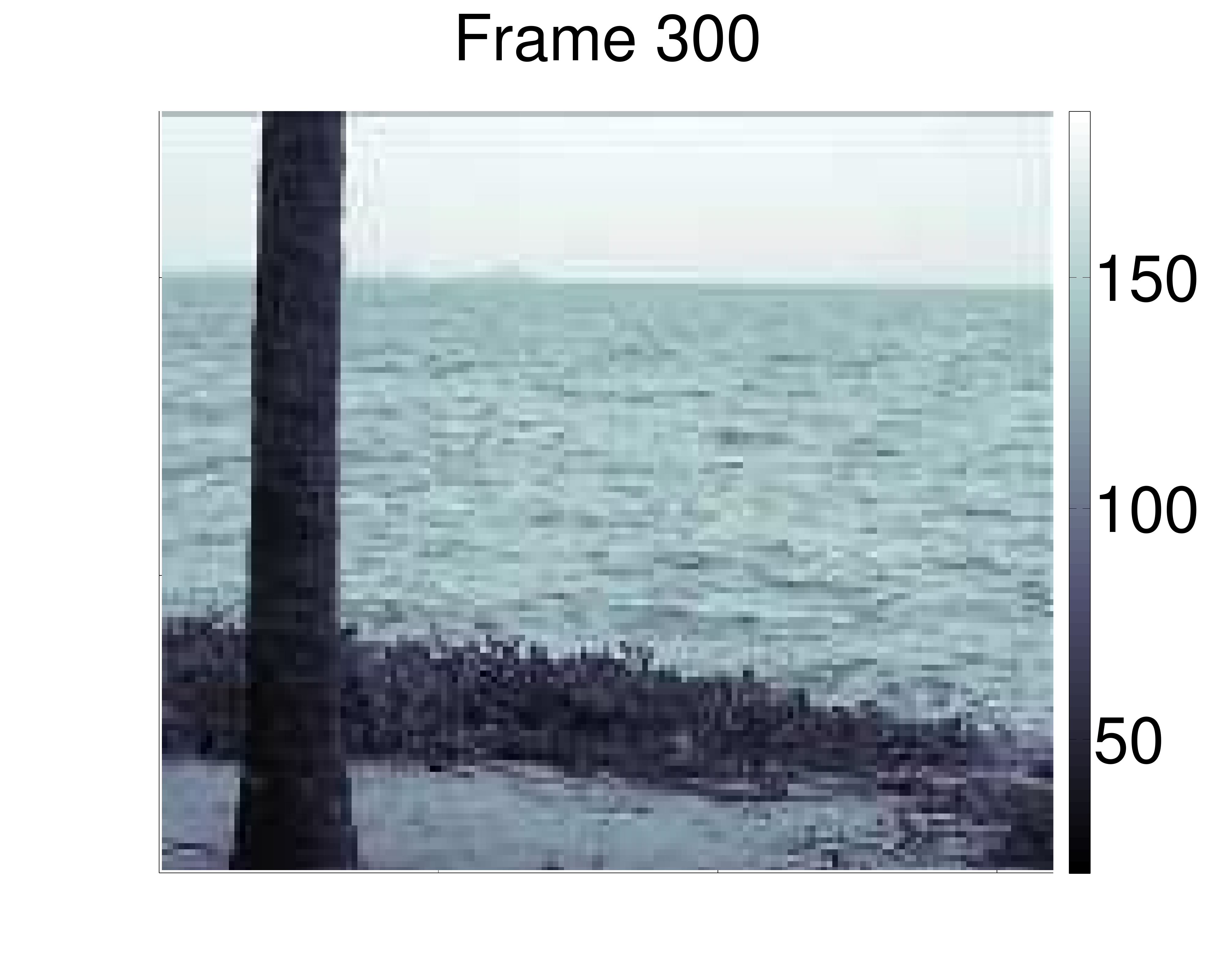} \\
\includegraphics[width=.25\columnwidth]{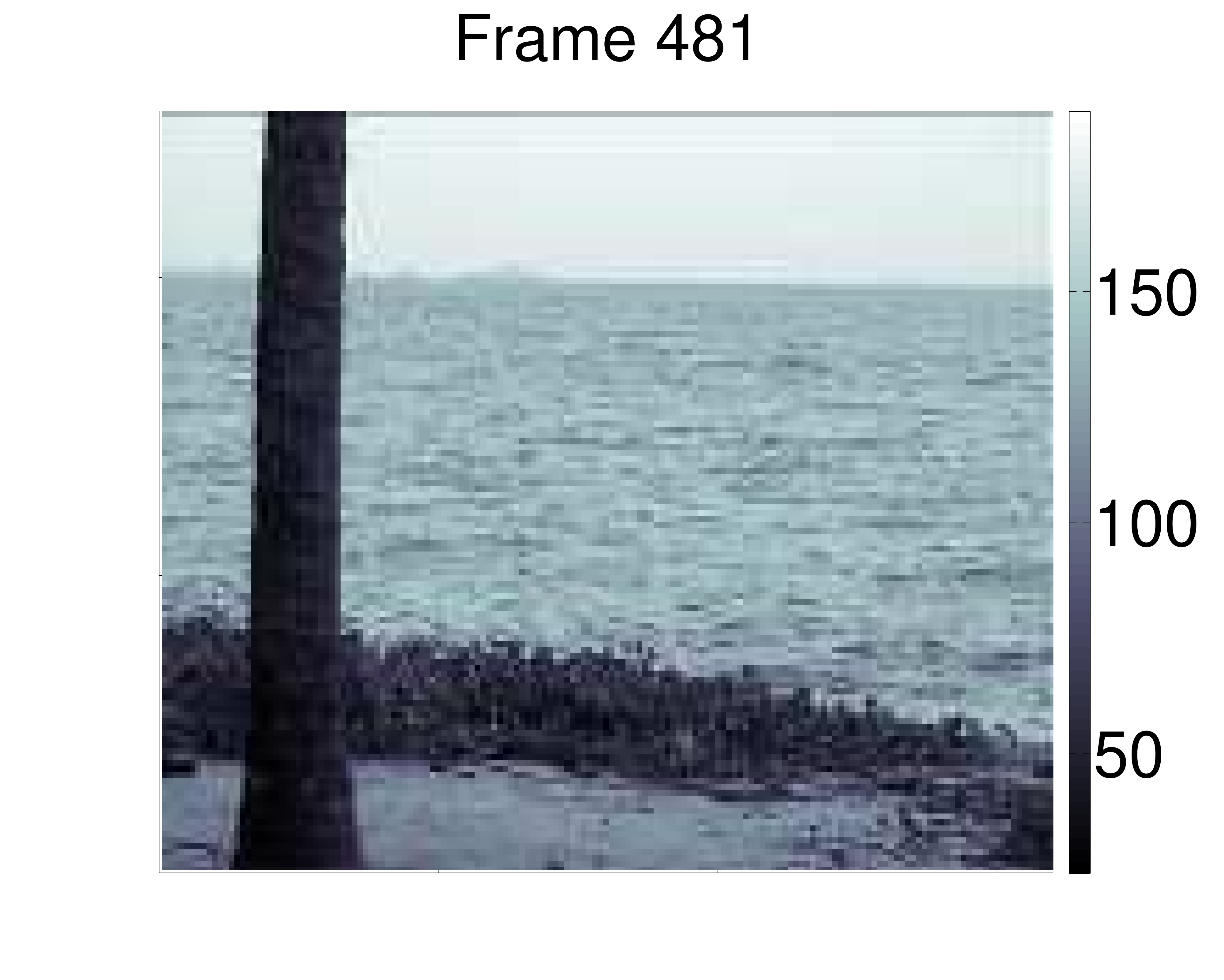} & 
\includegraphics[width=.25\columnwidth]{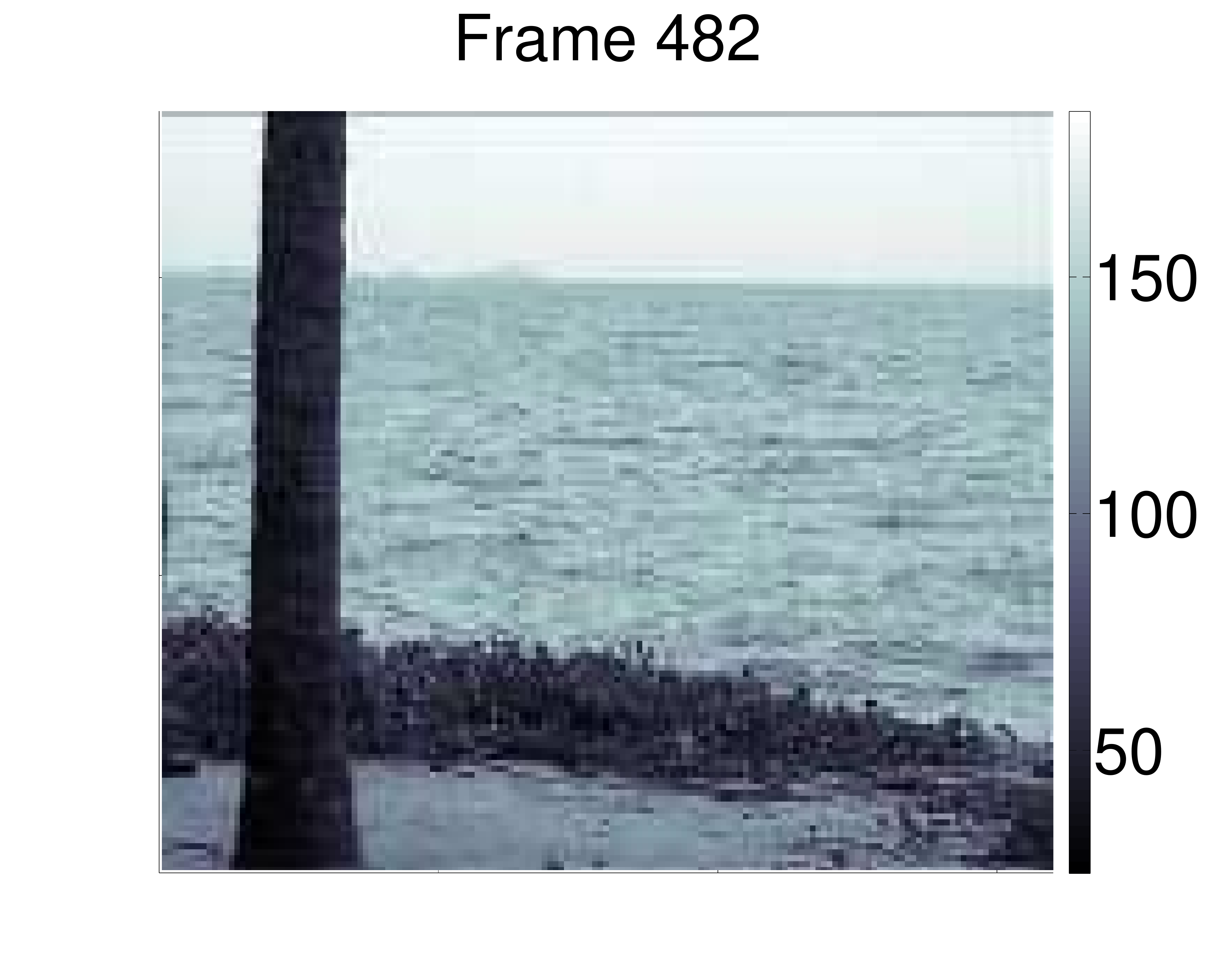} & 
\includegraphics[width=.25\columnwidth]{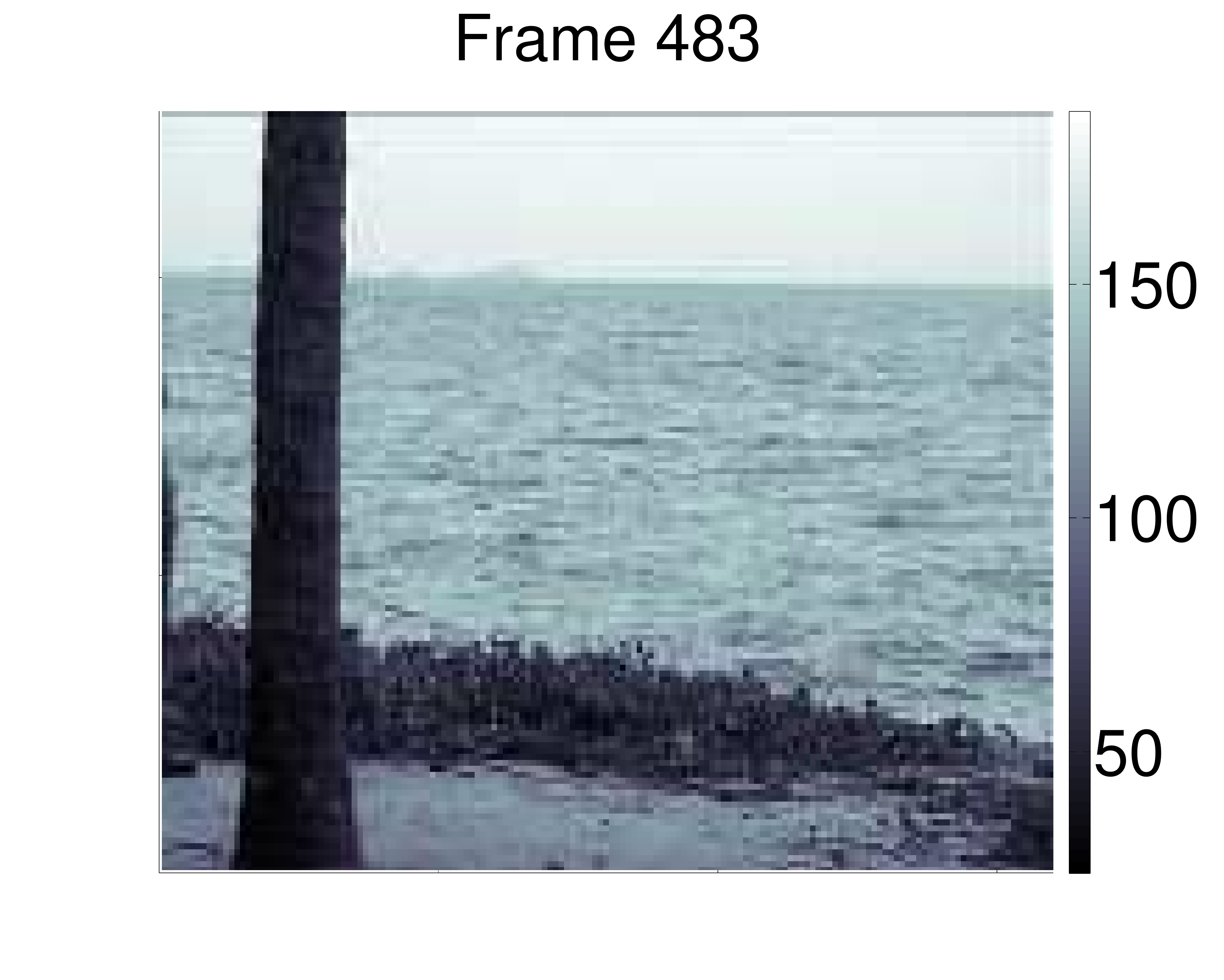} & 
\includegraphics[width=.25\columnwidth]{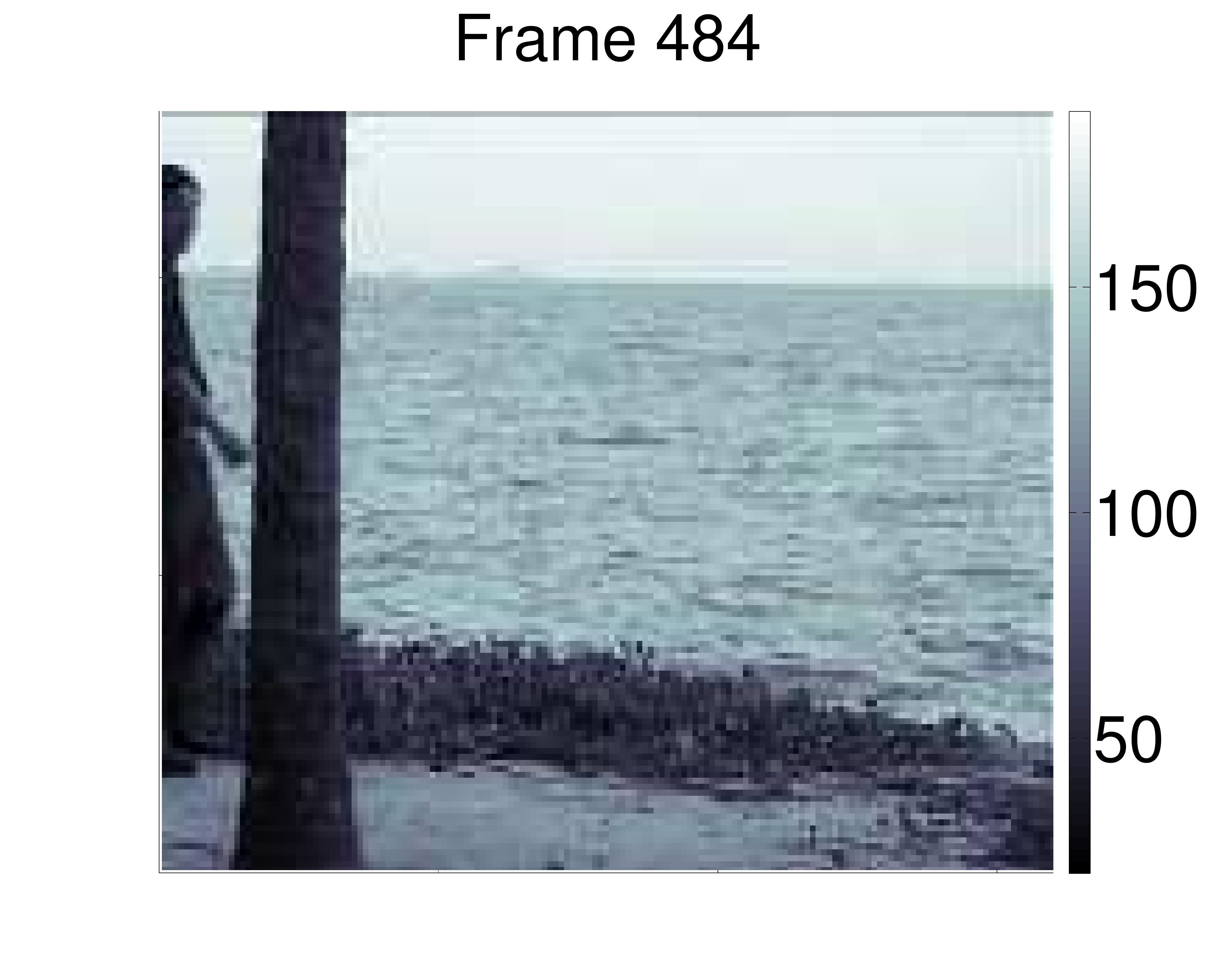} \\
\includegraphics[width=.25\columnwidth]{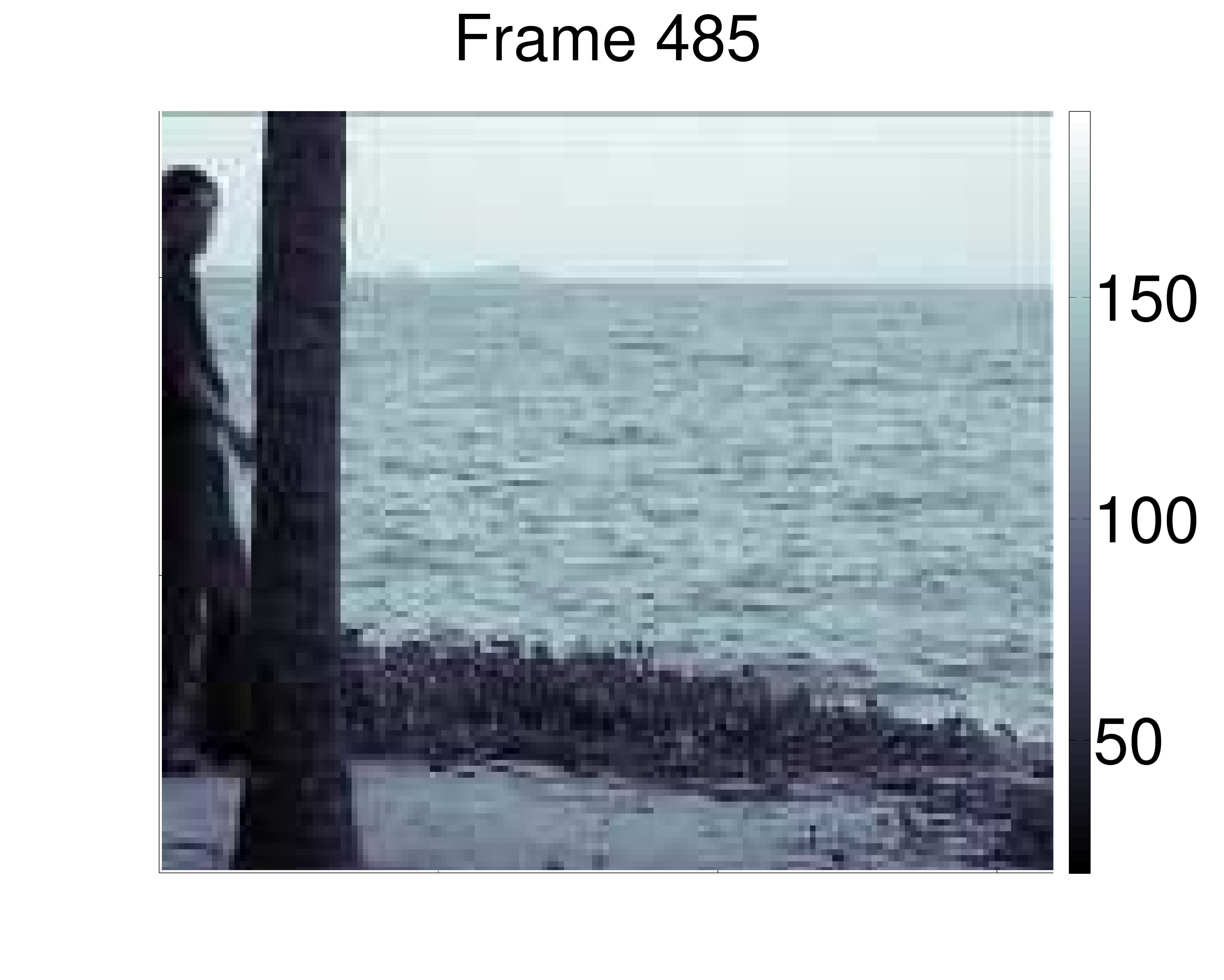} & 
\includegraphics[width=.25\columnwidth]{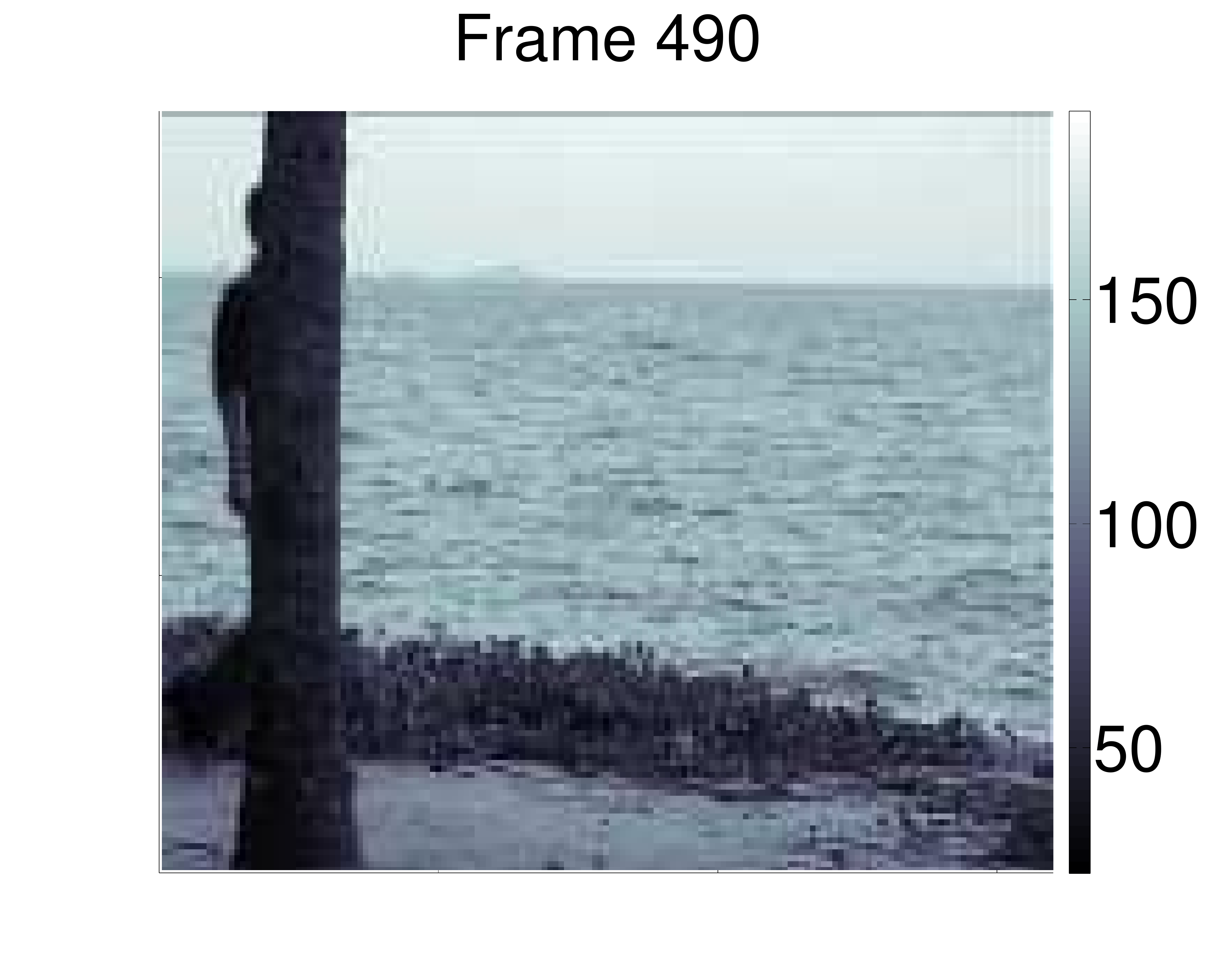} & 
\includegraphics[width=.25\columnwidth]{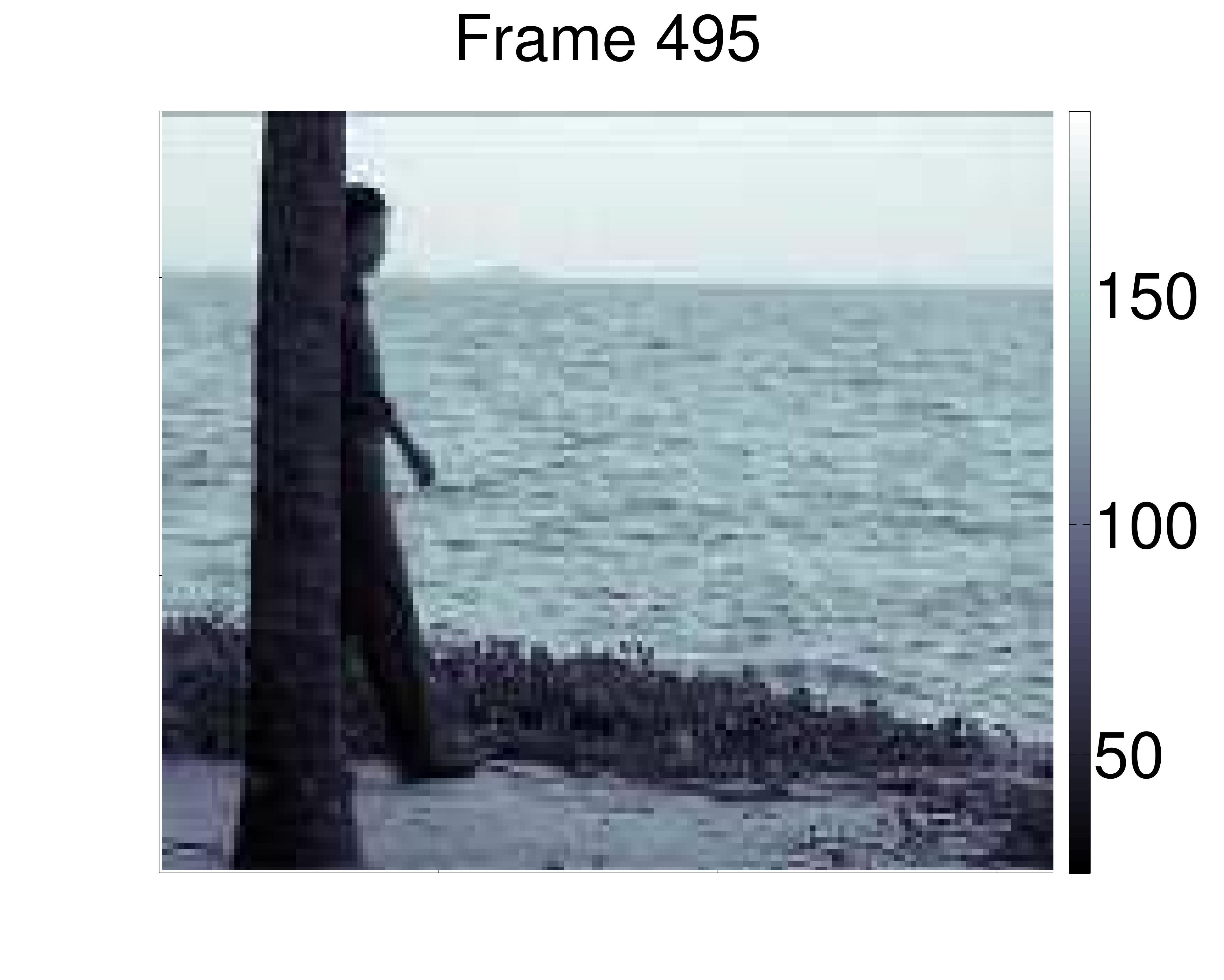} & 
\includegraphics[width=.25\columnwidth]{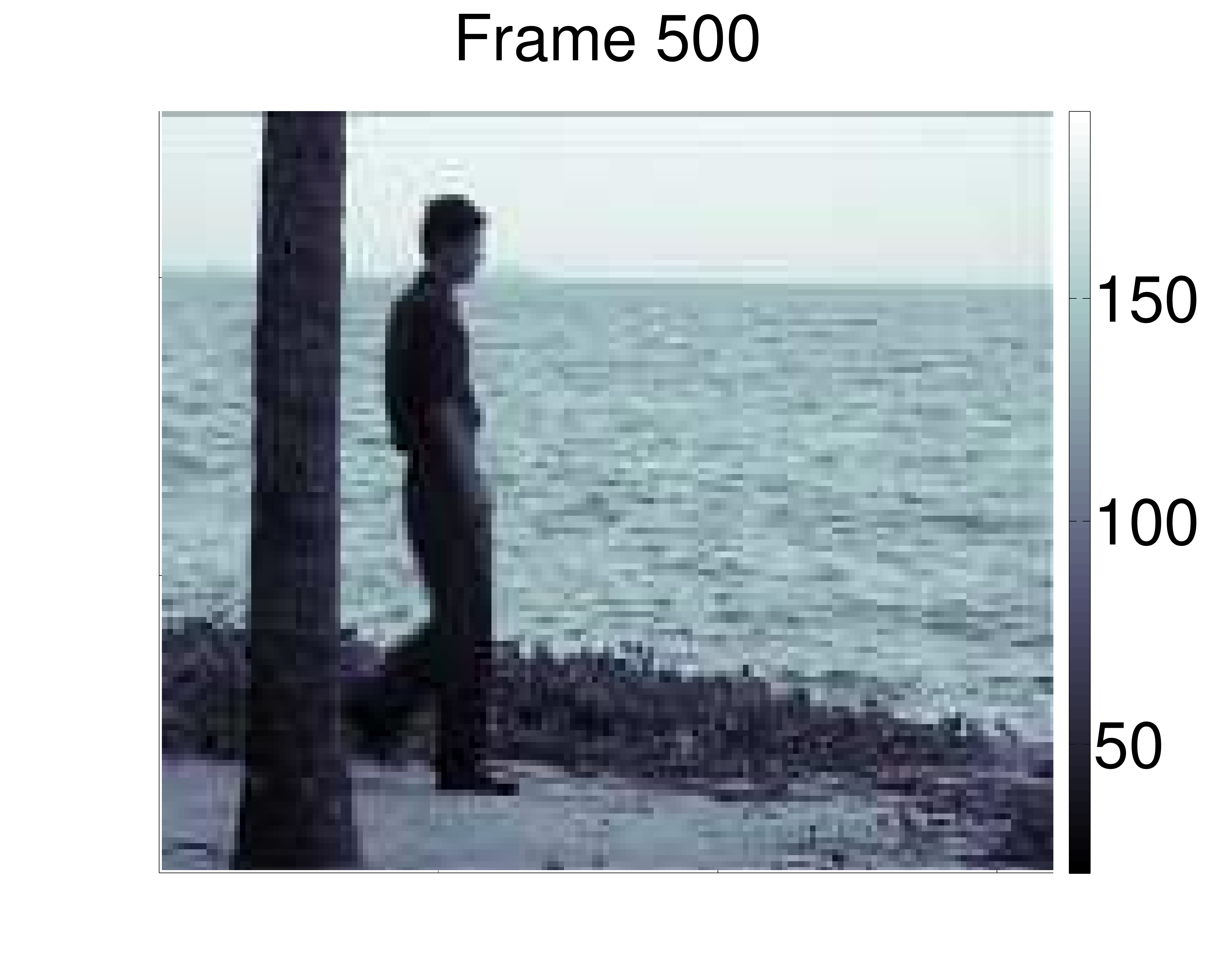} \\
\includegraphics[width=.25\columnwidth]{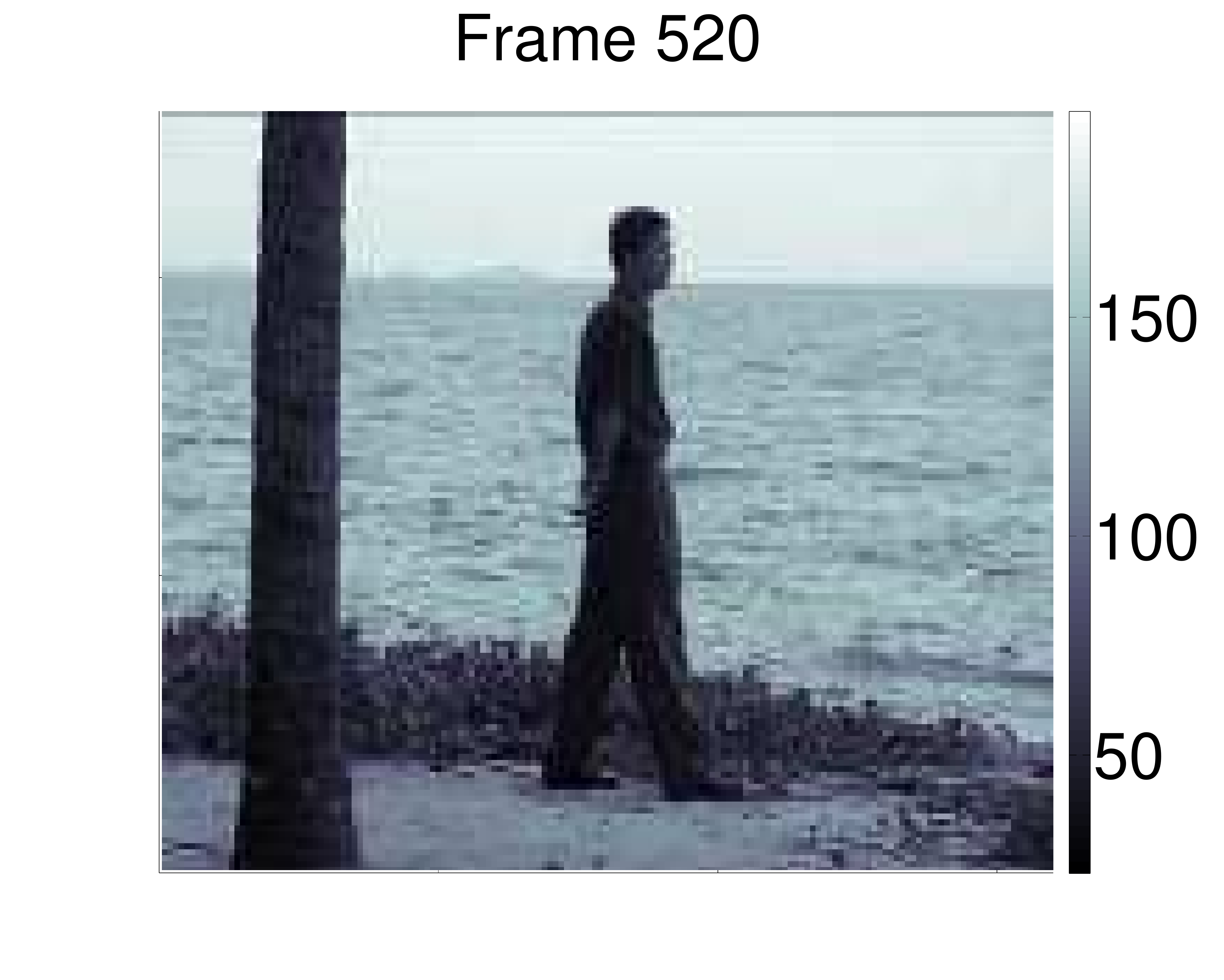} & 
\includegraphics[width=.25\columnwidth]{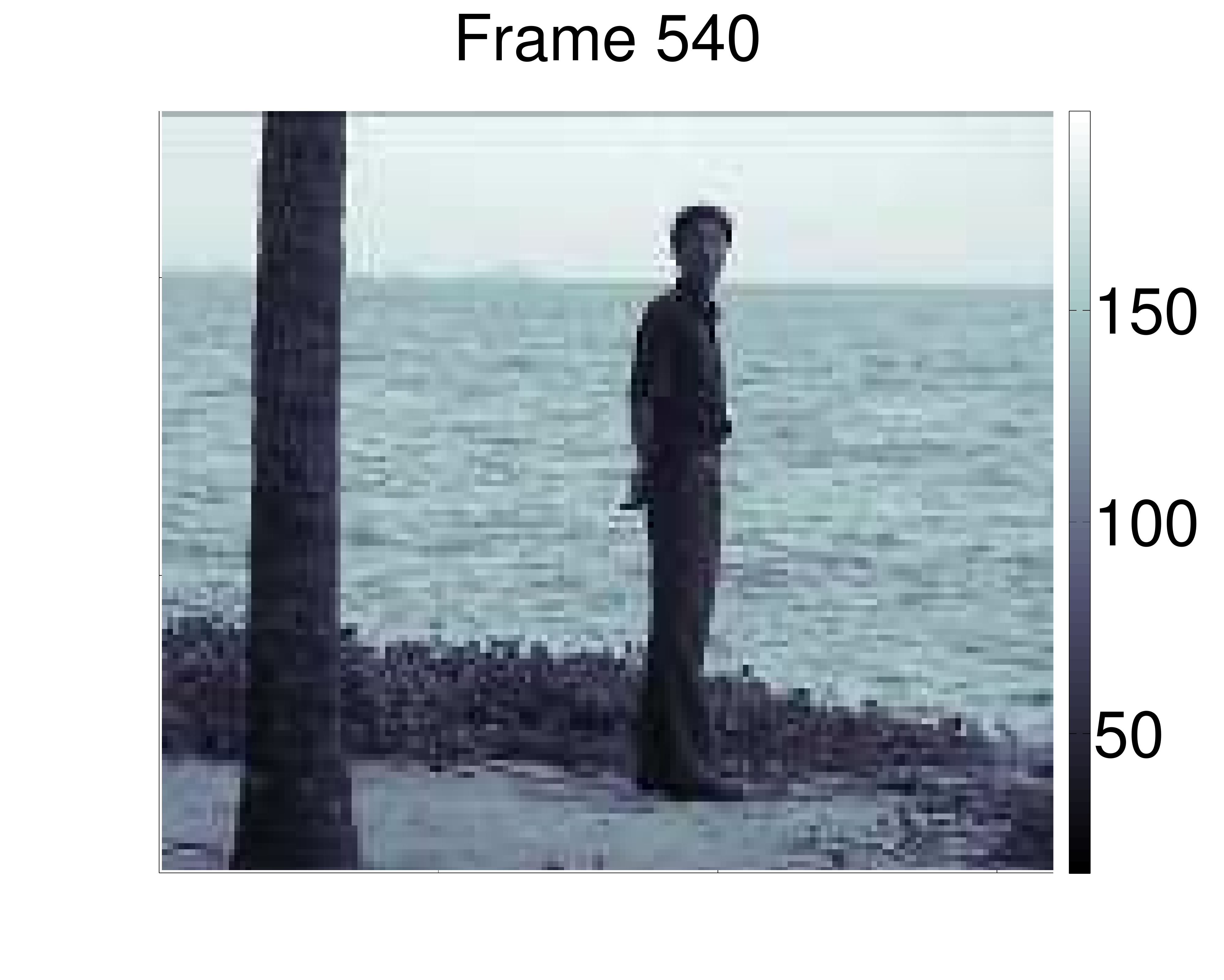} & 
\includegraphics[width=.25\columnwidth]{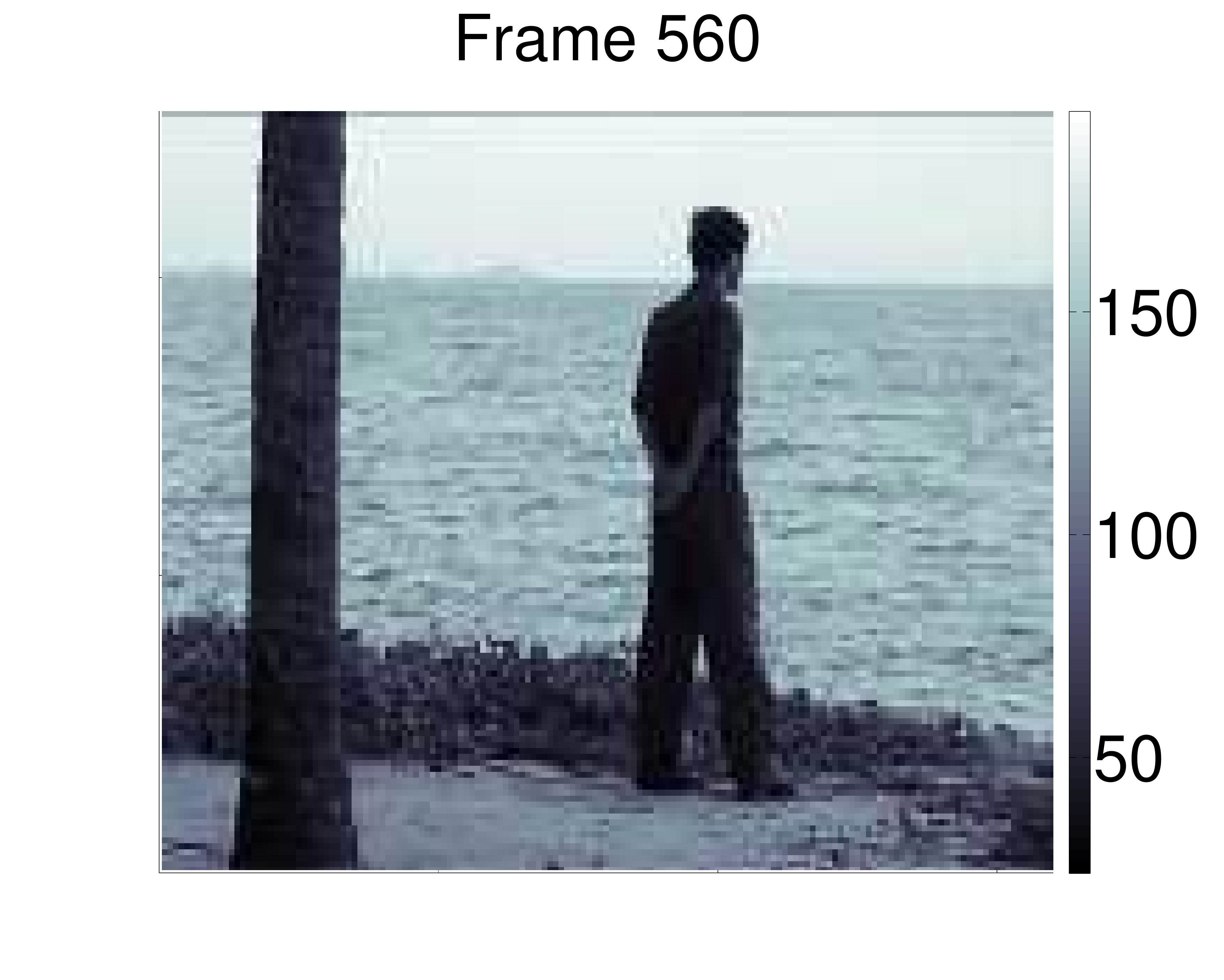} & 
\includegraphics[width=.25\columnwidth]{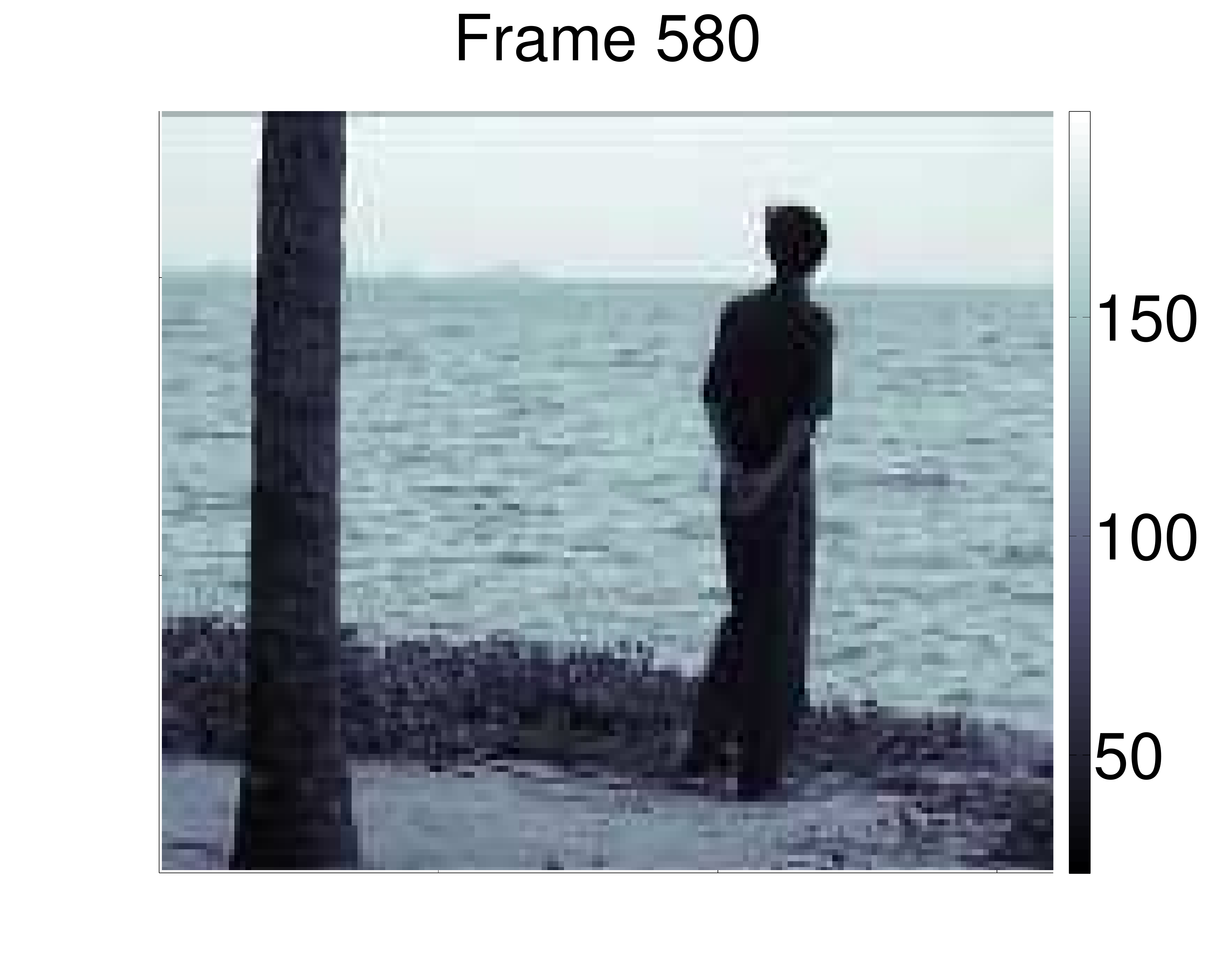}
\end{tabular}
\caption{Examples of the original frames of the water surface data set. Frames from 1 to 481 contain only background (true data) with a moving water surface. The person (considered as outlier) enters the scene in frame 482 and is present up to the last frame 633
}
\label{fig:wsorig}
\end{figure}
\clearpage

\begin{figure}
\begin{tabular}{cccc}
\includegraphics[width=.25\columnwidth]{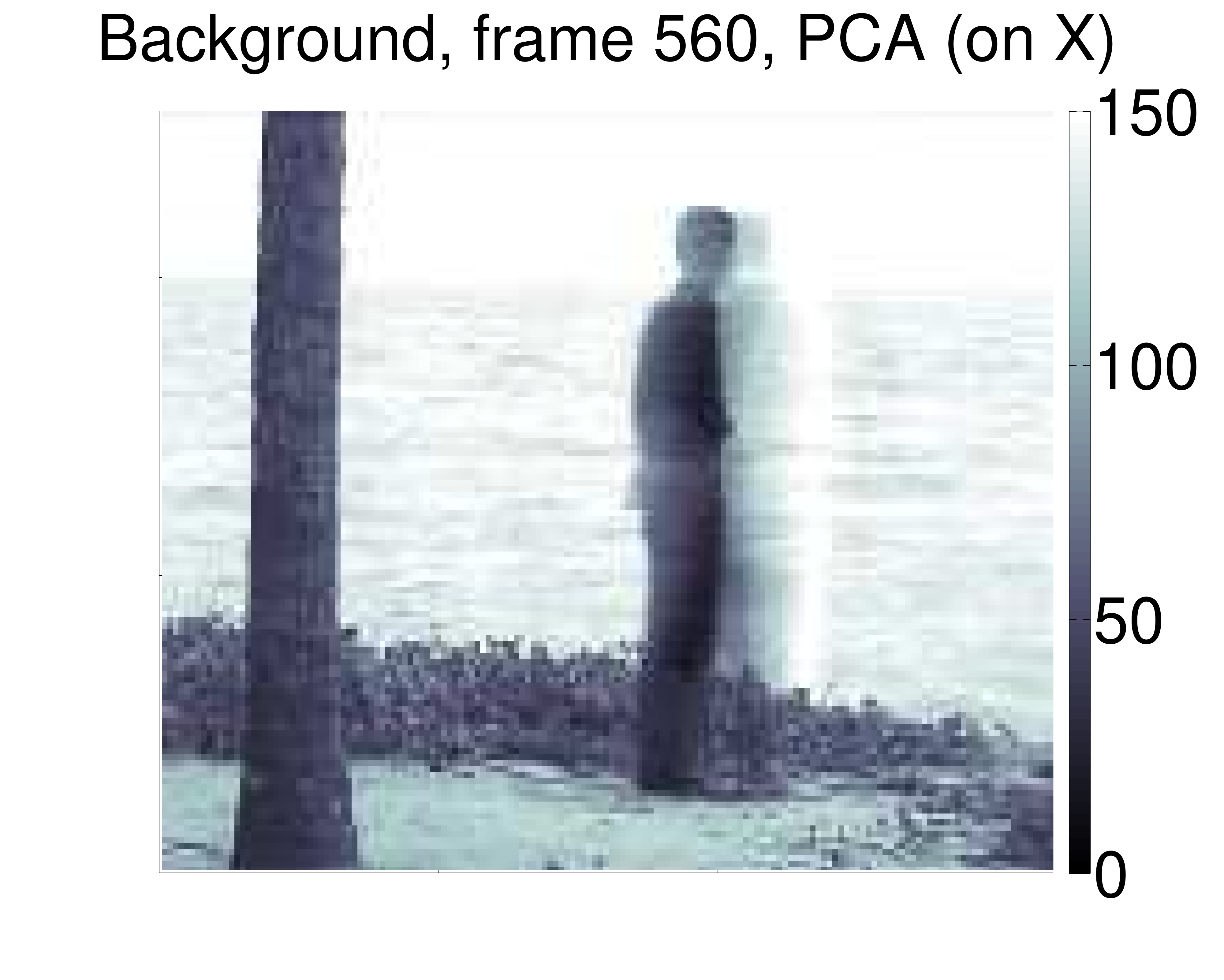} & 
\includegraphics[width=.25\columnwidth]{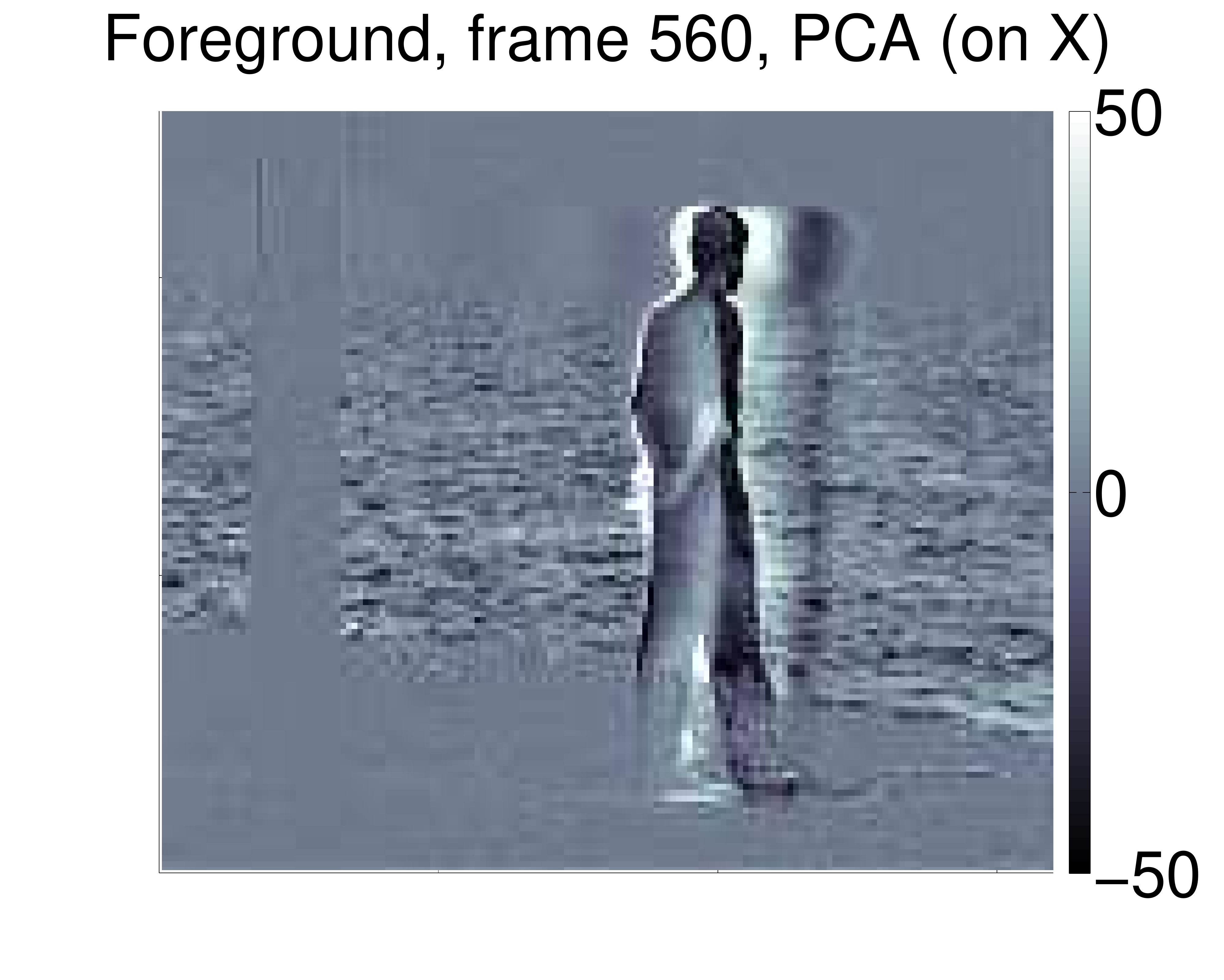} & 
\includegraphics[width=.25\columnwidth]{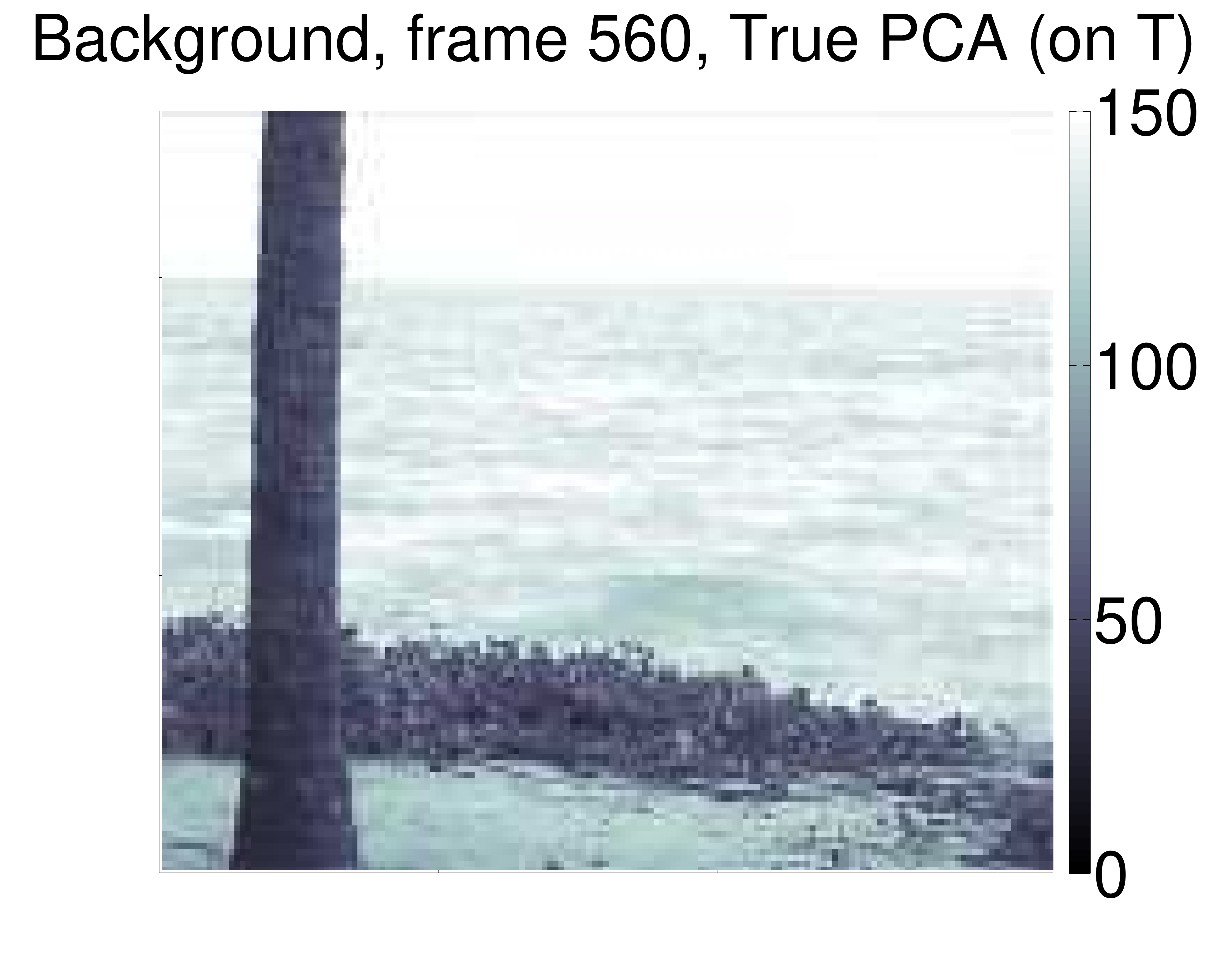} & 
\includegraphics[width=.25\columnwidth]{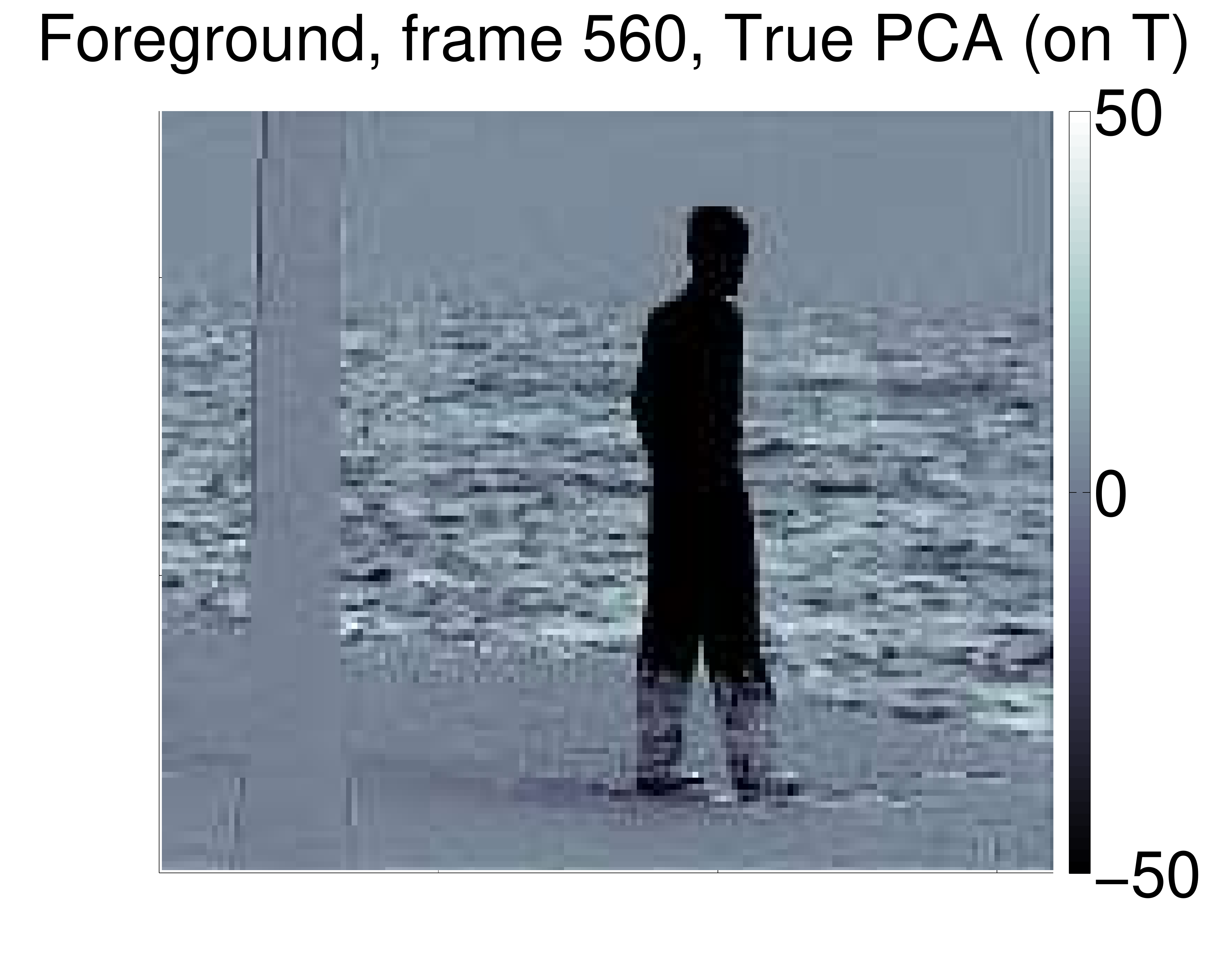} \\ \hline
\includegraphics[width=.25\columnwidth]{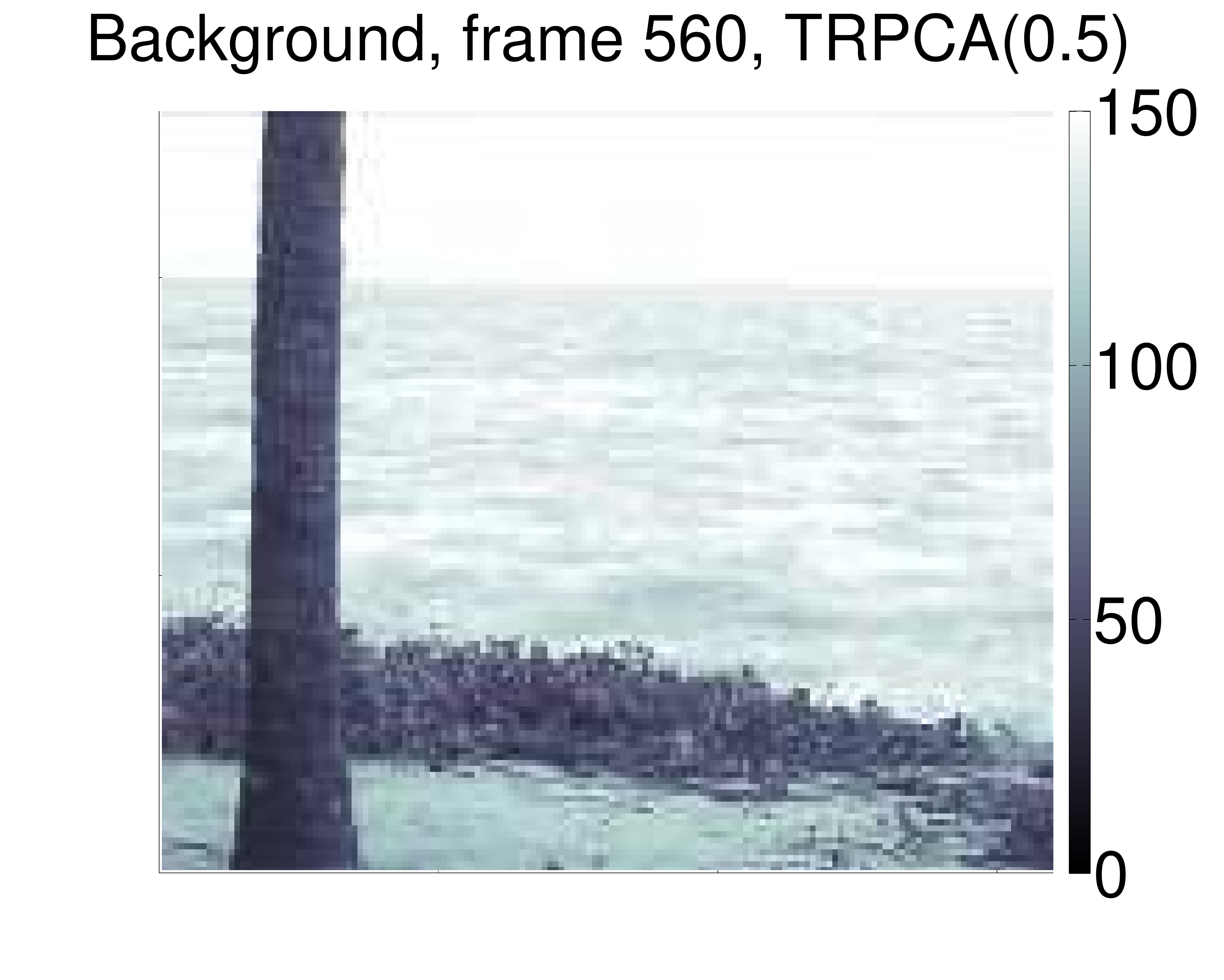} & 
\includegraphics[width=.25\columnwidth]{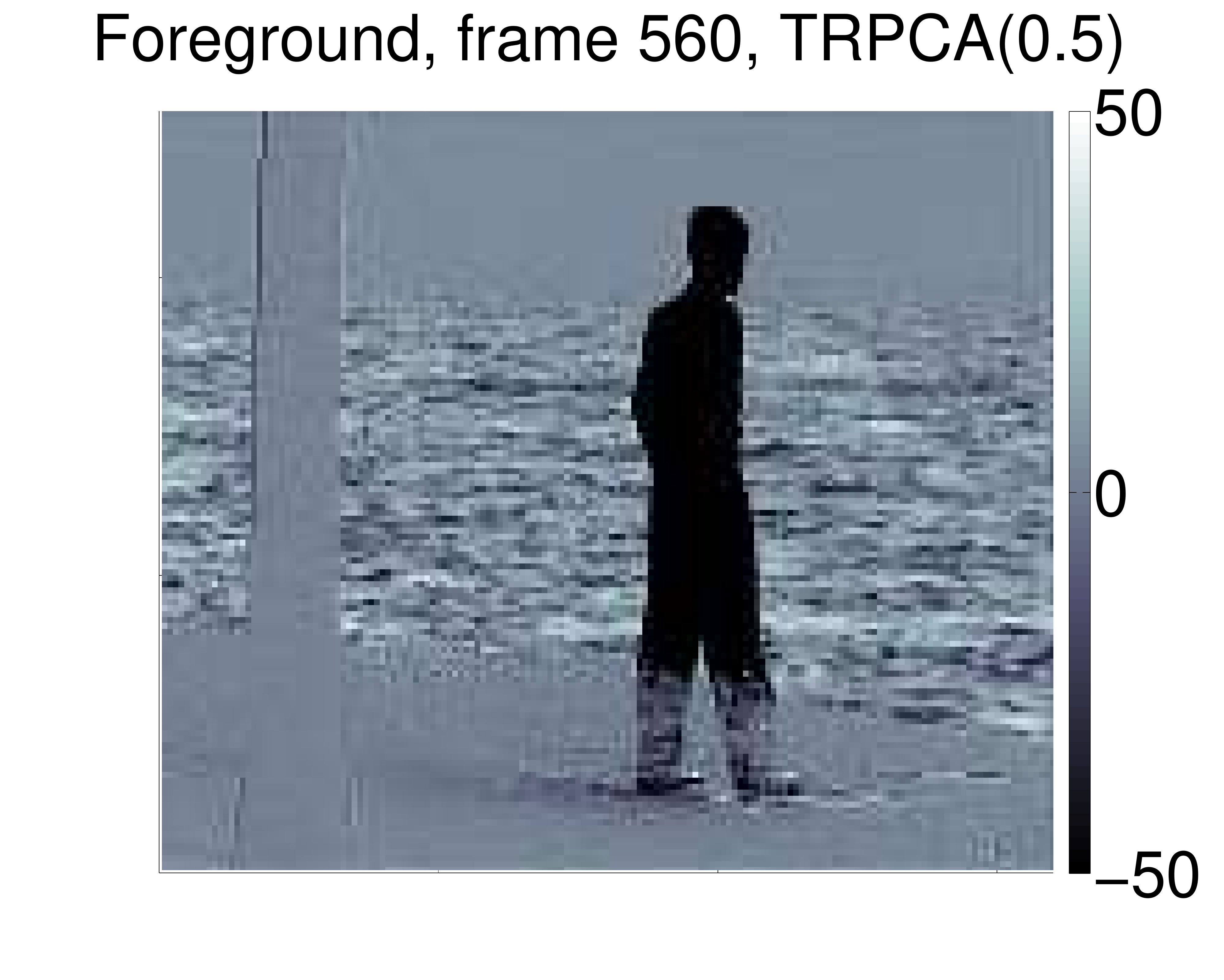} & 
\includegraphics[width=.25\columnwidth]{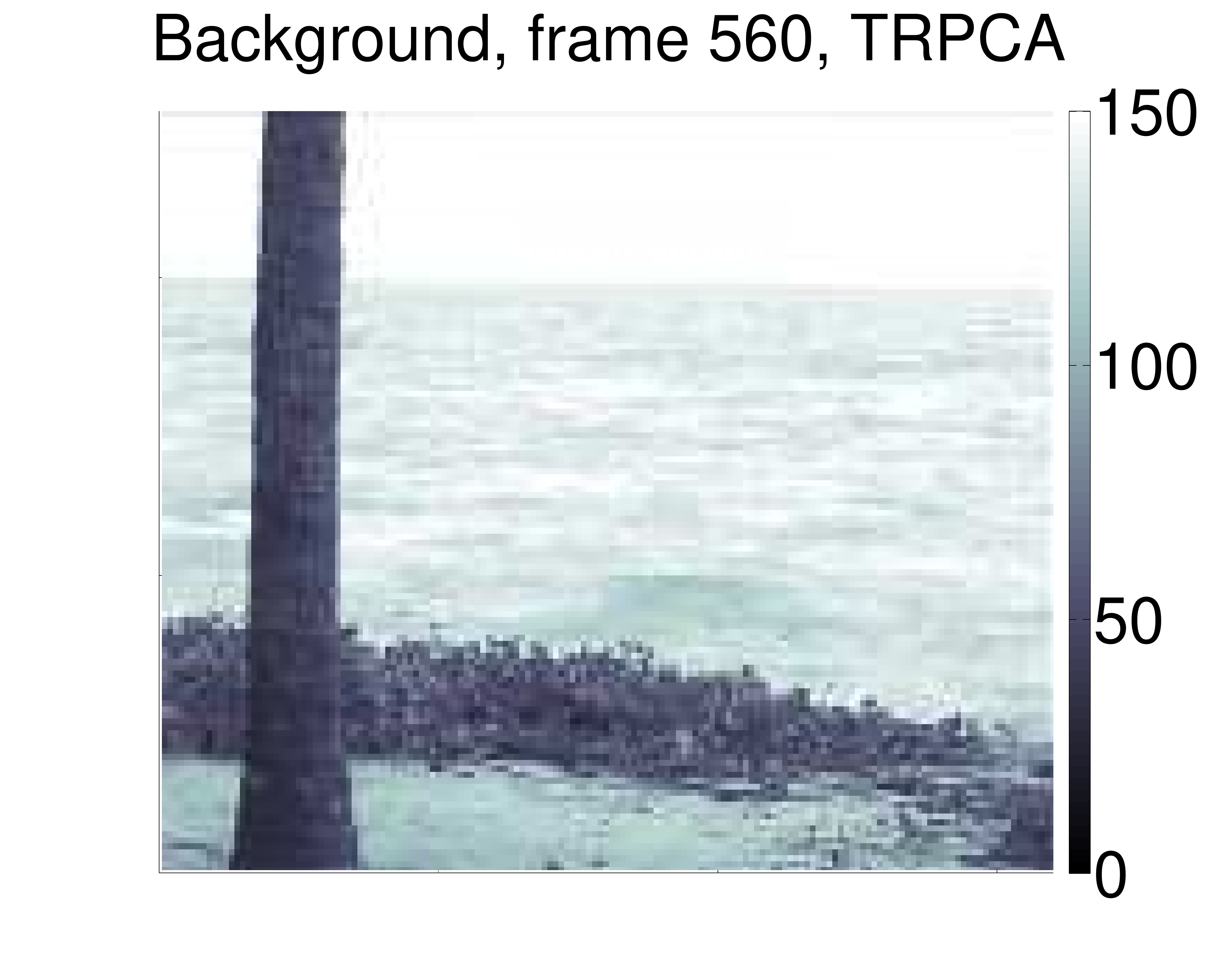} & 
\includegraphics[width=.25\columnwidth]{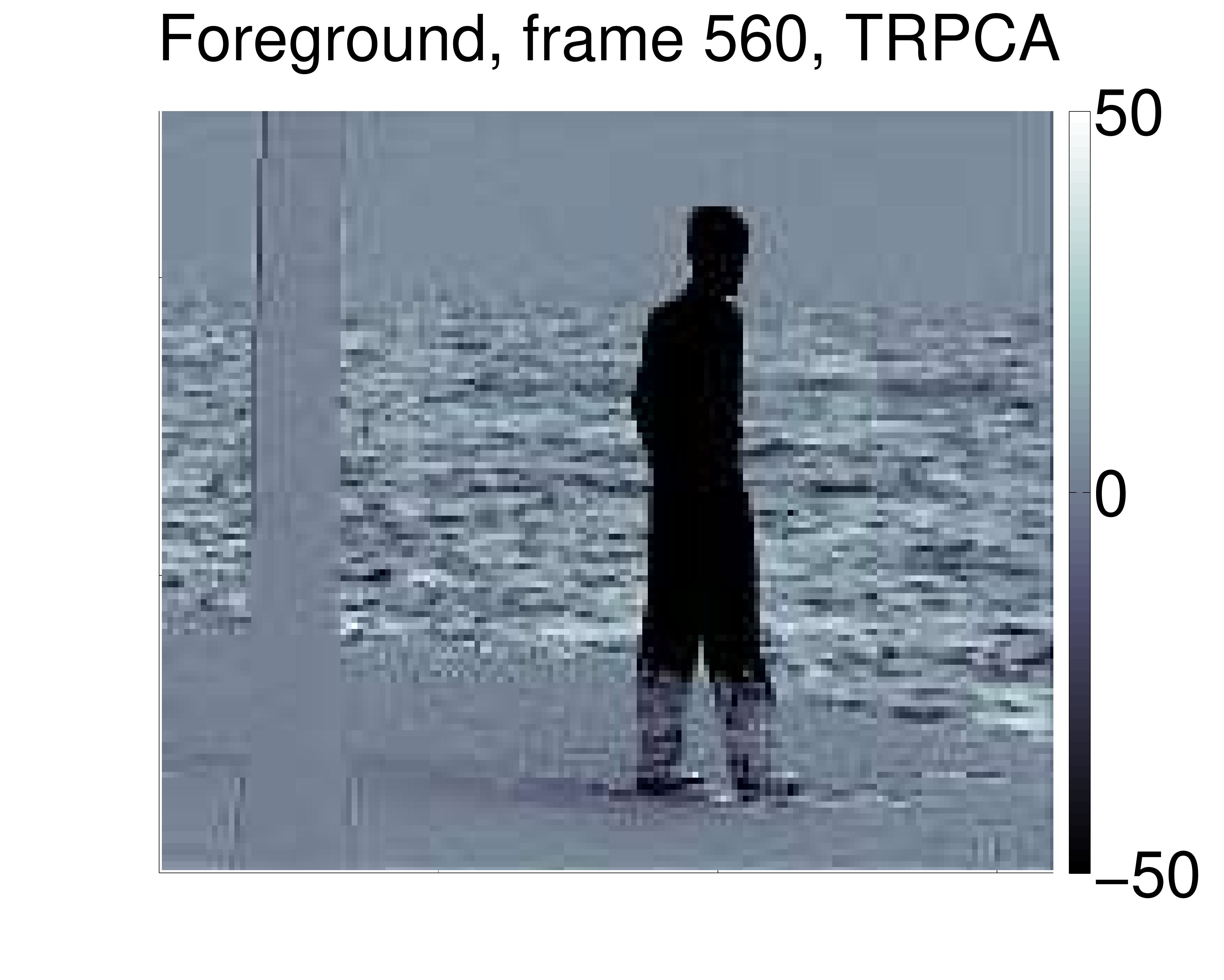} \\
\includegraphics[width=.25\columnwidth]{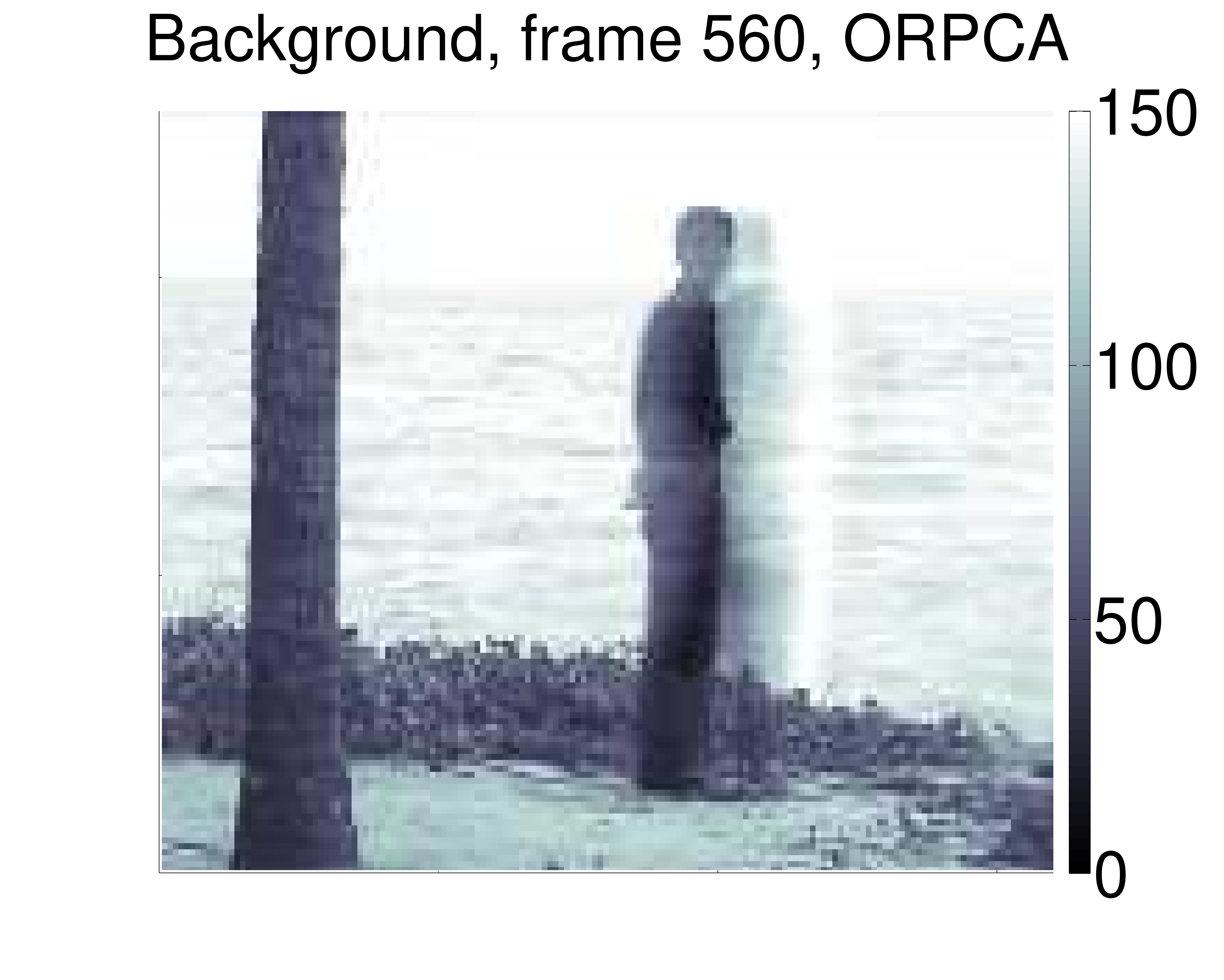} & 
\includegraphics[width=.25\columnwidth]{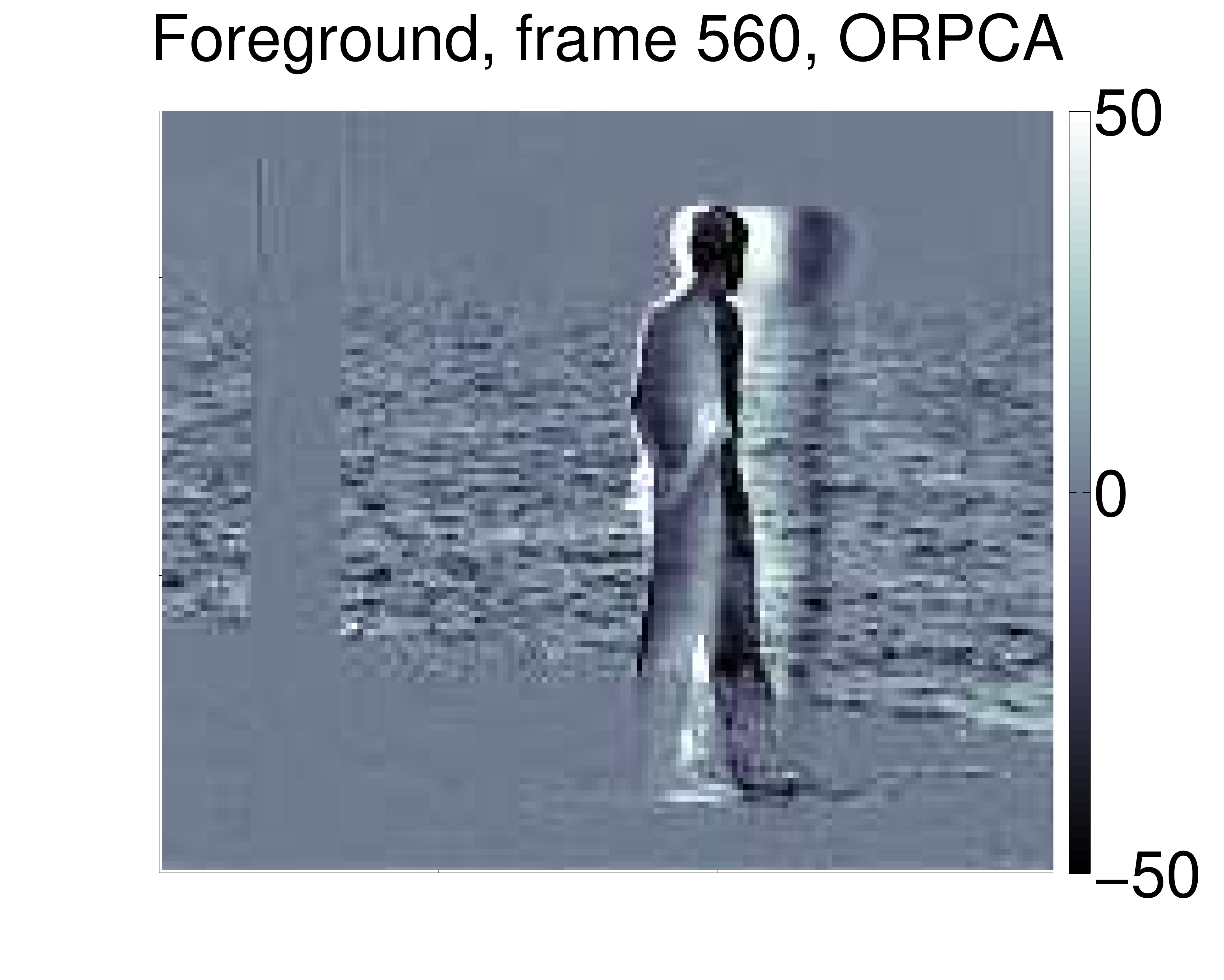} & 
\includegraphics[width=.25\columnwidth]{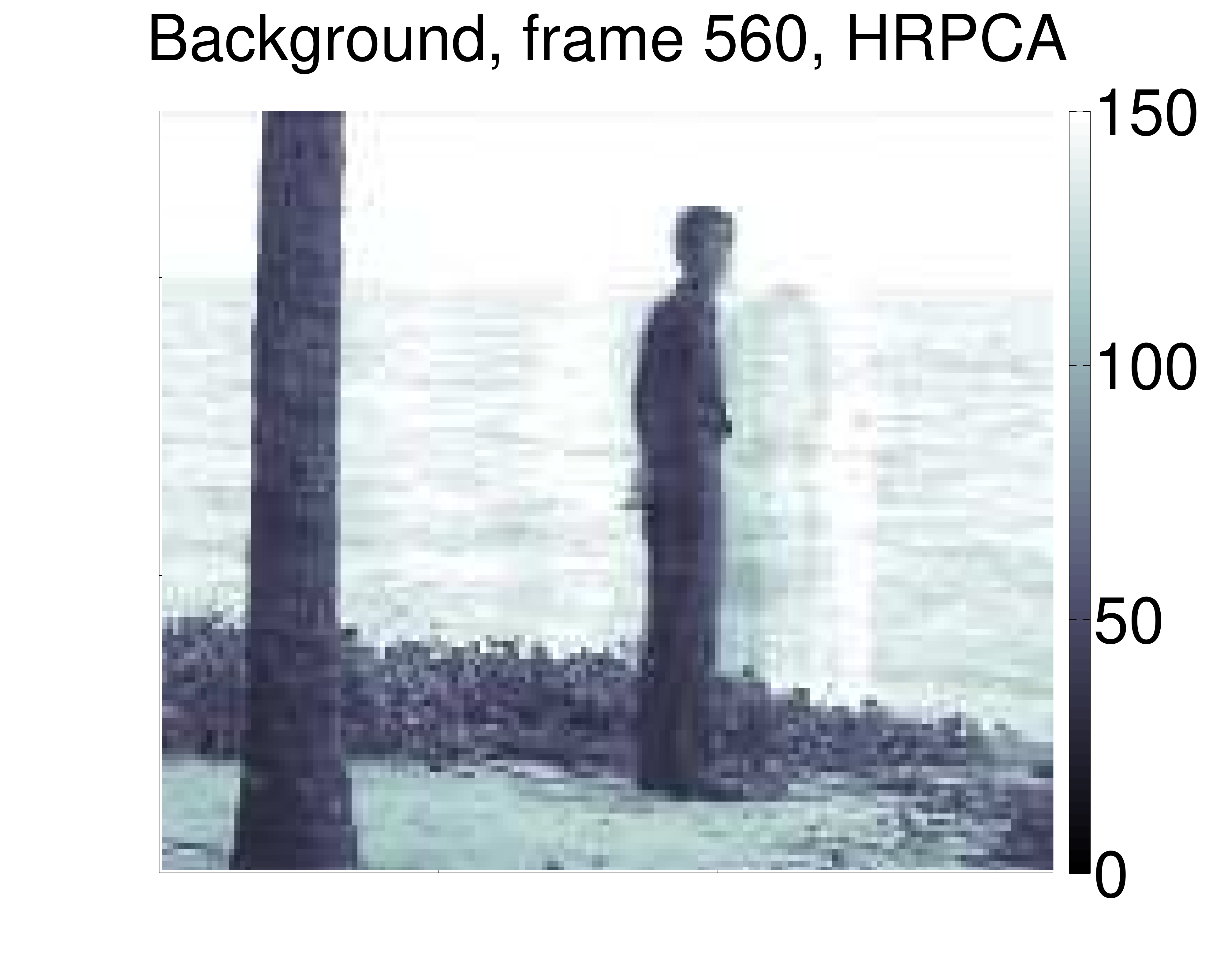} & 
\includegraphics[width=.25\columnwidth]{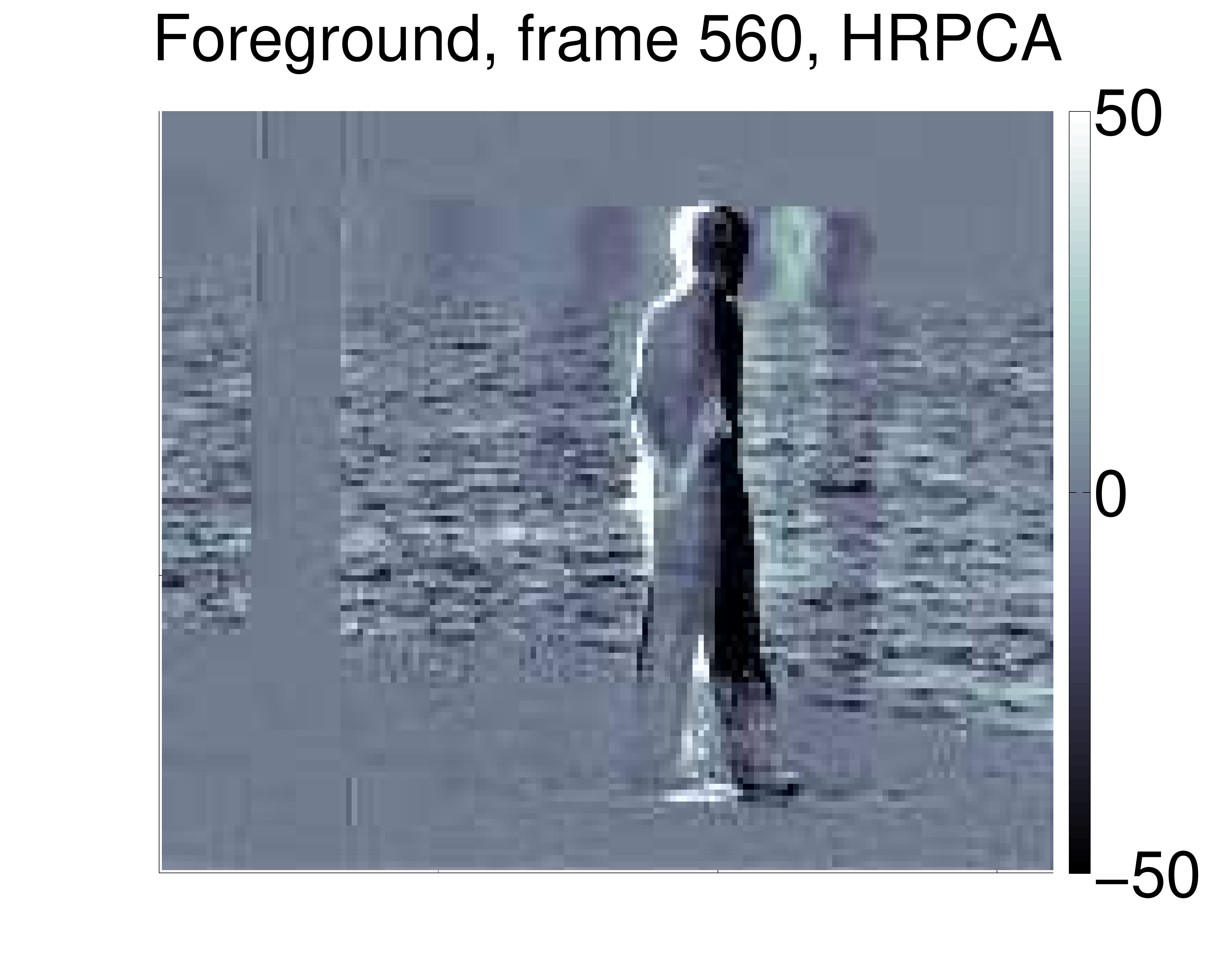} \\
\includegraphics[width=.25\columnwidth]{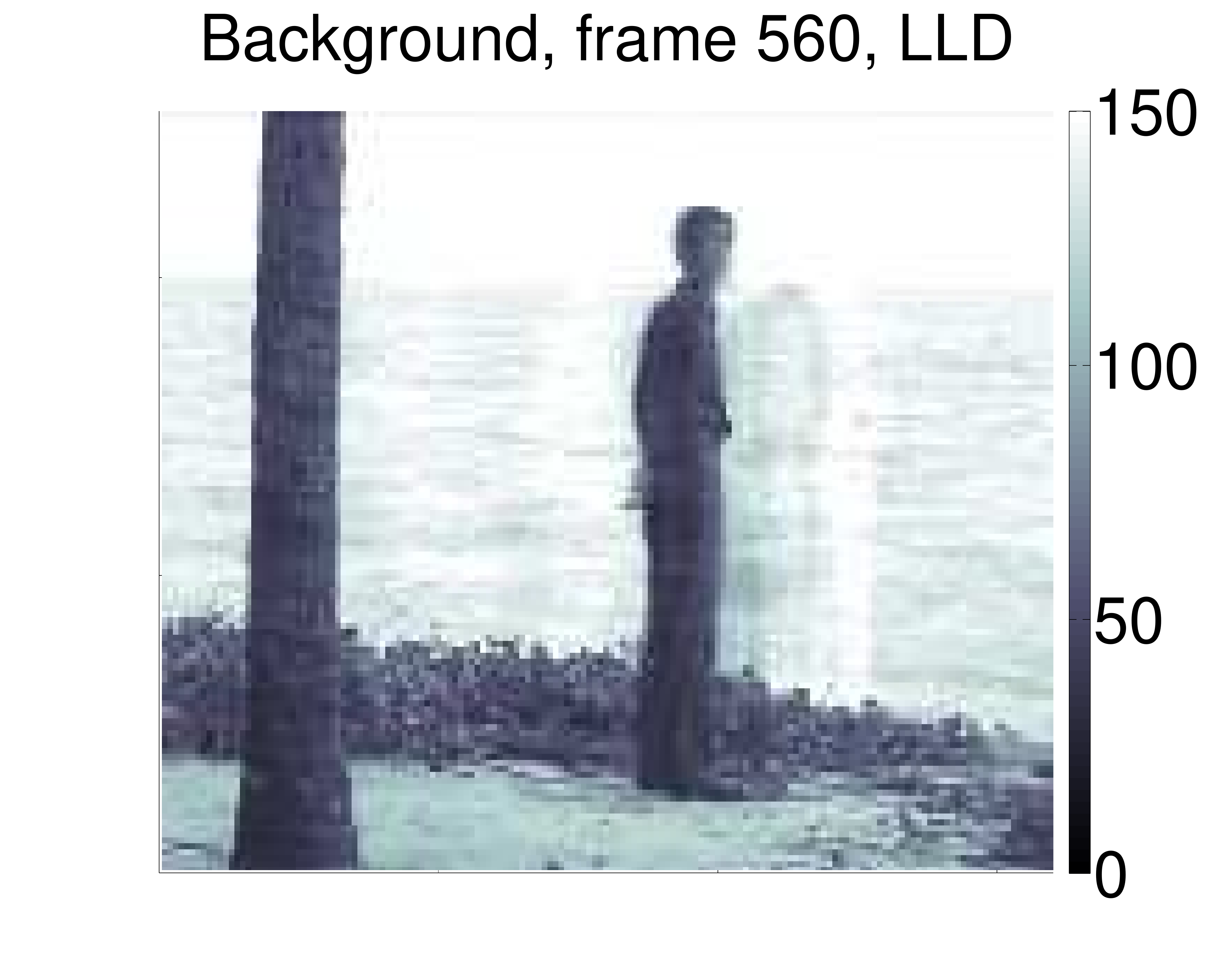} & 
\includegraphics[width=.25\columnwidth]{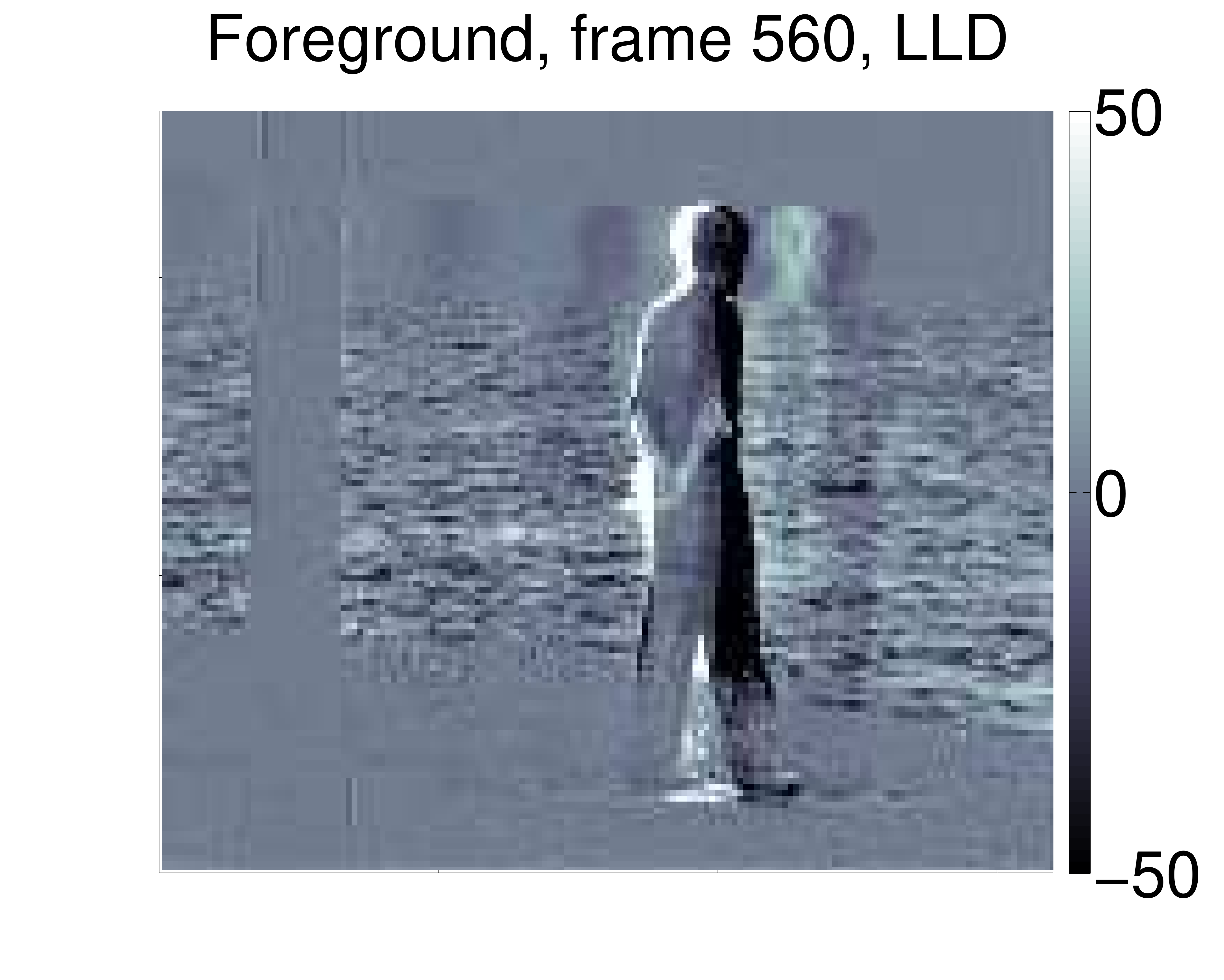} & 
\includegraphics[width=.25\columnwidth]{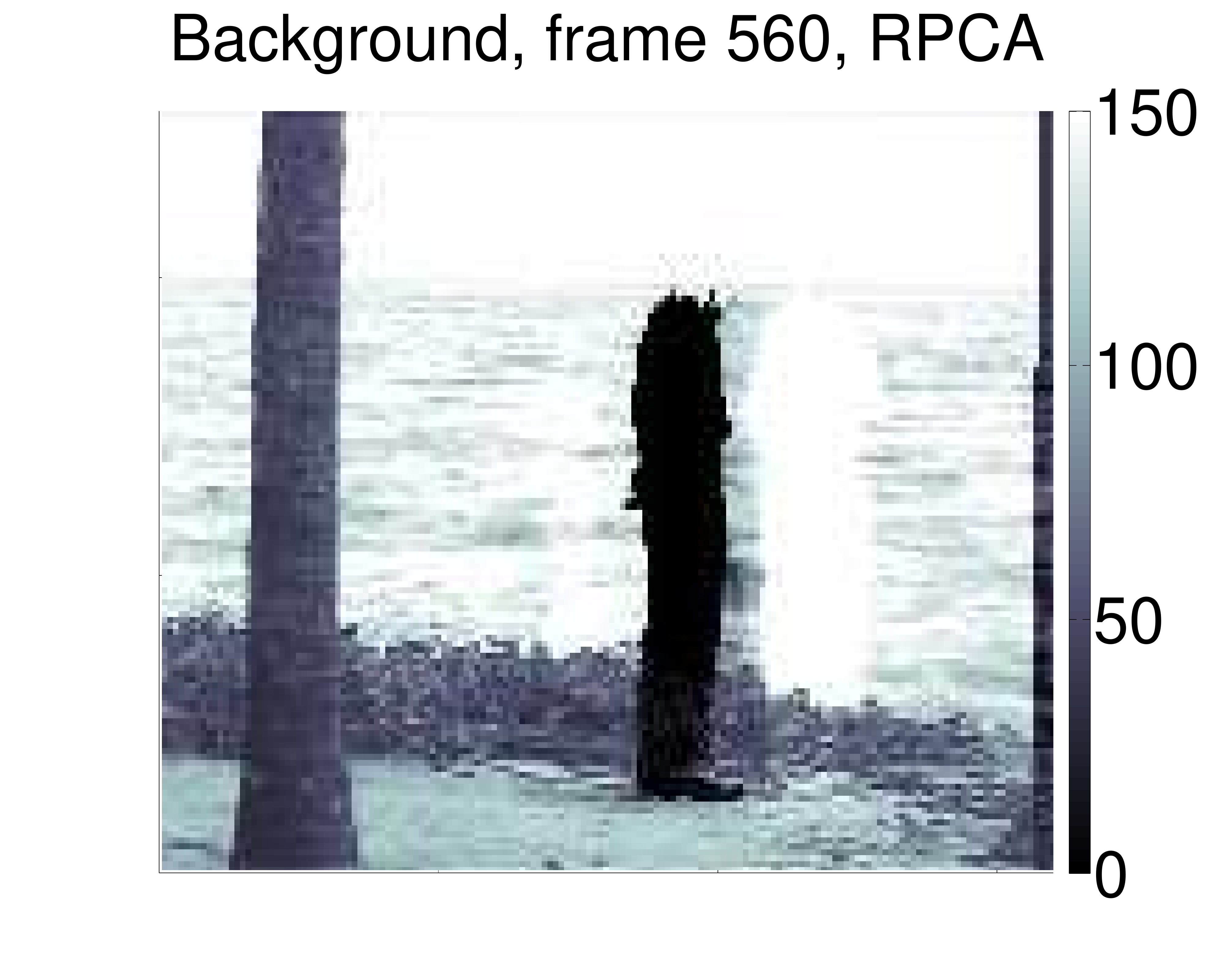} & 
\includegraphics[width=.25\columnwidth]{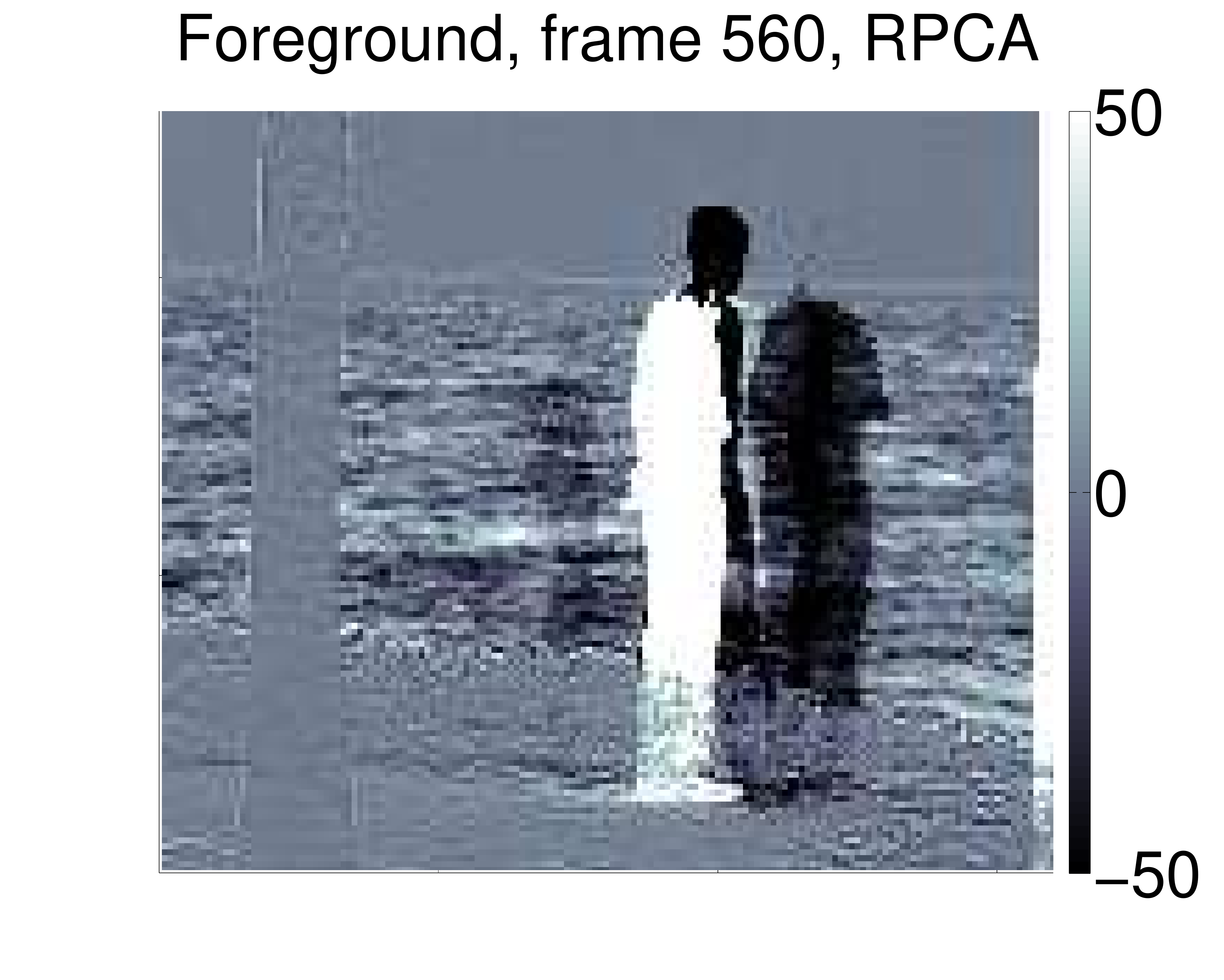}
\end{tabular}
\caption{Background $x_i^b$ and foreground $x_i^f$ recovered with different methods, using \eqref{foreground}, of frame 560 of the water surface data set, number of components $k=10$. These images correspond to the one of Fig. \ref{fig:bfwsi560}, but the scaling has been changed for better visibility. Namely, all backgrounds/foreground images are rescaled so that the maximum and minimum pixel values are the same (please, note the numbers on the color bar); results for PCP can be found in Fig. \ref{fig:wsbfpcp}
}
\label{fig:wsbf}
\end{figure}
\clearpage

\begin{figure}
\begin{tabular}{cccc}
\includegraphics[width=.25\columnwidth]{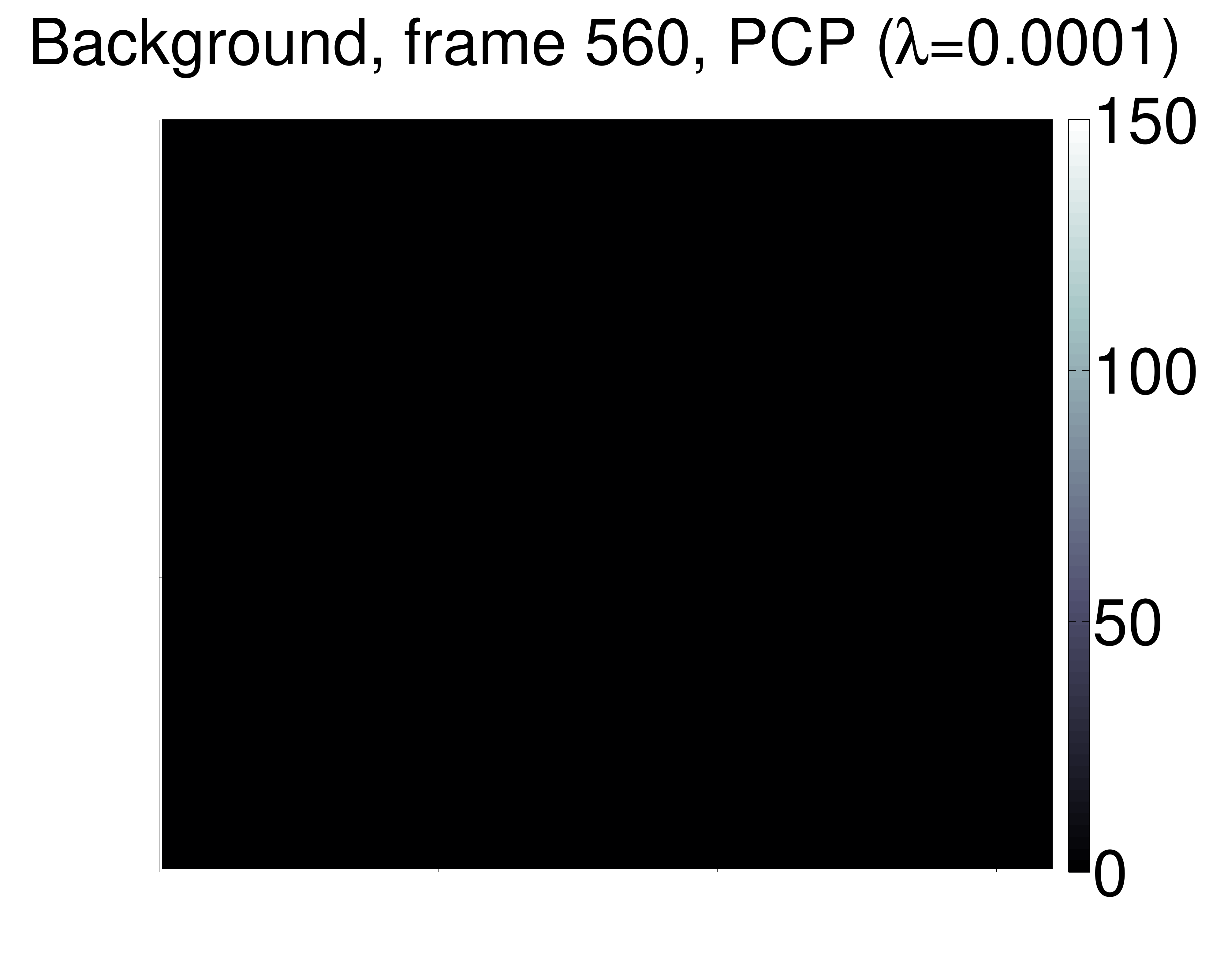} & 
\includegraphics[width=.25\columnwidth]{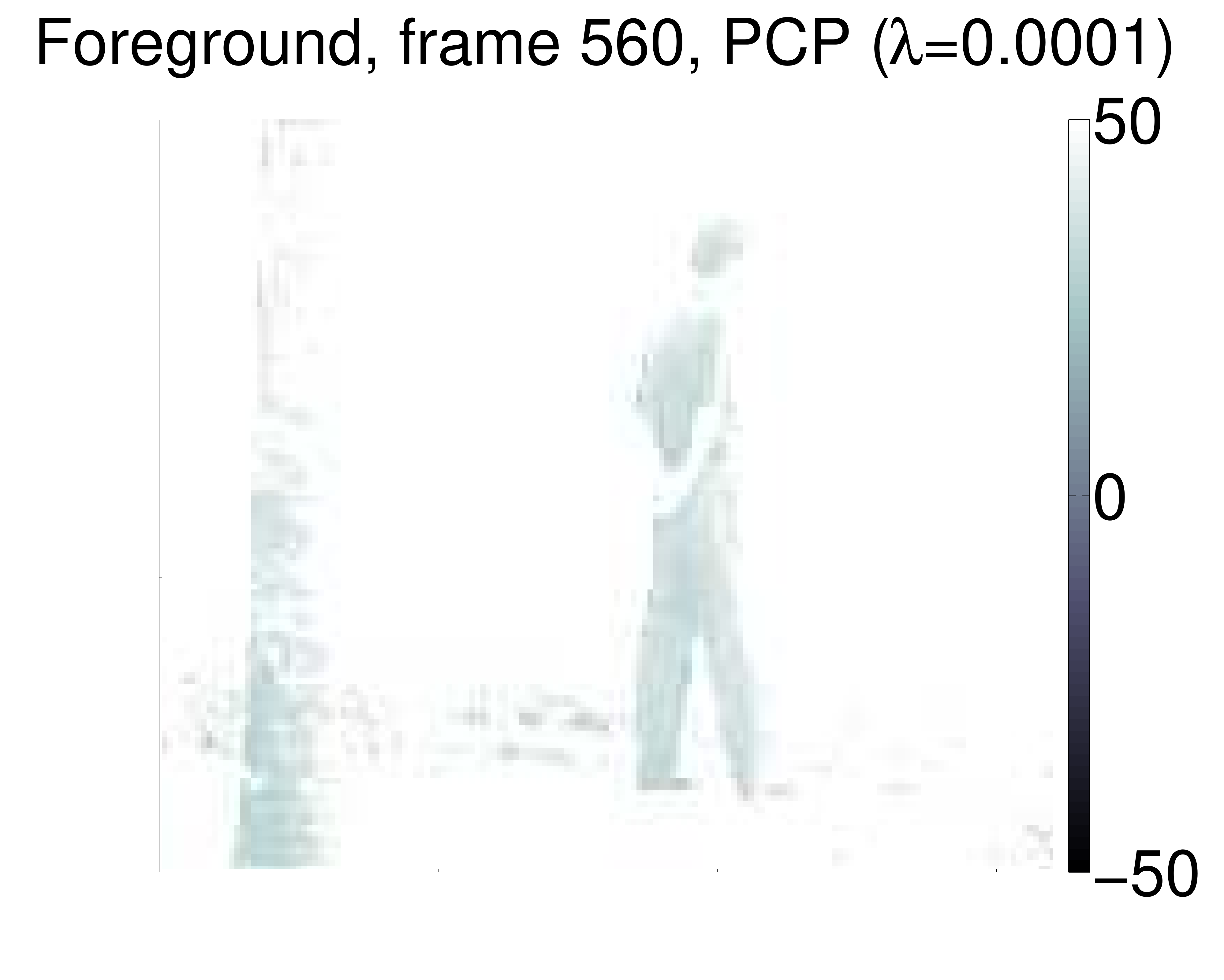} & 
\includegraphics[width=.25\columnwidth]{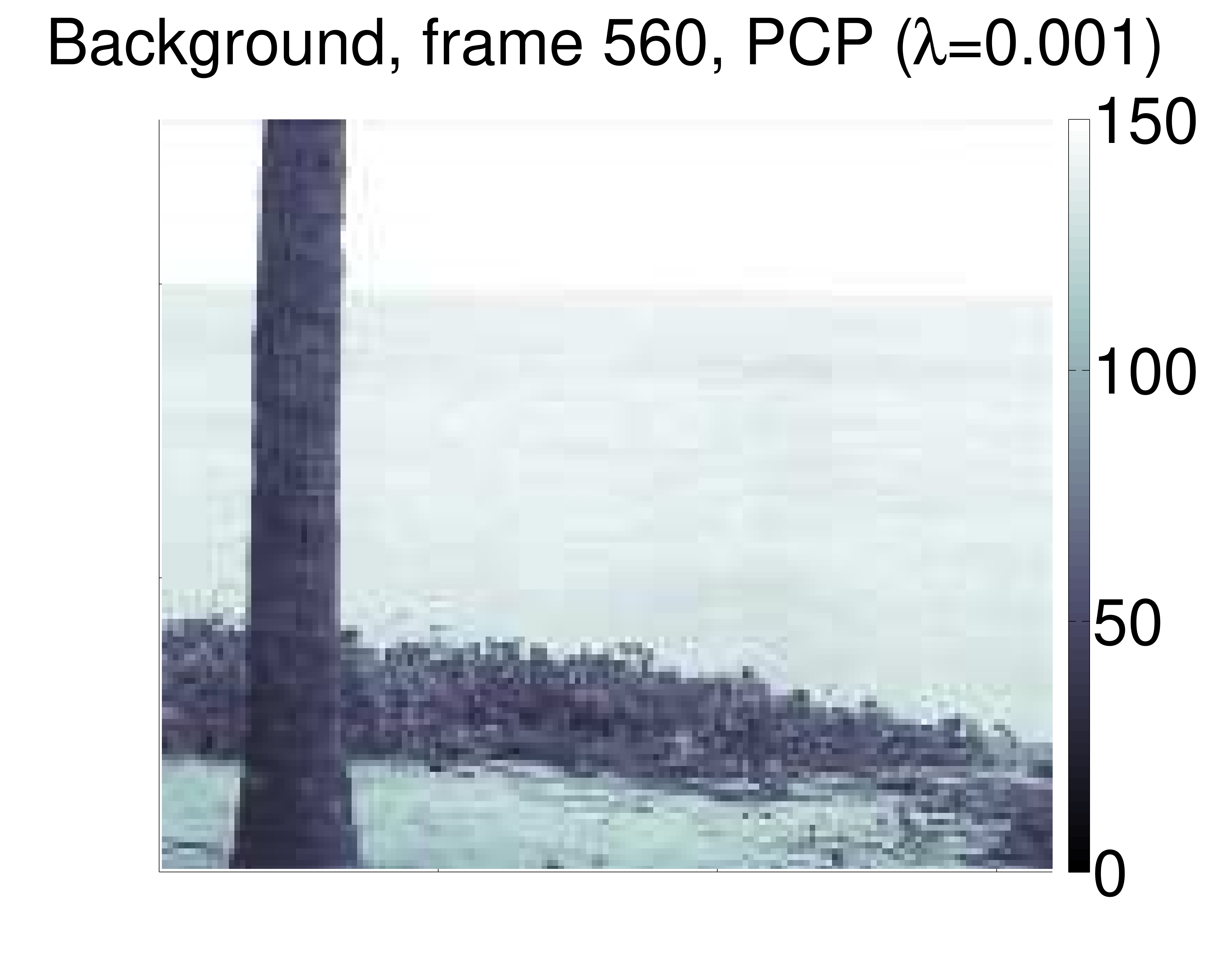} & 
\includegraphics[width=.25\columnwidth]{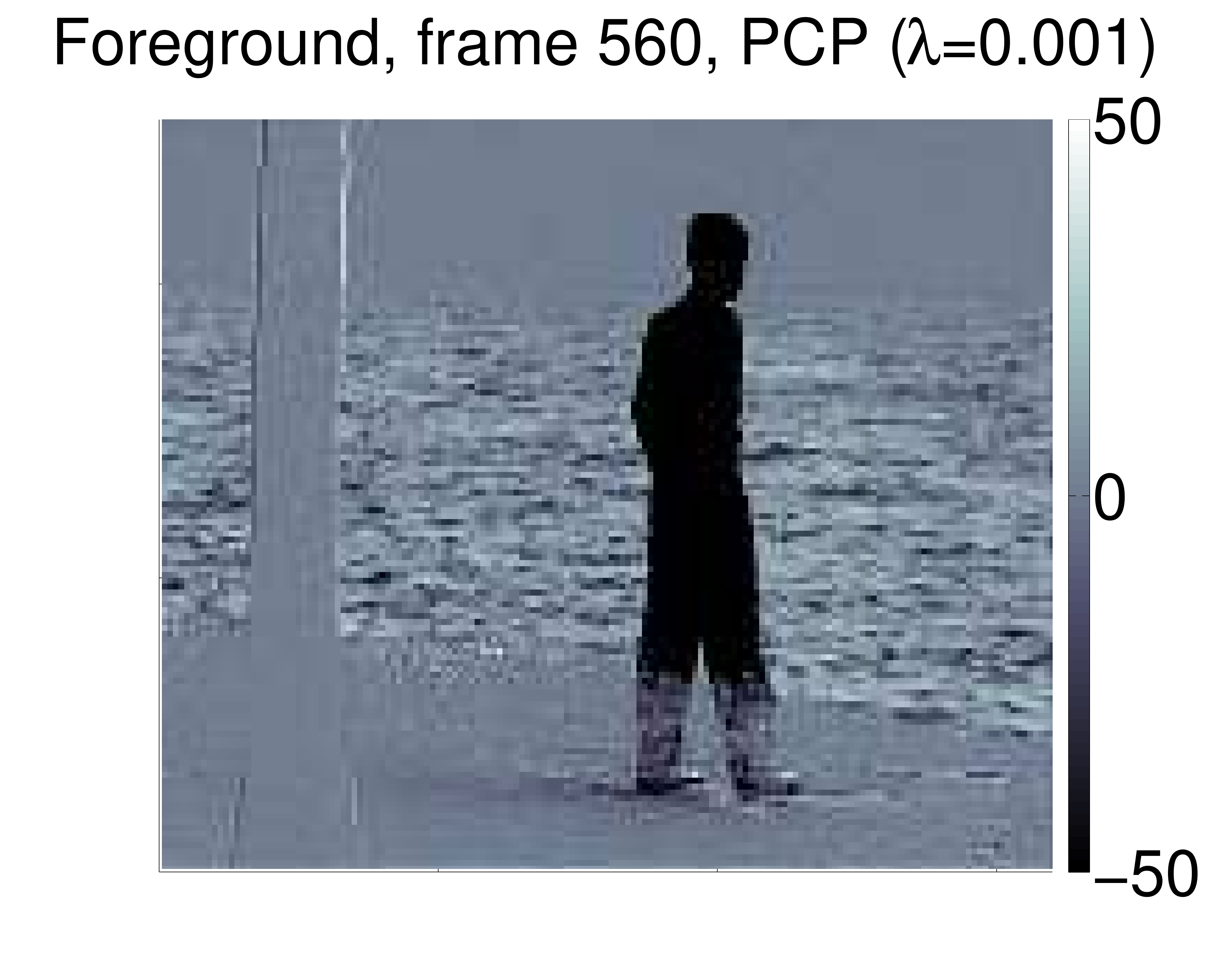} \\
\includegraphics[width=.25\columnwidth]{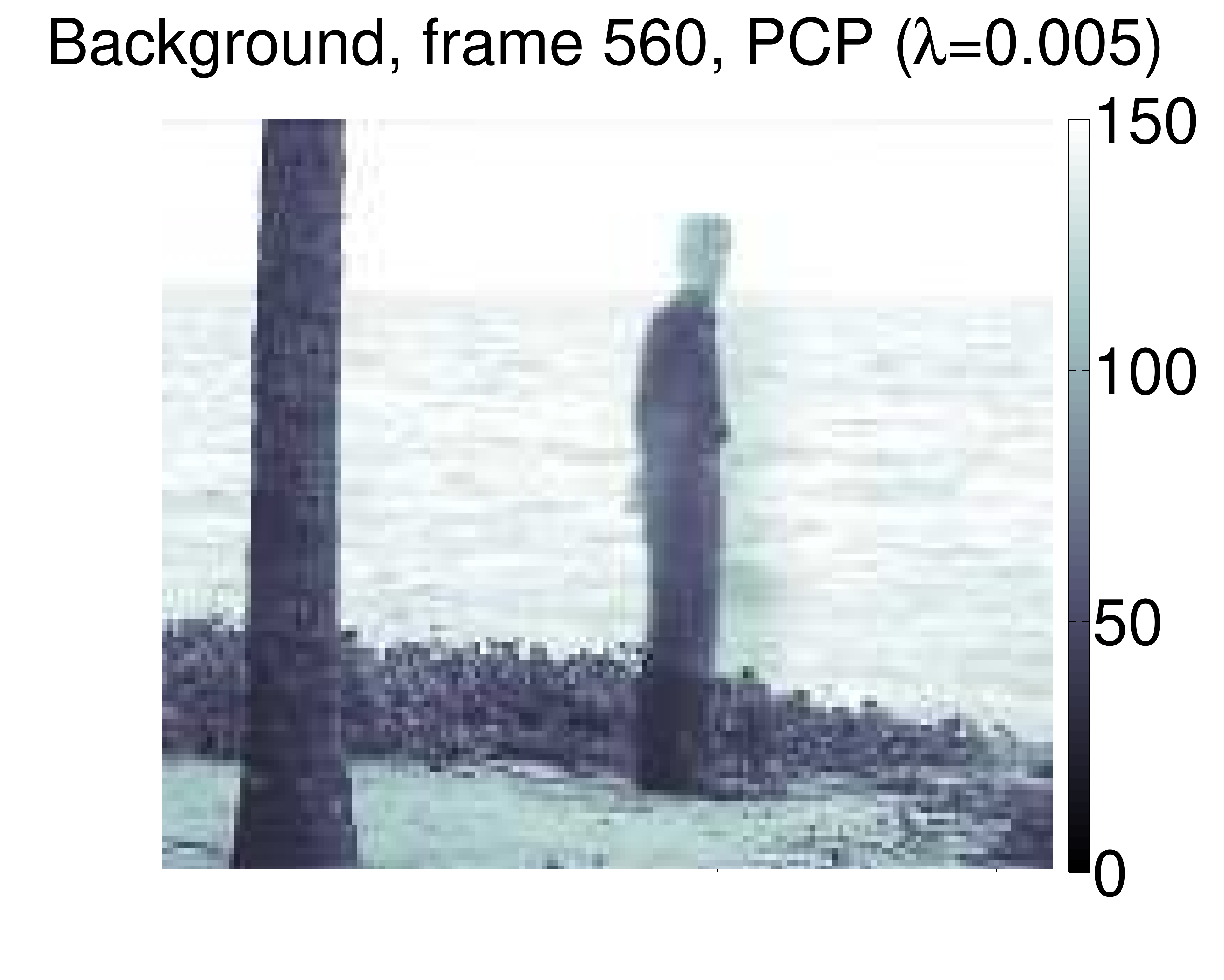} & 
\includegraphics[width=.25\columnwidth]{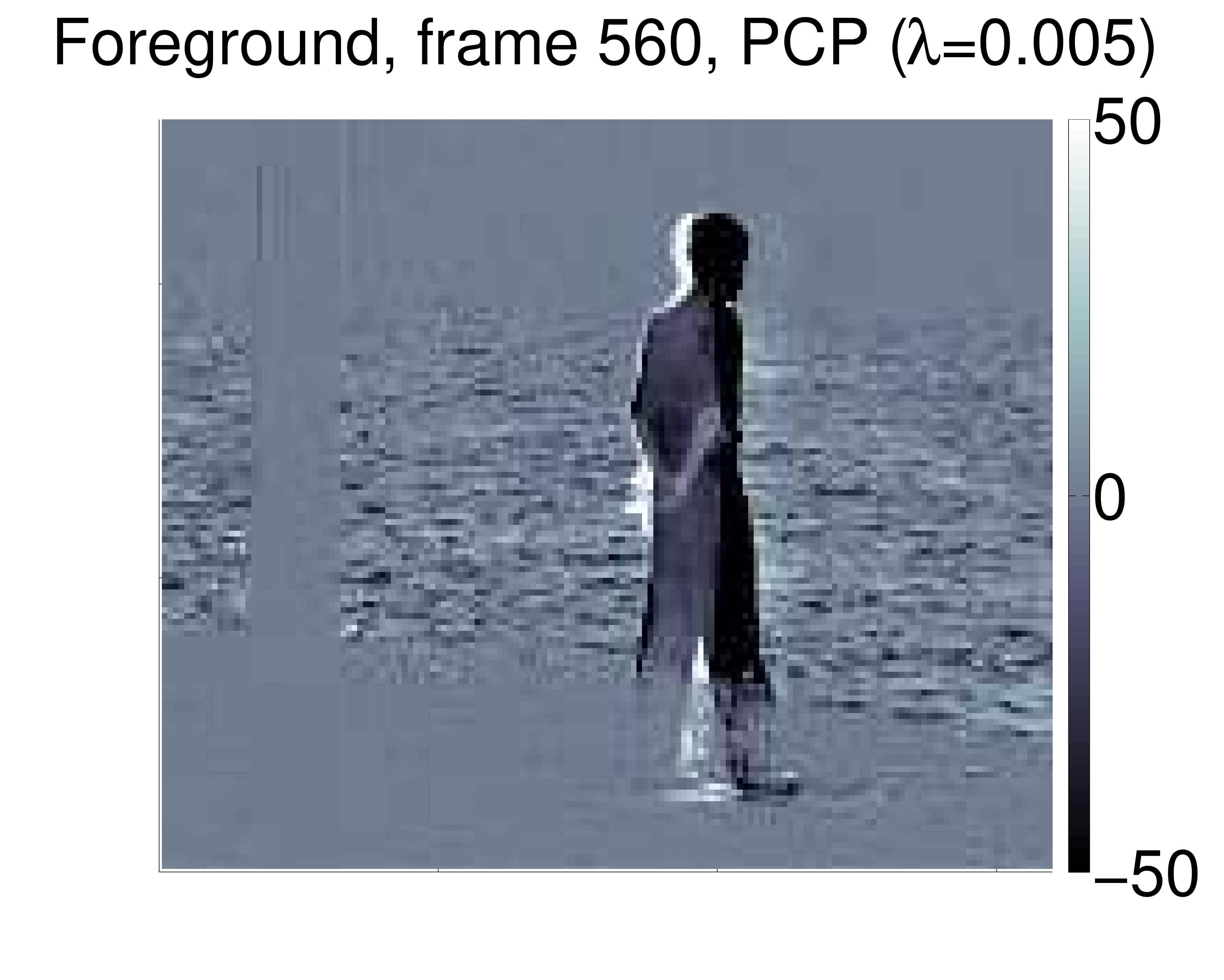} & 
\includegraphics[width=.25\columnwidth]{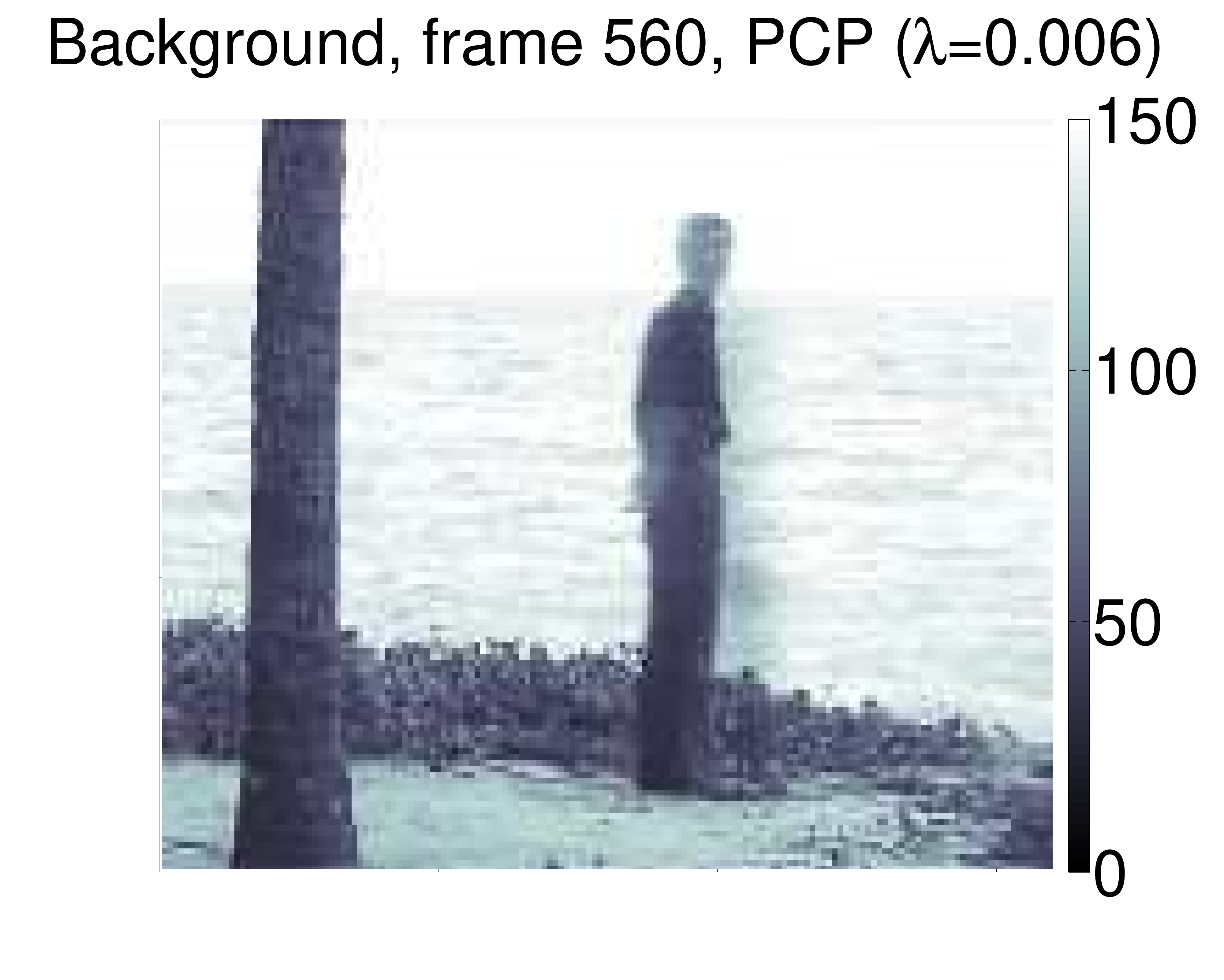} & 
\includegraphics[width=.25\columnwidth]{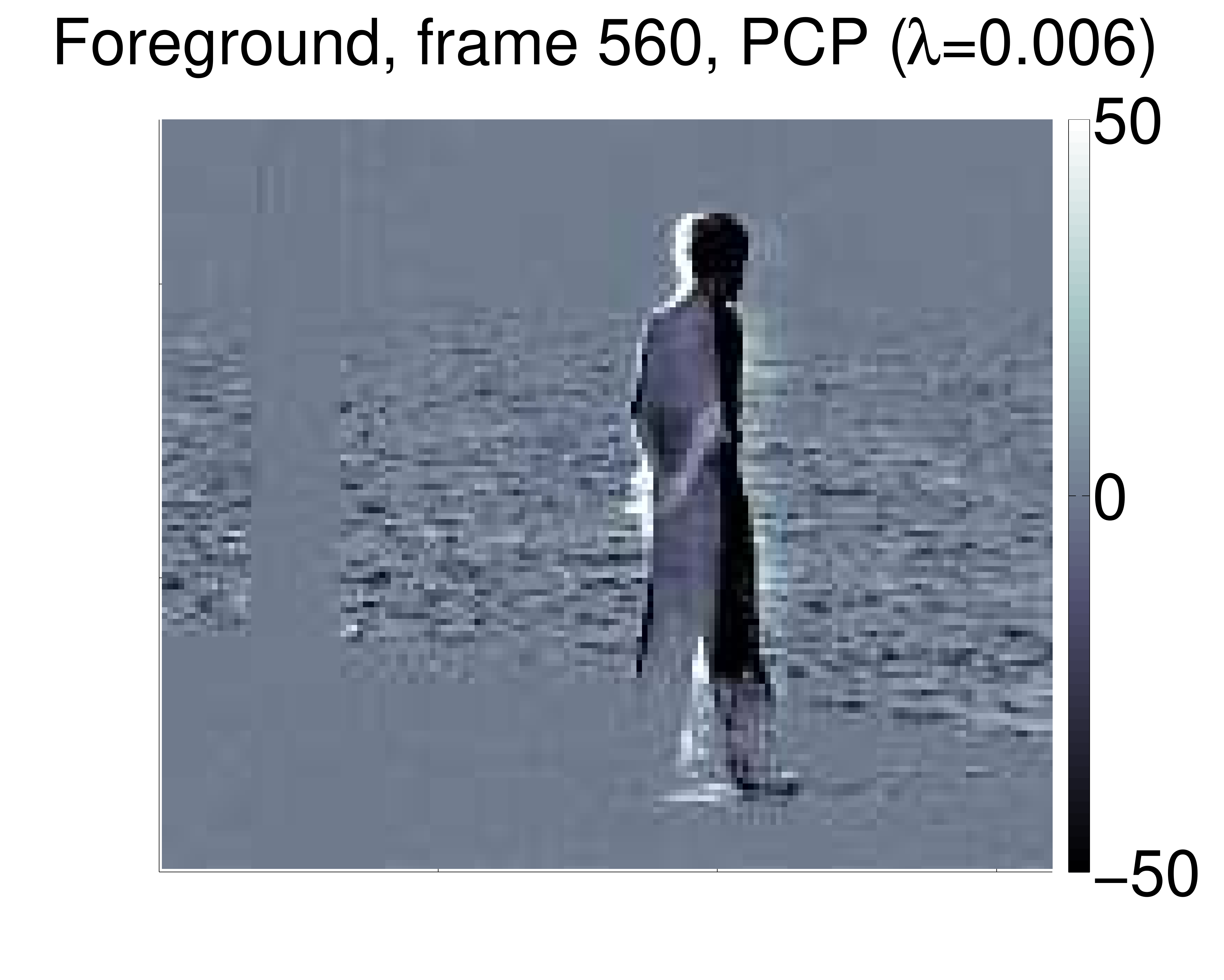} \\
\includegraphics[width=.25\columnwidth]{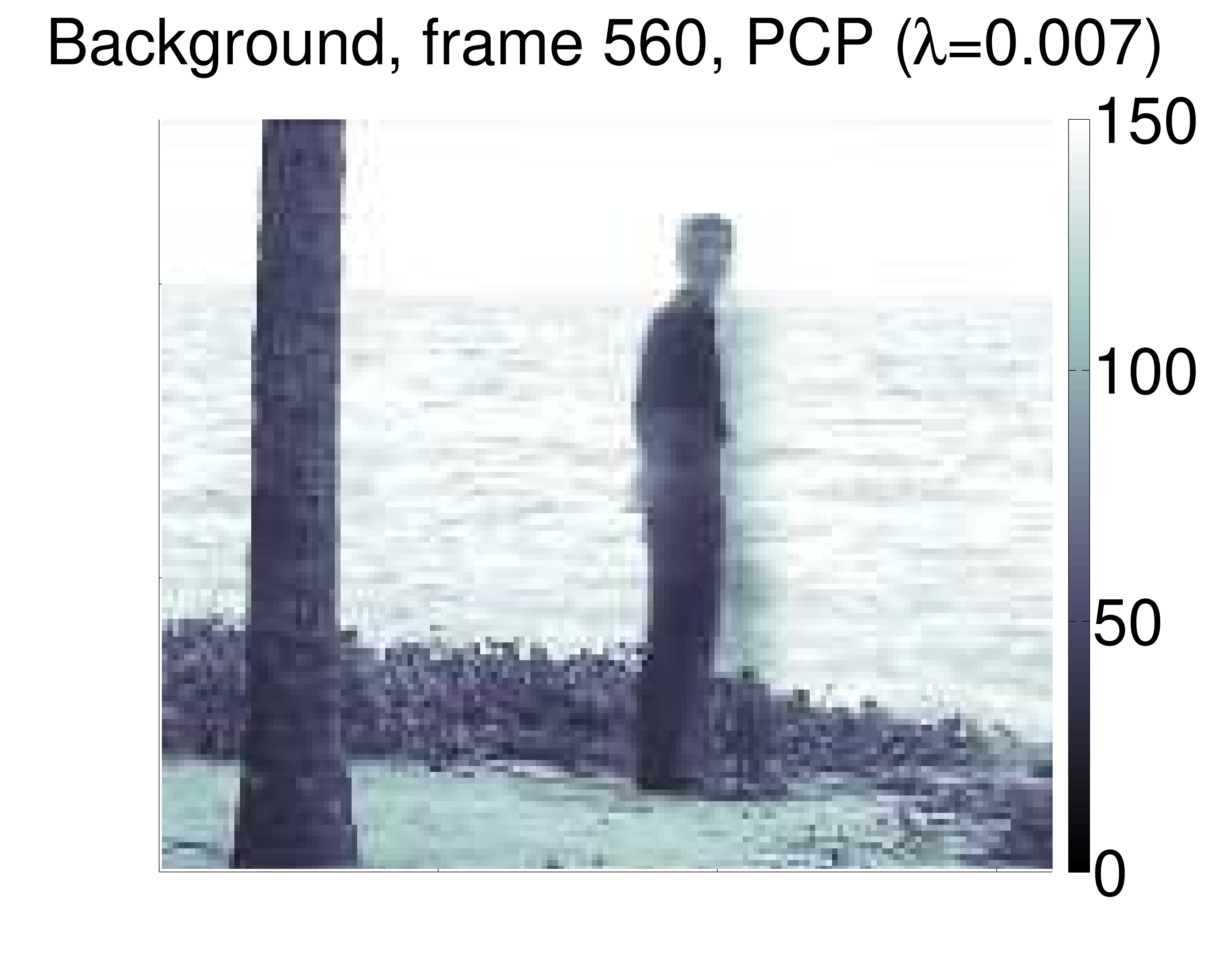} & 
\includegraphics[width=.25\columnwidth]{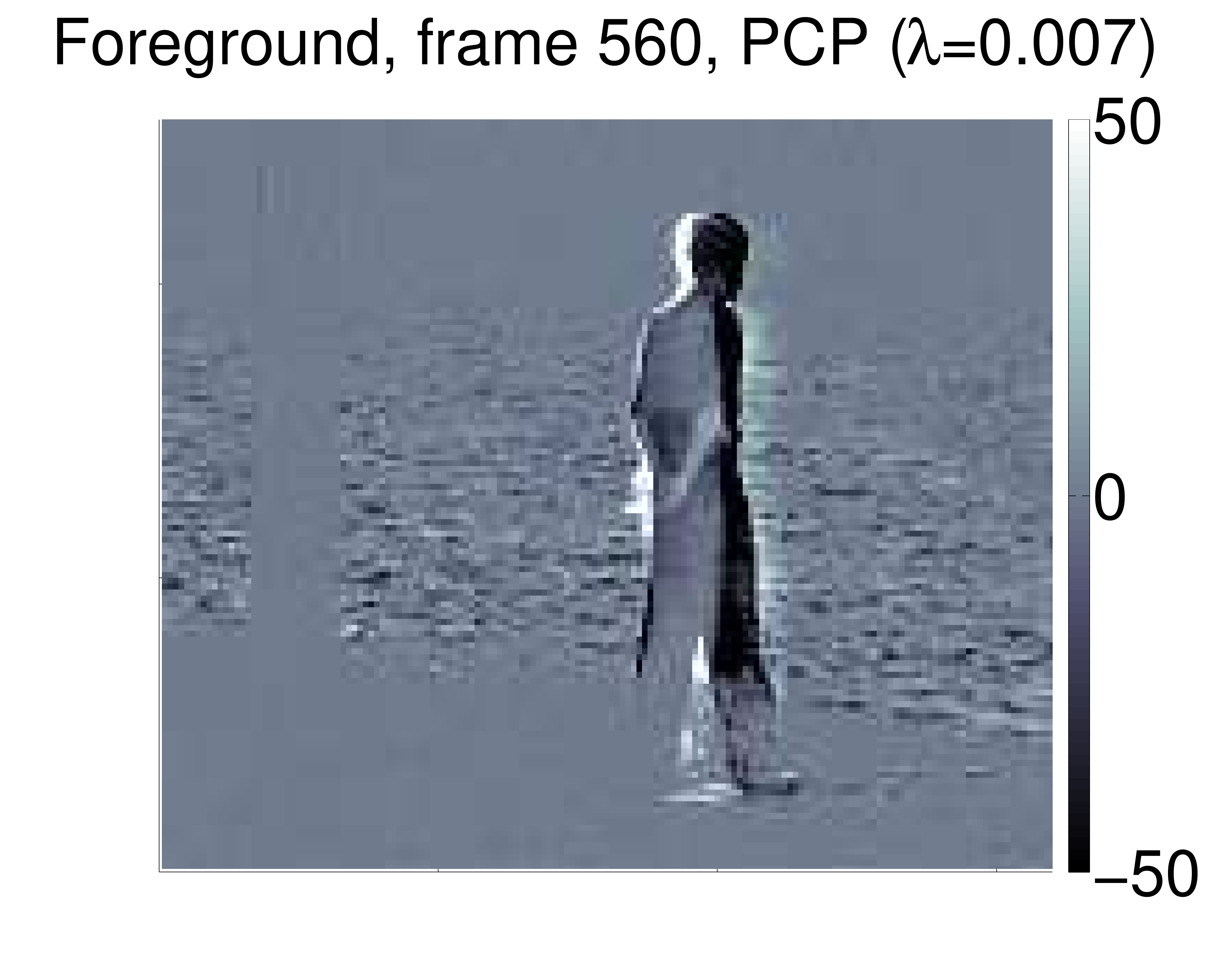} & 
\includegraphics[width=.25\columnwidth]{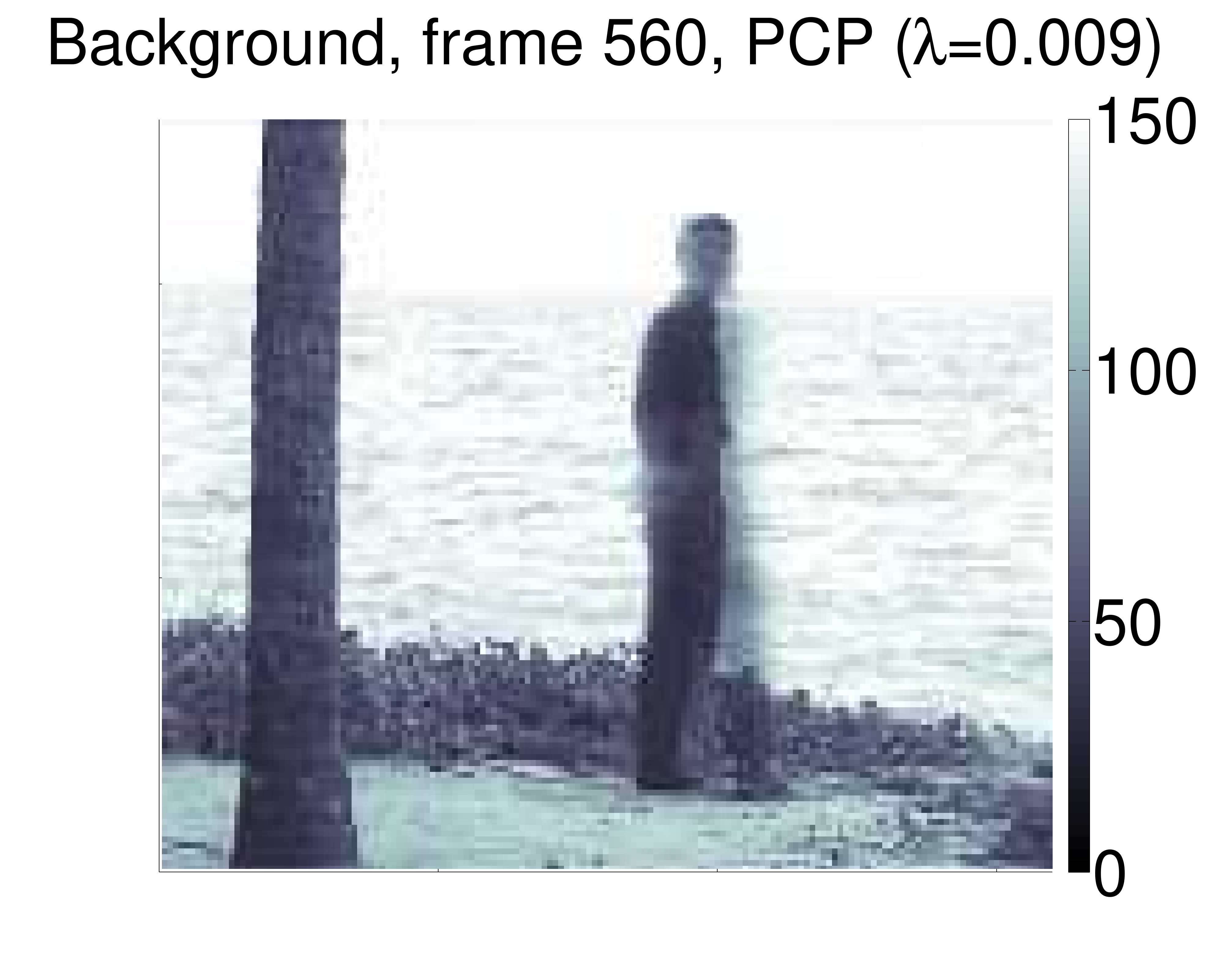} & 
\includegraphics[width=.25\columnwidth]{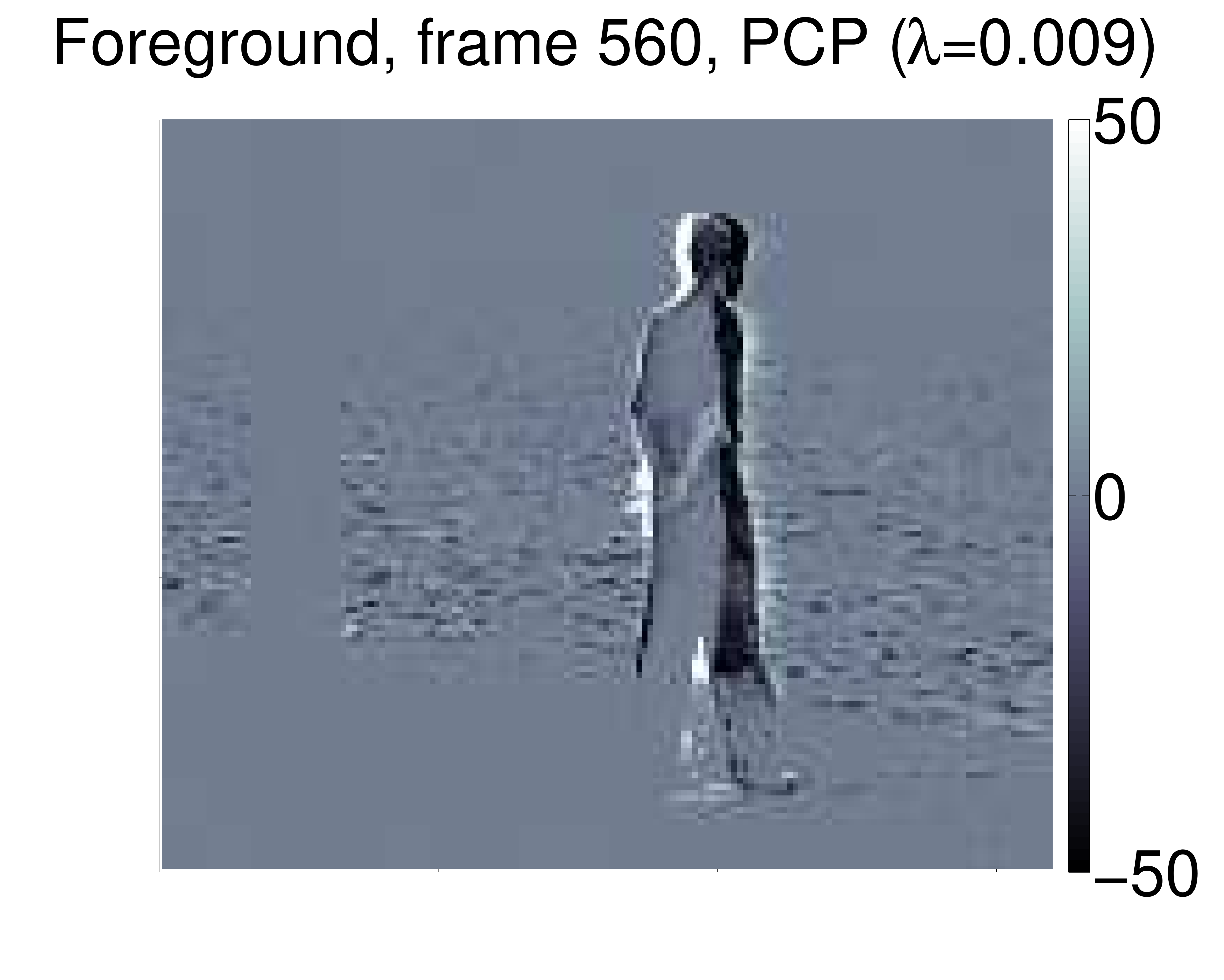} \\
\includegraphics[width=.25\columnwidth]{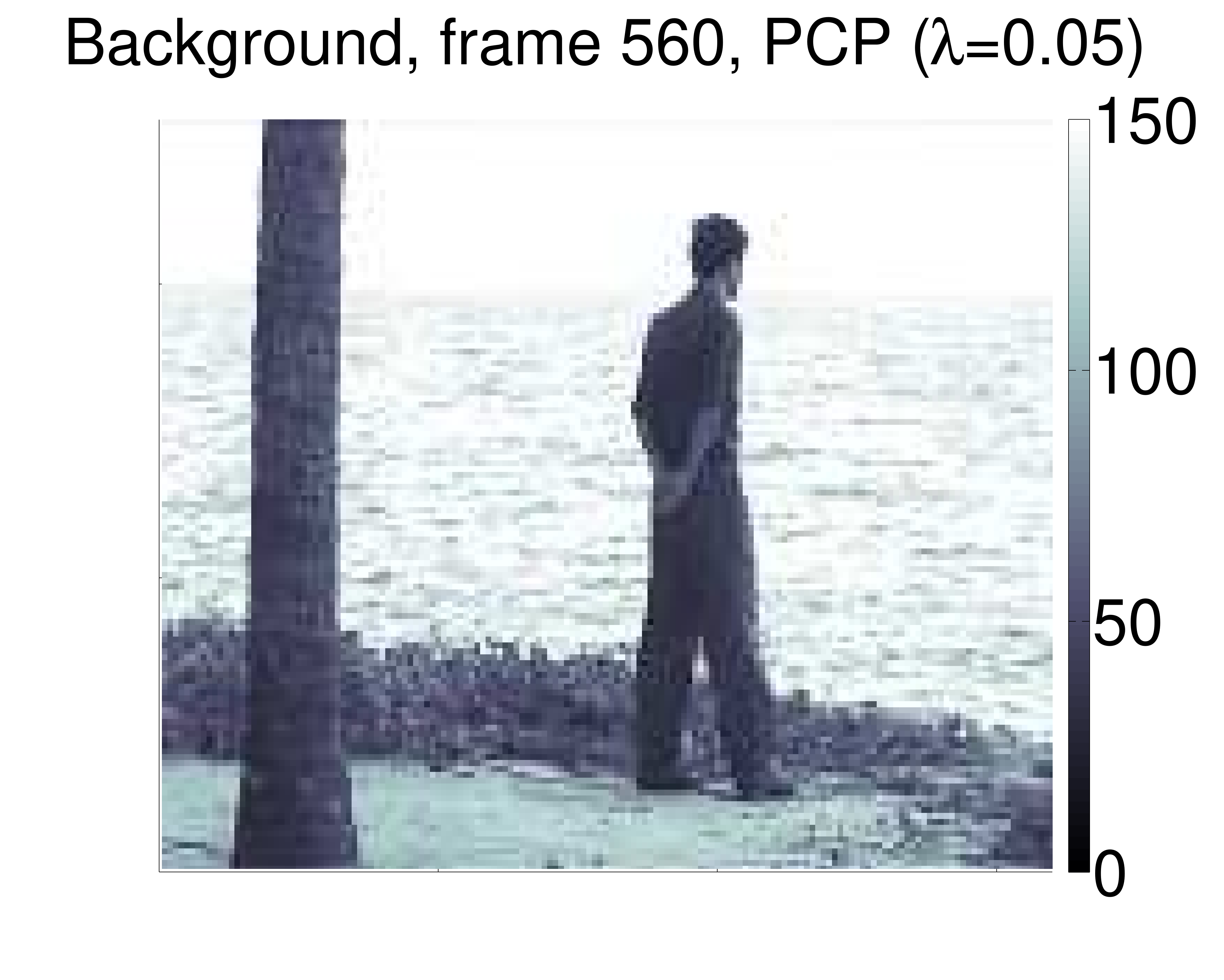} & 
\includegraphics[width=.25\columnwidth]{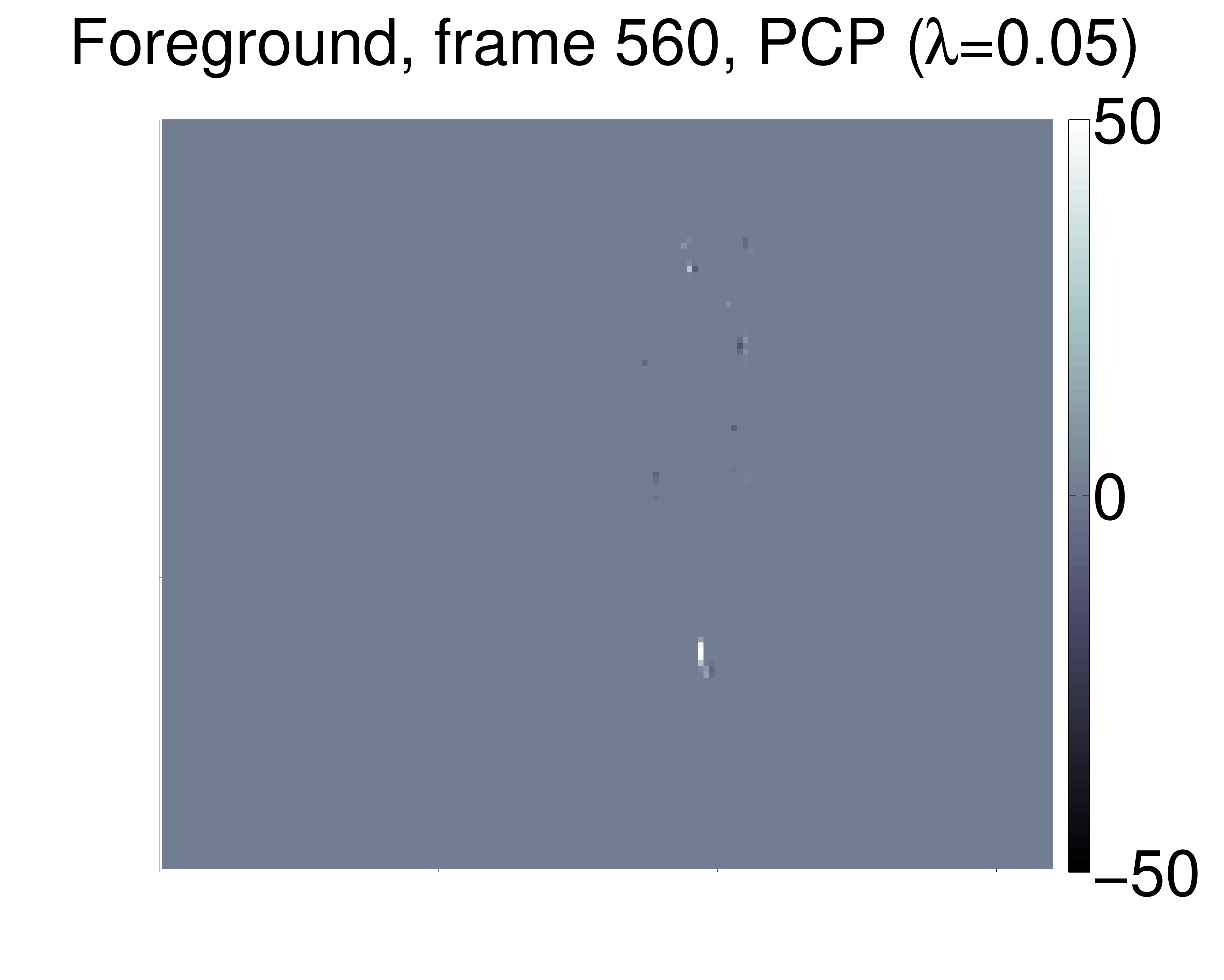} &
\includegraphics[width=.25\columnwidth]{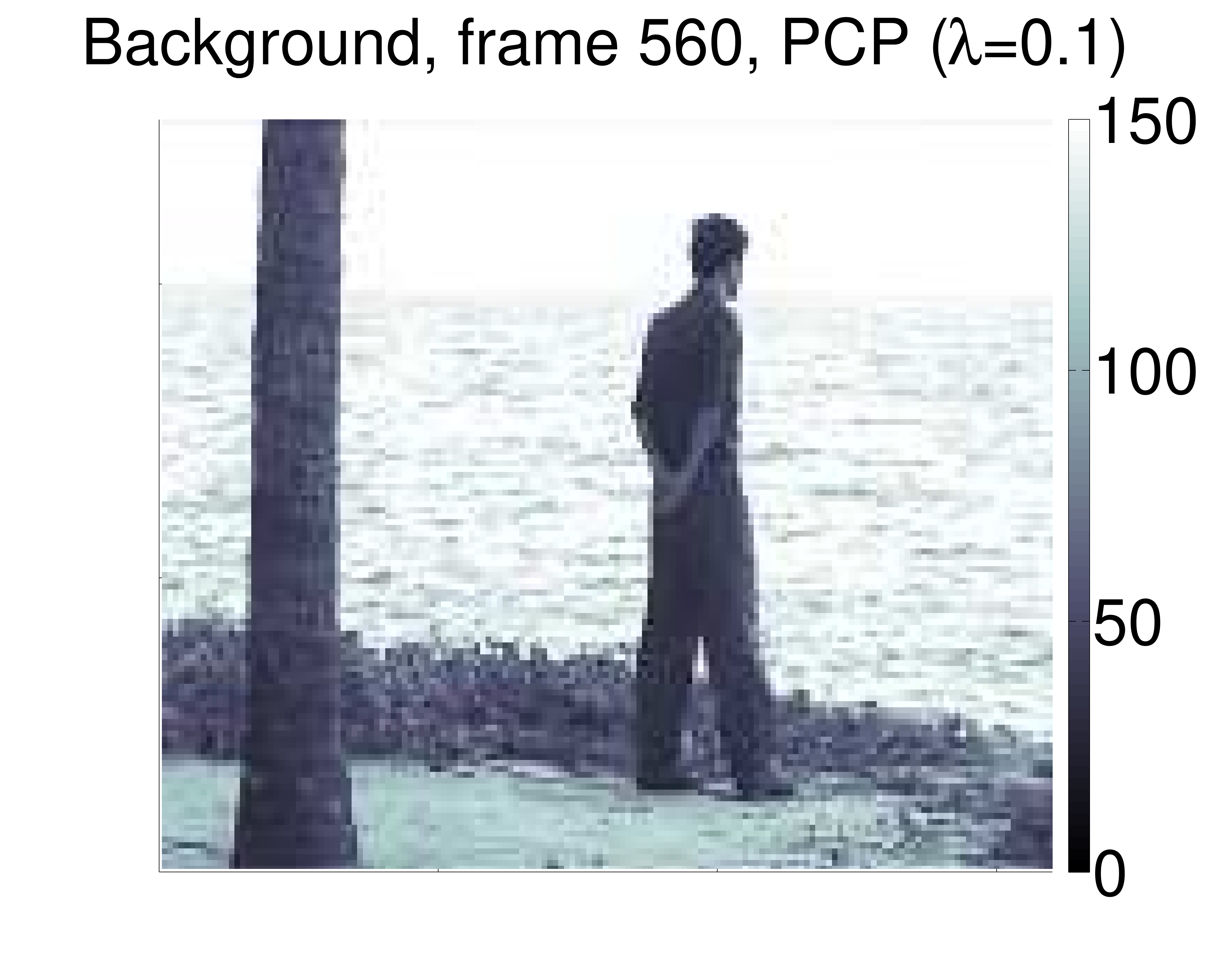} & 
\includegraphics[width=.25\columnwidth]{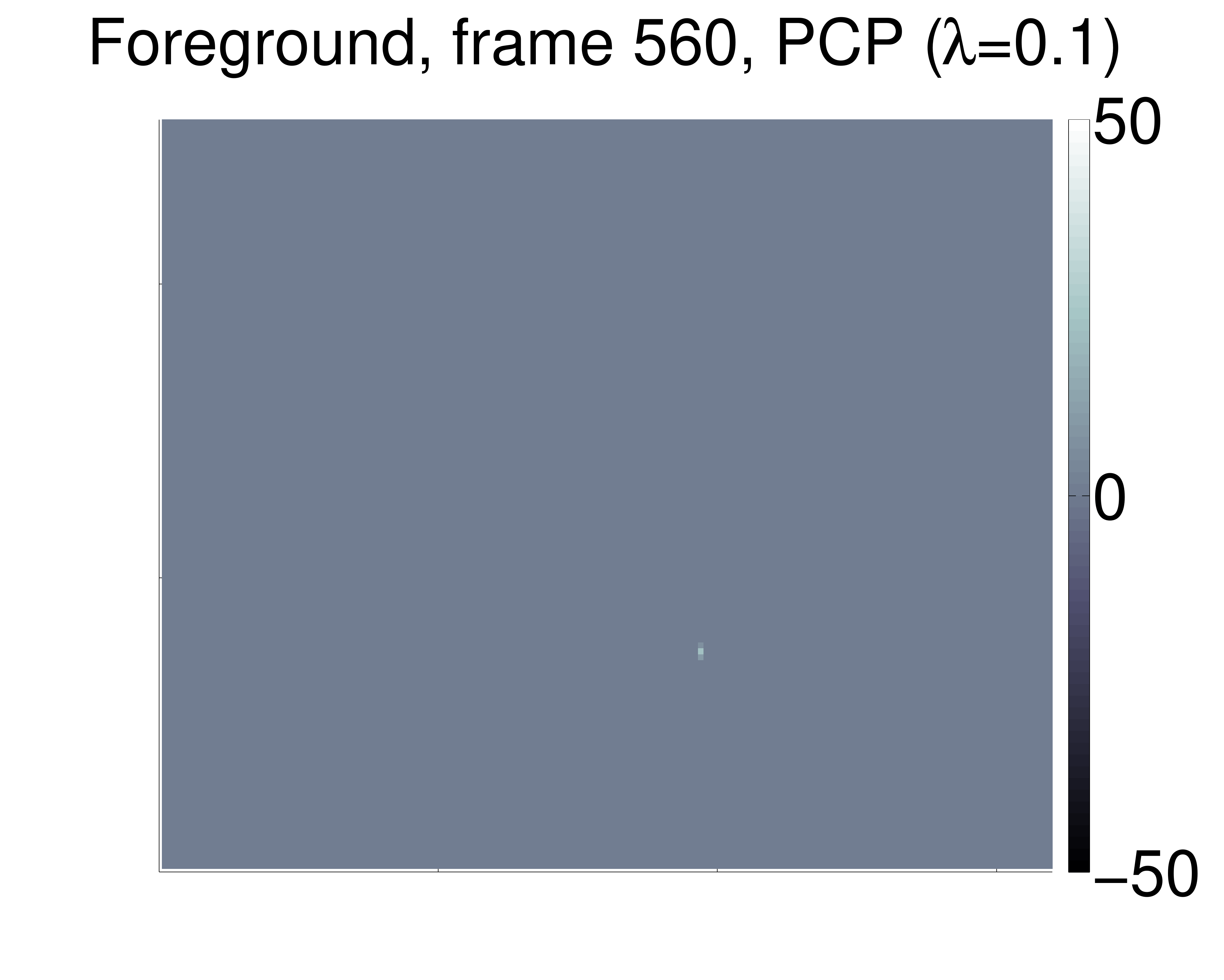}
\end{tabular}
\caption{Background $x_i^b$ and foreground $x_i^f$ recovered, using \eqref{foreground}, of frame 560 of the water surface data set with PCP using different regularization parameters. See similar results for other methods in previous Fig. \ref{fig:wsbf}
}
\label{fig:wsbfpcp}
\end{figure}
\clearpage

\begin{figure}
\begin{tabular}{cccc}
\includegraphics[width=.25\columnwidth]{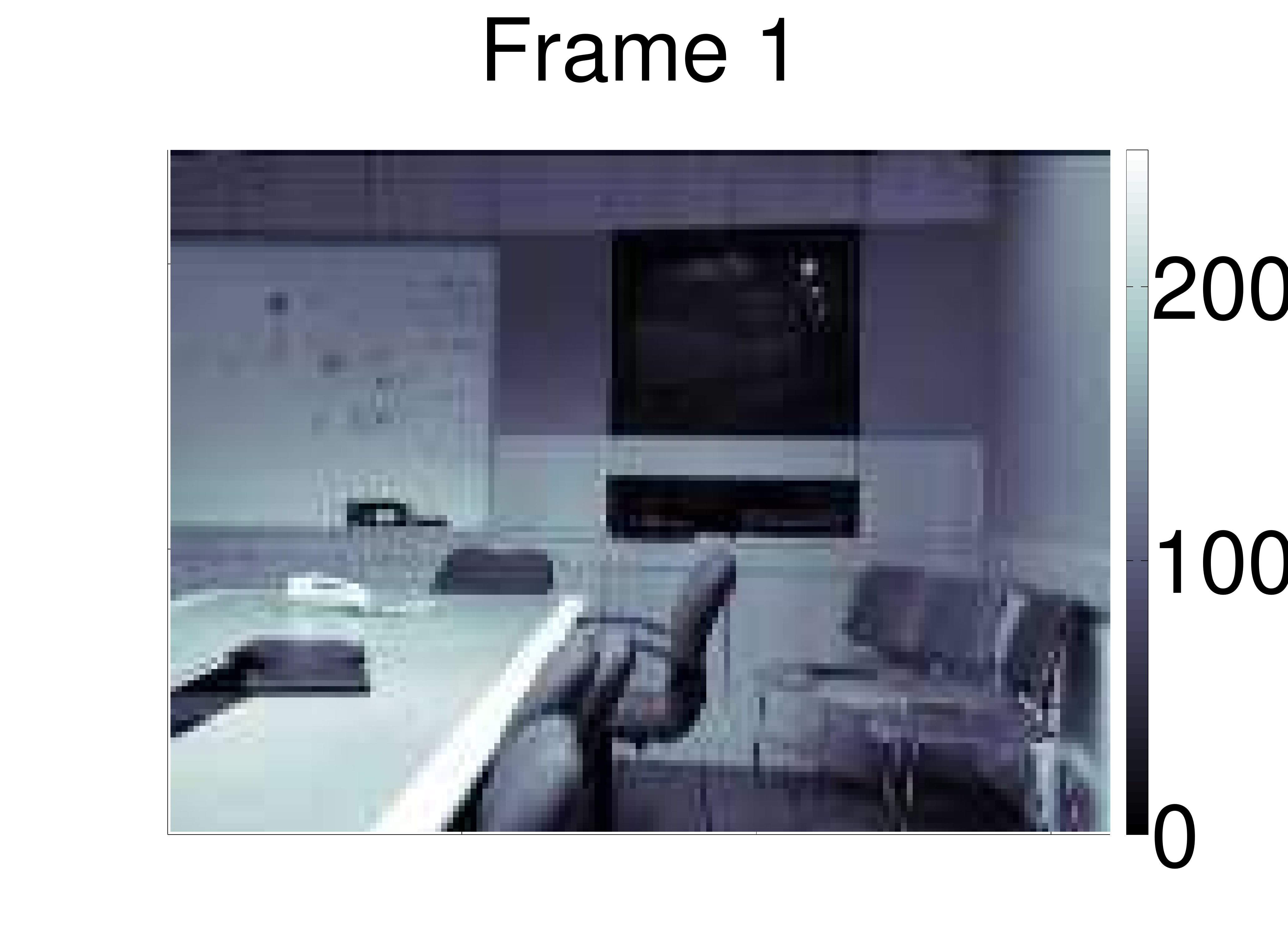} & 
\includegraphics[width=.25\columnwidth]{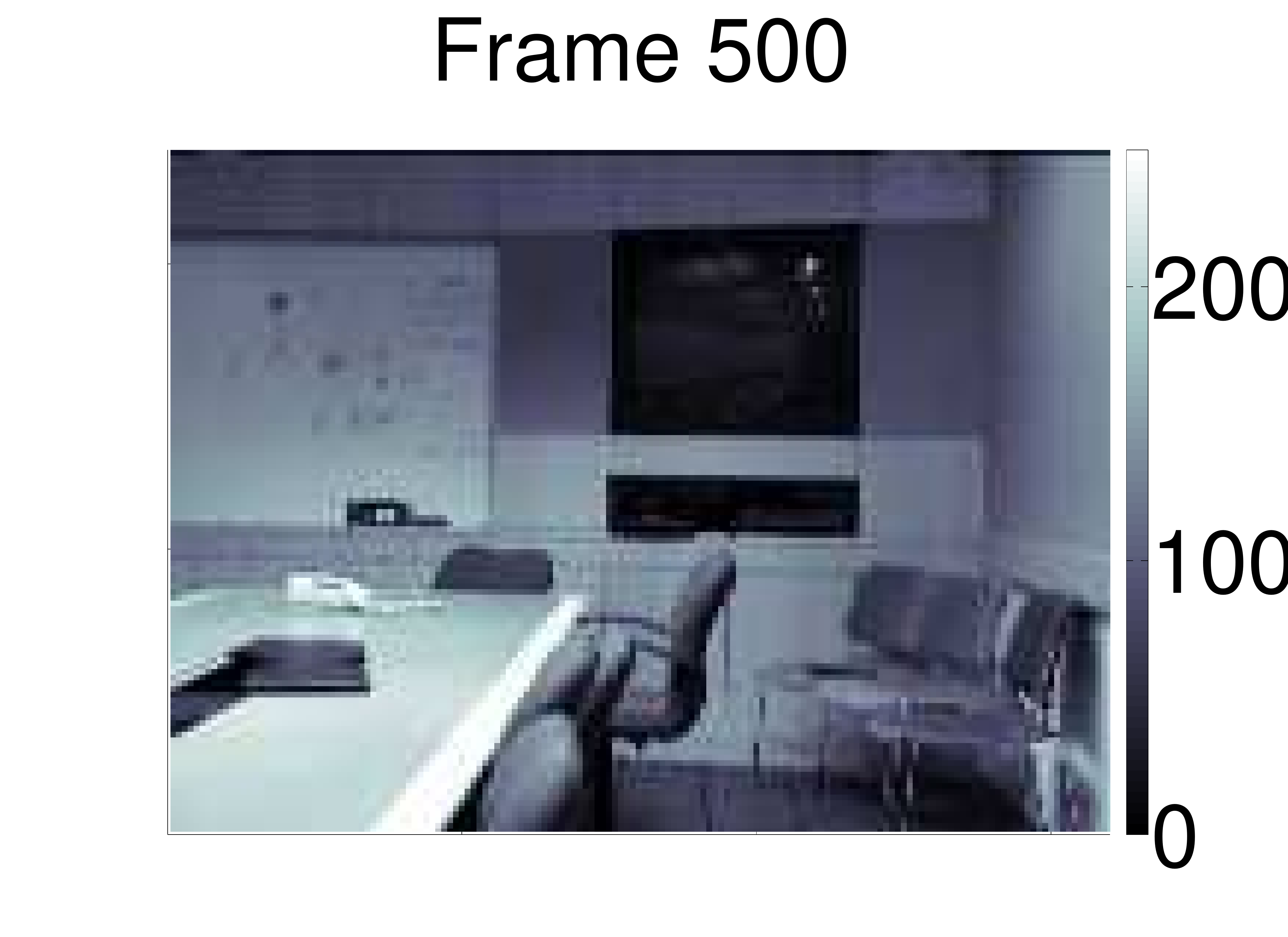} & 
\includegraphics[width=.25\columnwidth]{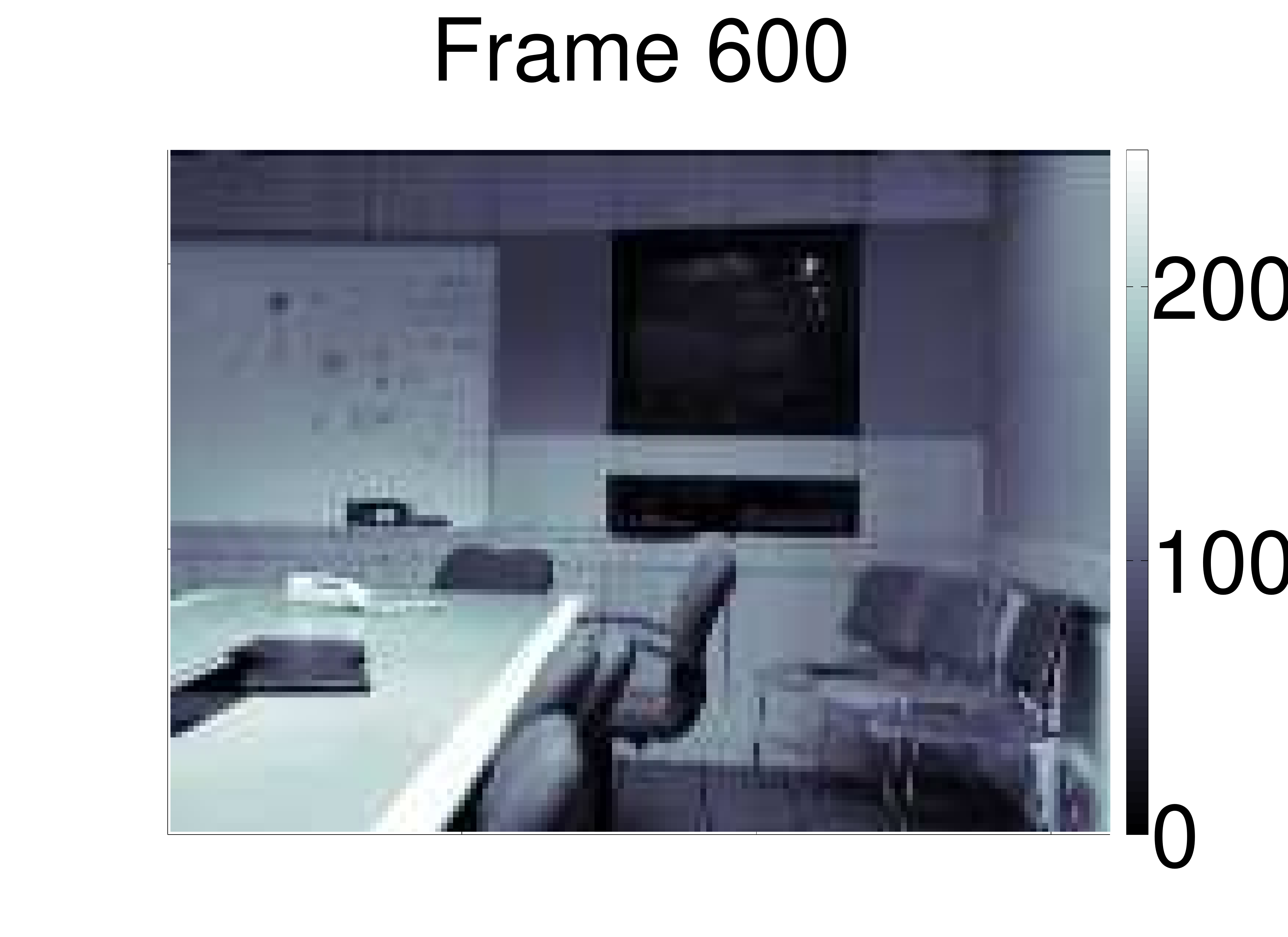} & 
\includegraphics[width=.25\columnwidth]{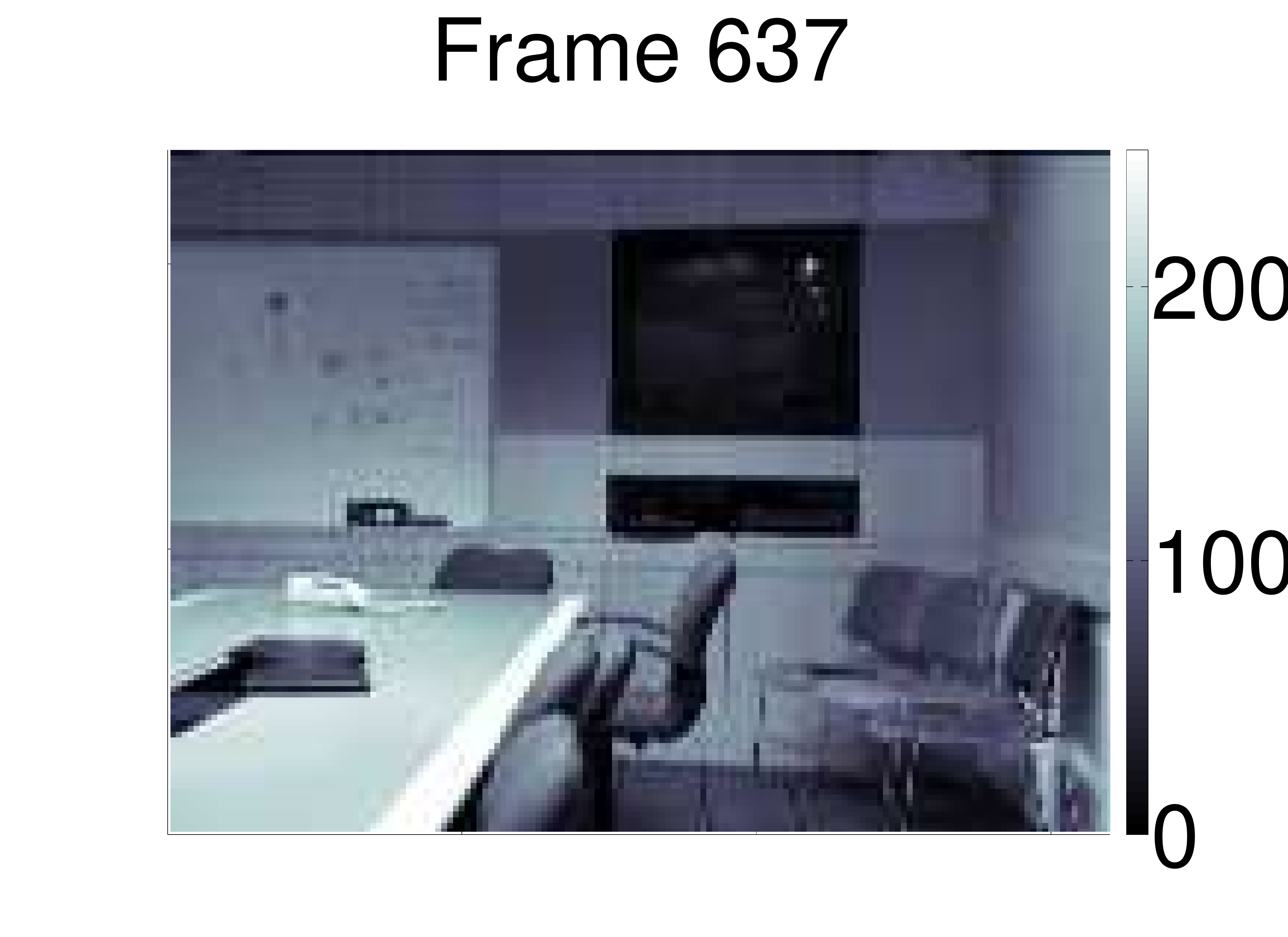} \\
\includegraphics[width=.25\columnwidth]{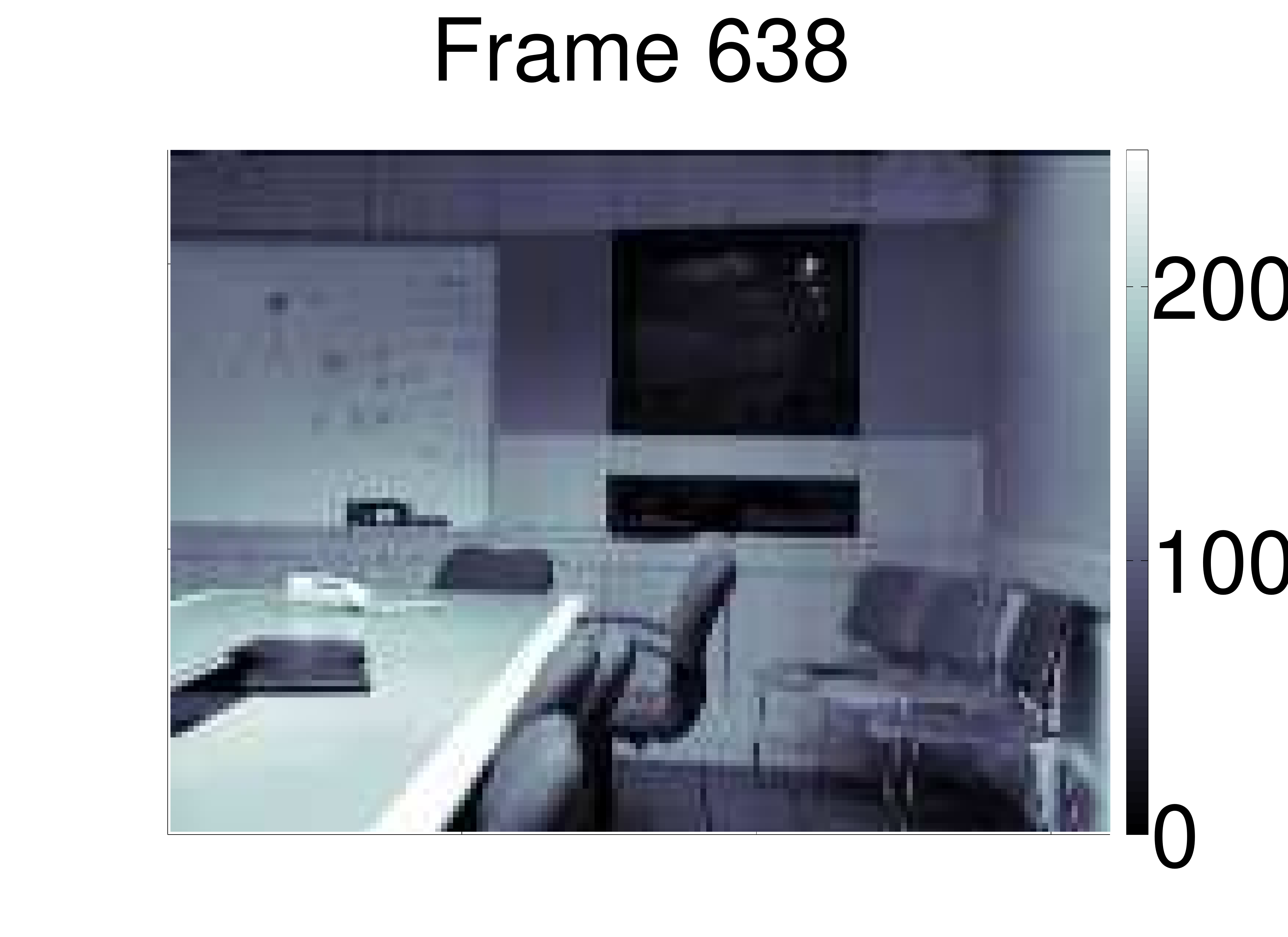} & 
\includegraphics[width=.25\columnwidth]{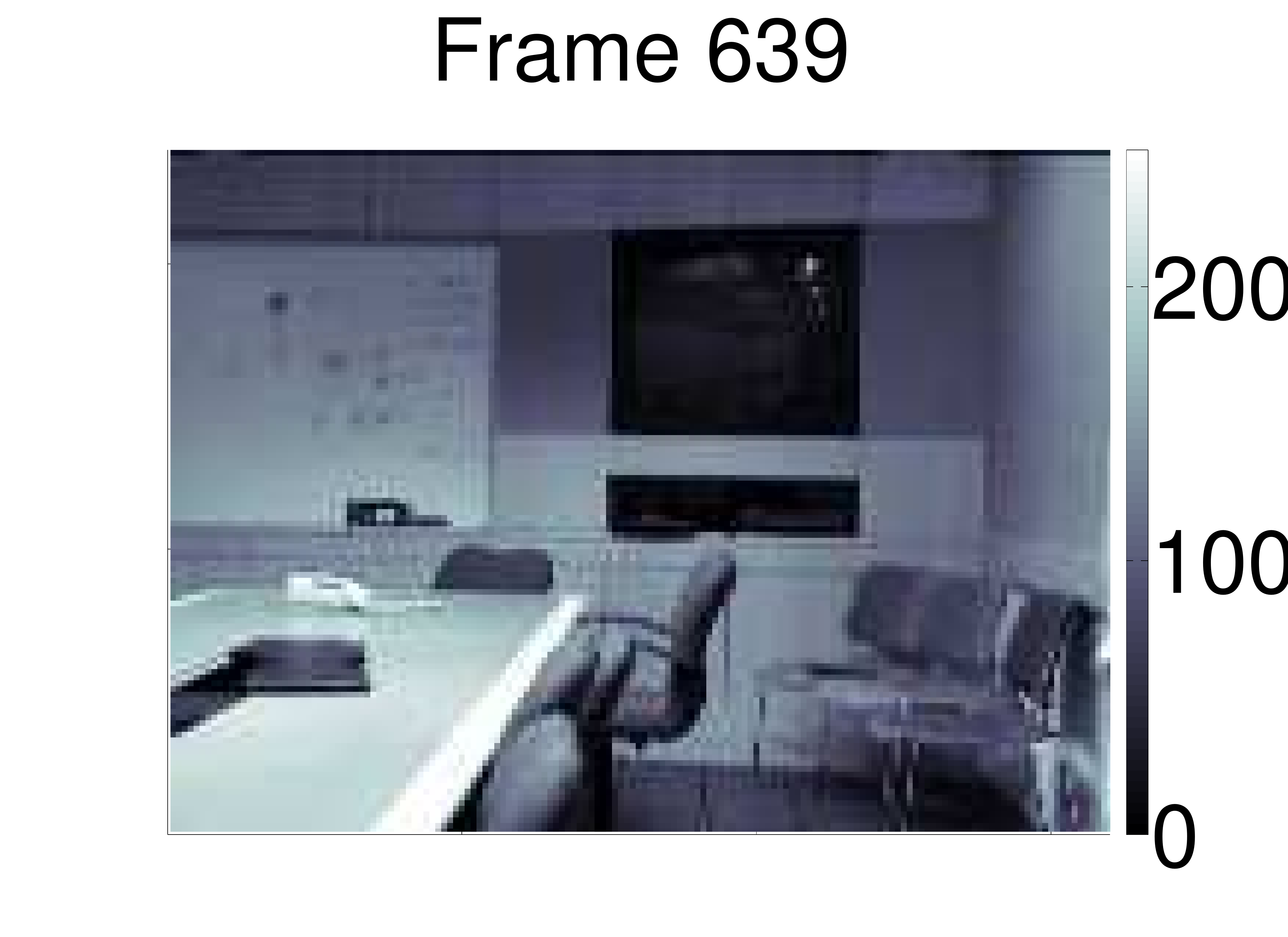} & 
\includegraphics[width=.25\columnwidth]{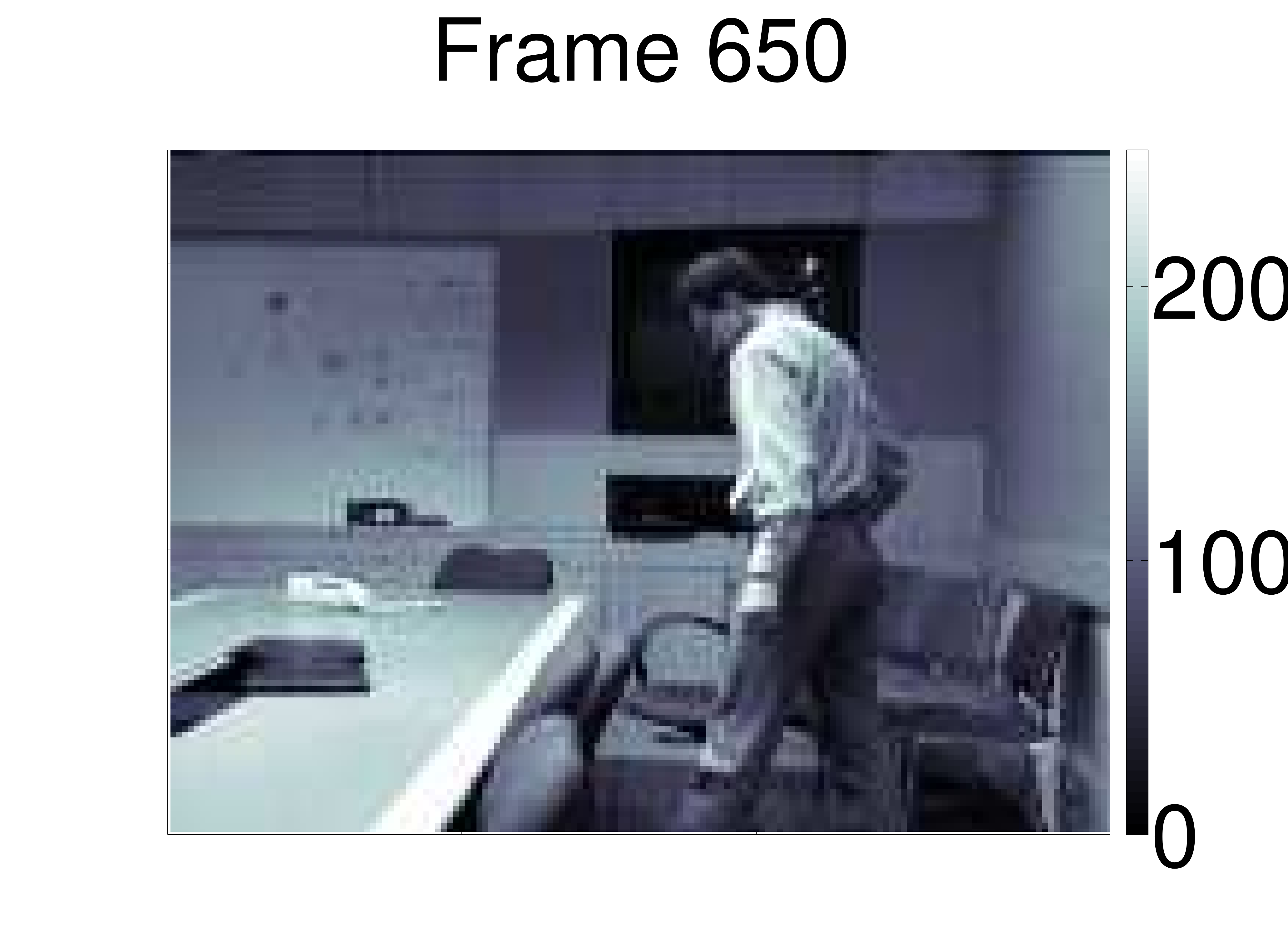} & 
\includegraphics[width=.25\columnwidth]{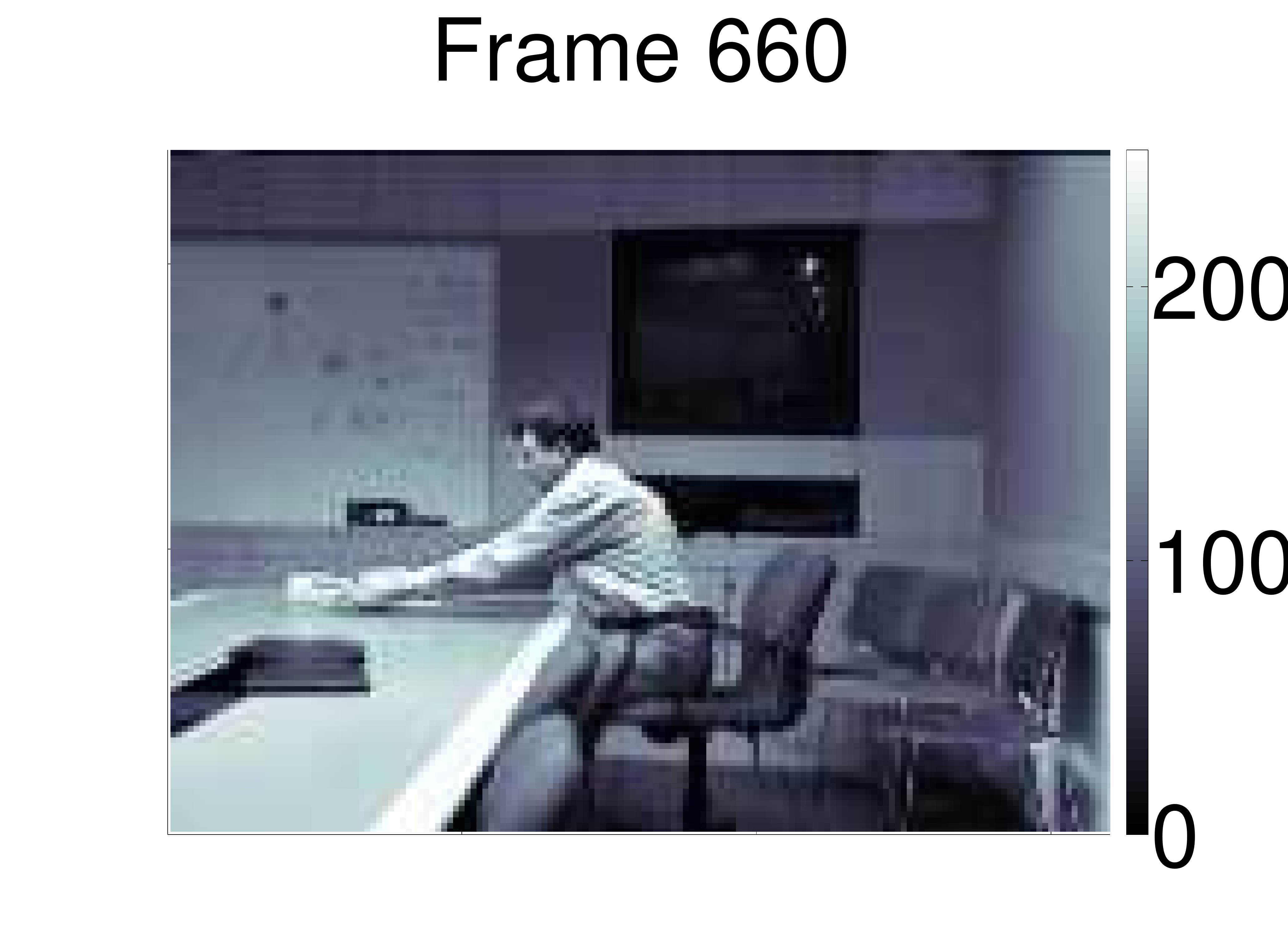} \\
\includegraphics[width=.25\columnwidth]{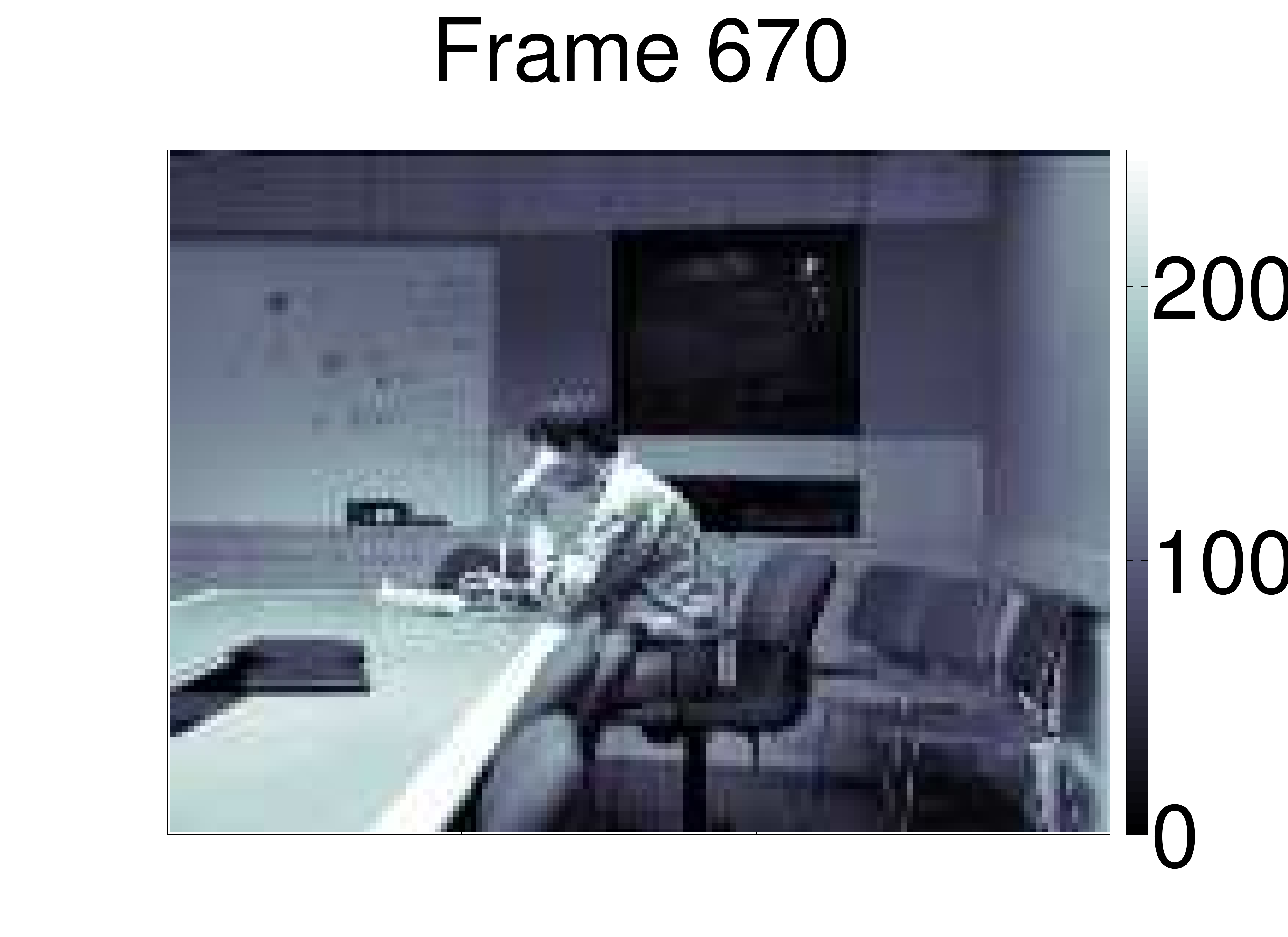} & 
\includegraphics[width=.25\columnwidth]{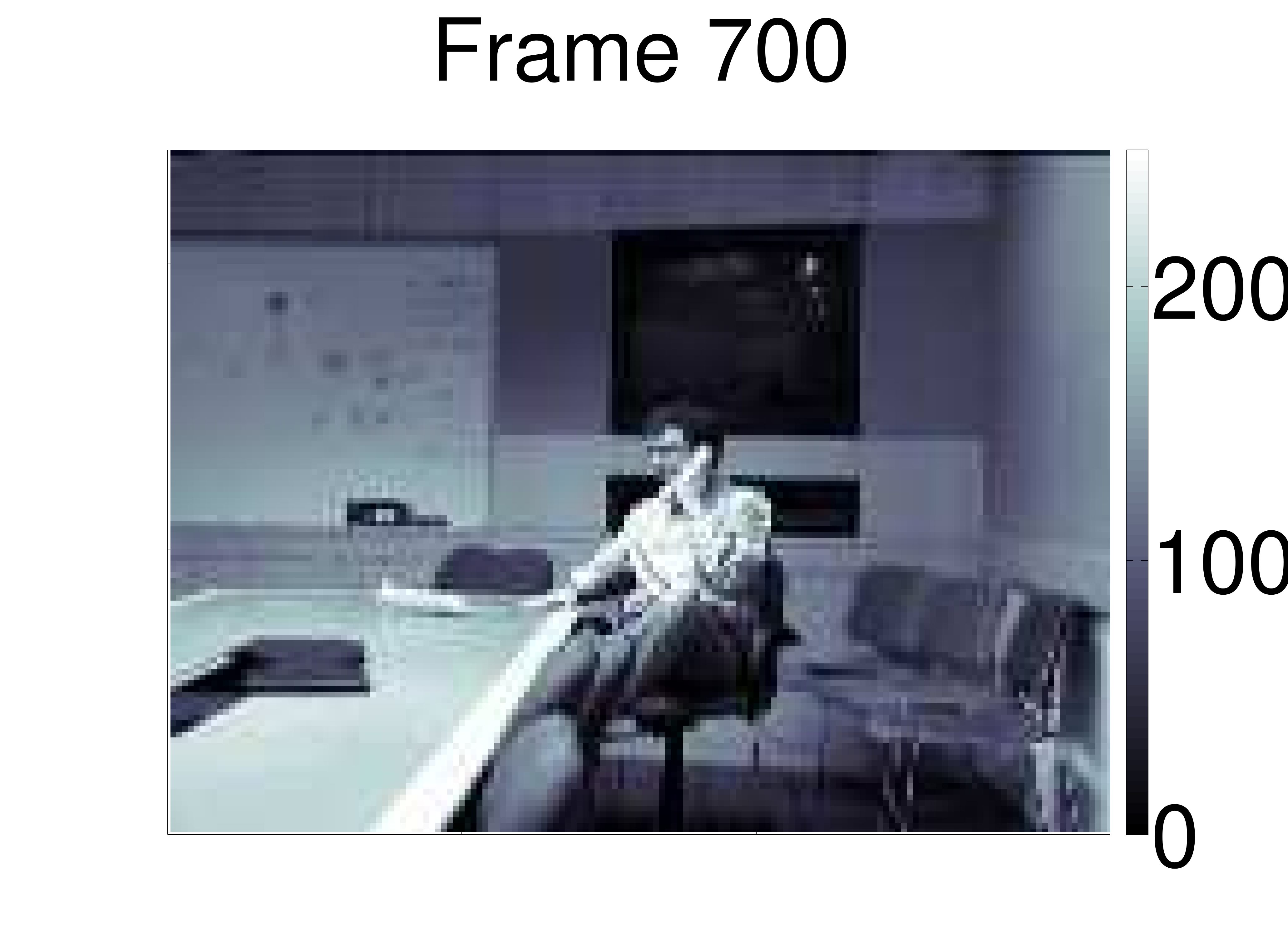} & 
\includegraphics[width=.25\columnwidth]{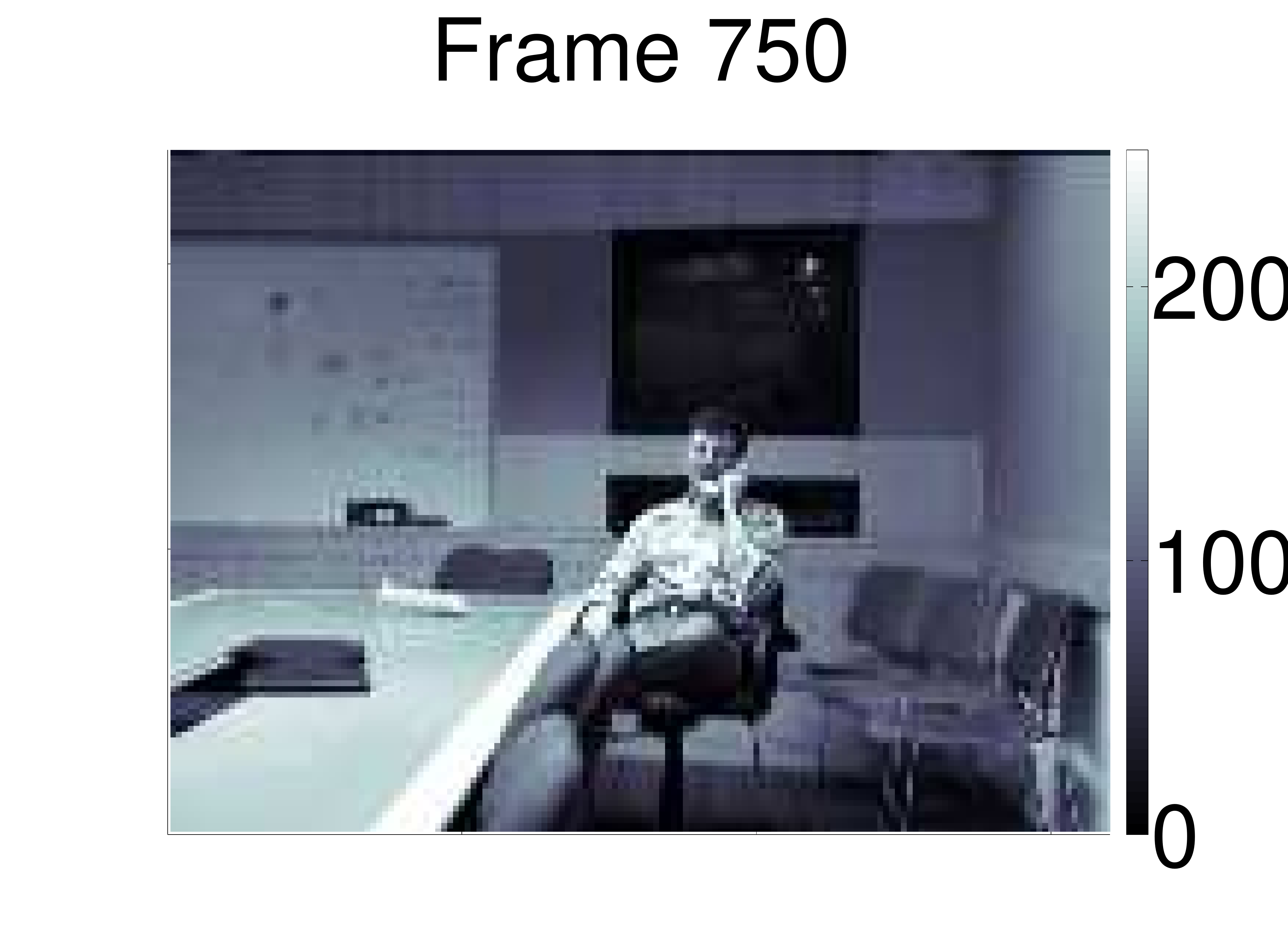} & 
\includegraphics[width=.25\columnwidth]{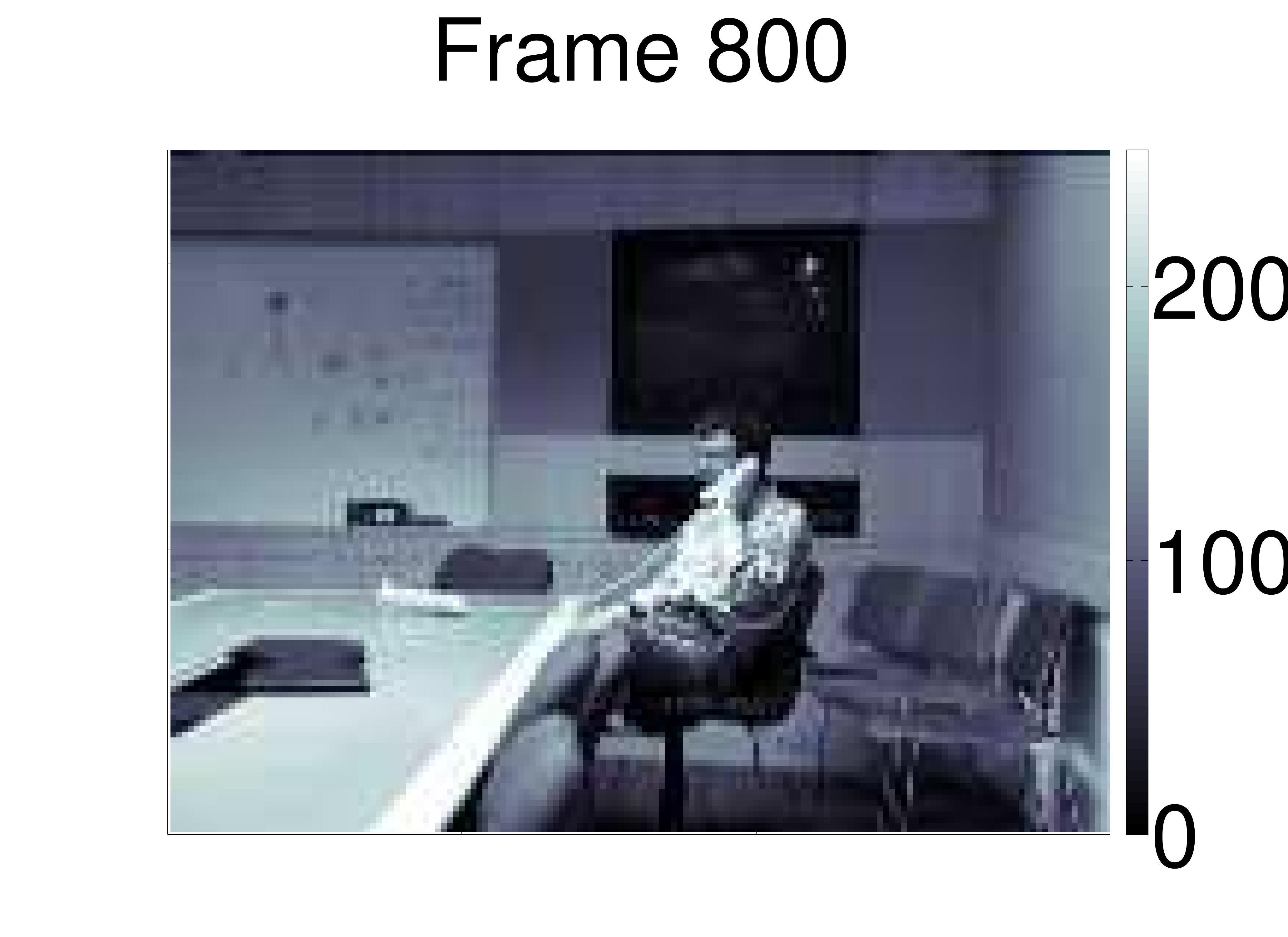} \\
\includegraphics[width=.25\columnwidth]{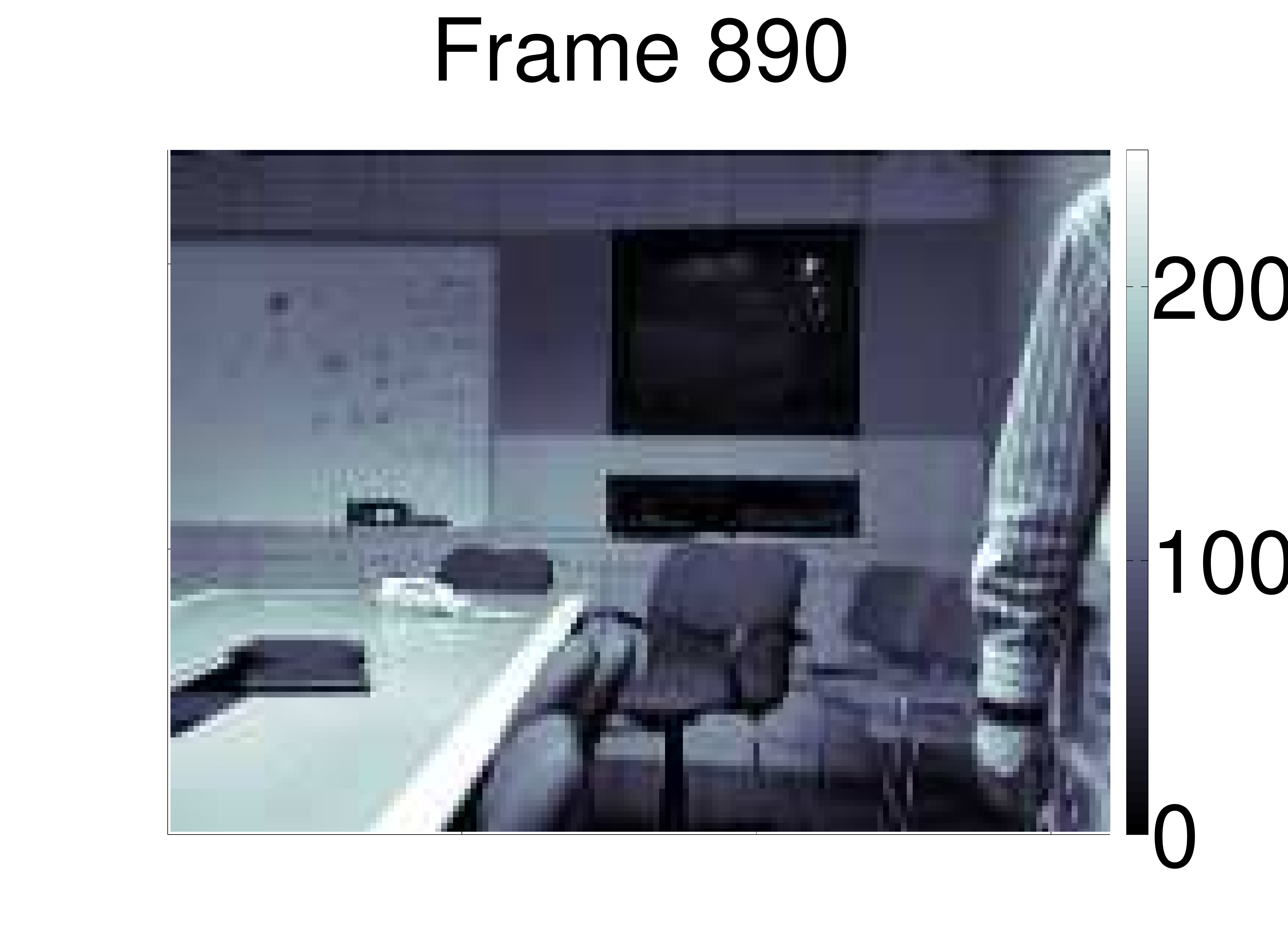} & 
\includegraphics[width=.25\columnwidth]{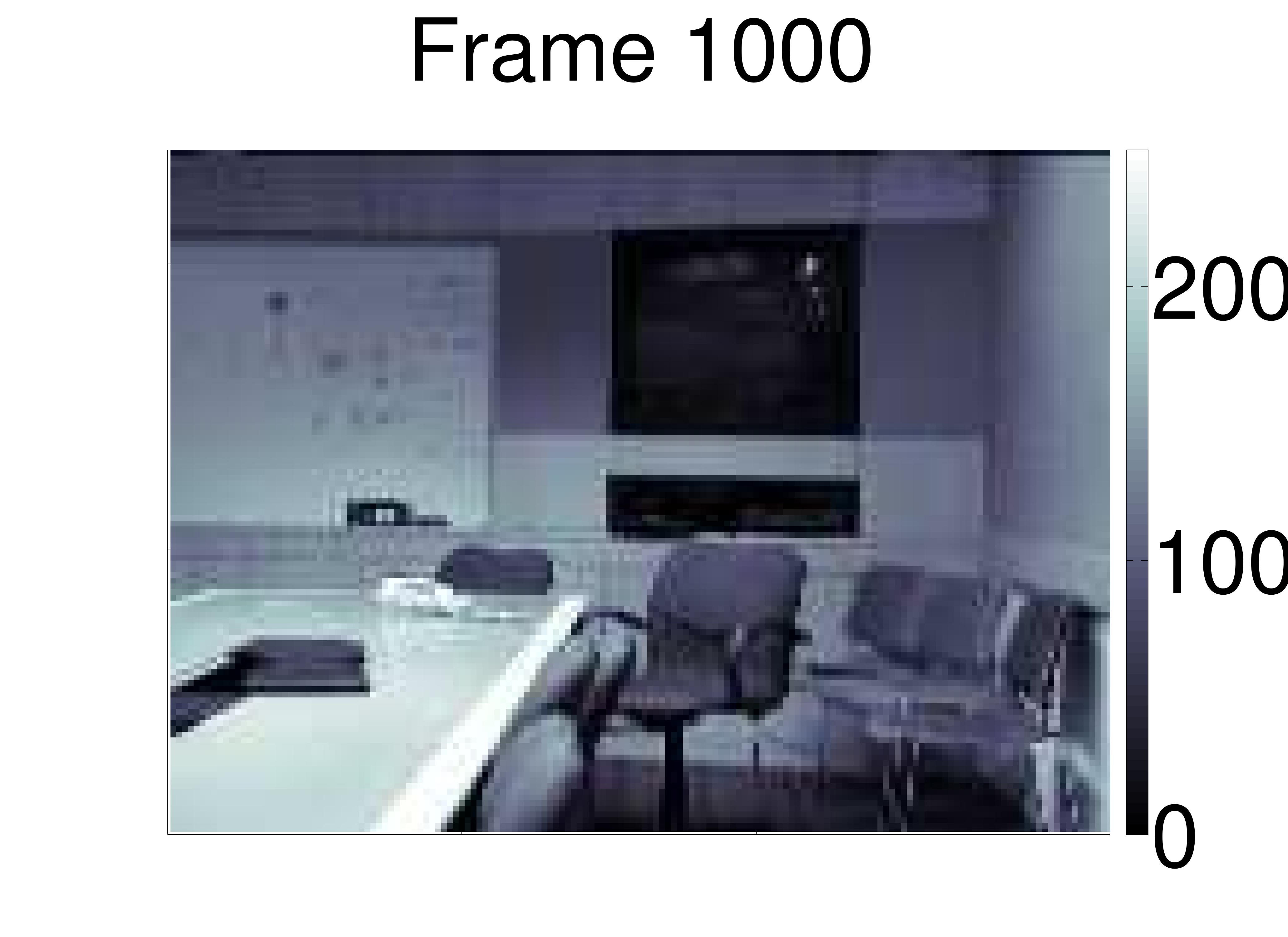} & 
\includegraphics[width=.25\columnwidth]{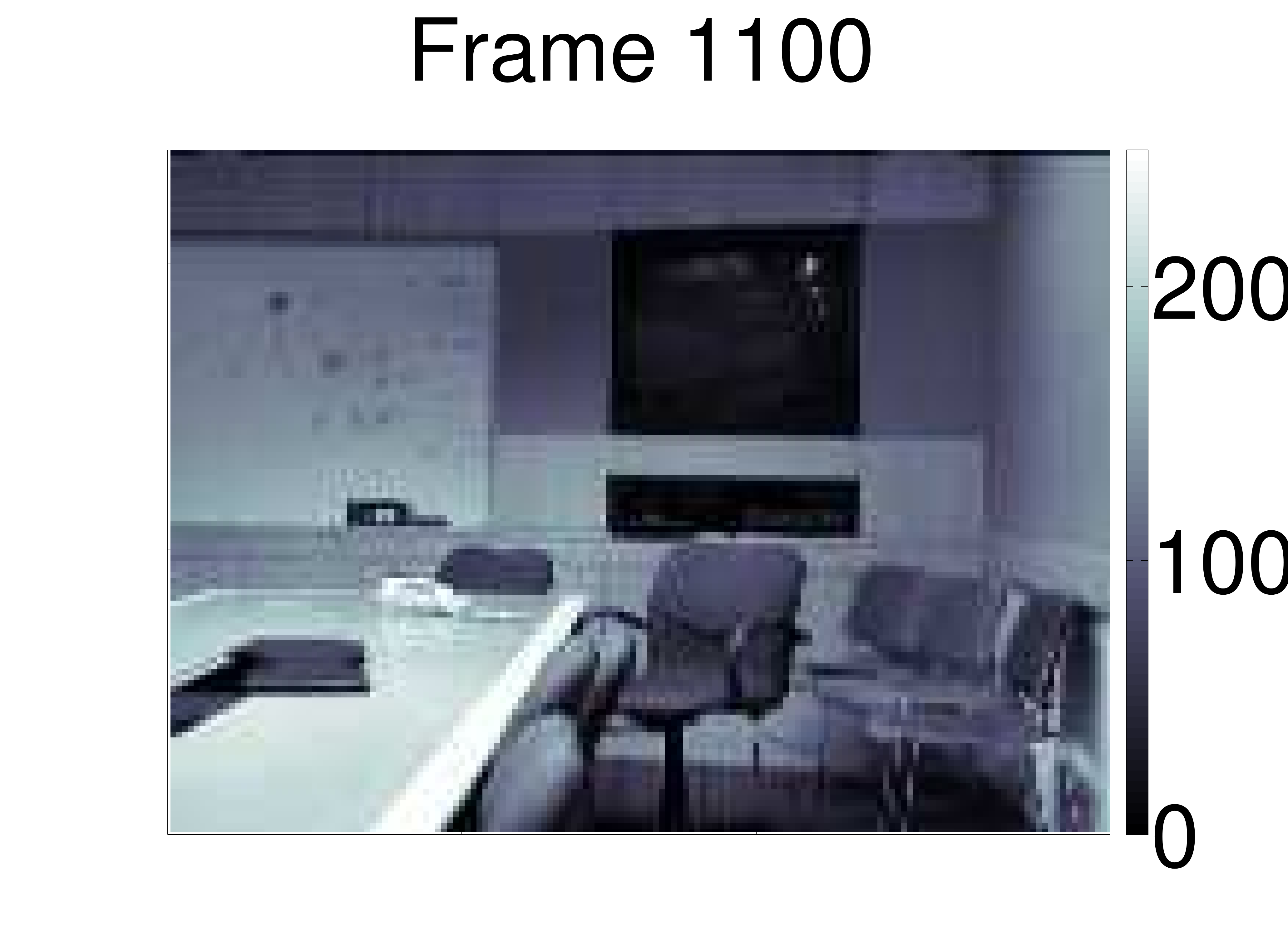} & 
\includegraphics[width=.25\columnwidth]{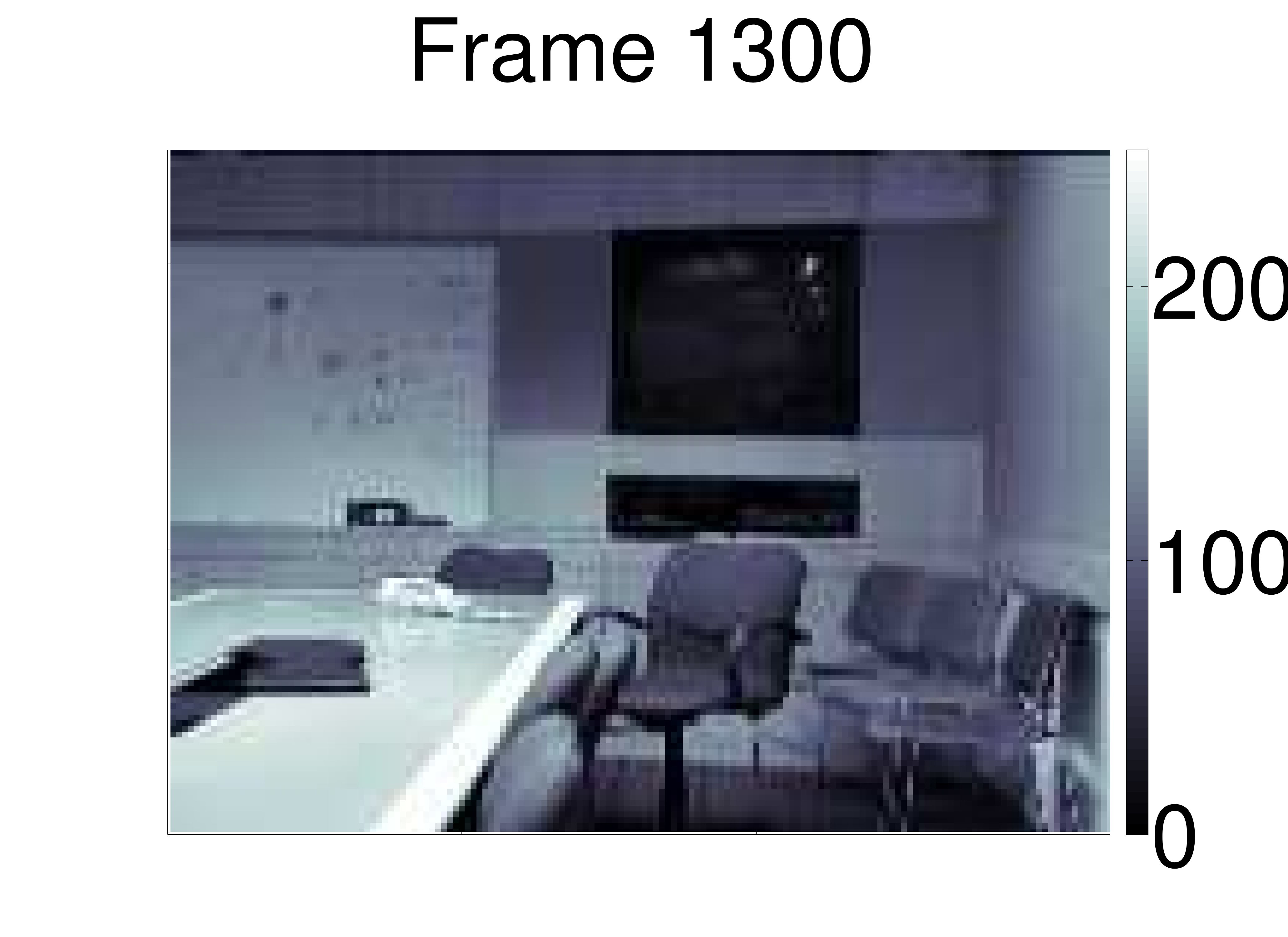} \\
\includegraphics[width=.25\columnwidth]{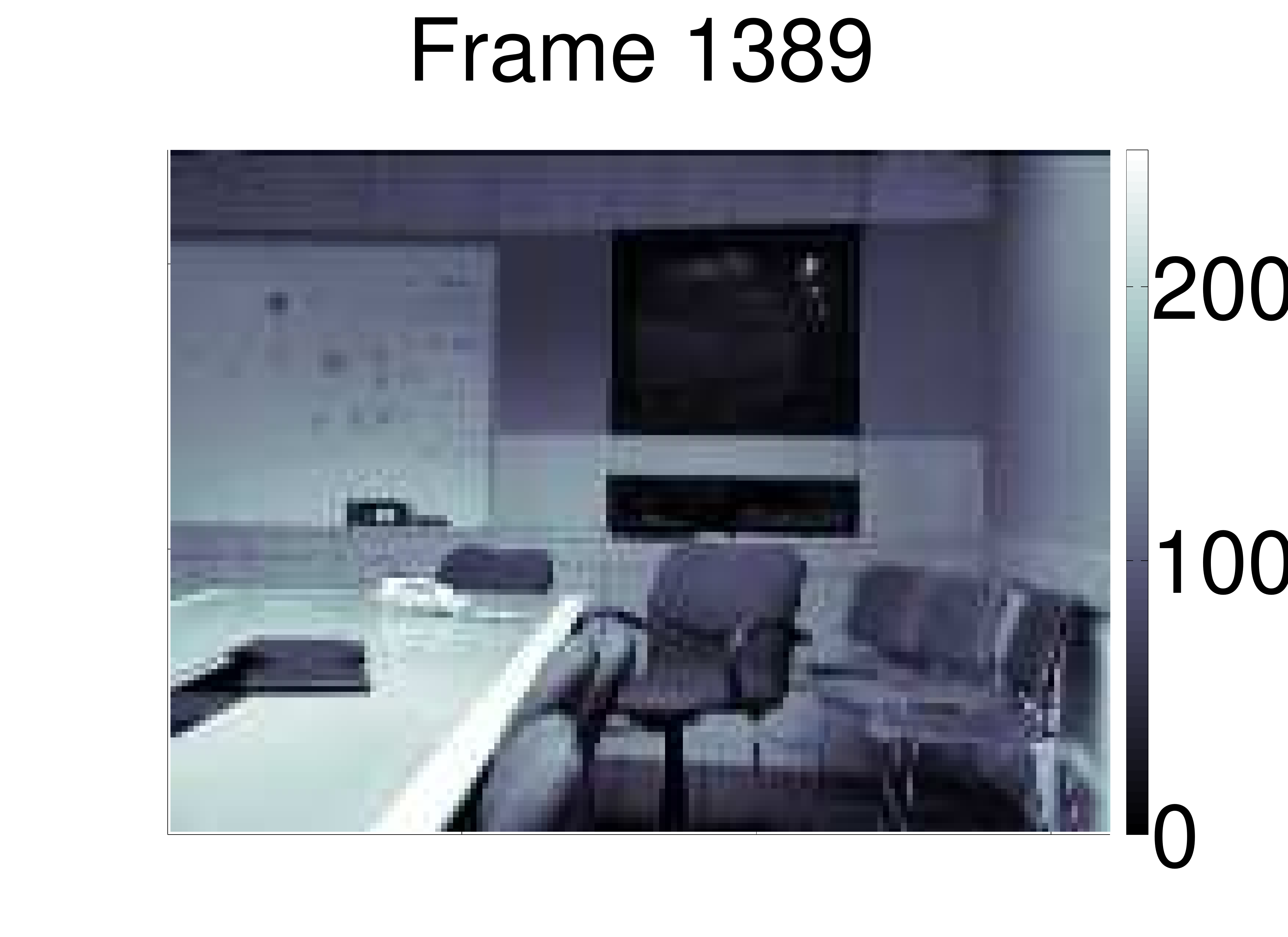} & 
\includegraphics[width=.25\columnwidth]{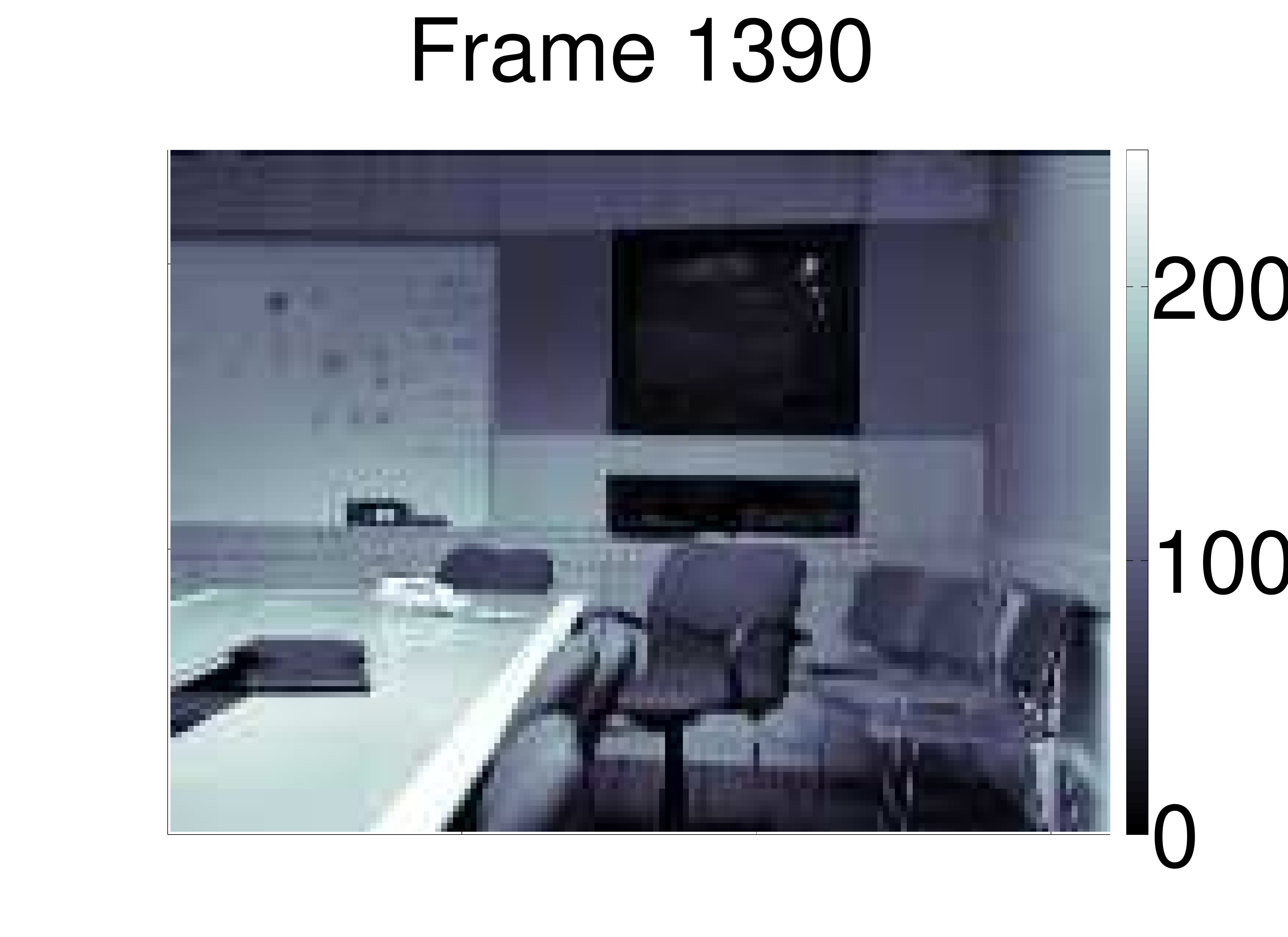} & 
\includegraphics[width=.25\columnwidth]{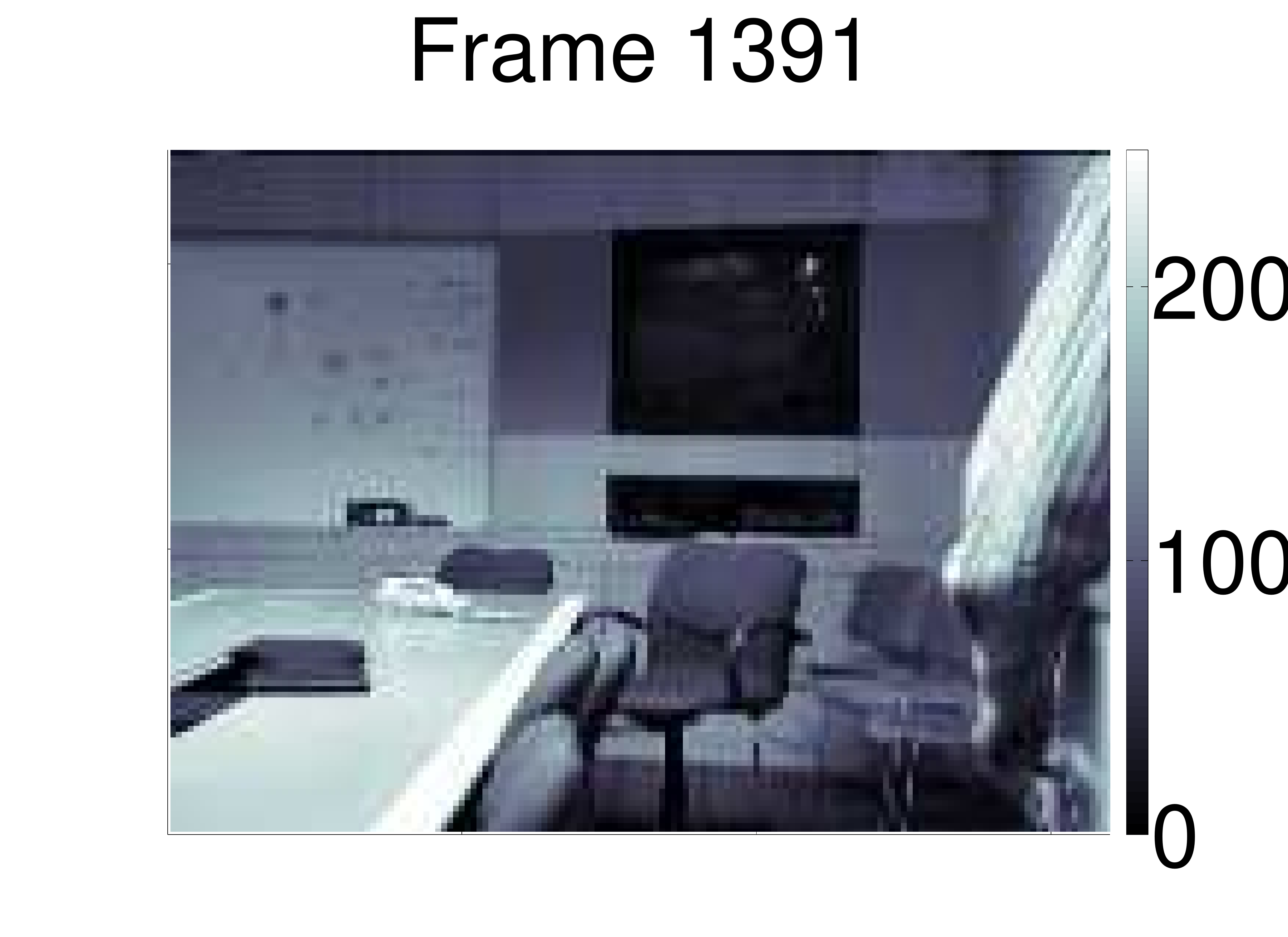} & 
\includegraphics[width=.25\columnwidth]{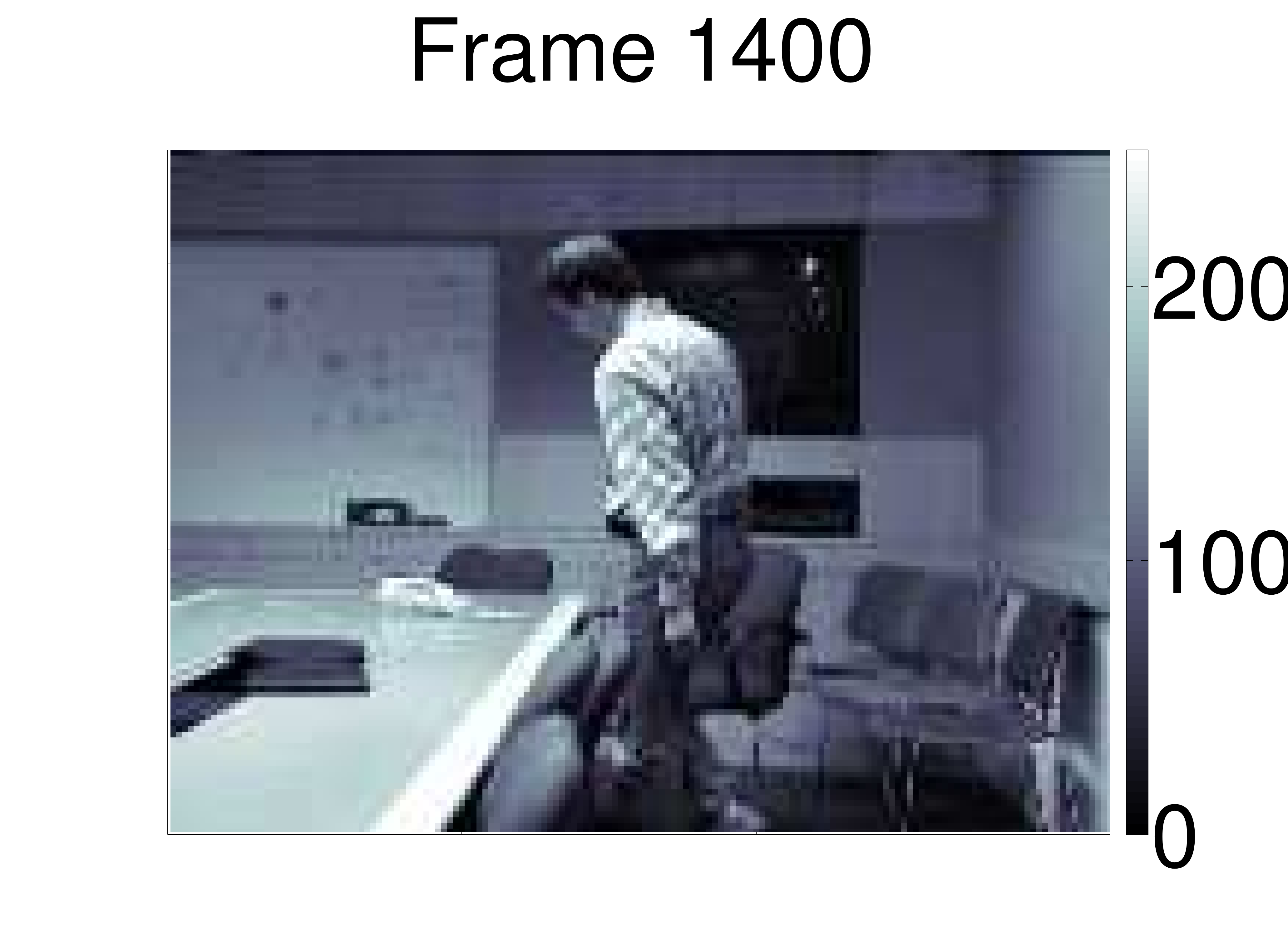} \\
\includegraphics[width=.25\columnwidth]{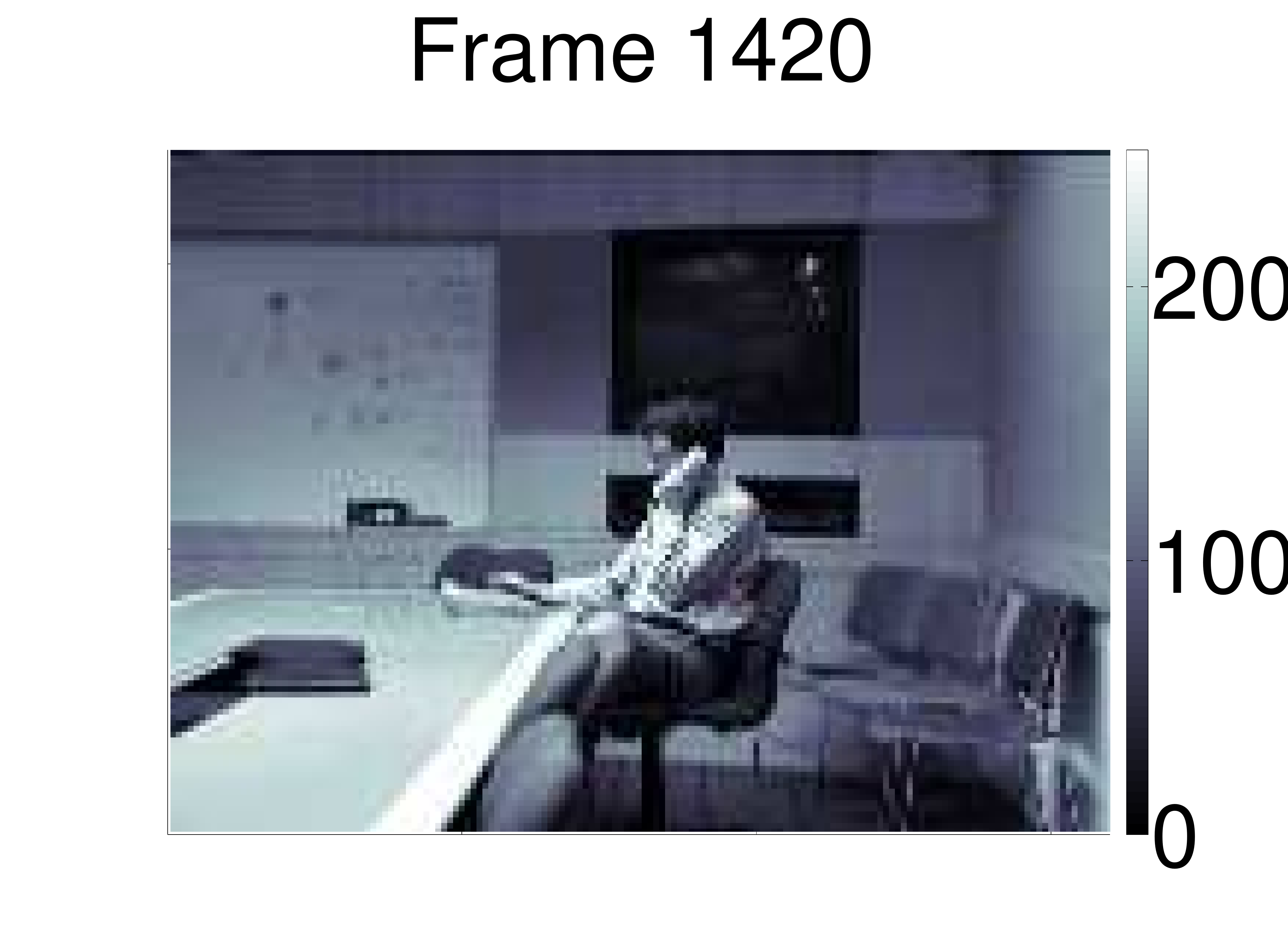} & 
\includegraphics[width=.25\columnwidth]{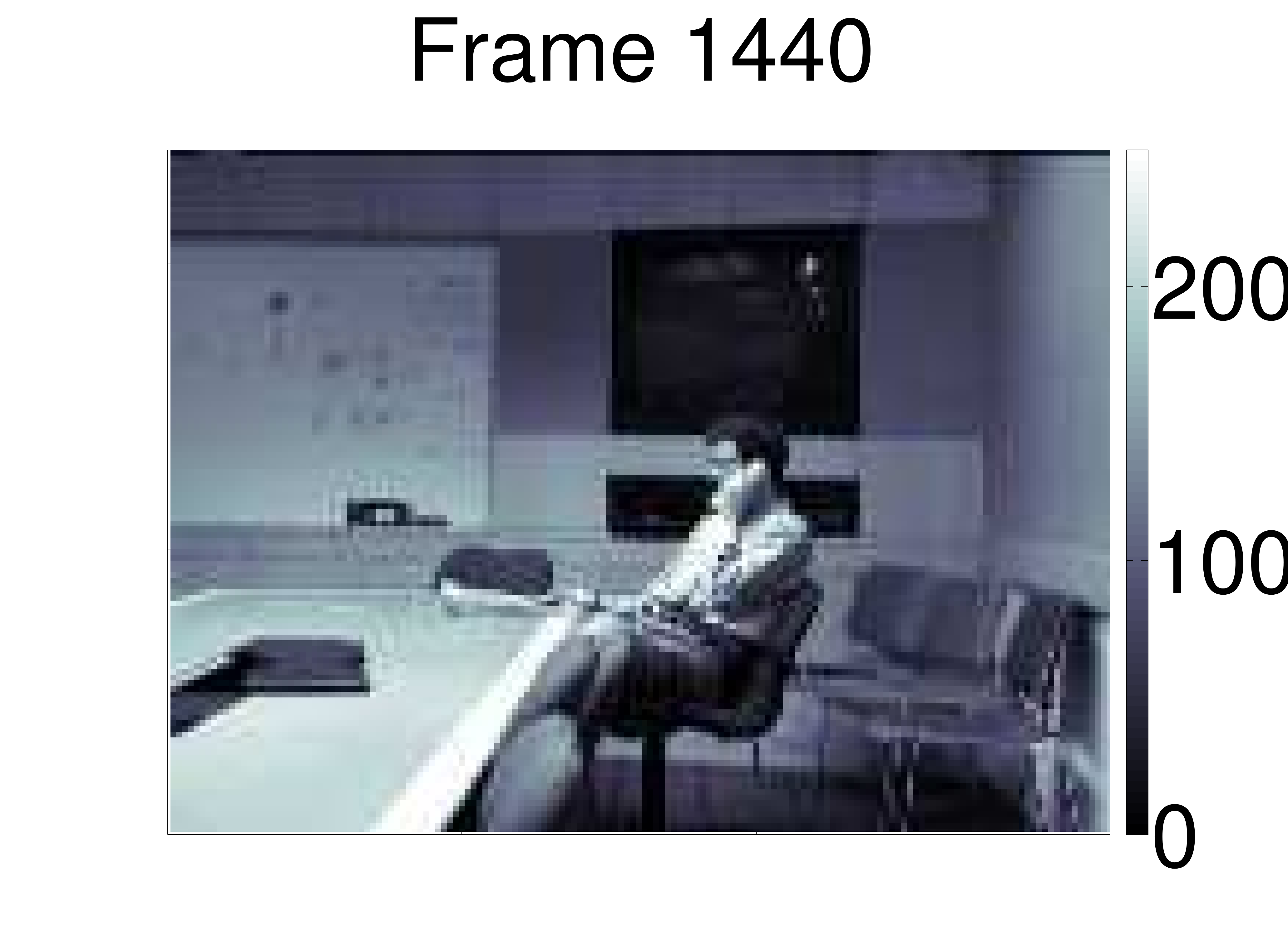} & 
\includegraphics[width=.25\columnwidth]{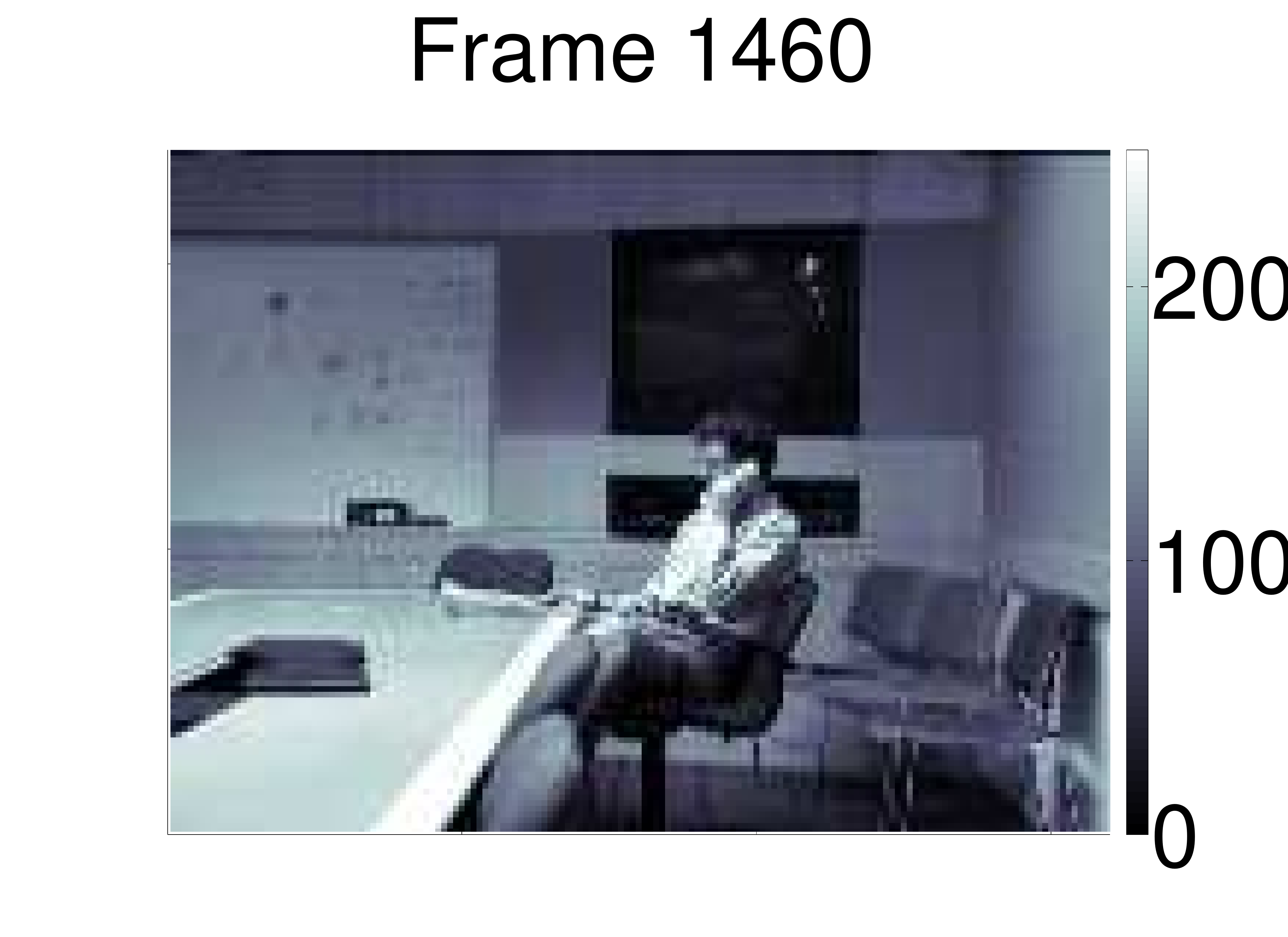} & 
\includegraphics[width=.25\columnwidth]{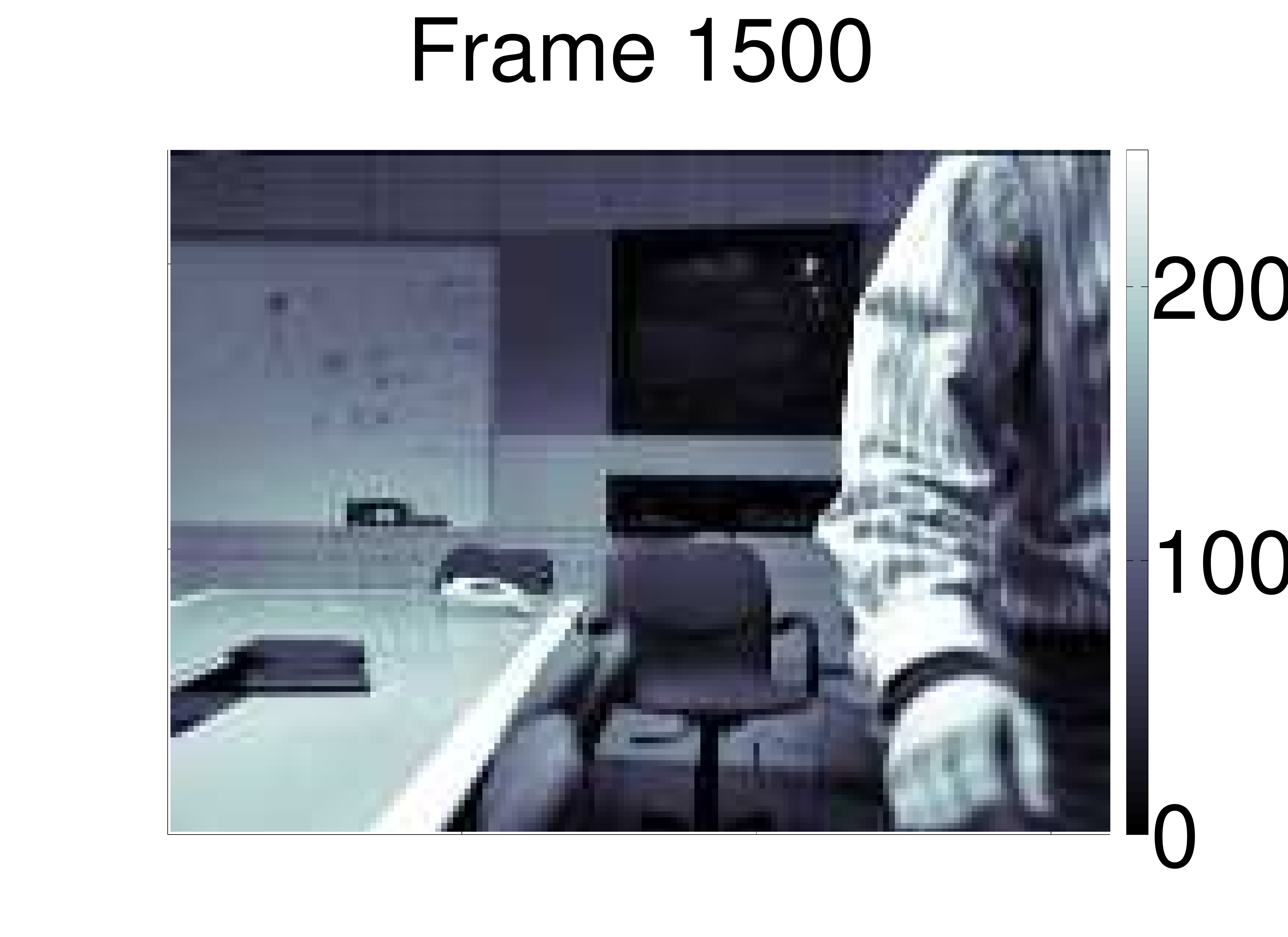} \\
\includegraphics[width=.25\columnwidth]{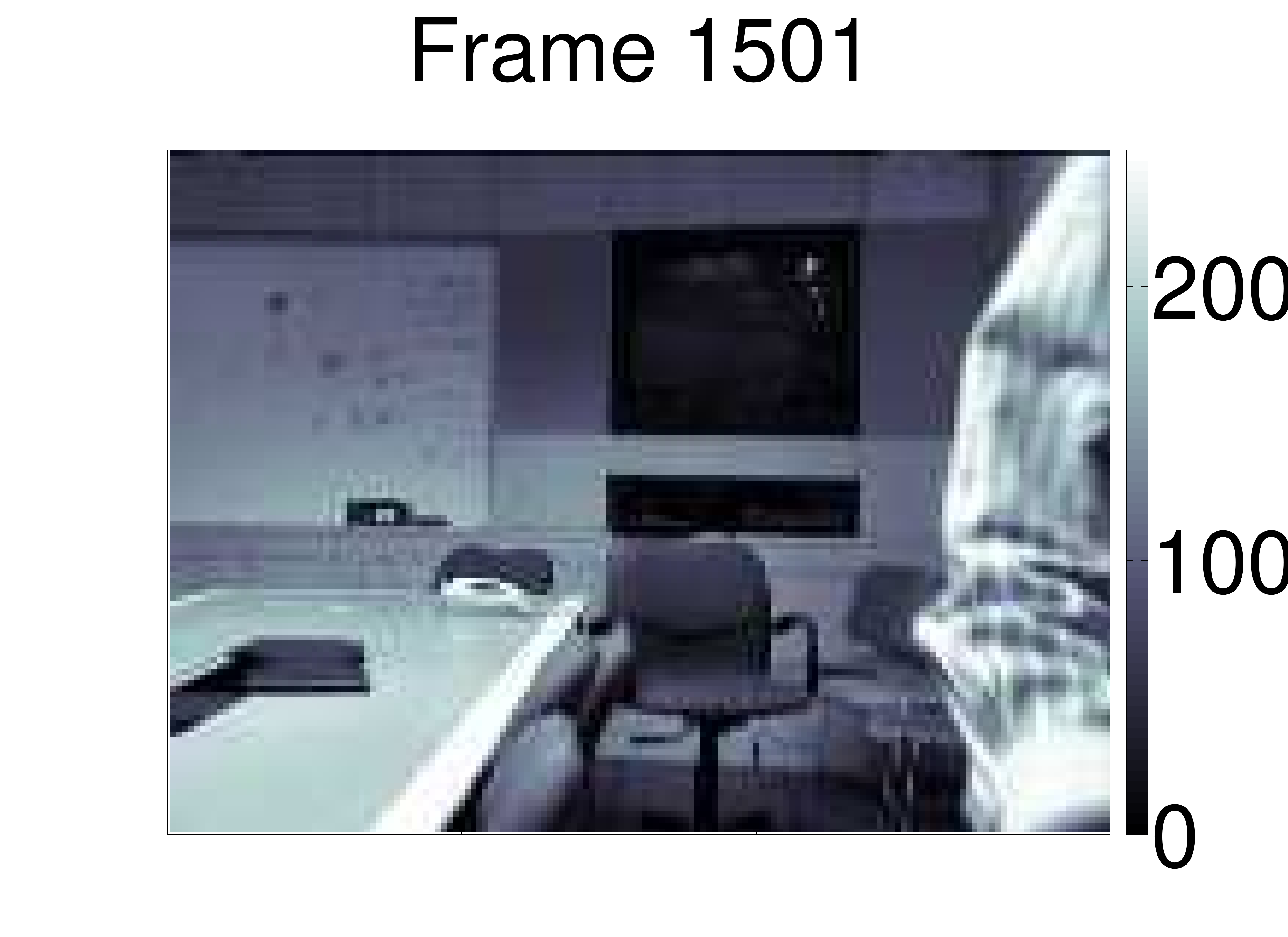} & 
\includegraphics[width=.25\columnwidth]{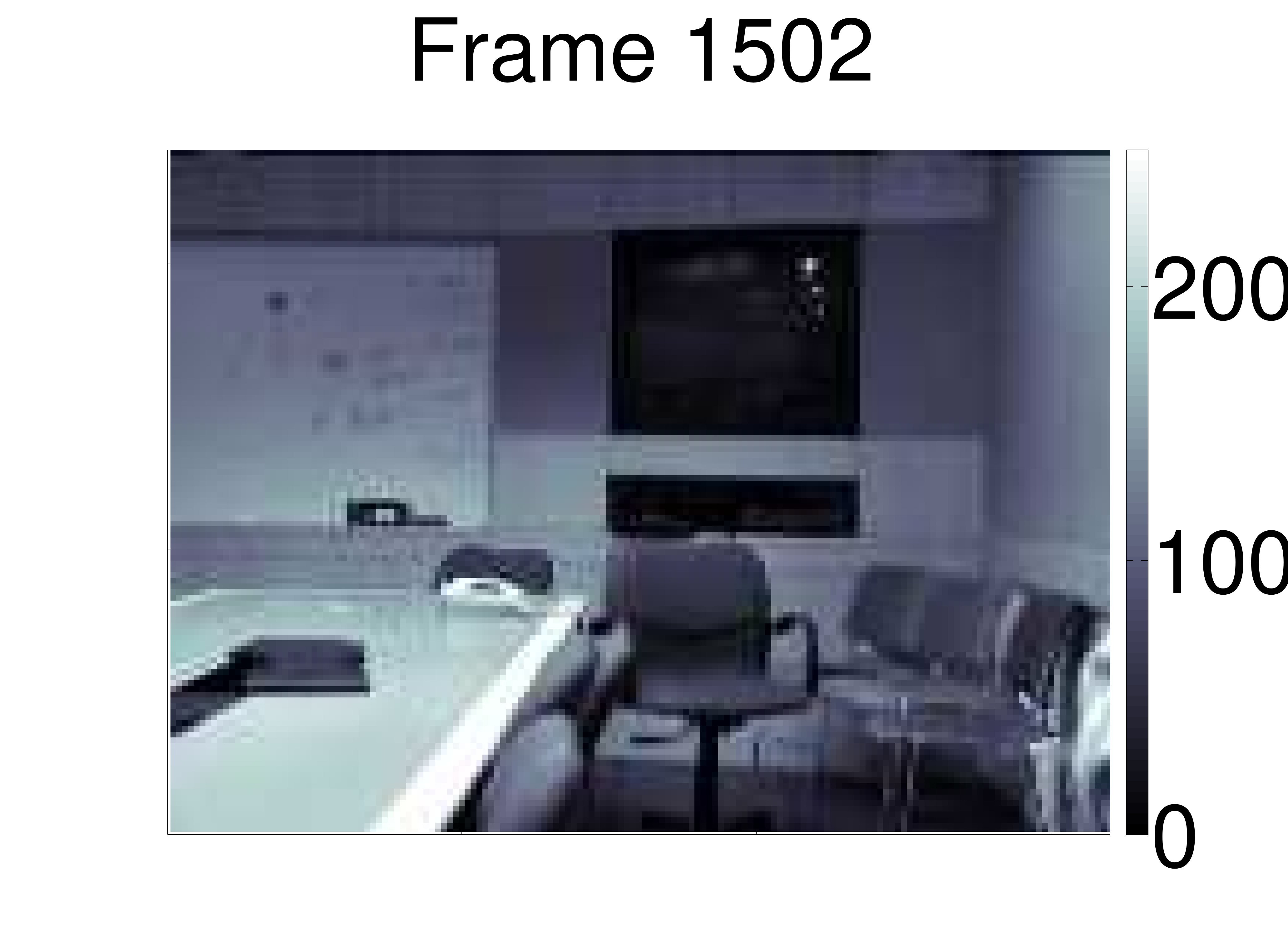} & 
\includegraphics[width=.25\columnwidth]{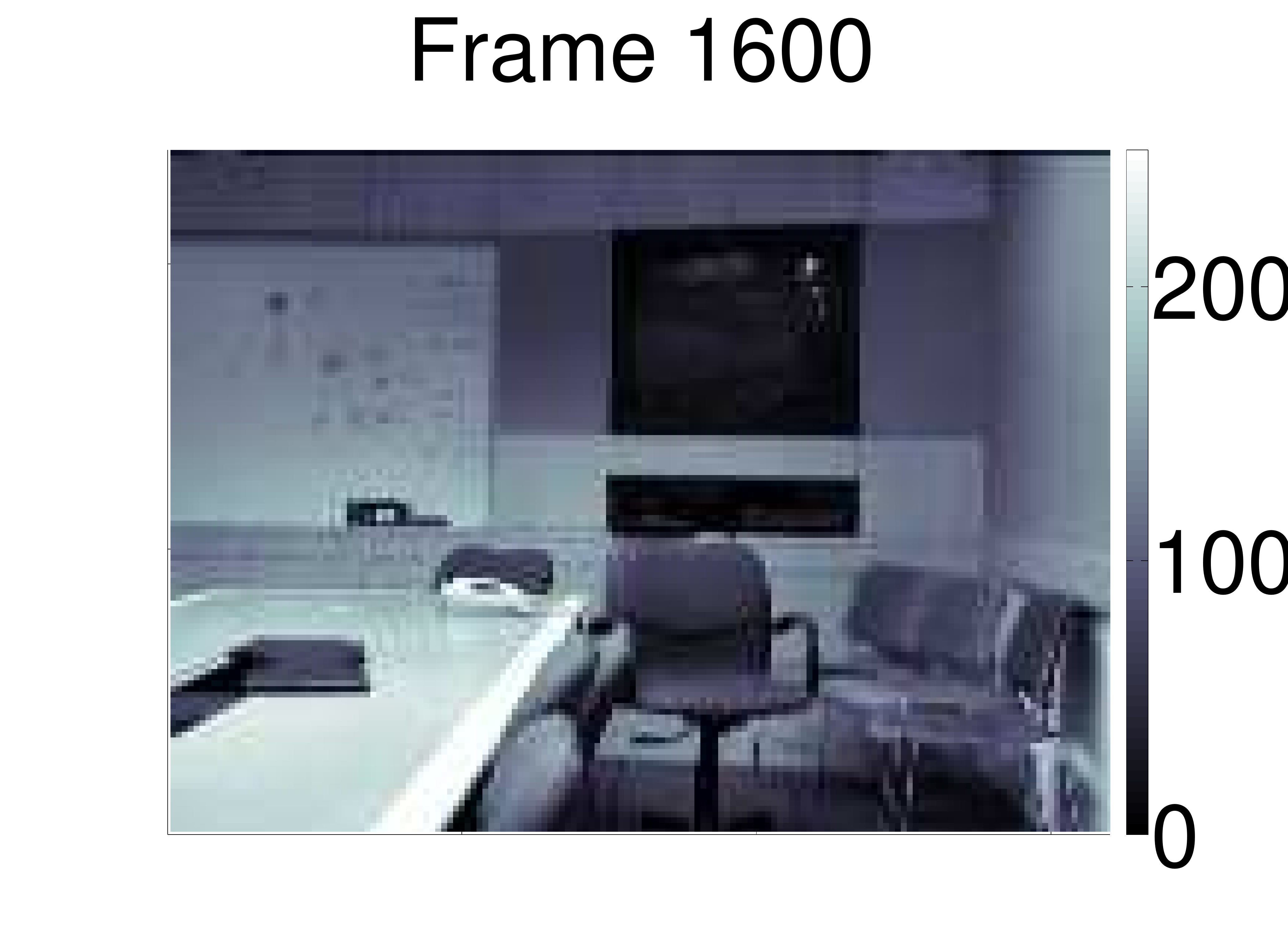} & 
\includegraphics[width=.25\columnwidth]{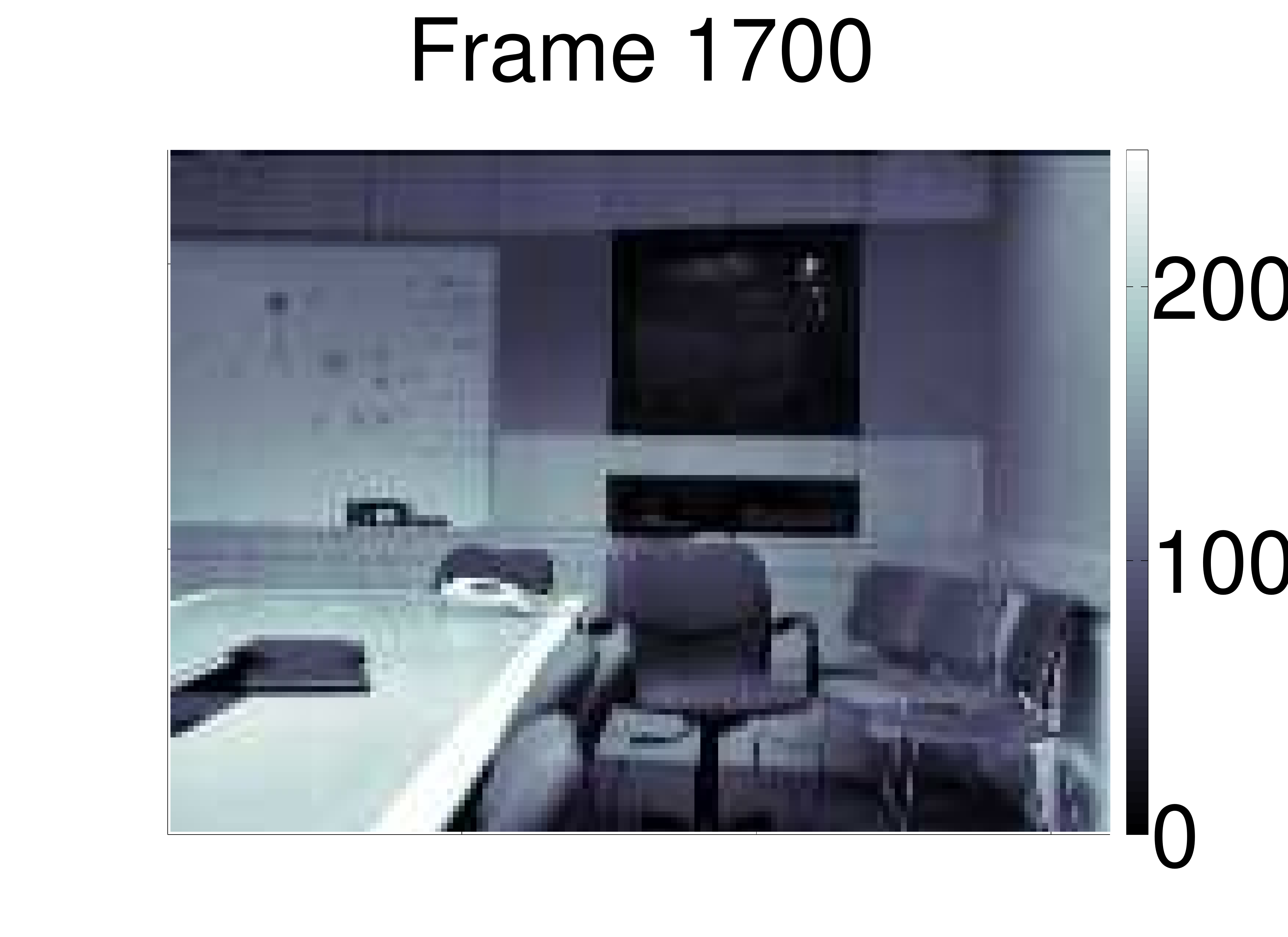} \\
\end{tabular}
\caption{Examples of the original frames of the moved object data set. Frames from 1 to 637, from 892 to 1389, from 1503 to 1744 (end) contain only background (true data). The Person (outlier) is visible in the scene from frame 638 to 891 and from frame 1390 to 1502. We refer to frames 0 to 892 in the following as the reduced moved object data set
}
\label{fig:moorig}
\end{figure}
\clearpage

\begin{figure}
\centering
\begin{tabular}{cc}
\includegraphics[width=.5\columnwidth]{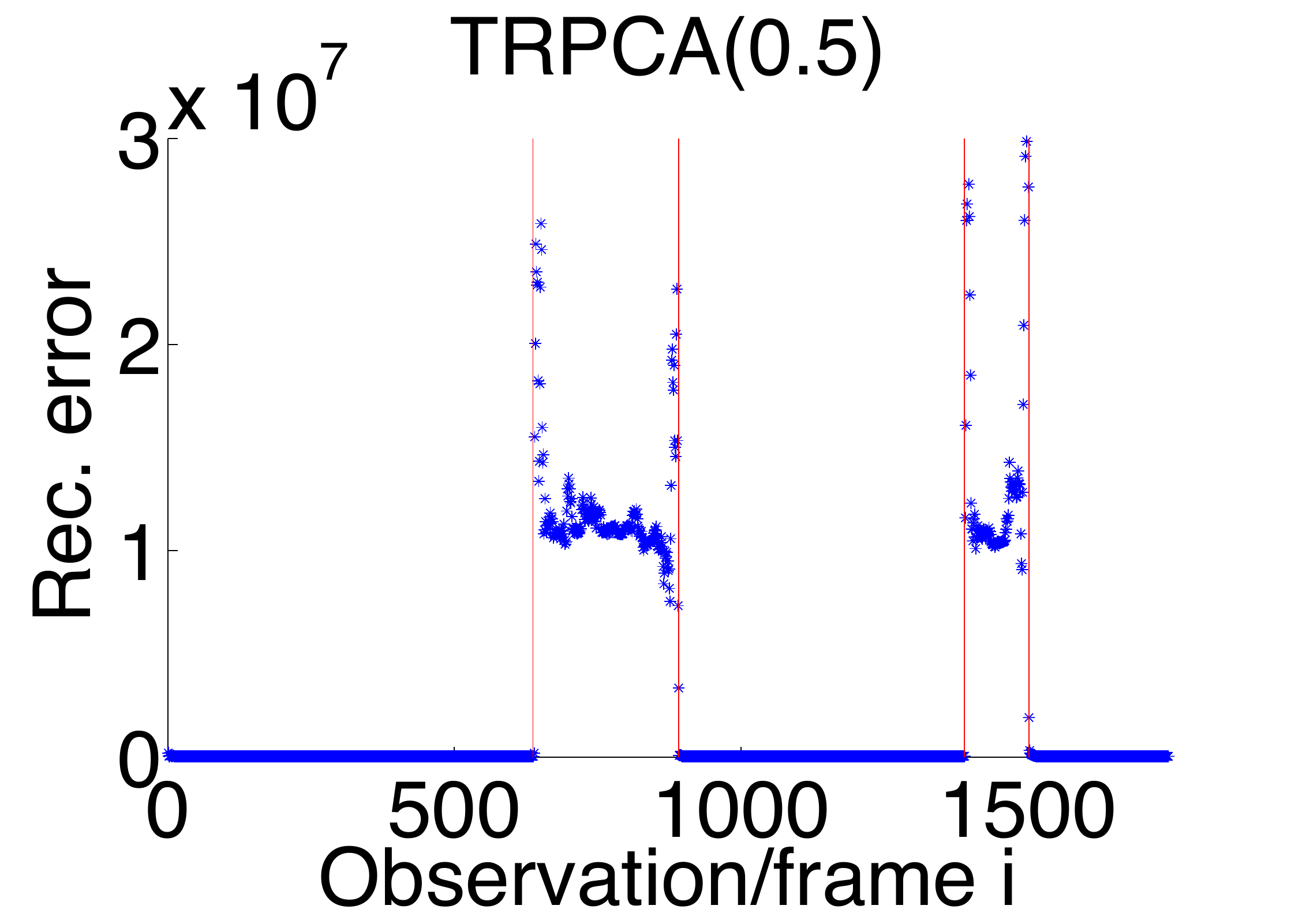} & 
\includegraphics[width=.5\columnwidth]{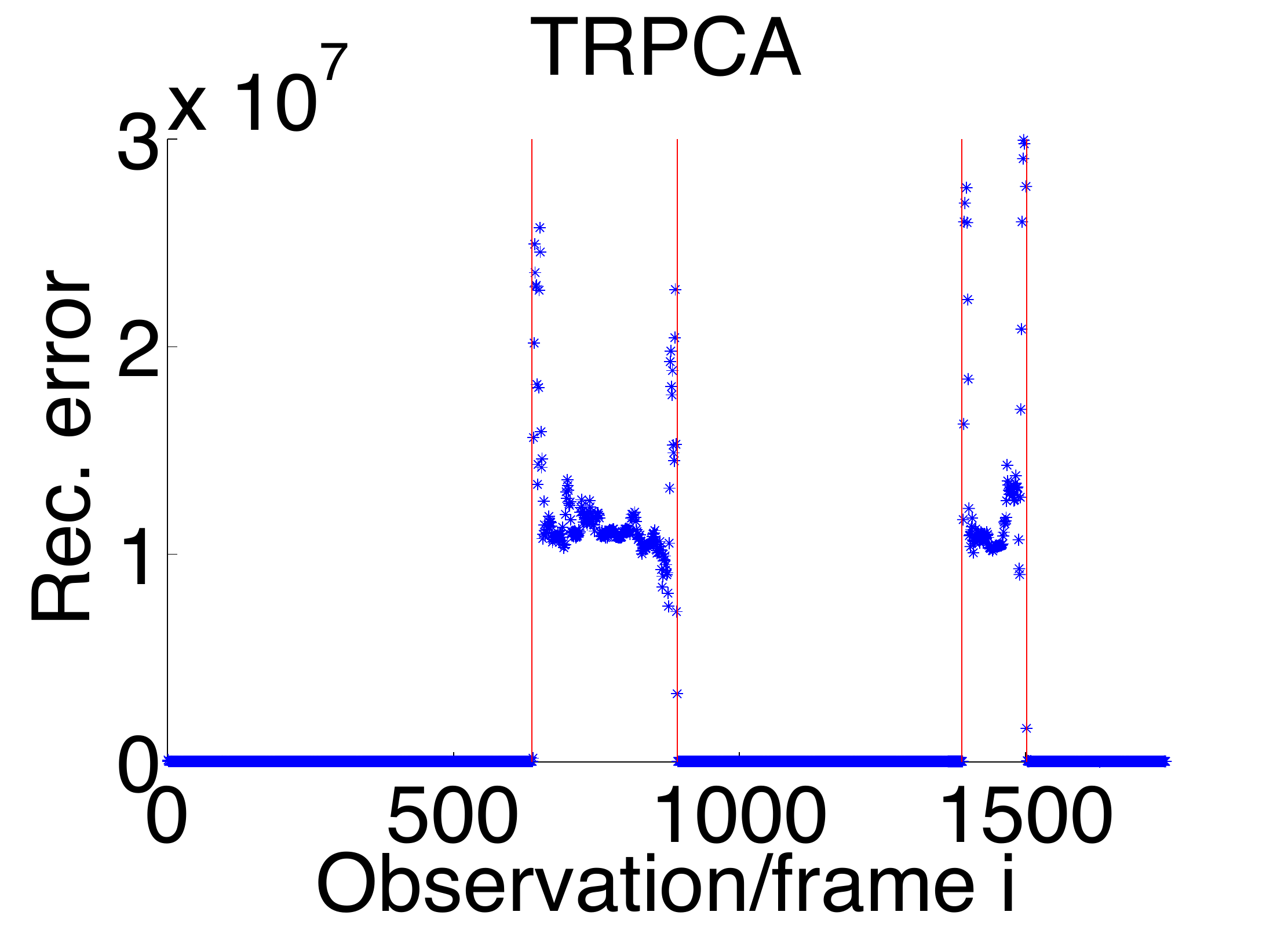} 
\end{tabular}
\caption{The reconstruction error of TPRCA/TRPCA(0.5), by analogy with Fig. \ref{fig:imagesresWS}, for the \textbf{full} moved object data set. The red vertical lines correspond to frames where the person enters/leaves the scene. We do not perform this experiment on the full datset for all other methods given their high runtimes (see Table \ref{tab:runtime}) and instead proceed
with the reduced dataset (see figures below).\newline 
Please note also that there is a small change in the background between frames from 1 to 637 (B1) and frames from 892 to 1389 (B2). Thus the robust PCA components will capture this difference.
This is not a problem for outlier detection (as we can see from the reconstruction errors of our method above) as this change is still small compared to the variation when the person enters the scene but it disturbs the foreground/background detection of all methods. An offline method could detect the scenes with small reconstruction error and do the background/foreground decomposition for each segment separately. The other option
would be to use an online estimation procedure of robust components and center. We do not pursue these directions in this paper as the main purpose of these experiments is an illustration of the differences of the various robust PCA methods in the literature
}
\label{fig:morecerrs}
\end{figure}
\clearpage

\begin{figure}
\centering
\begin{tabular}{cccc}
\includegraphics[width=.25\columnwidth]{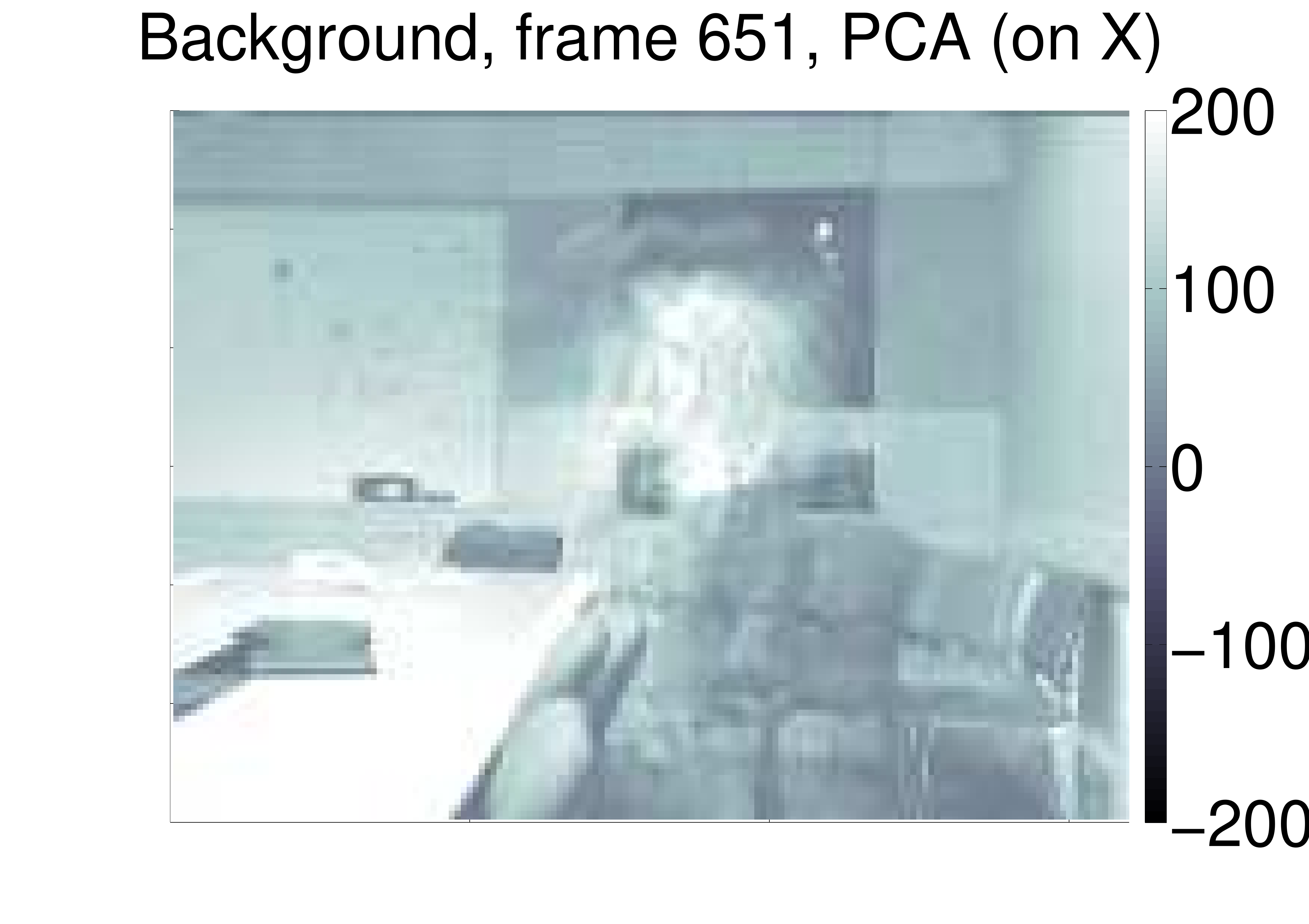} &
\includegraphics[width=.25\columnwidth]{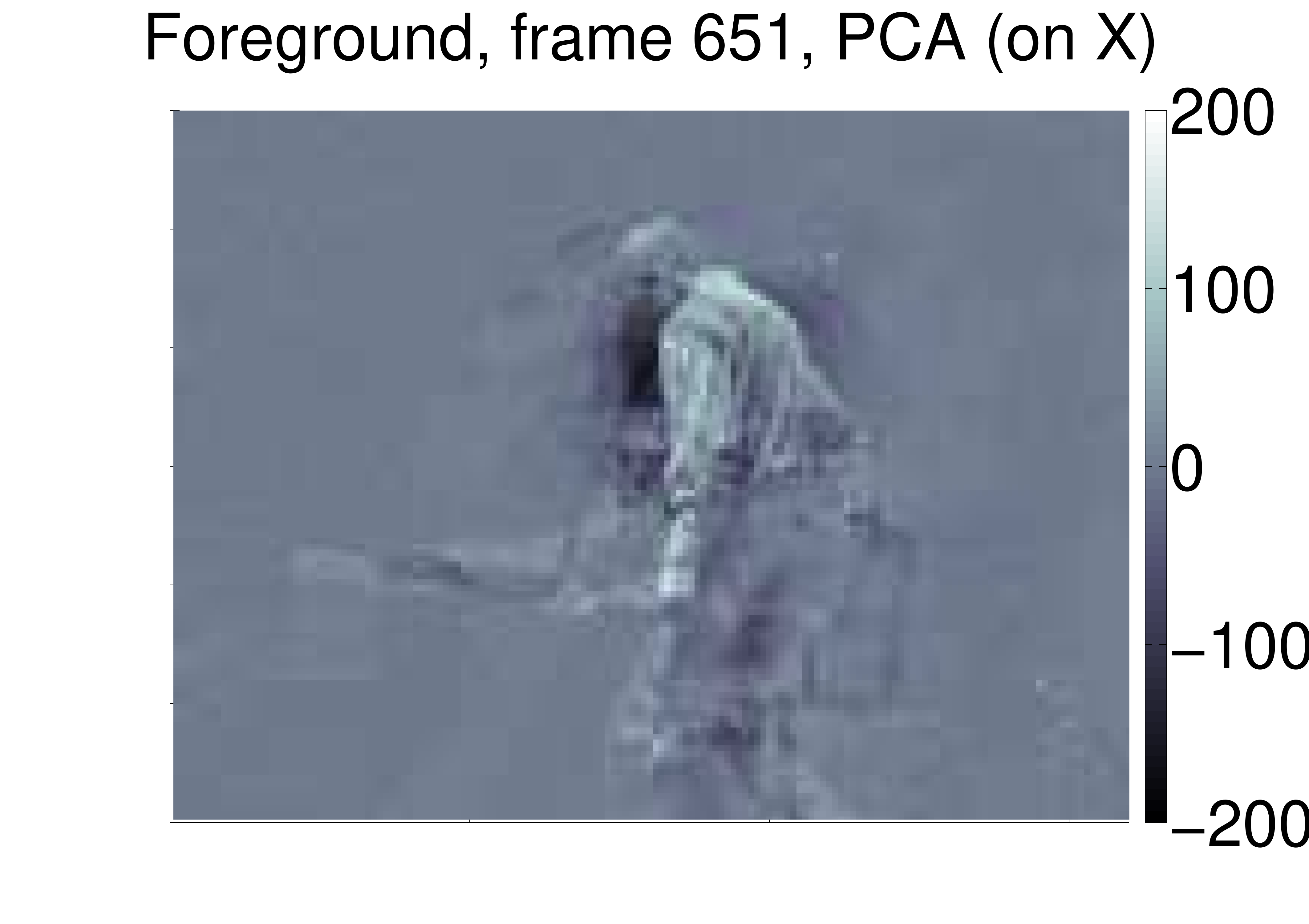} &
\includegraphics[width=.25\columnwidth]{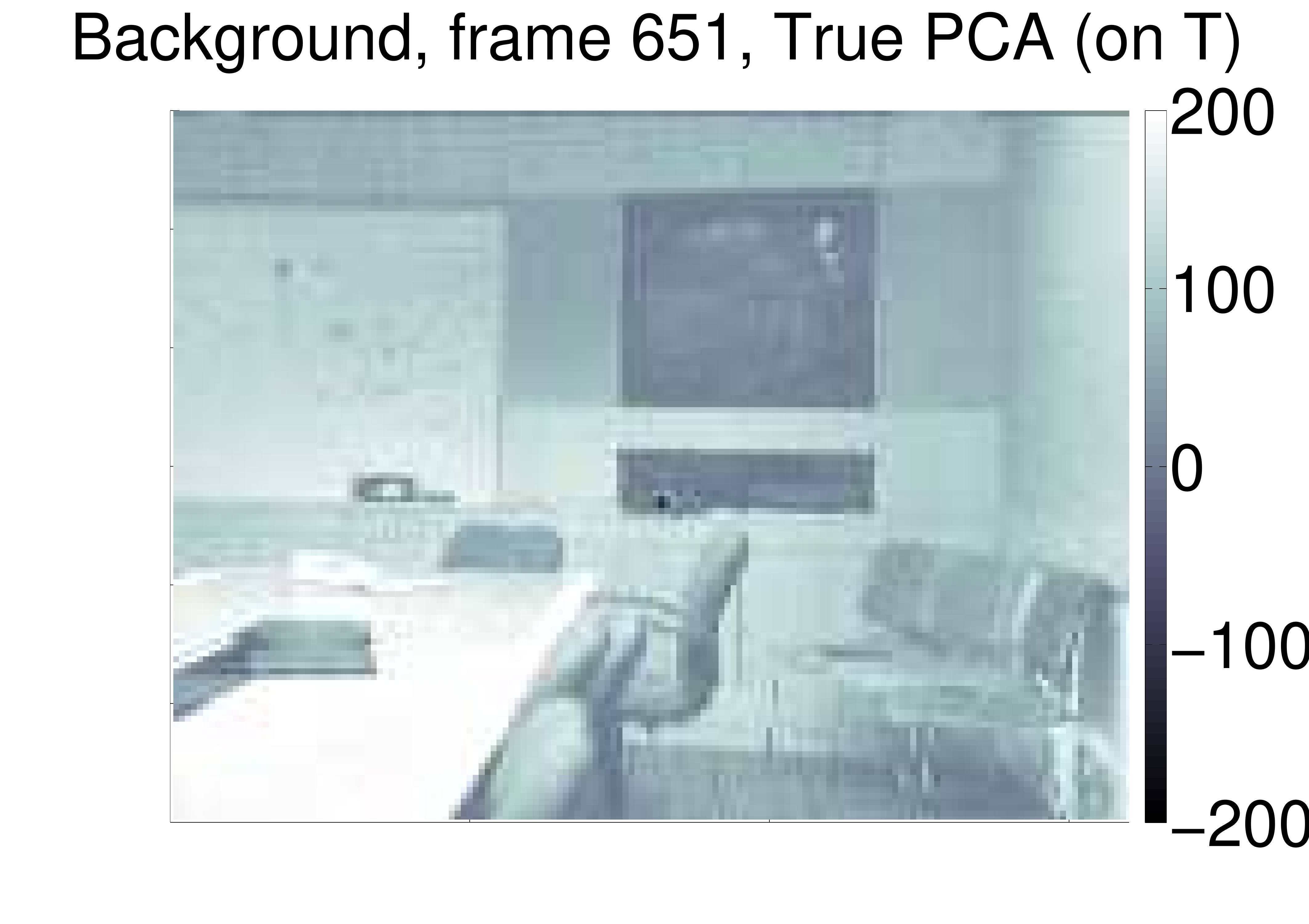} &
\includegraphics[width=.25\columnwidth]{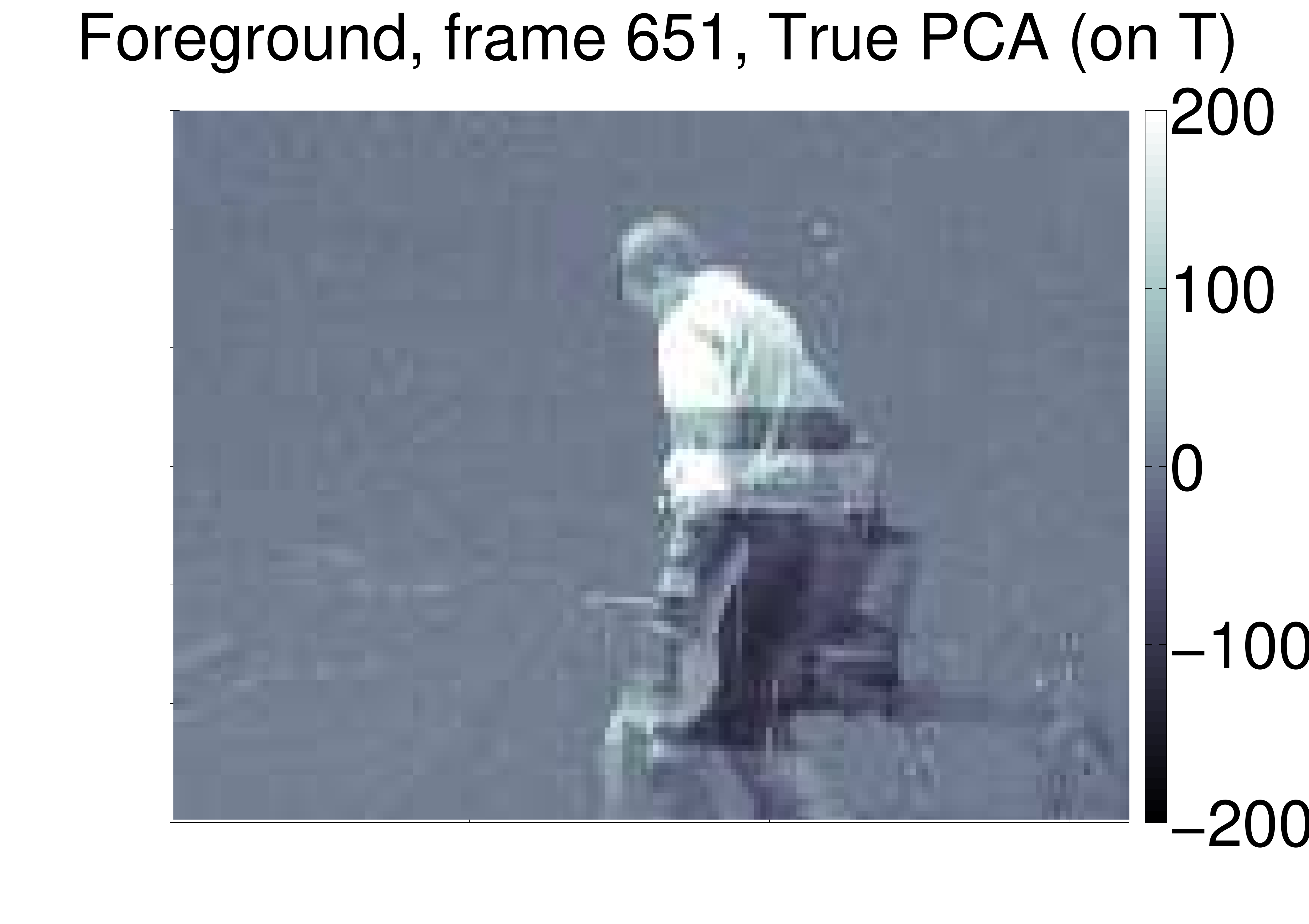} \\ \hline
\includegraphics[width=.25\columnwidth]{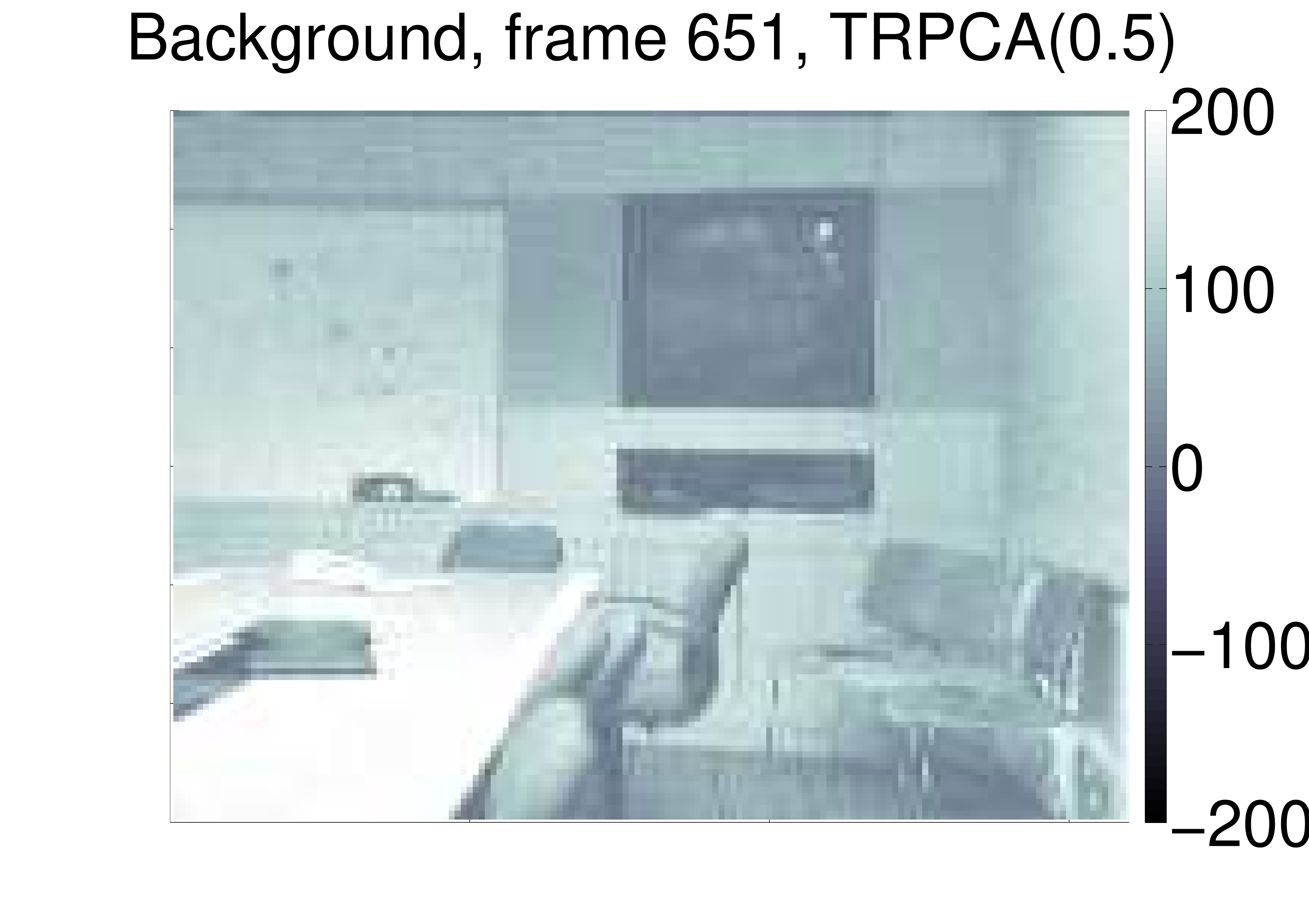} &
\includegraphics[width=.25\columnwidth]{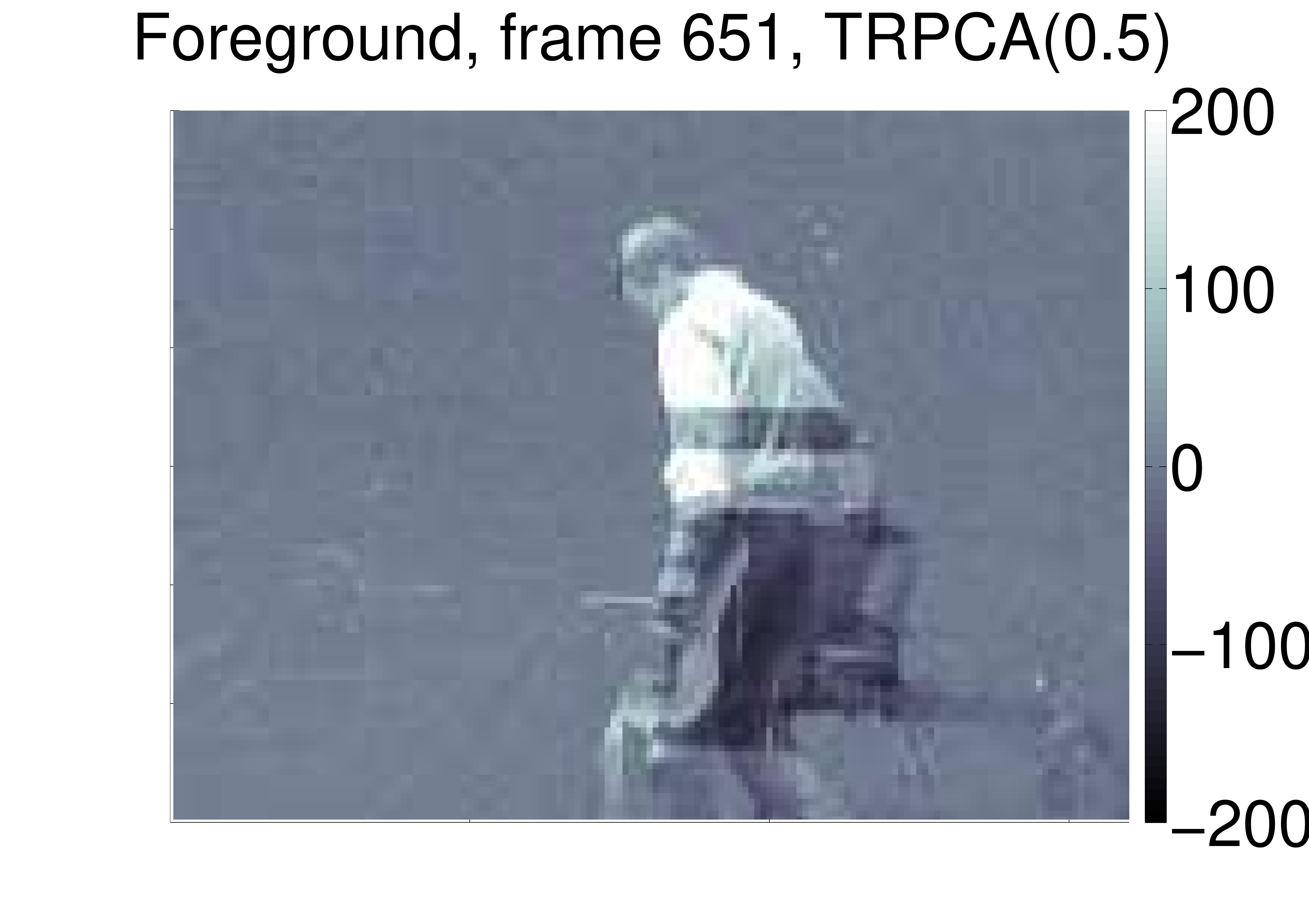} &
\includegraphics[width=.25\columnwidth]{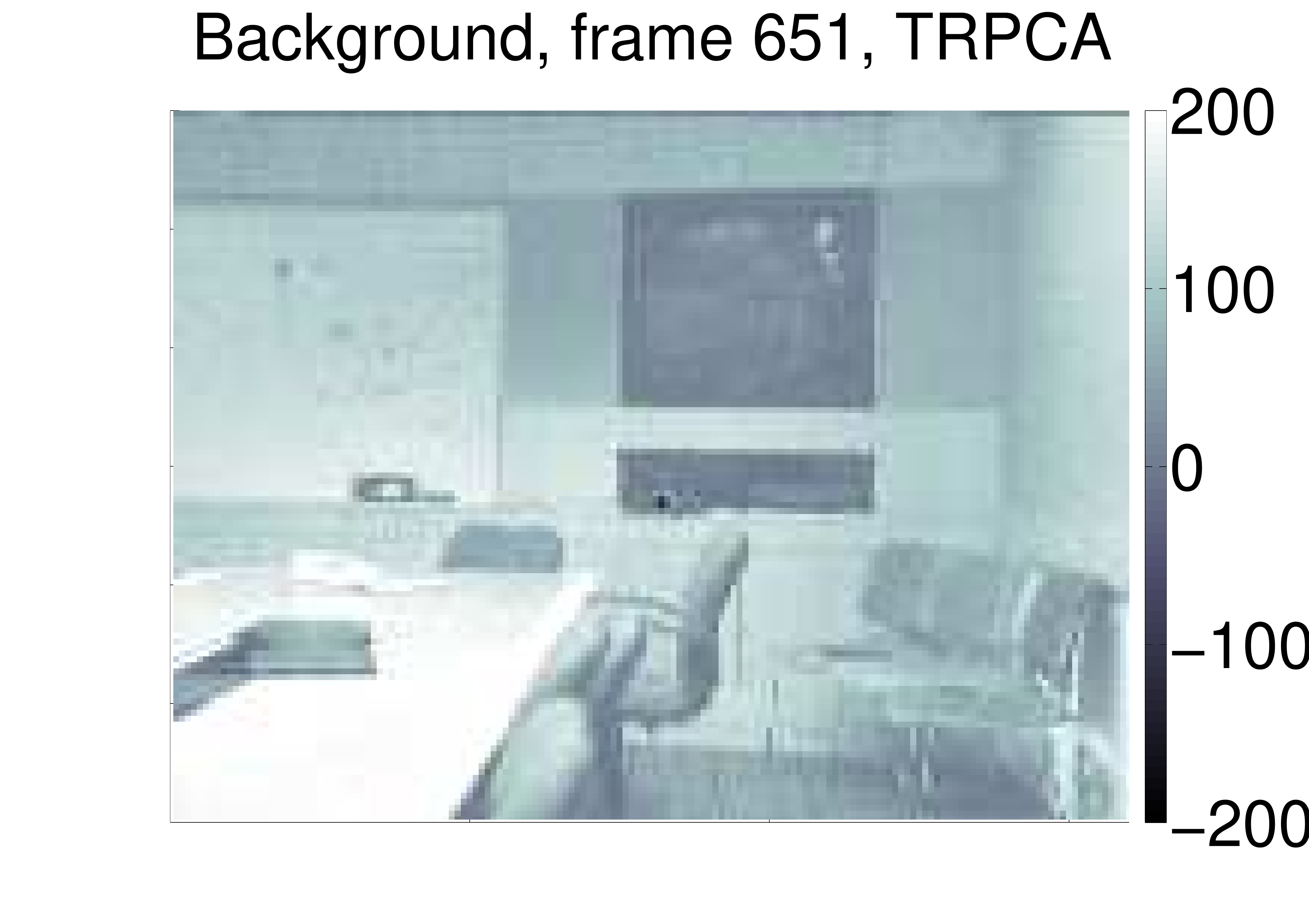} &
\includegraphics[width=.25\columnwidth]{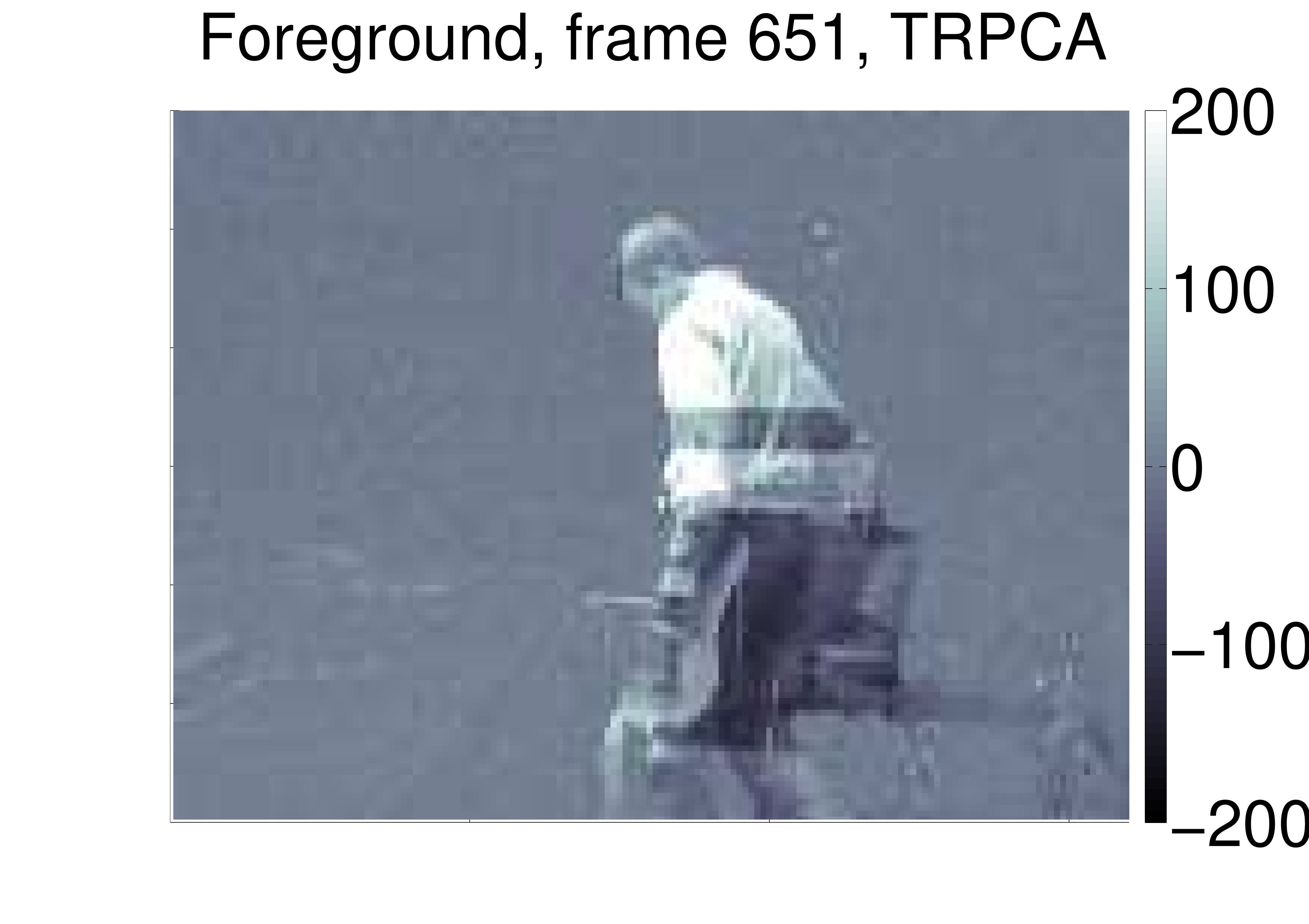} \\
\includegraphics[width=.25\columnwidth]{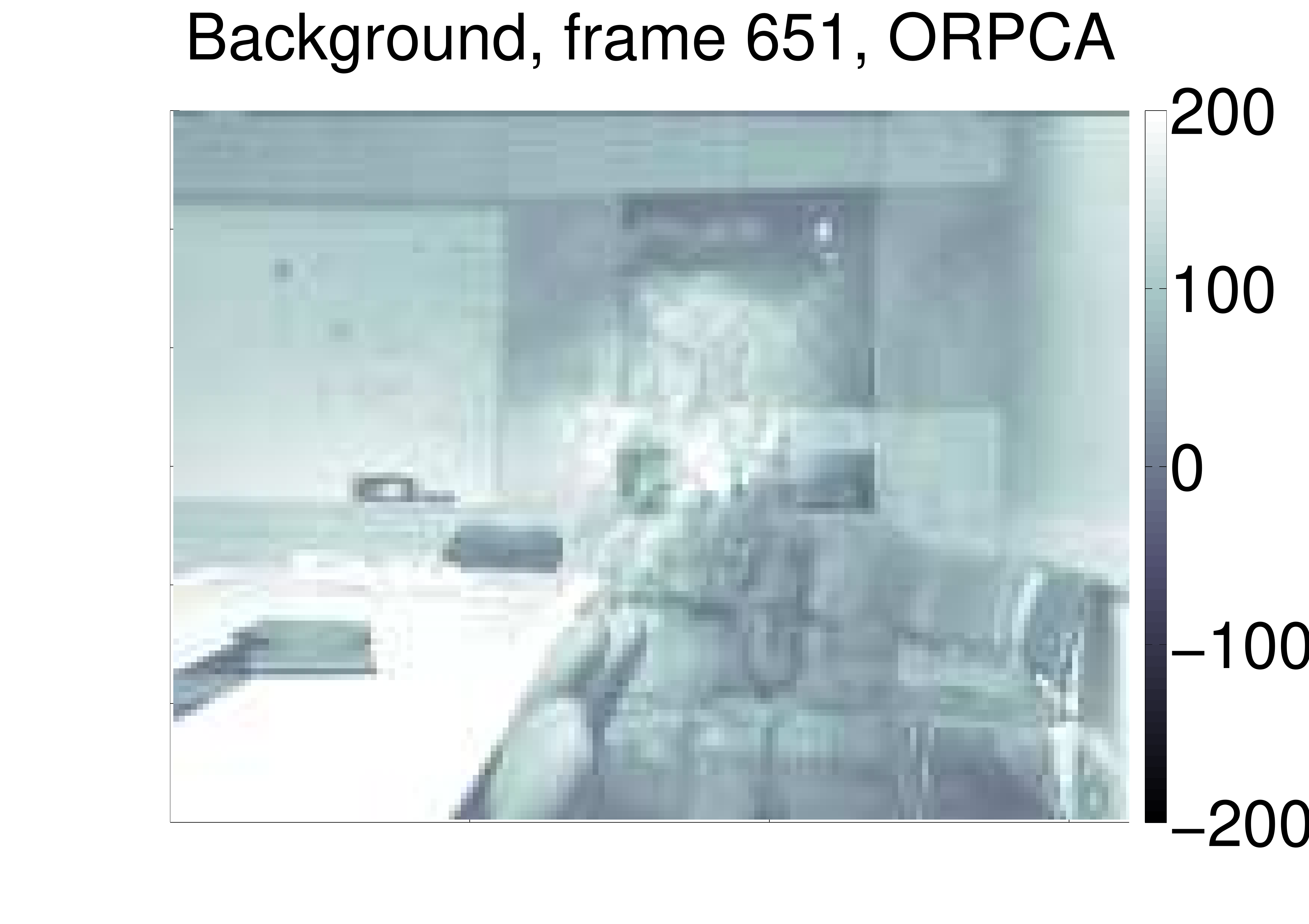} &
\includegraphics[width=.25\columnwidth]{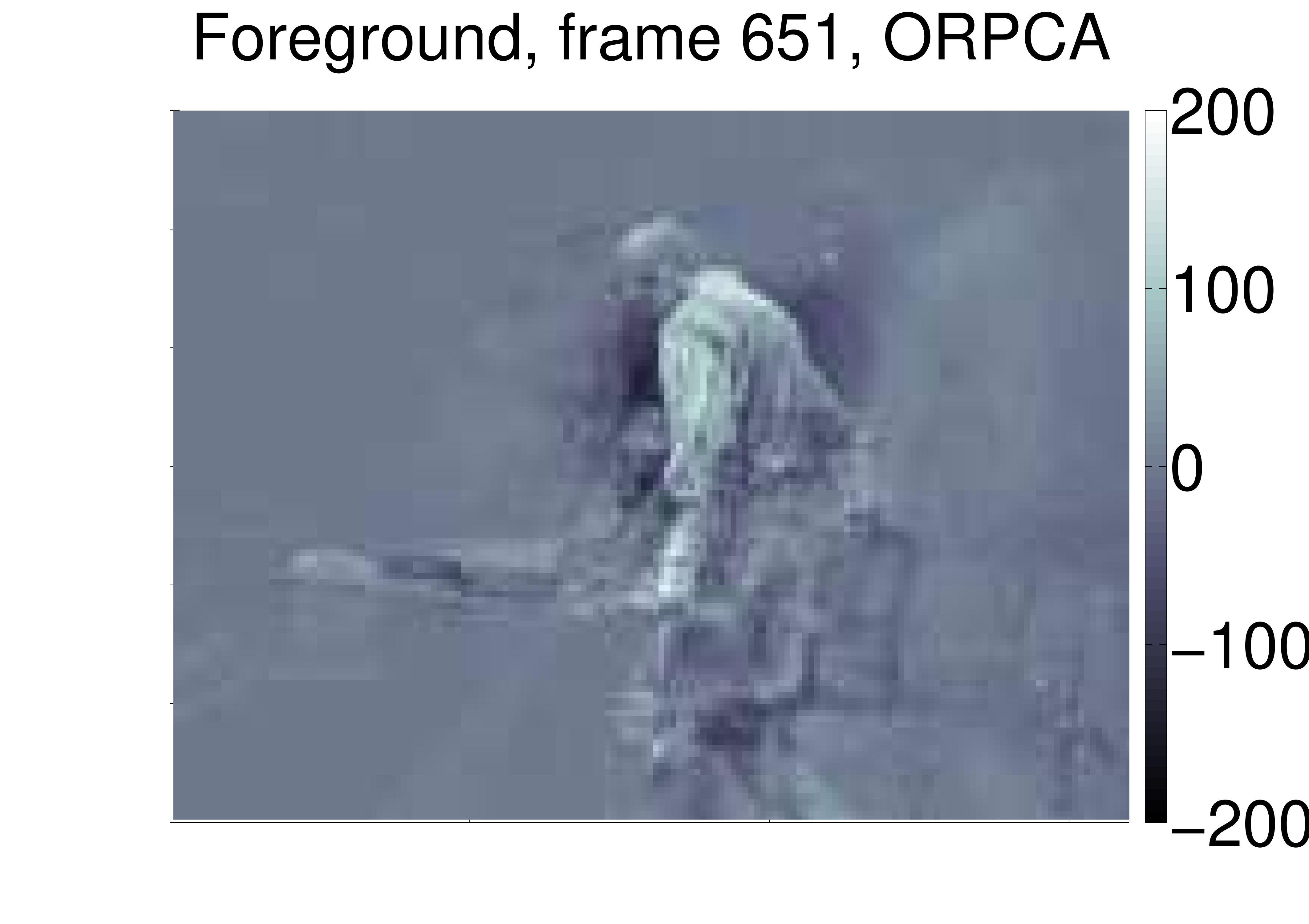} &
\includegraphics[width=.25\columnwidth]{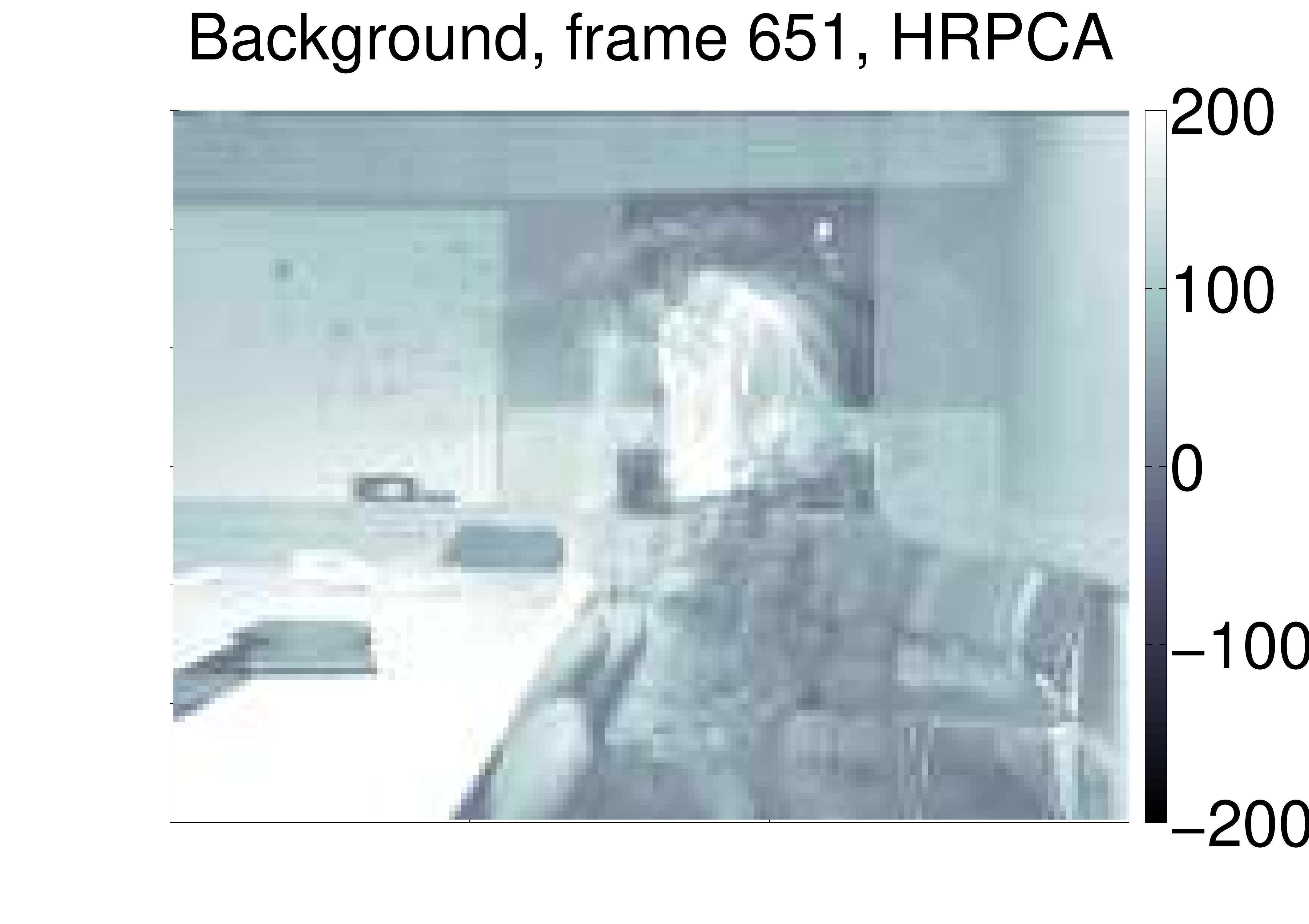} &
\includegraphics[width=.25\columnwidth]{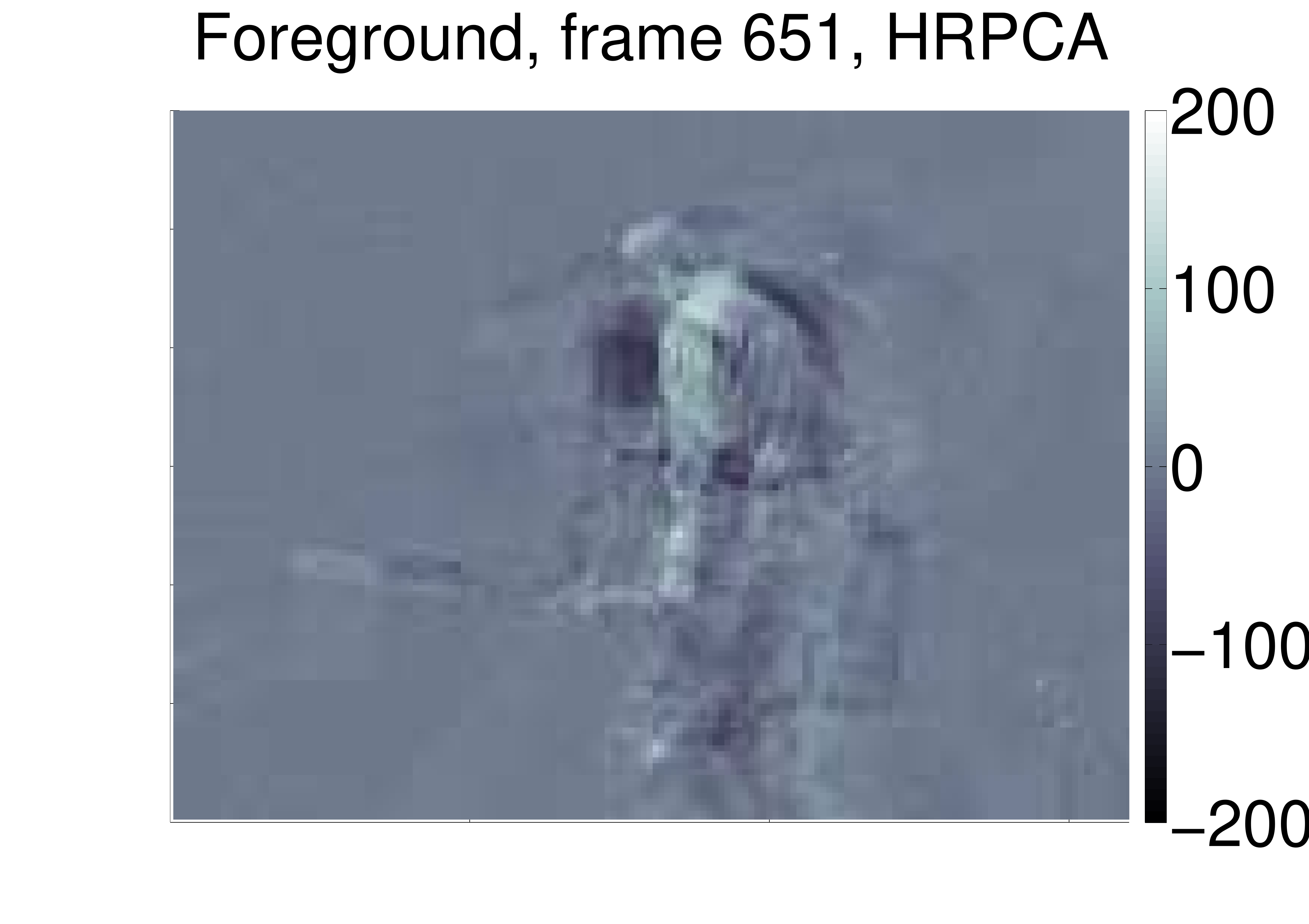} \\
\includegraphics[width=.25\columnwidth]{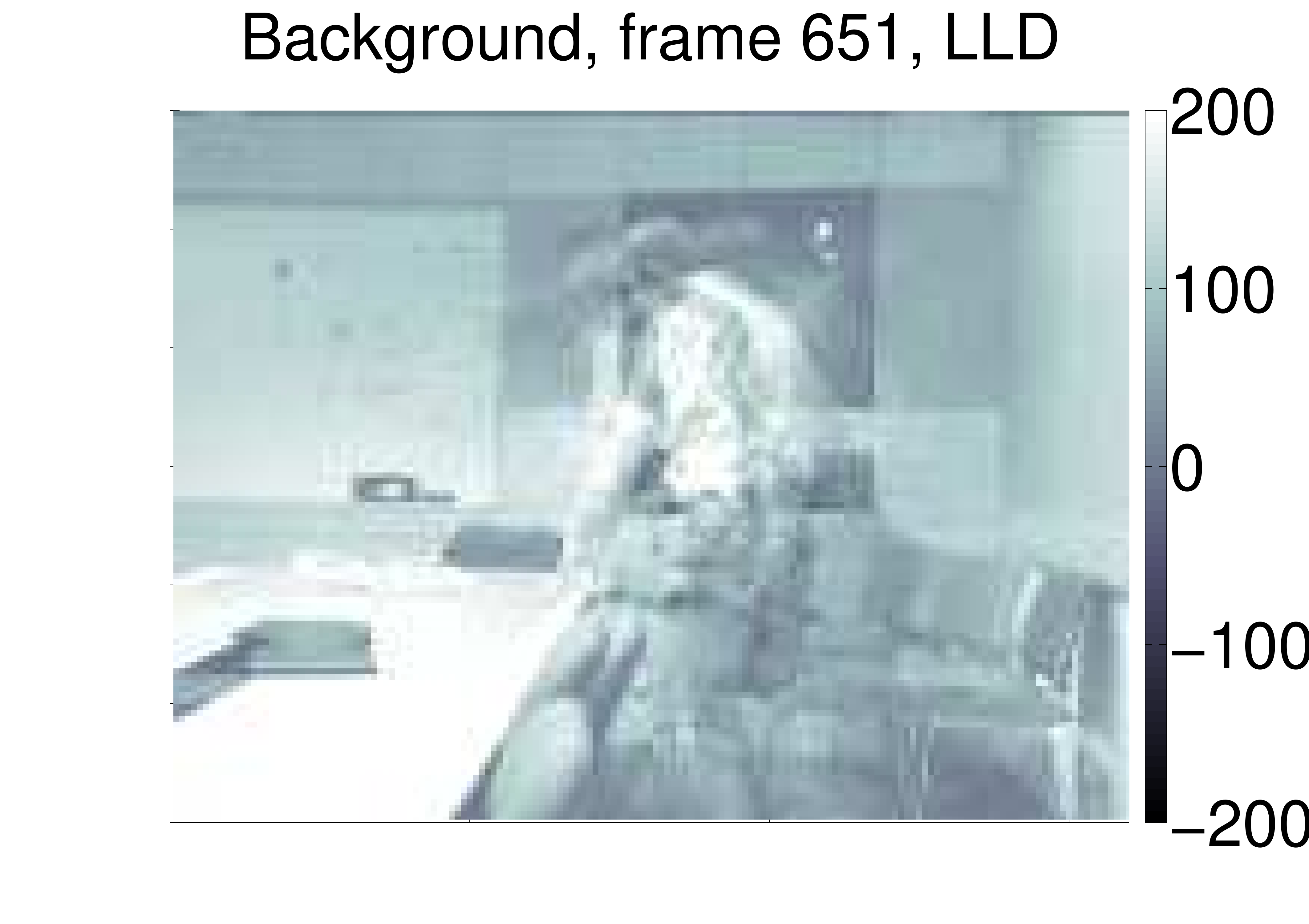} &
\includegraphics[width=.25\columnwidth]{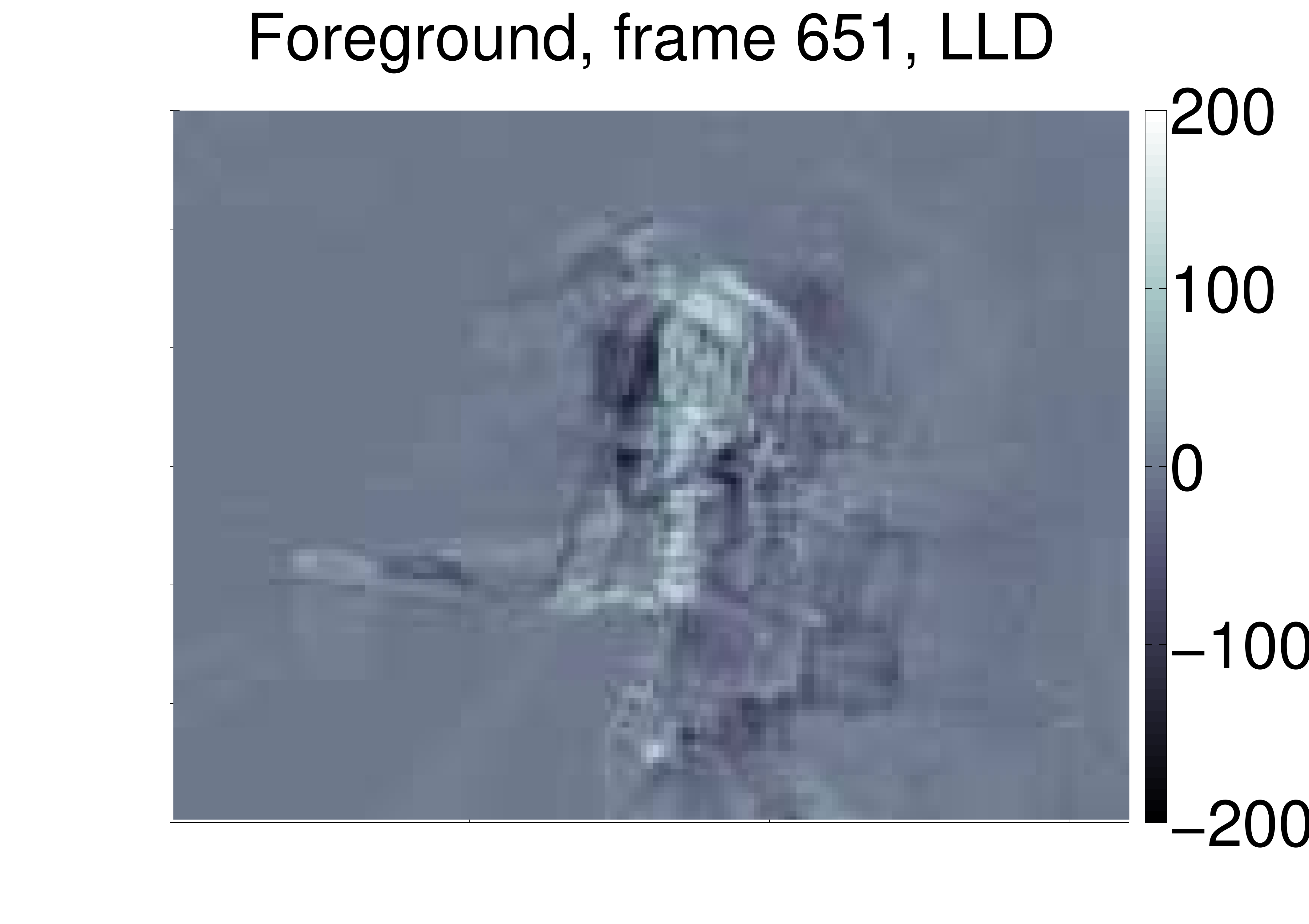} &
\includegraphics[width=.25\columnwidth]{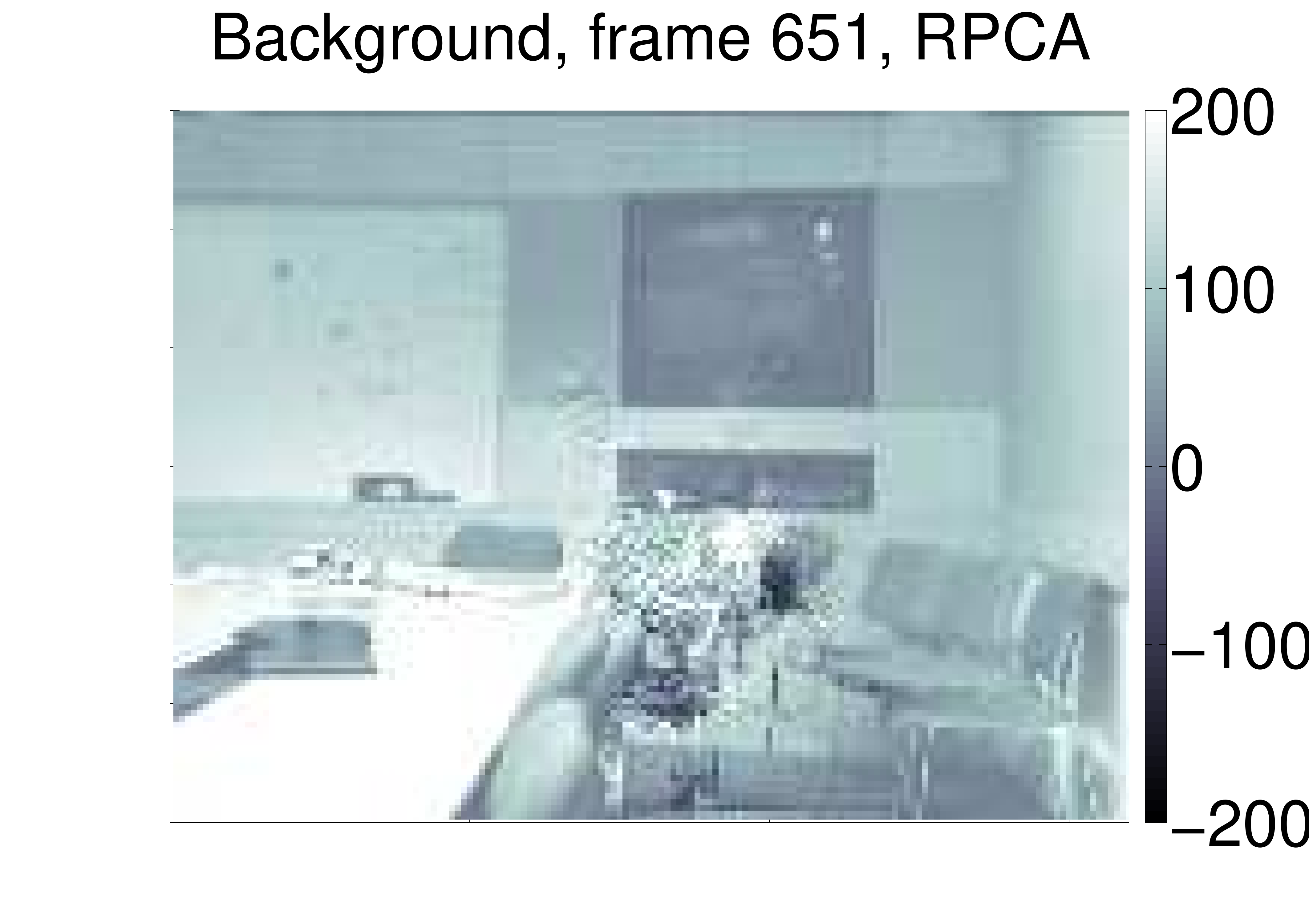} &
\includegraphics[width=.25\columnwidth]{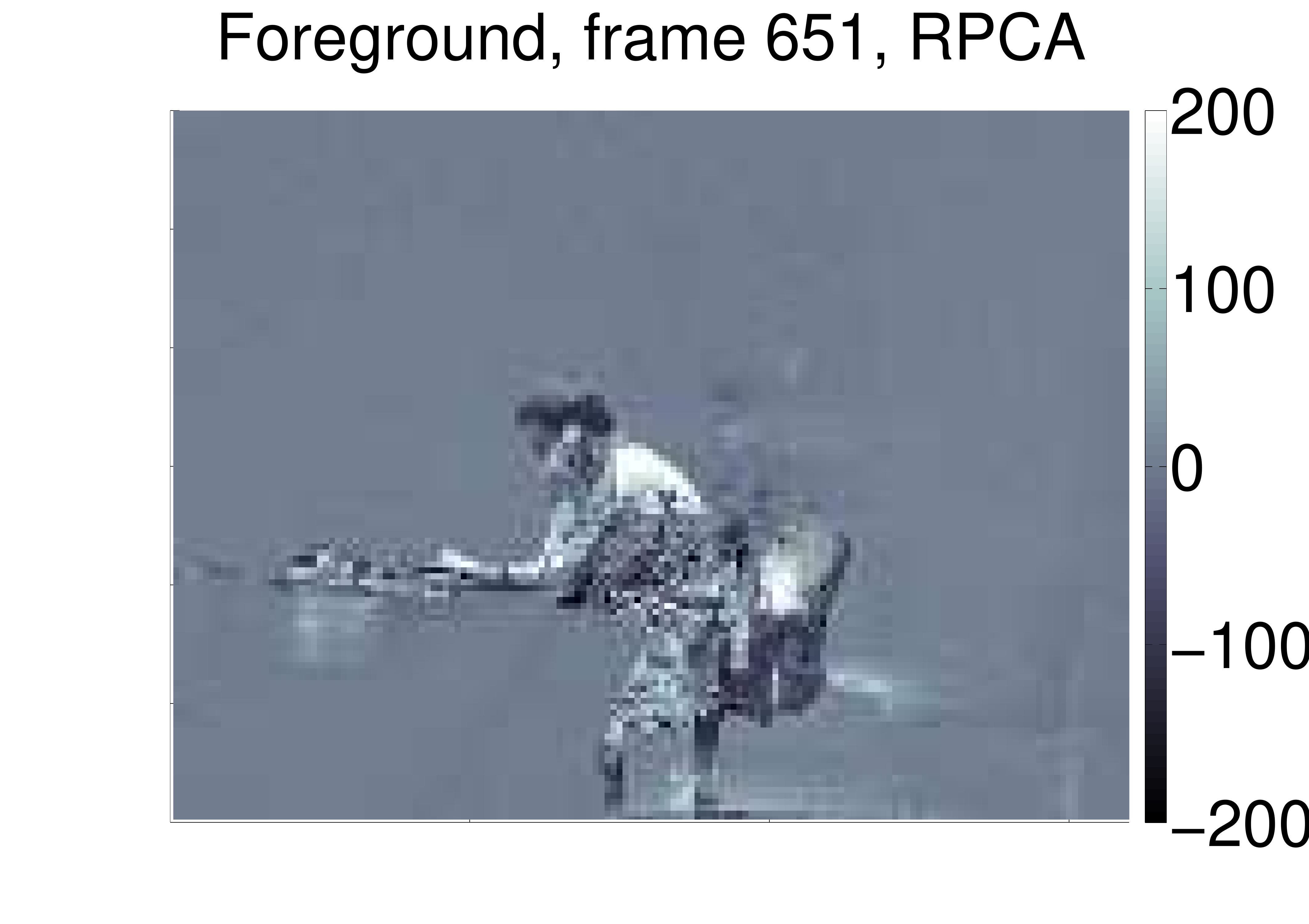}
\end{tabular}
\caption{Extracted background and foreground of frame 651 of the reduced moved object data set. The number of components is $k=10$ (scaled, compare to unscaled version in Fig.~\ref{fig:bfrmo2})
}
\label{fig:bfrmo1}
\end{figure}
\clearpage

\begin{figure}
\begin{tabular}{cccc}
\includegraphics[width=.25\columnwidth]{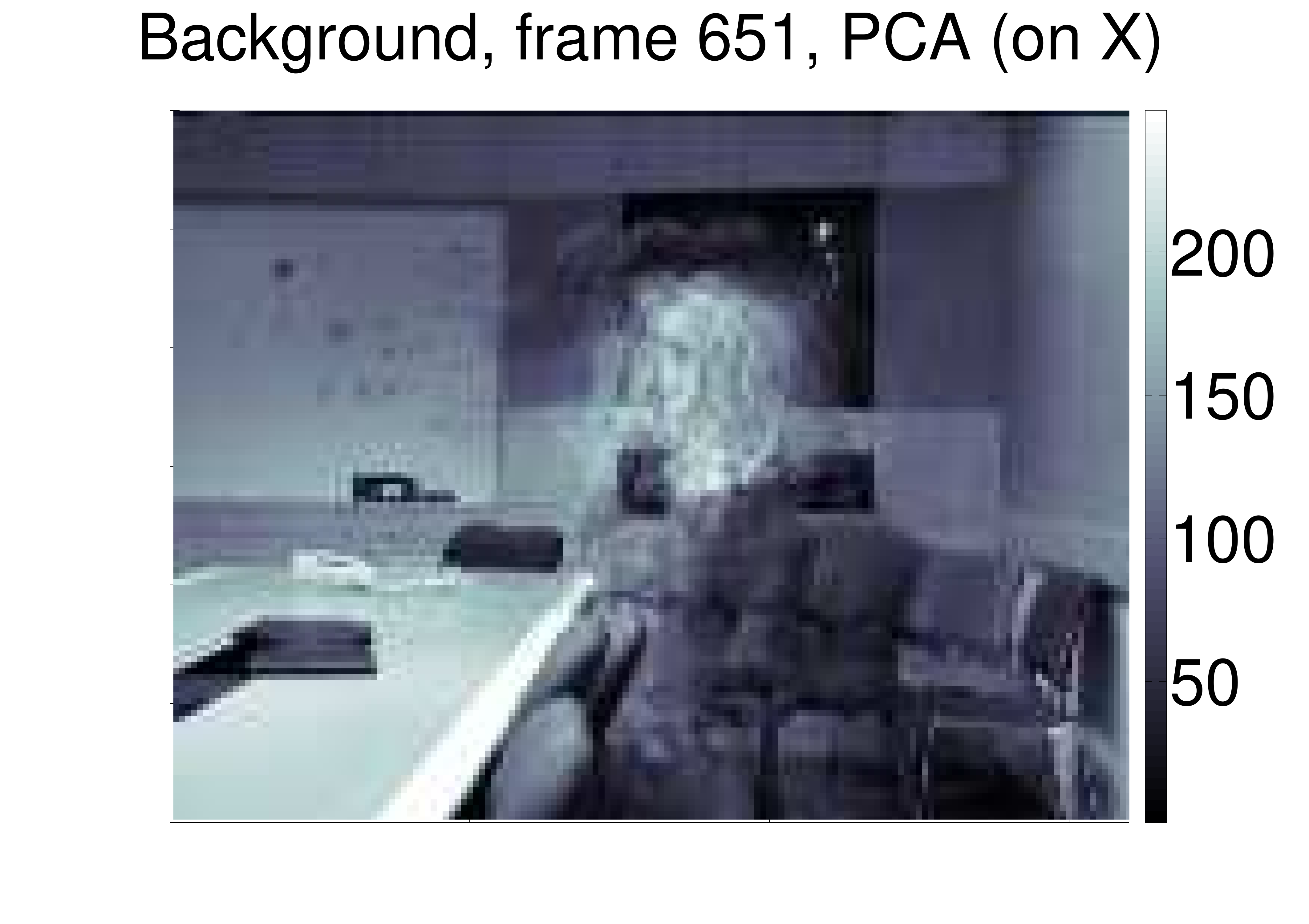} &
\includegraphics[width=.25\columnwidth]{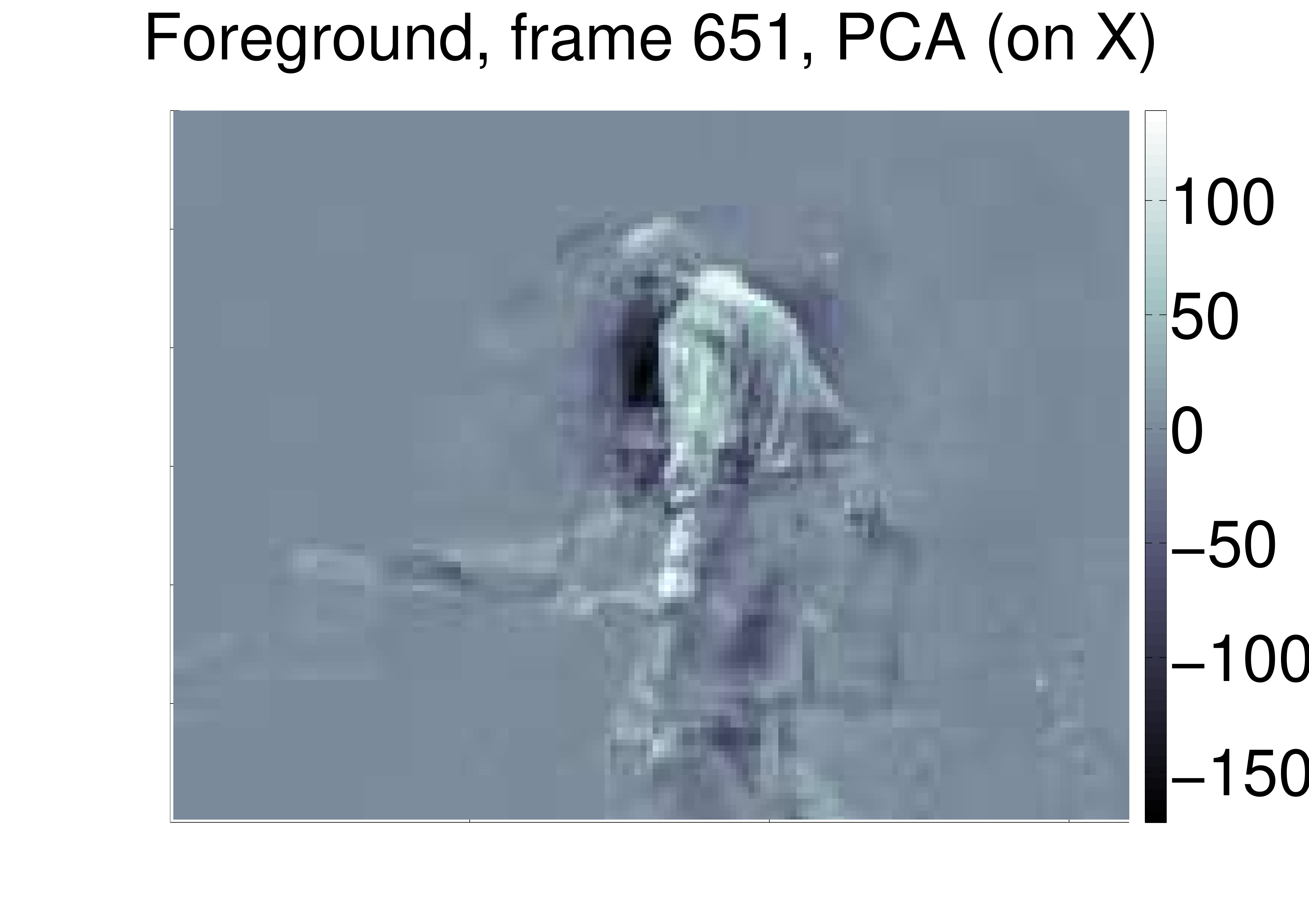} &
\includegraphics[width=.25\columnwidth]{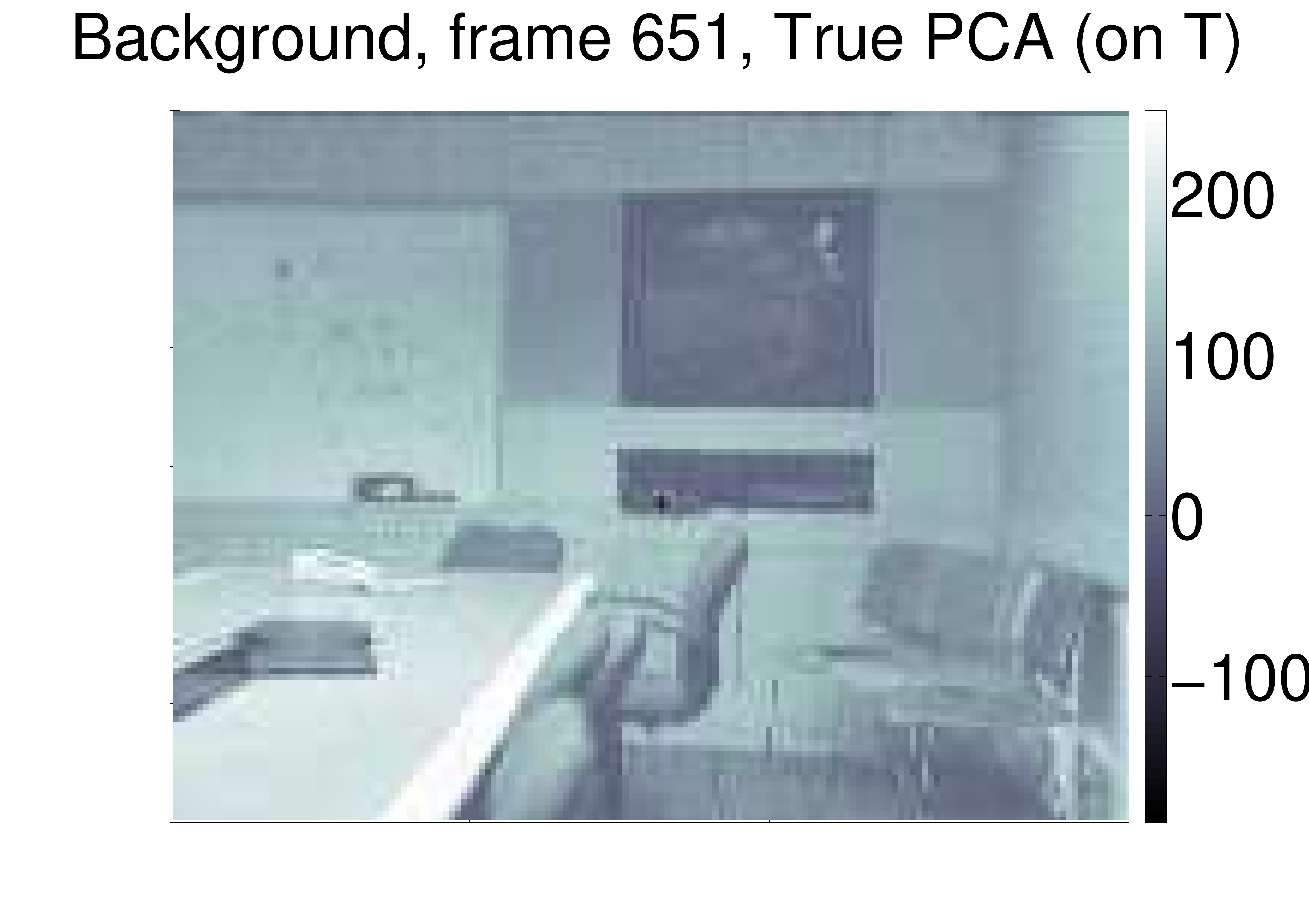} &
\includegraphics[width=.25\columnwidth]{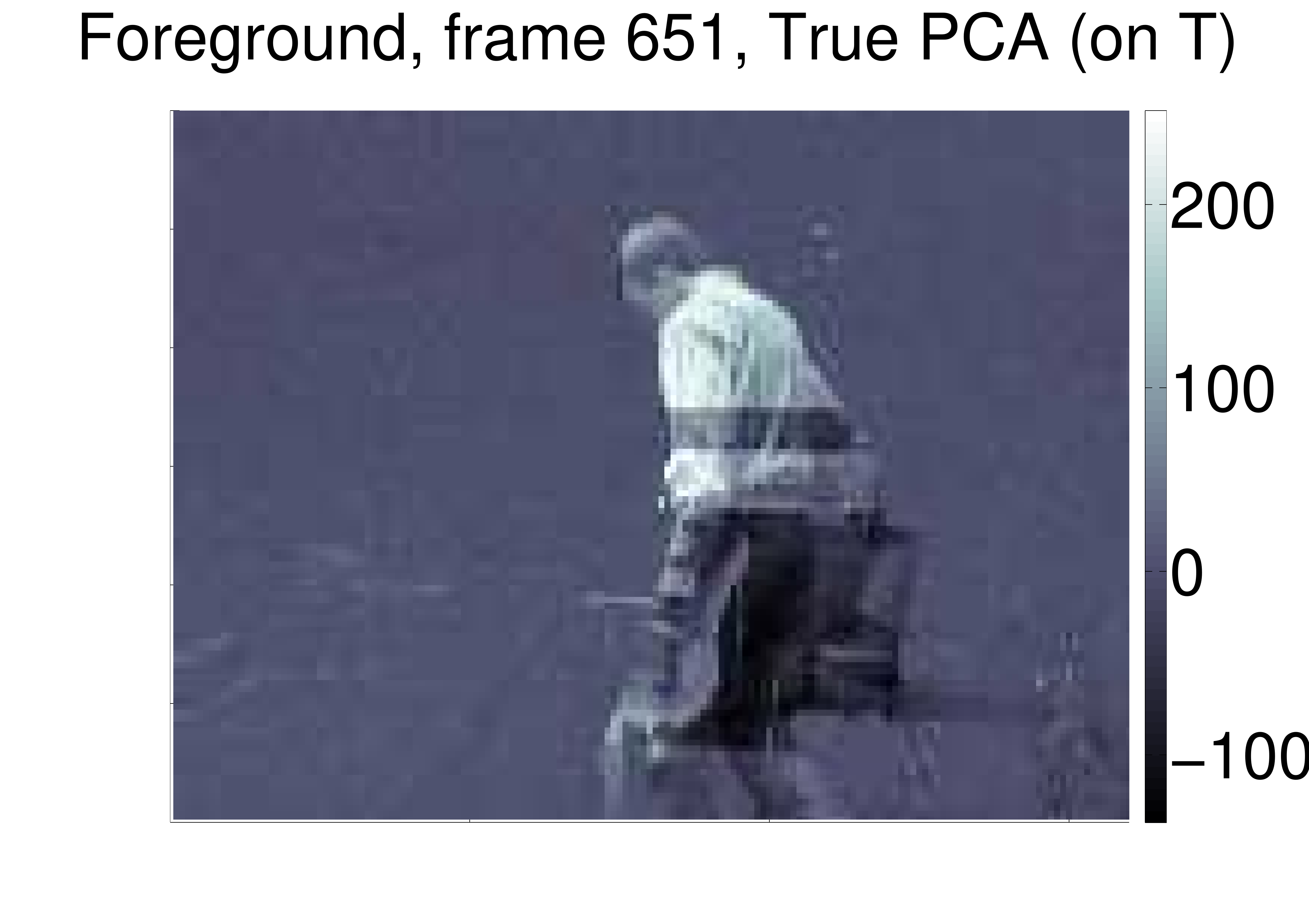} \\ \hline
\includegraphics[width=.25\columnwidth]{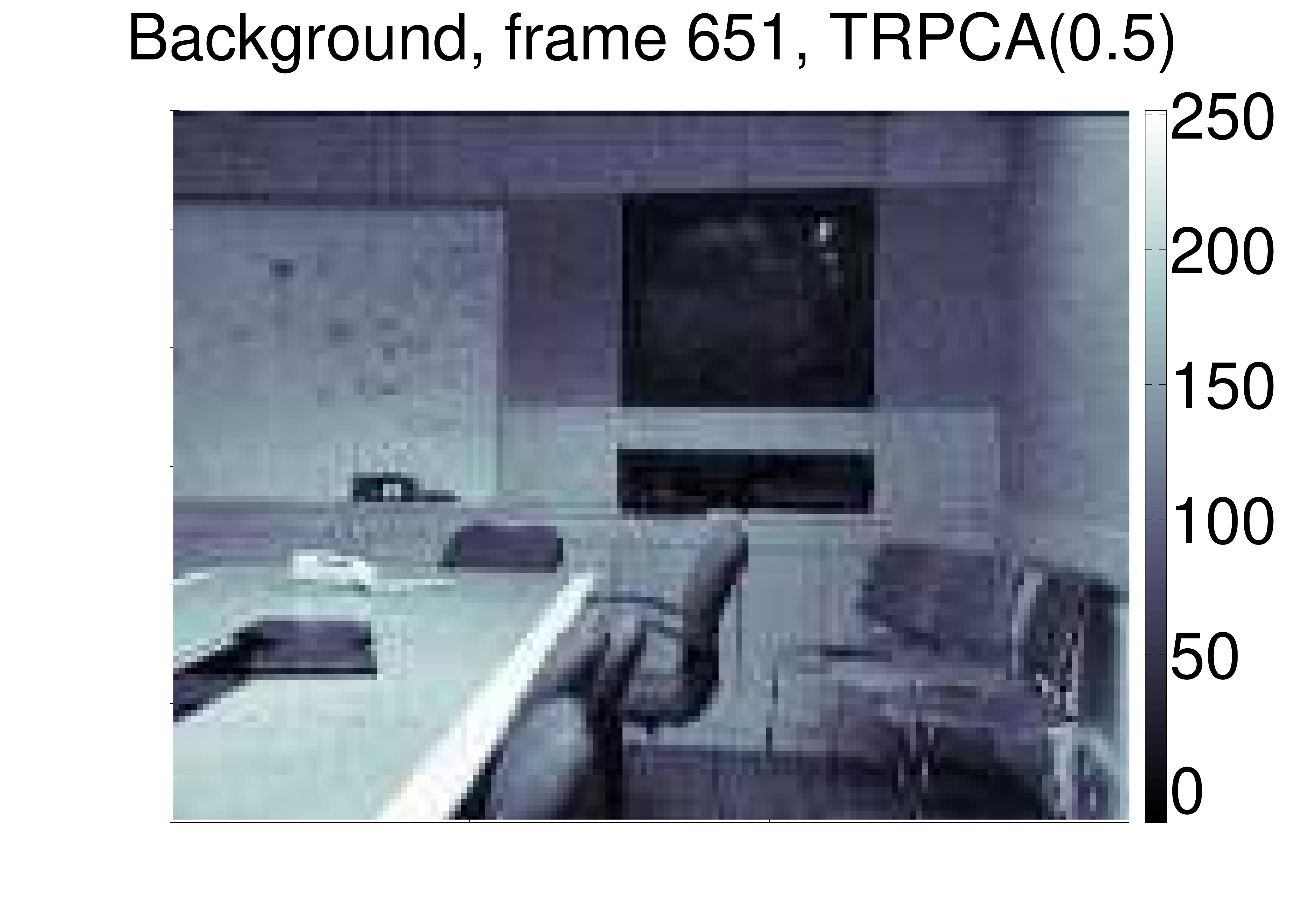} &
\includegraphics[width=.25\columnwidth]{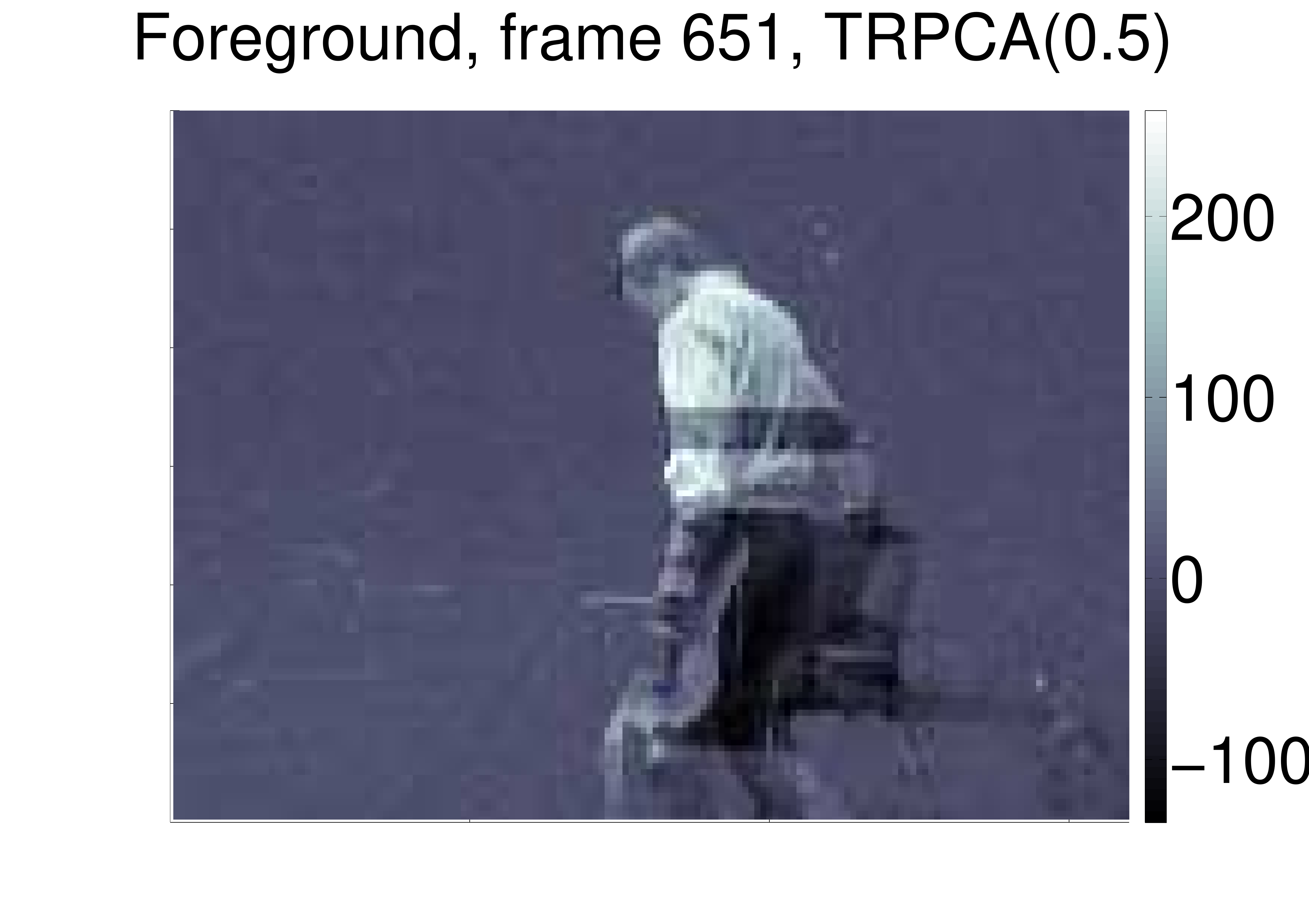} &
\includegraphics[width=.25\columnwidth]{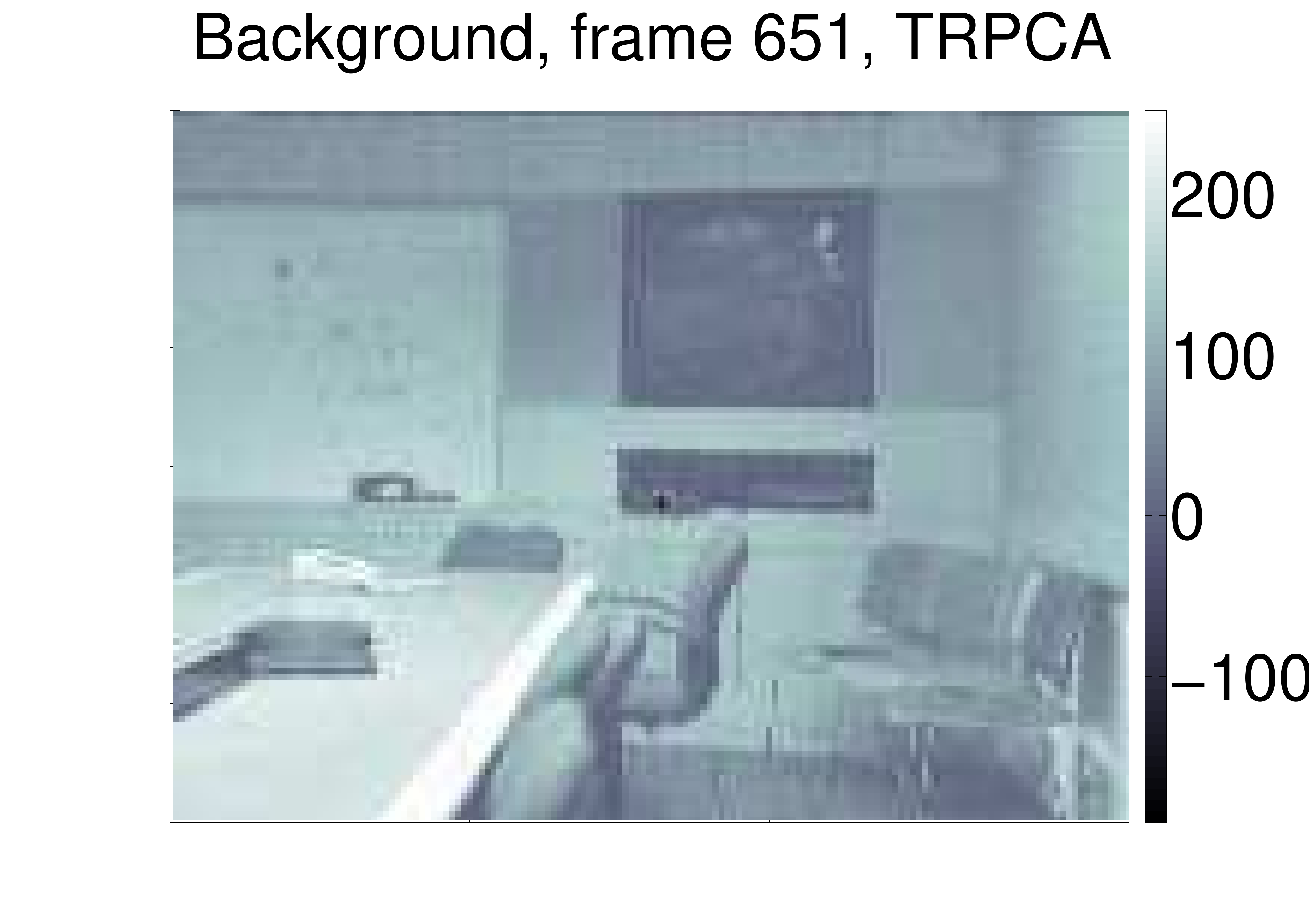} &
\includegraphics[width=.25\columnwidth]{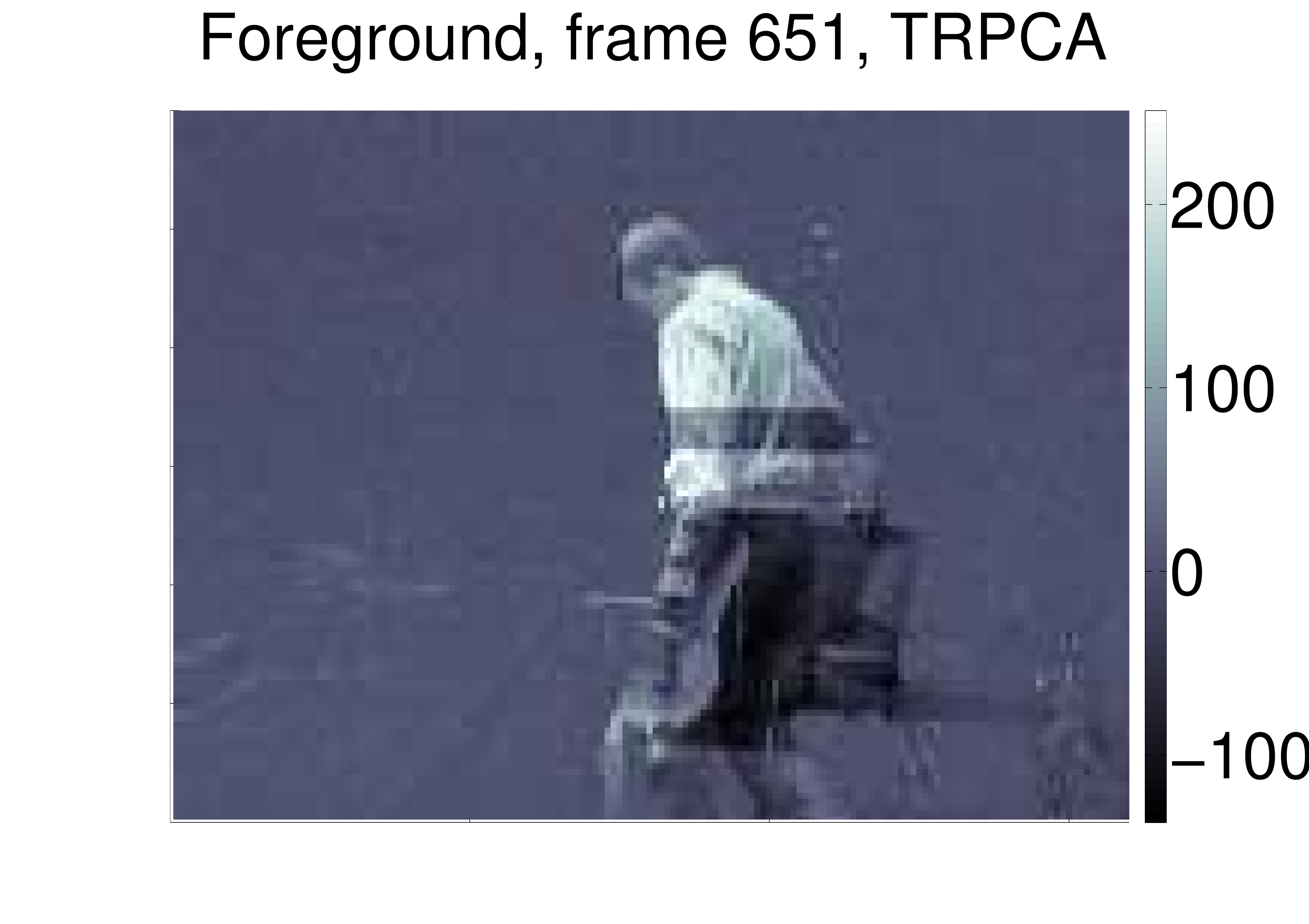} \\
\includegraphics[width=.25\columnwidth]{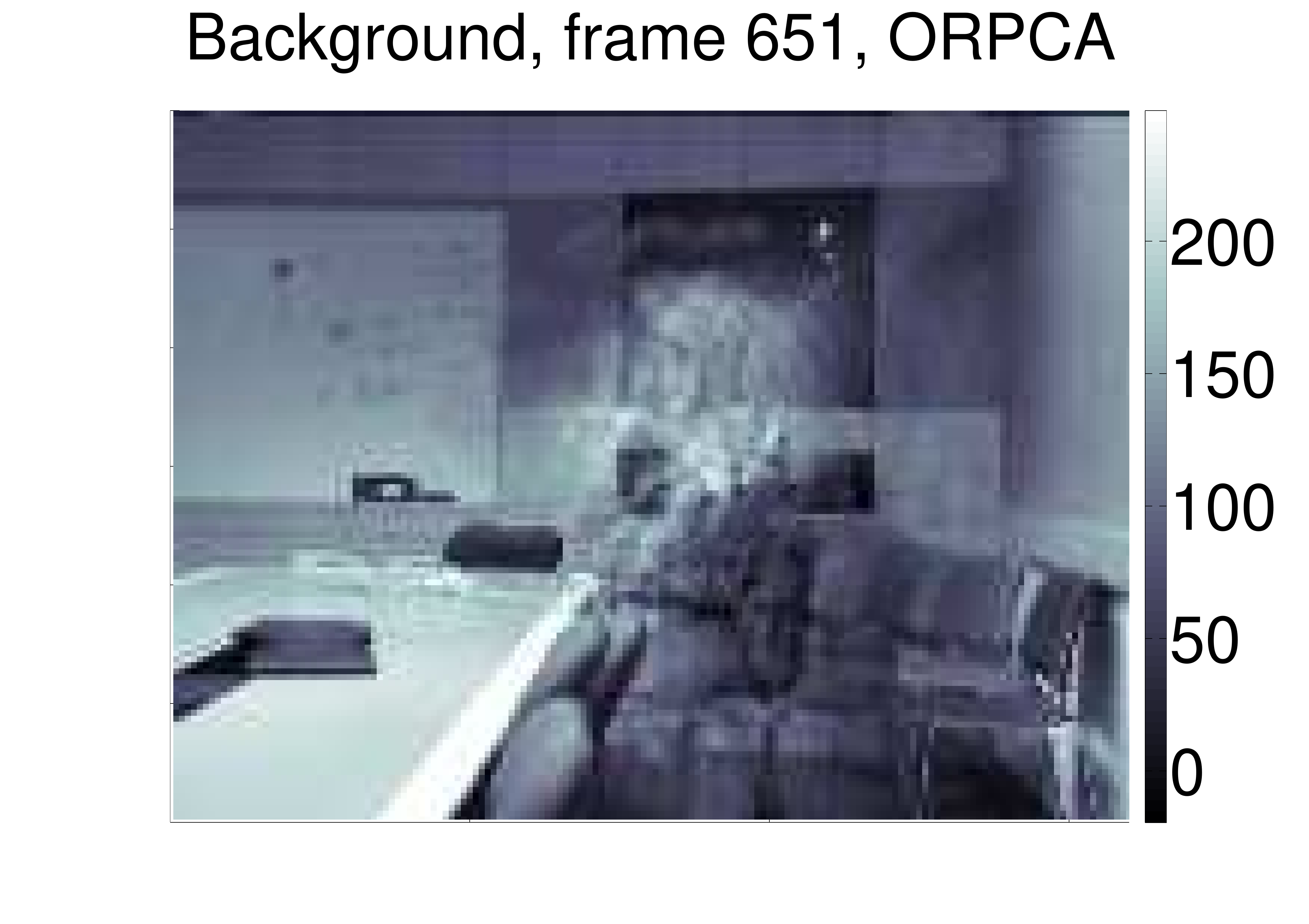} &
\includegraphics[width=.25\columnwidth]{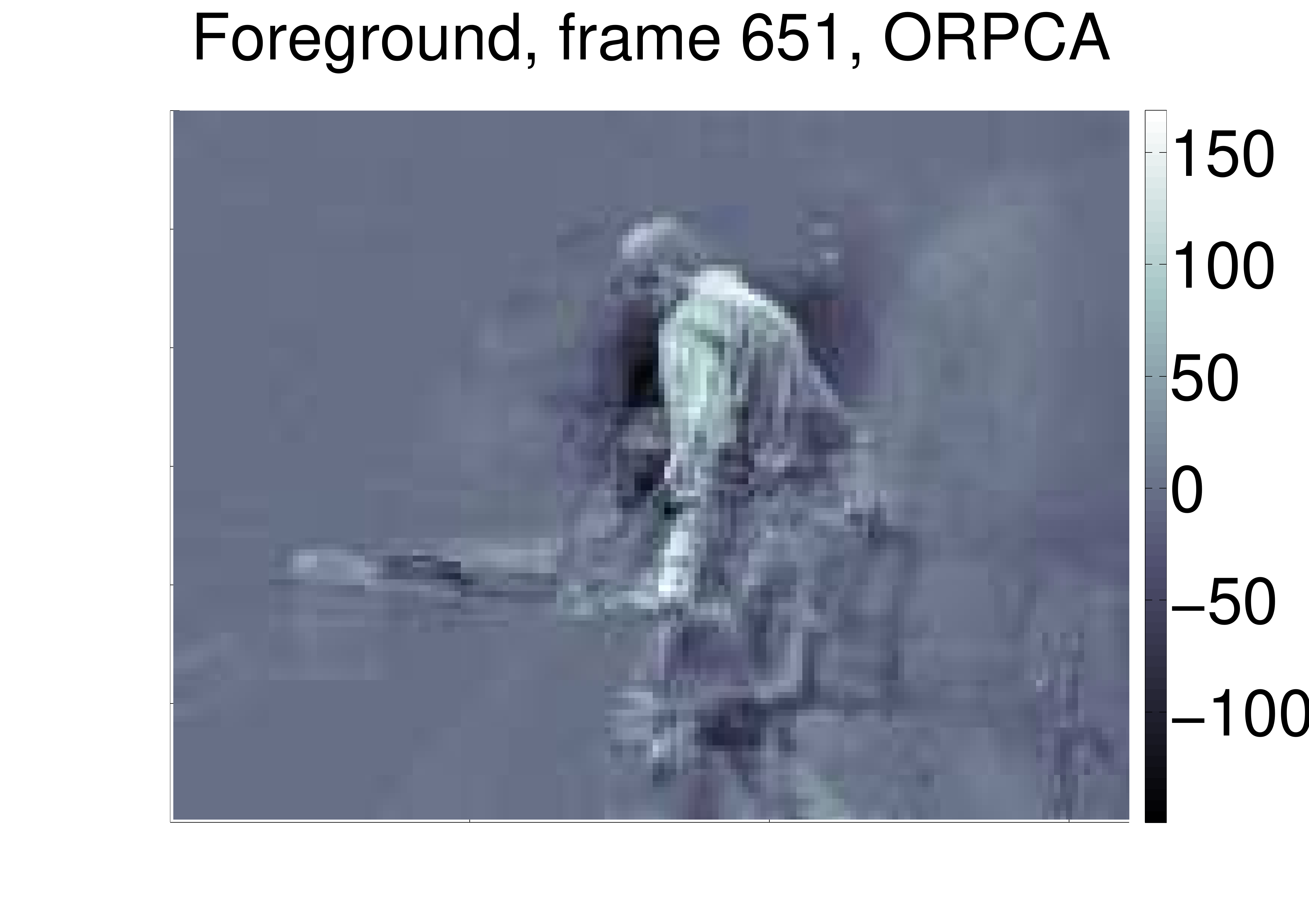} &
\includegraphics[width=.25\columnwidth]{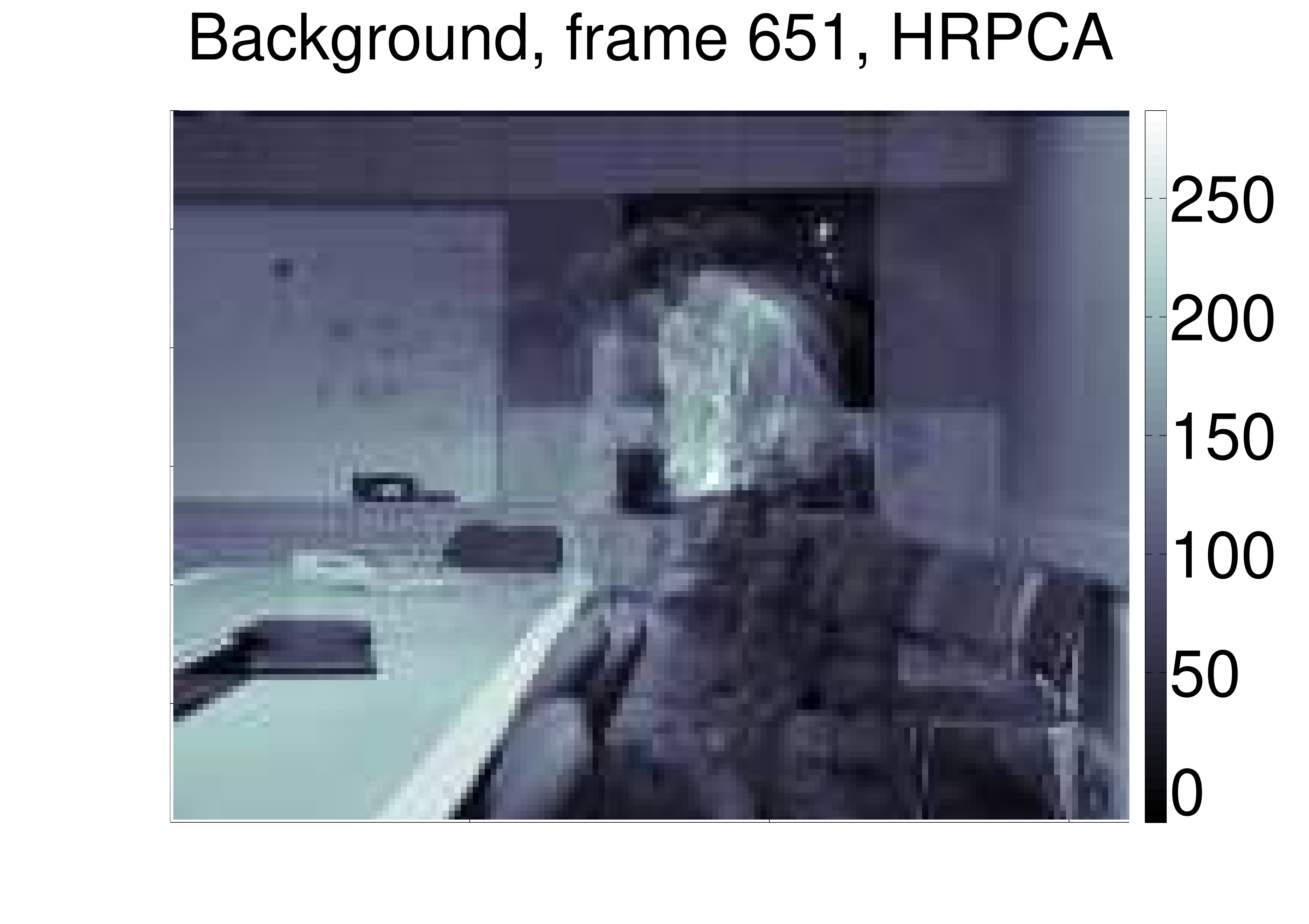} &
\includegraphics[width=.25\columnwidth]{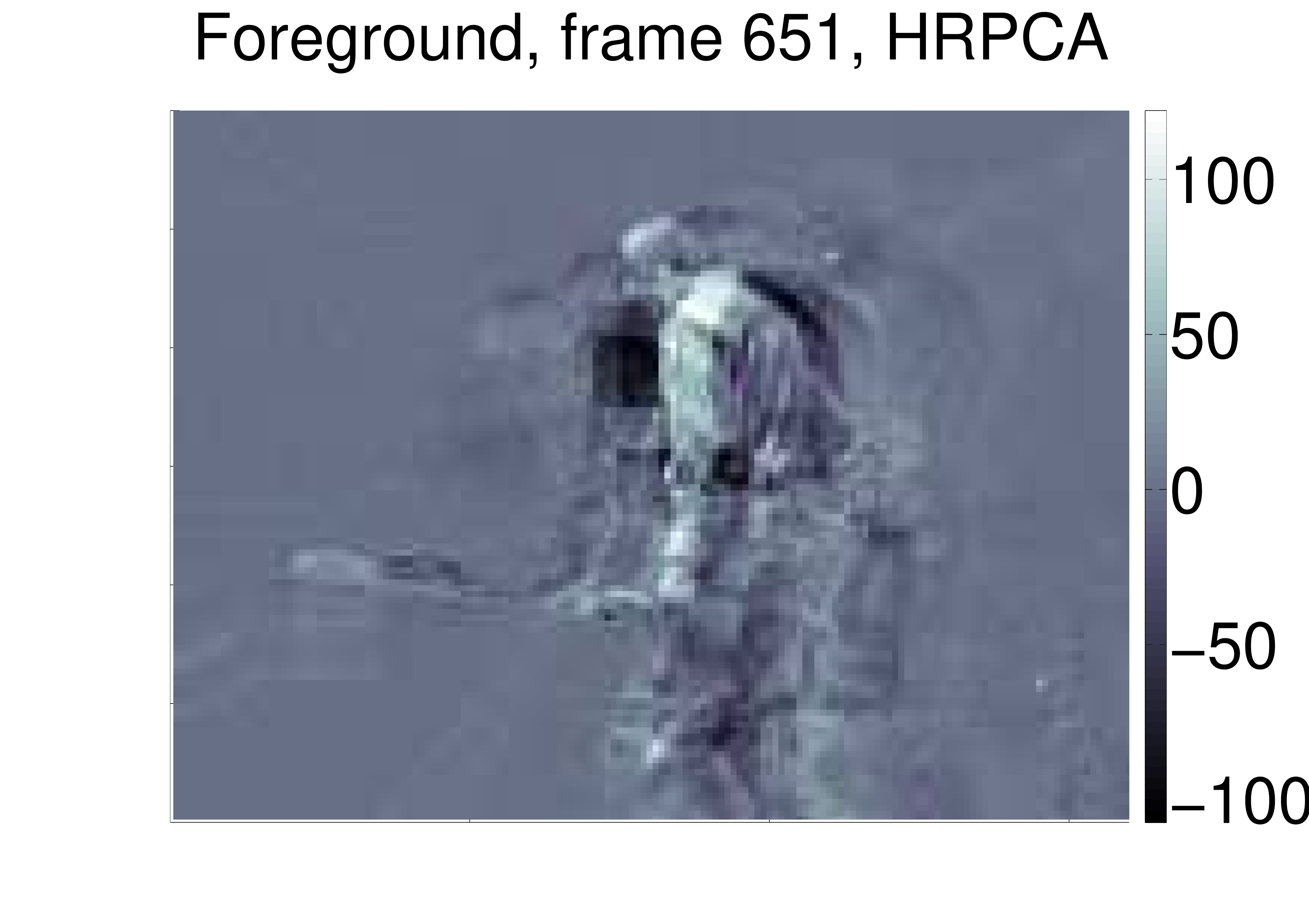} \\
\includegraphics[width=.25\columnwidth]{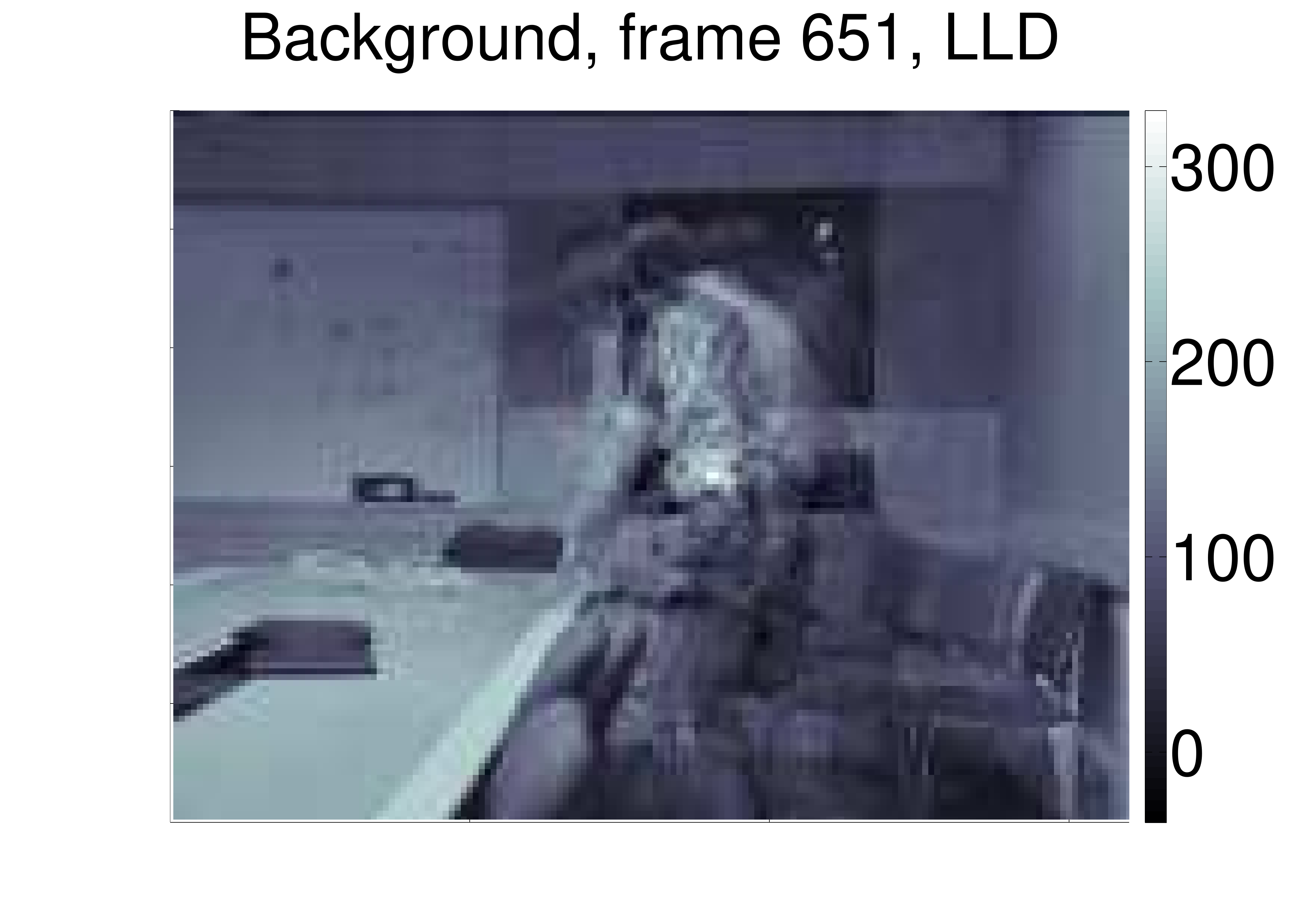} &
\includegraphics[width=.25\columnwidth]{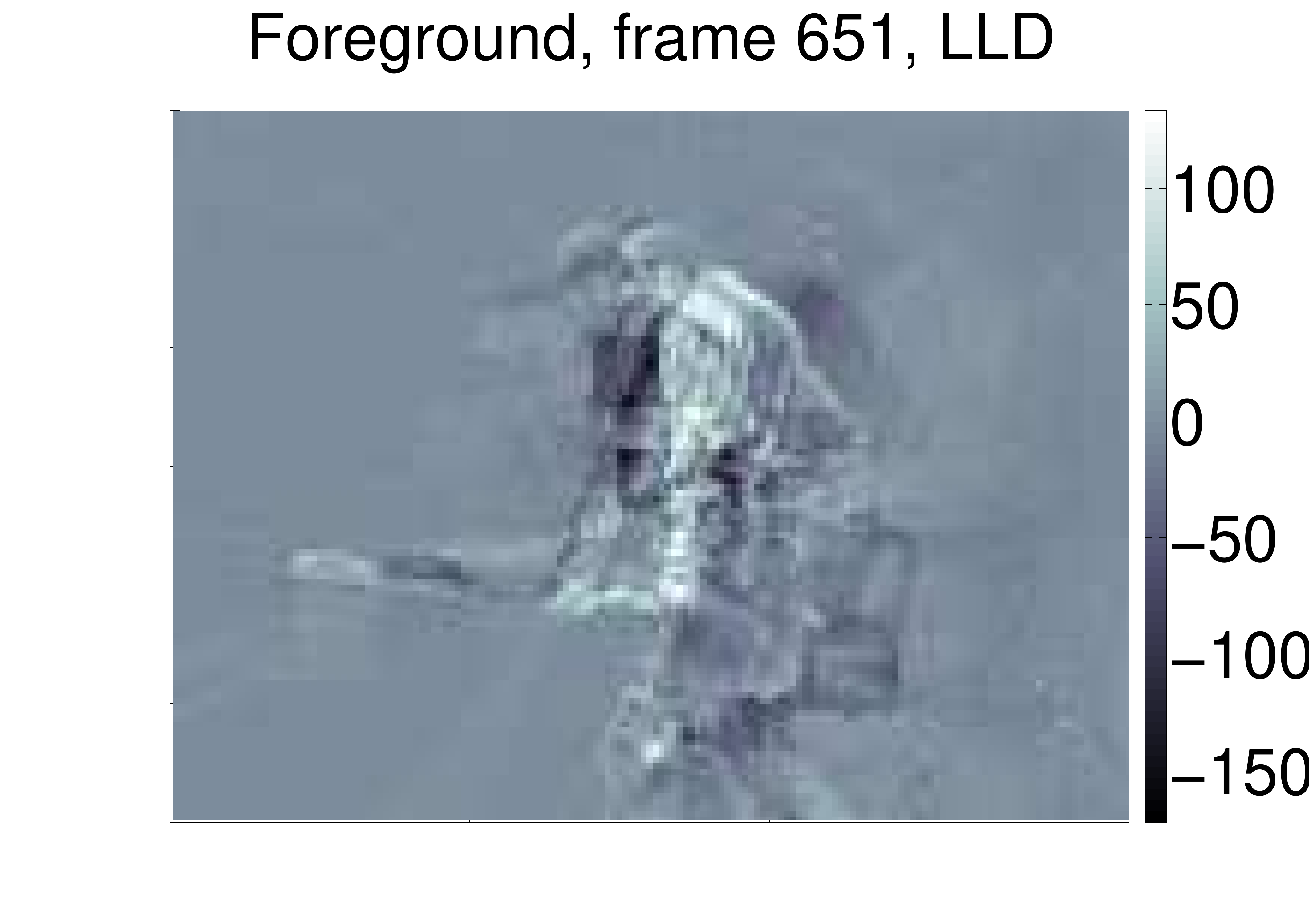} &
\includegraphics[width=.25\columnwidth]{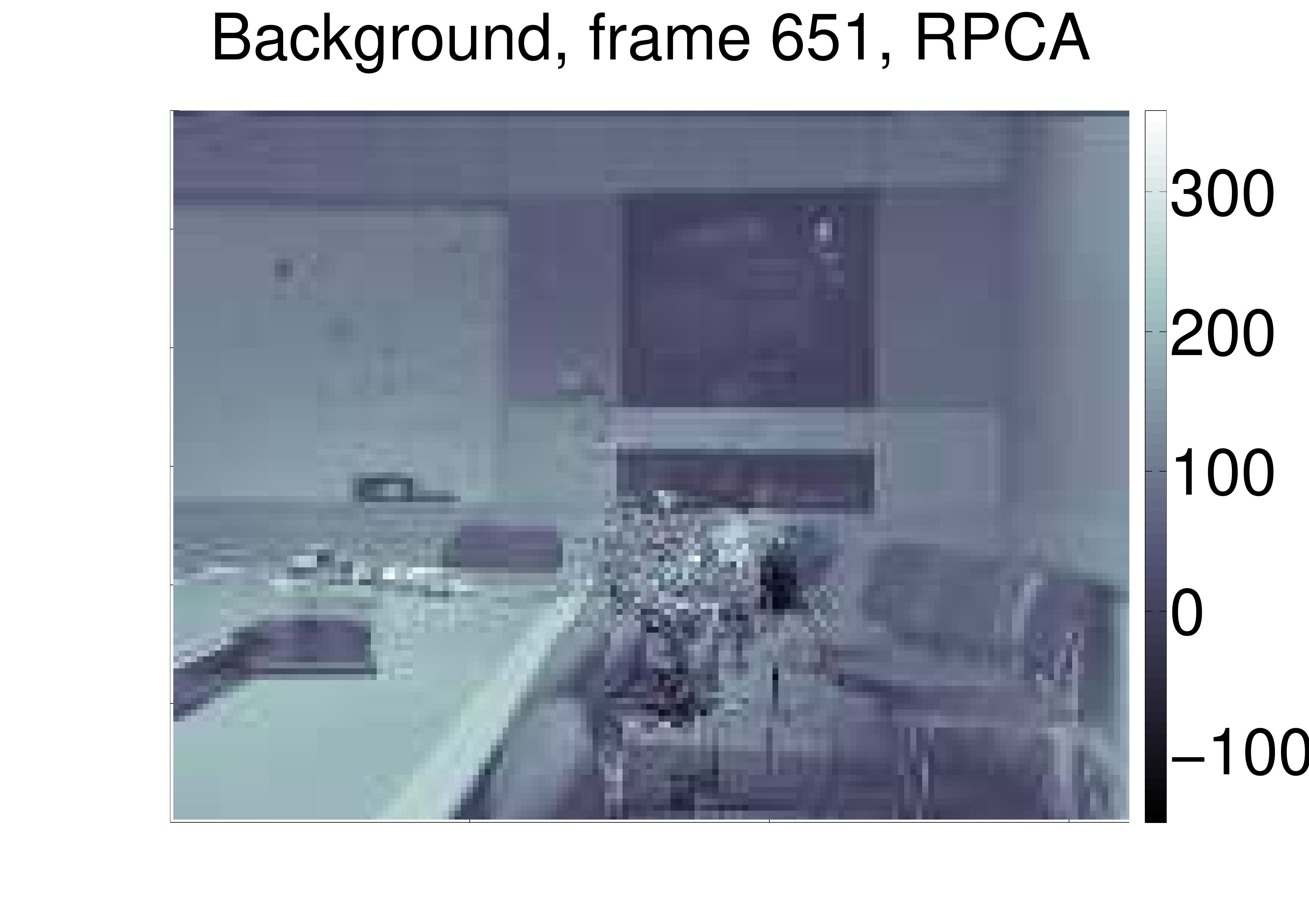} &
\includegraphics[width=.25\columnwidth]{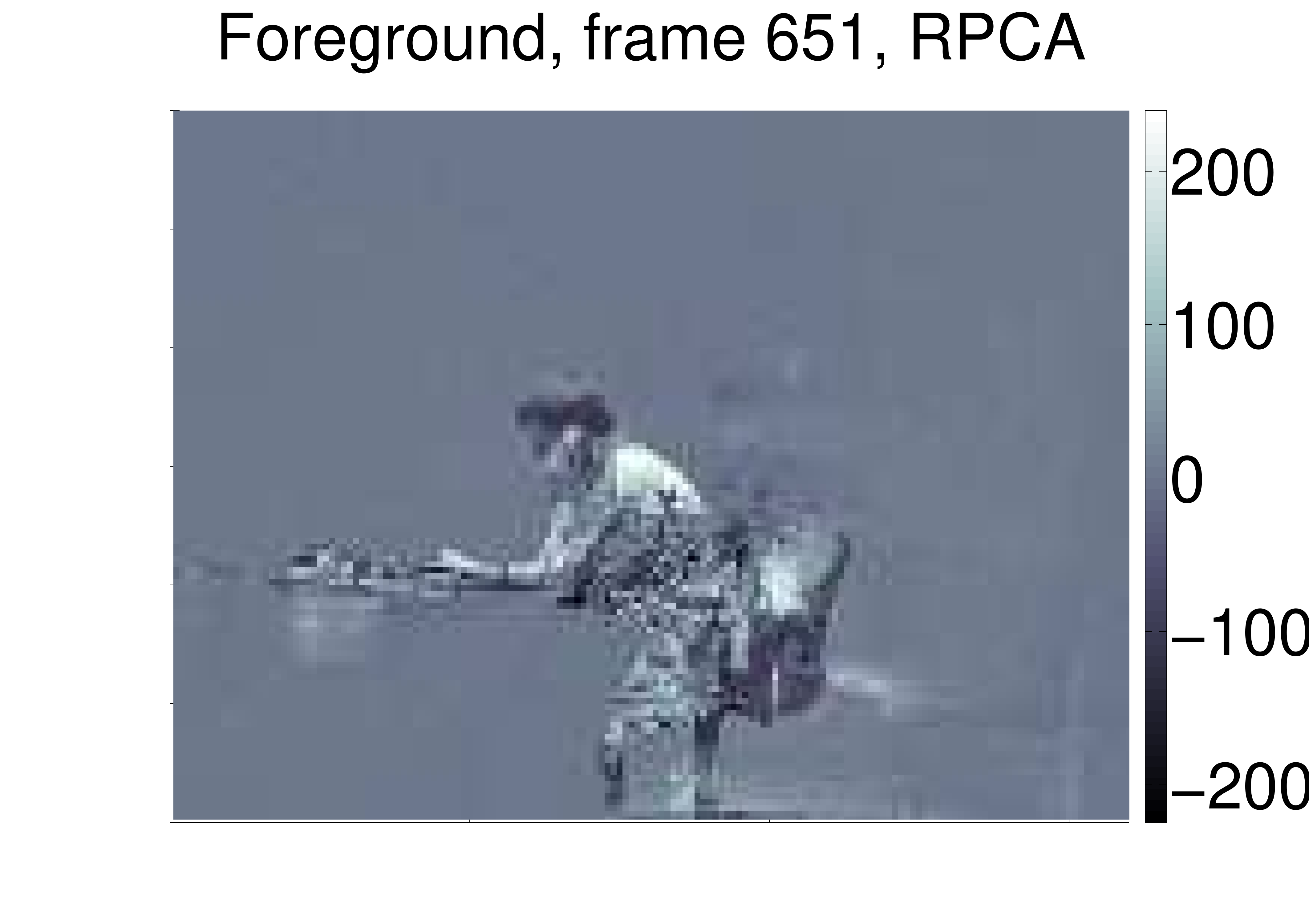}
\end{tabular}
\caption{Extracted background and foreground of frame 651 of the reduced moved object data set. The number of components is $k=10$ (unscaled, compare to scaled version in Fig.~\ref{fig:bfrmo1})
}
\label{fig:bfrmo2}
\end{figure}
\clearpage

\begin{figure}
\begin{tabular}{cccc}
\includegraphics[width=.25\columnwidth]{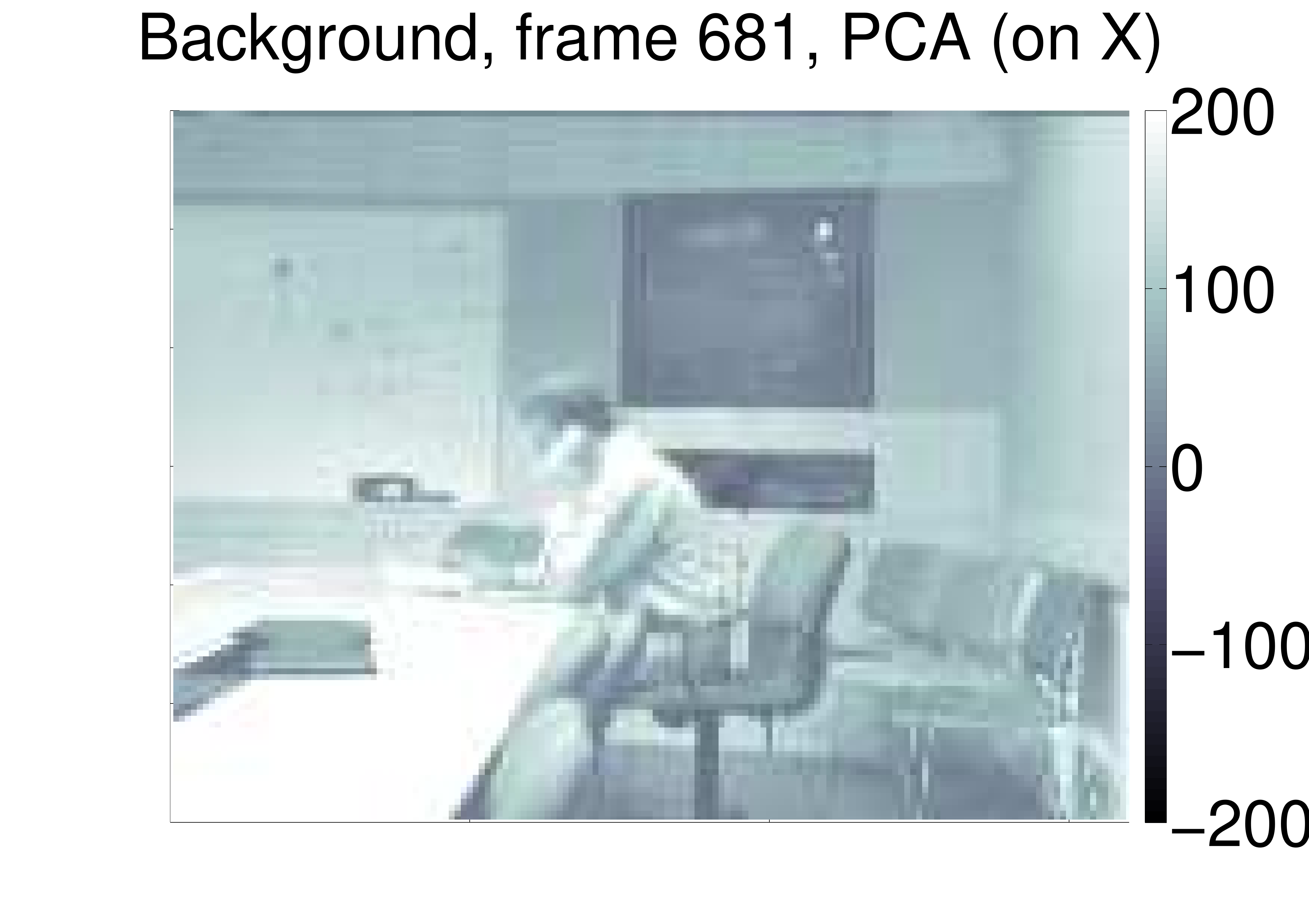} &
\includegraphics[width=.25\columnwidth]{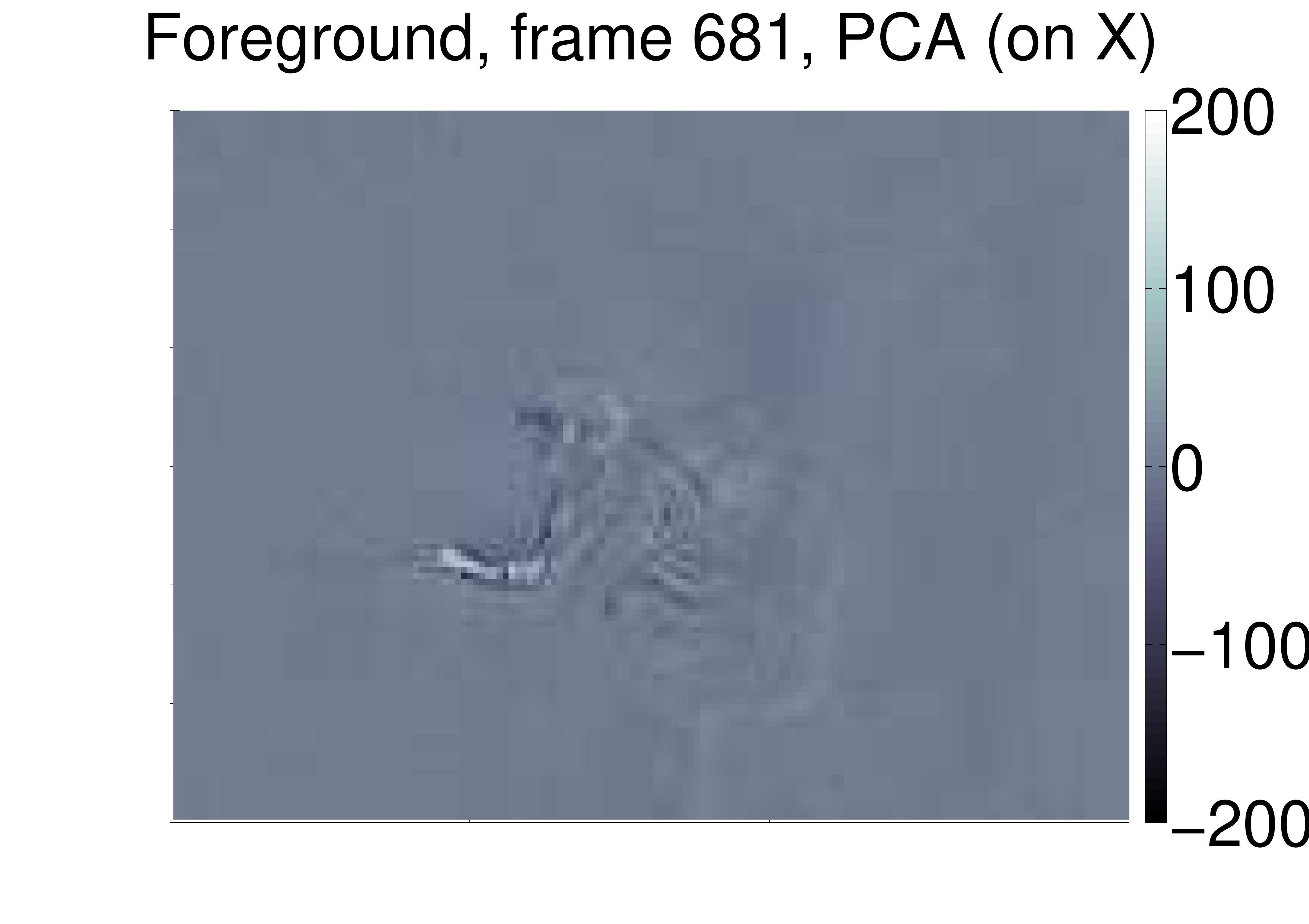} &
\includegraphics[width=.25\columnwidth]{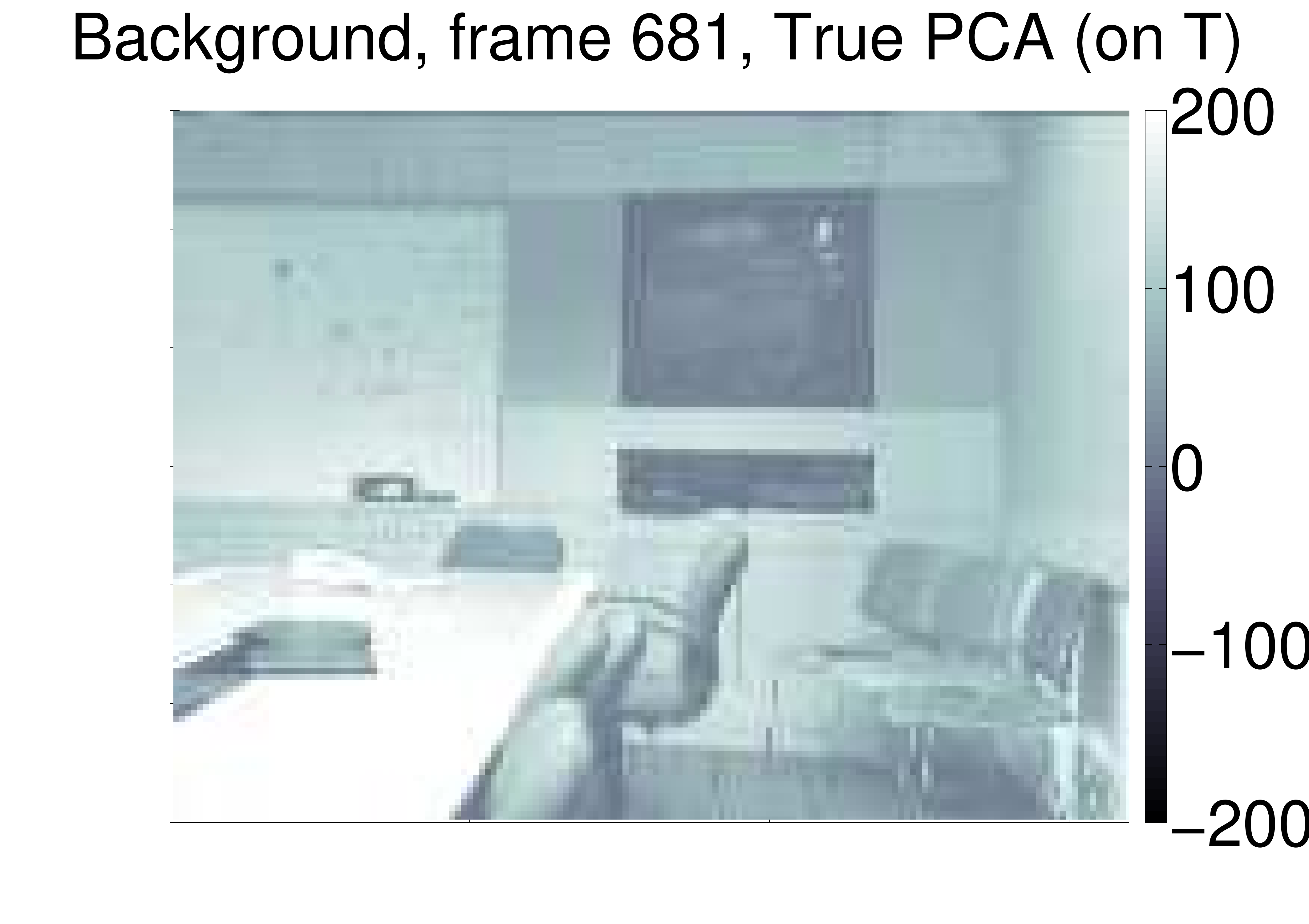} &
\includegraphics[width=.25\columnwidth]{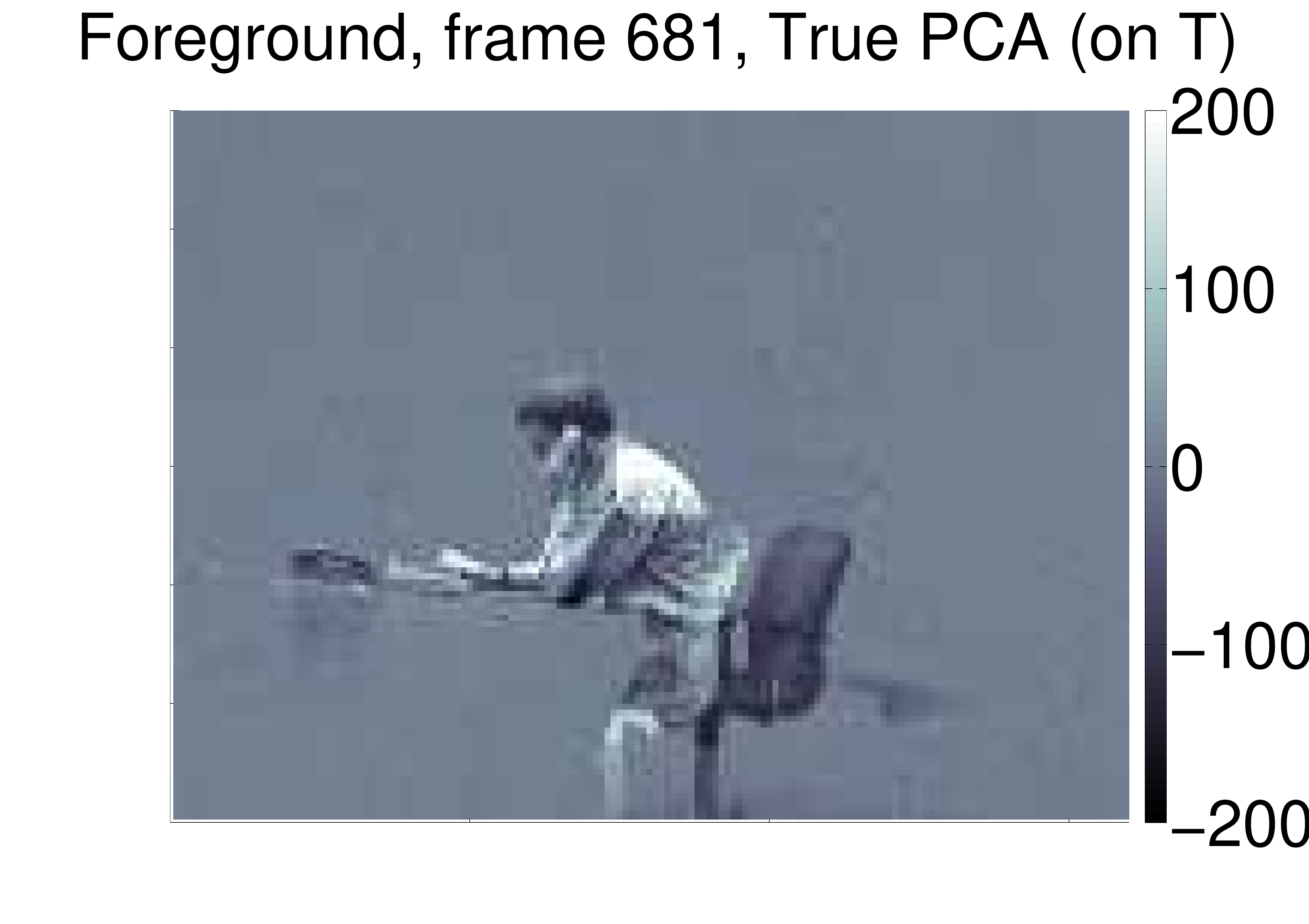} \\ \hline
\includegraphics[width=.25\columnwidth]{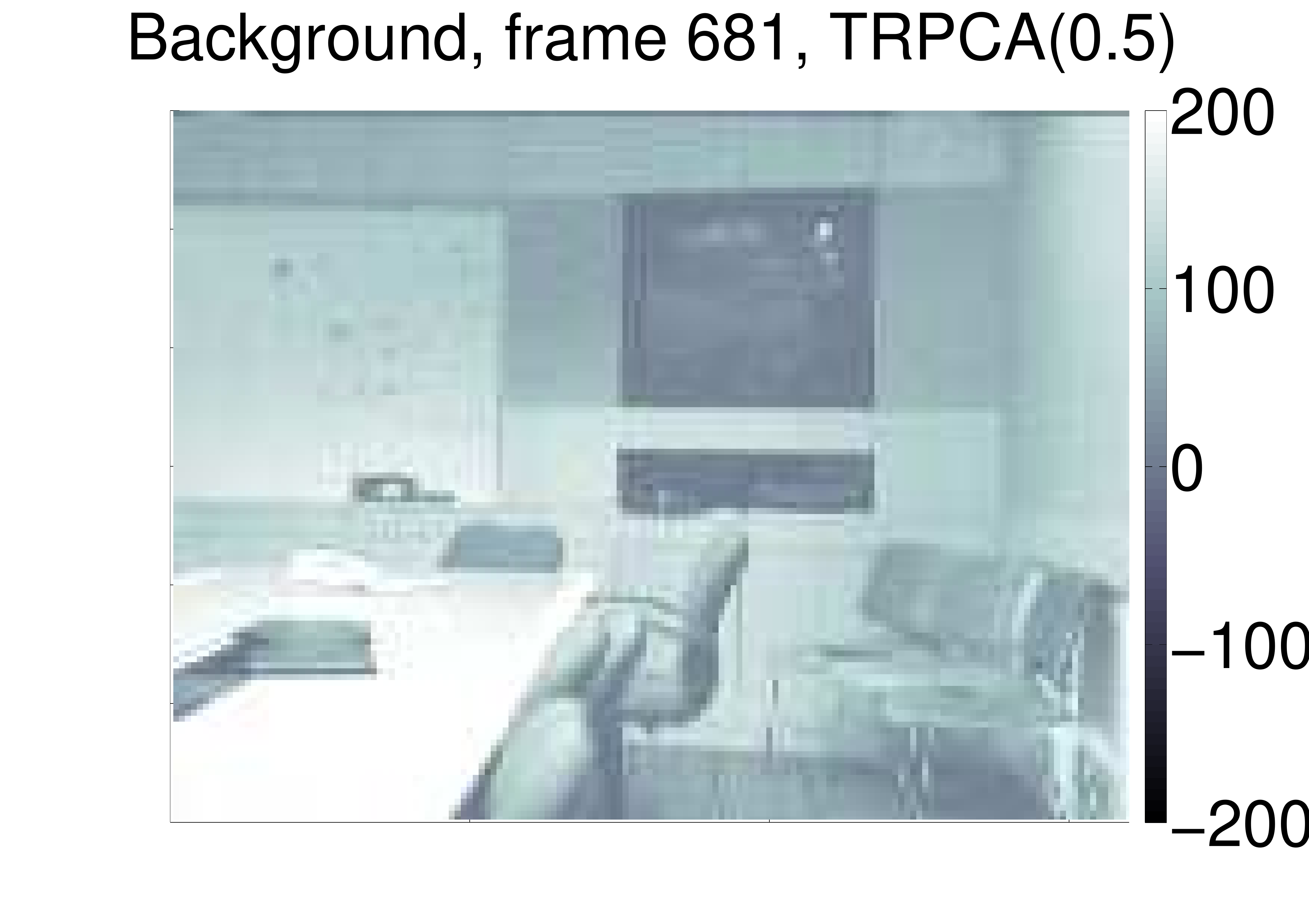} &
\includegraphics[width=.25\columnwidth]{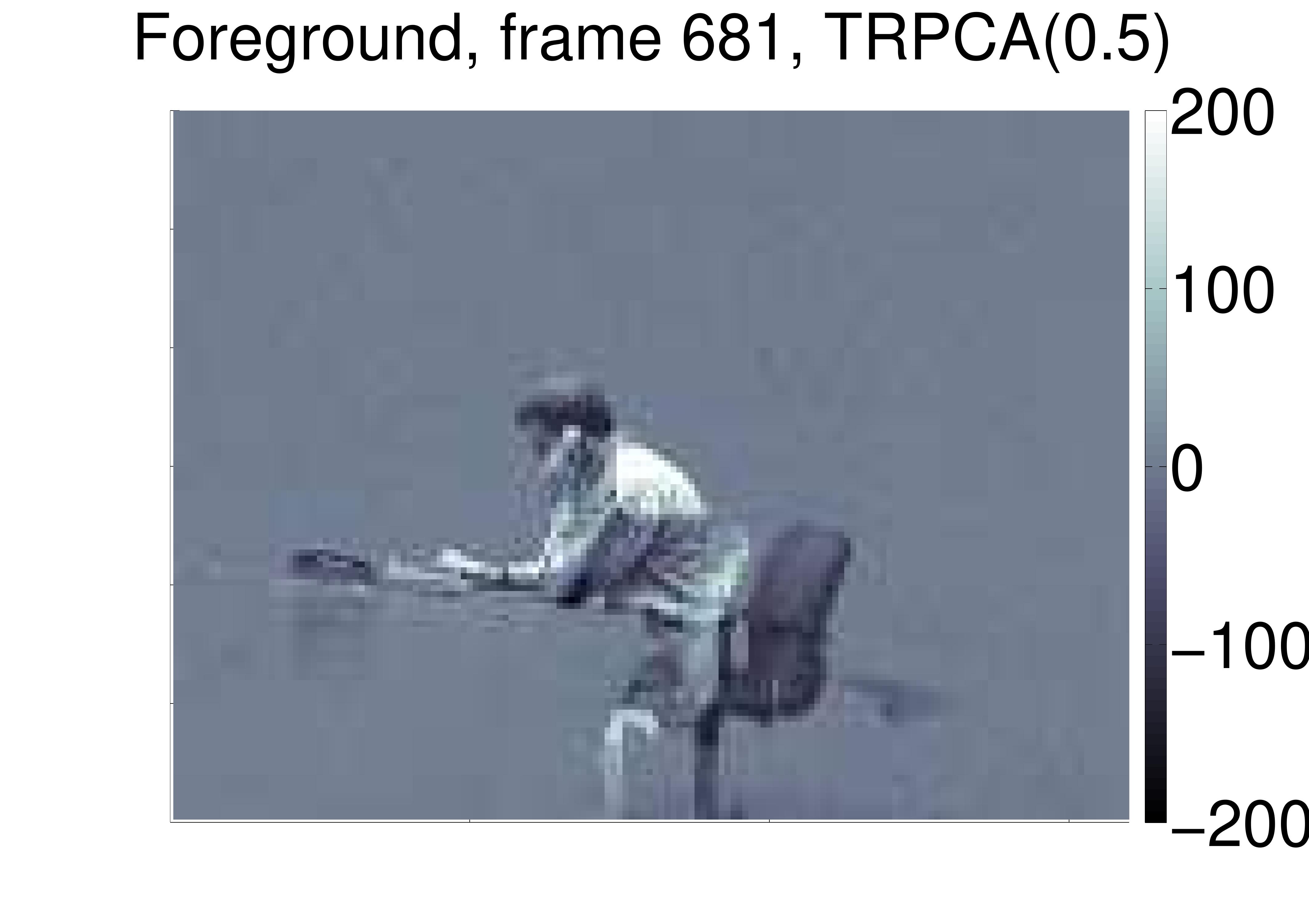} &
\includegraphics[width=.25\columnwidth]{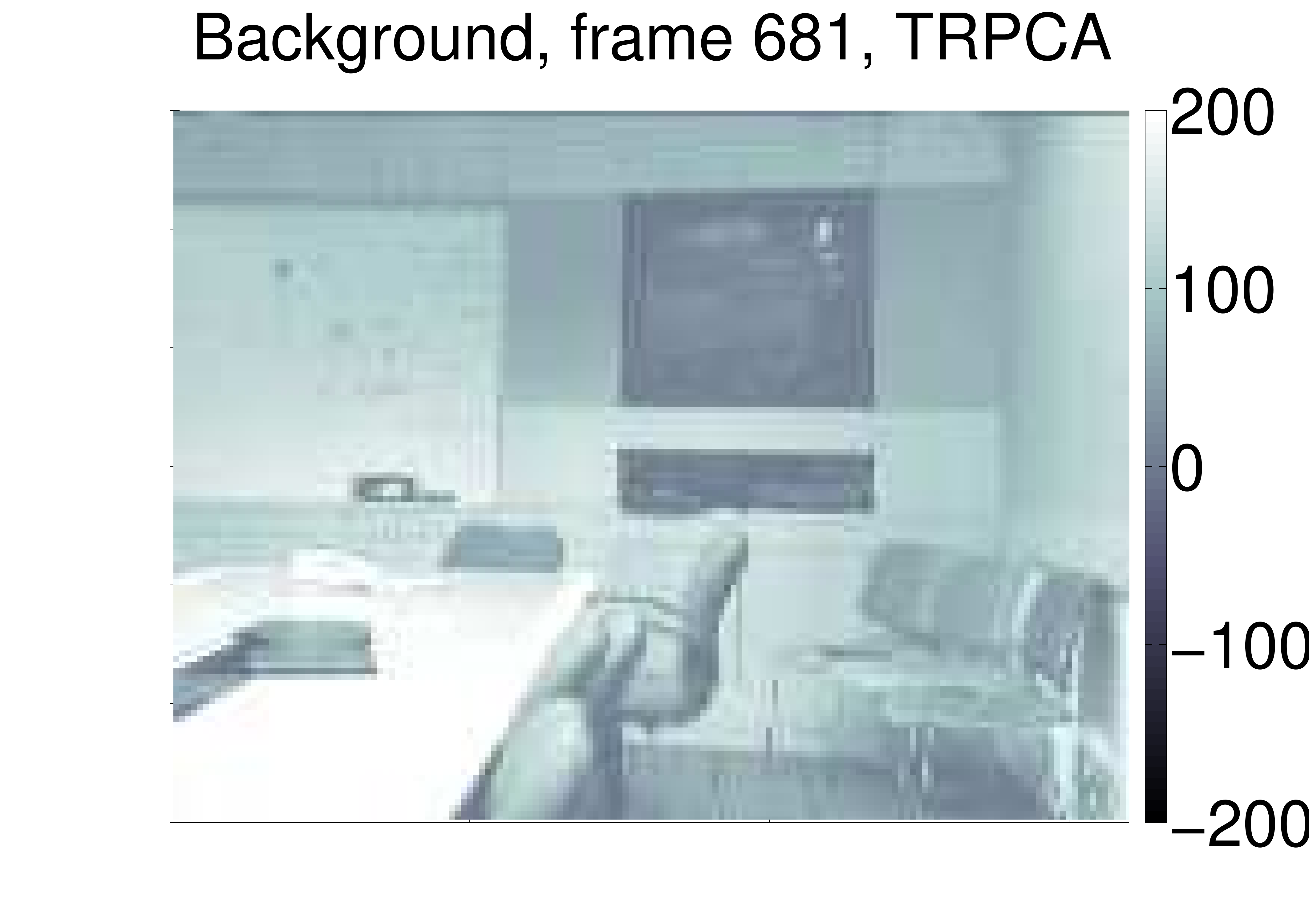} &
\includegraphics[width=.25\columnwidth]{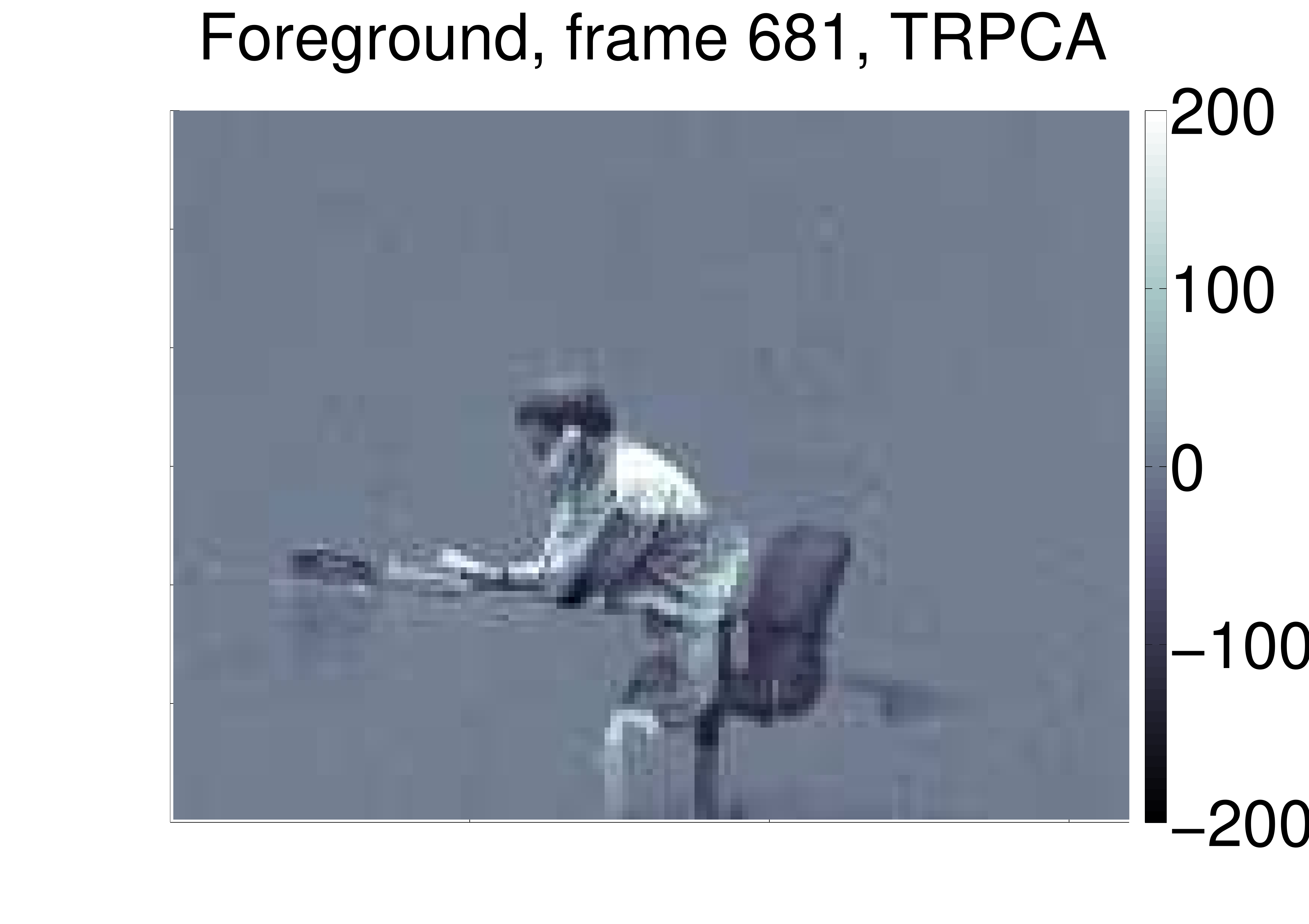} \\
\includegraphics[width=.25\columnwidth]{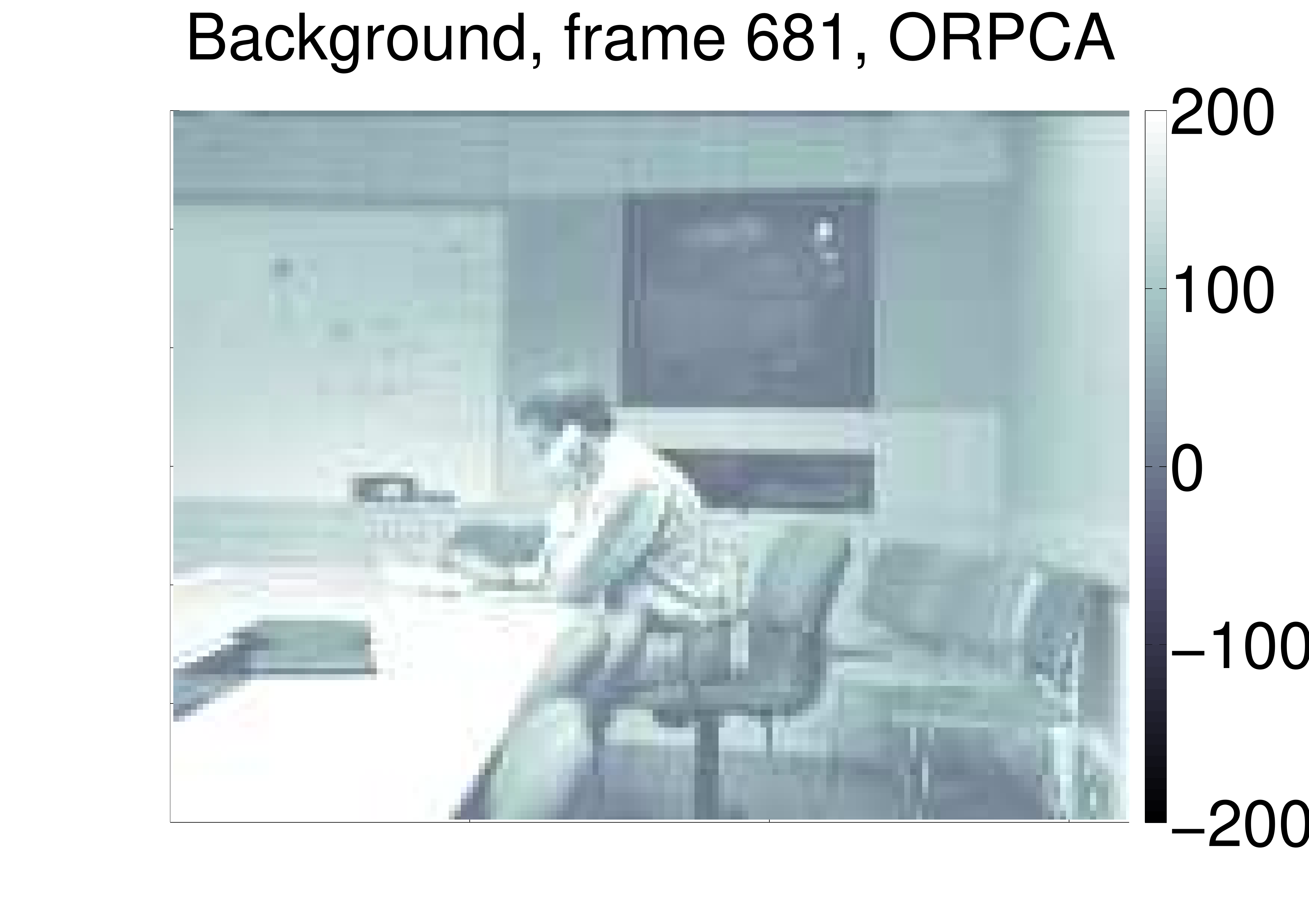} &
\includegraphics[width=.25\columnwidth]{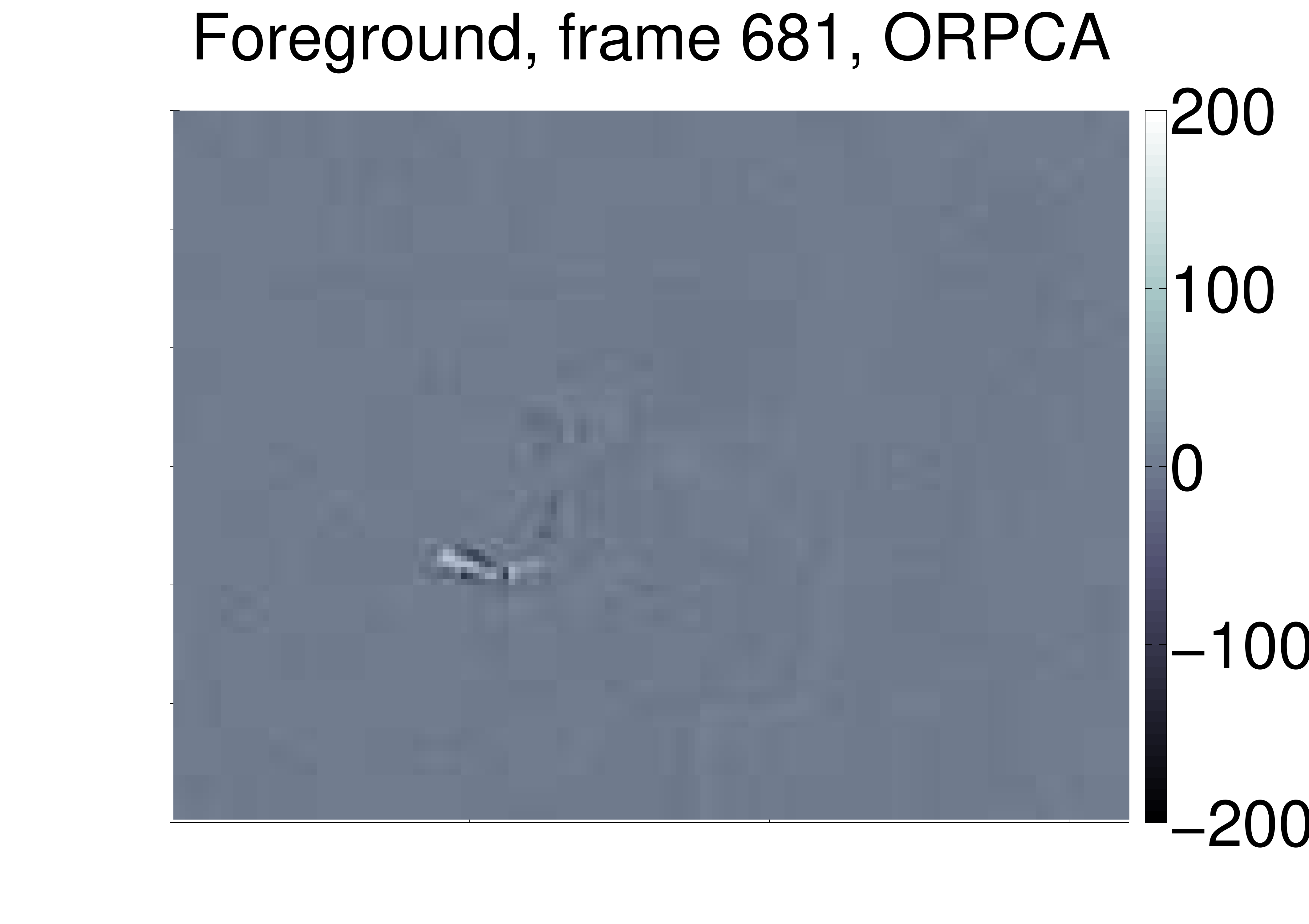} &
\includegraphics[width=.25\columnwidth]{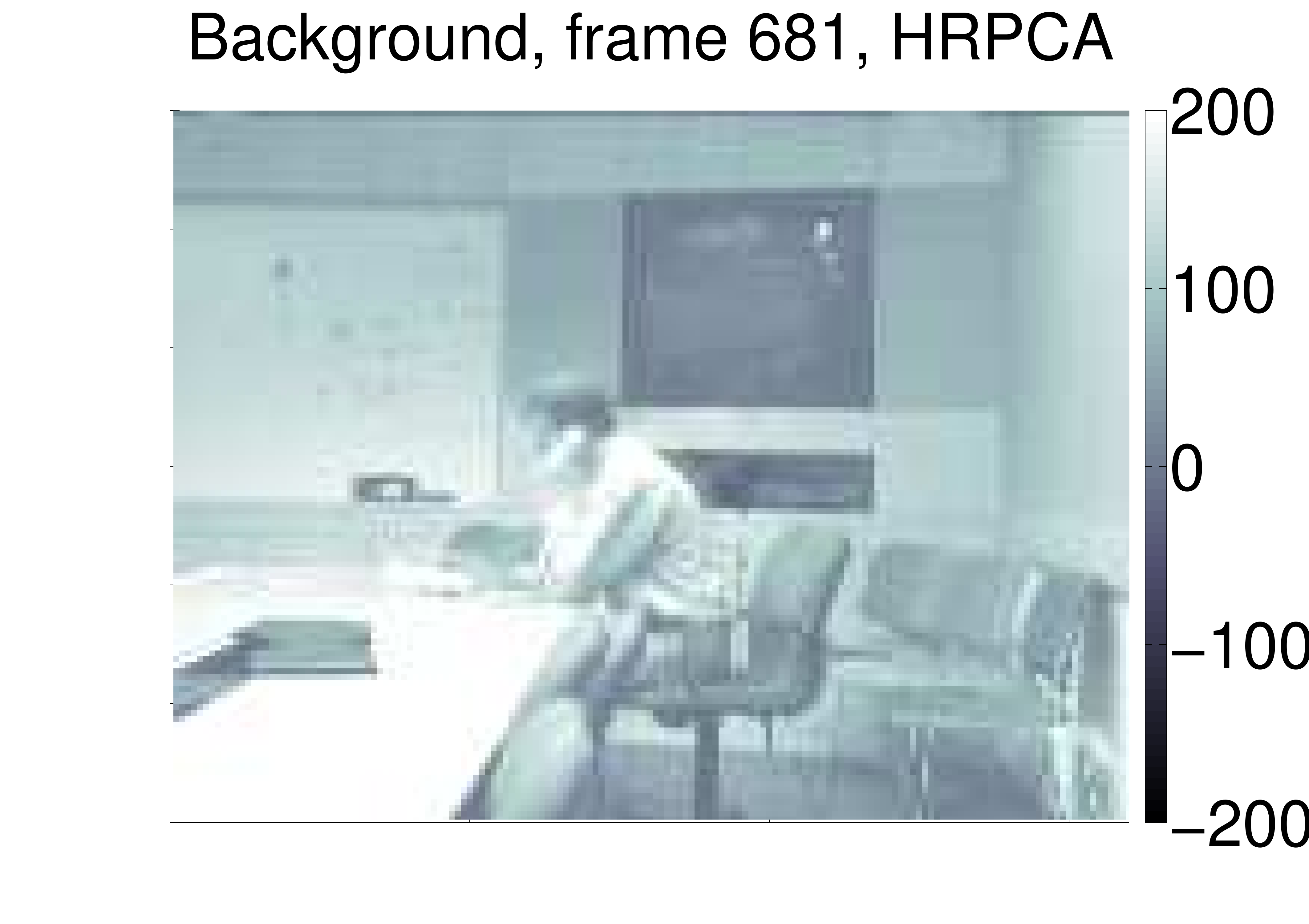} &
\includegraphics[width=.25\columnwidth]{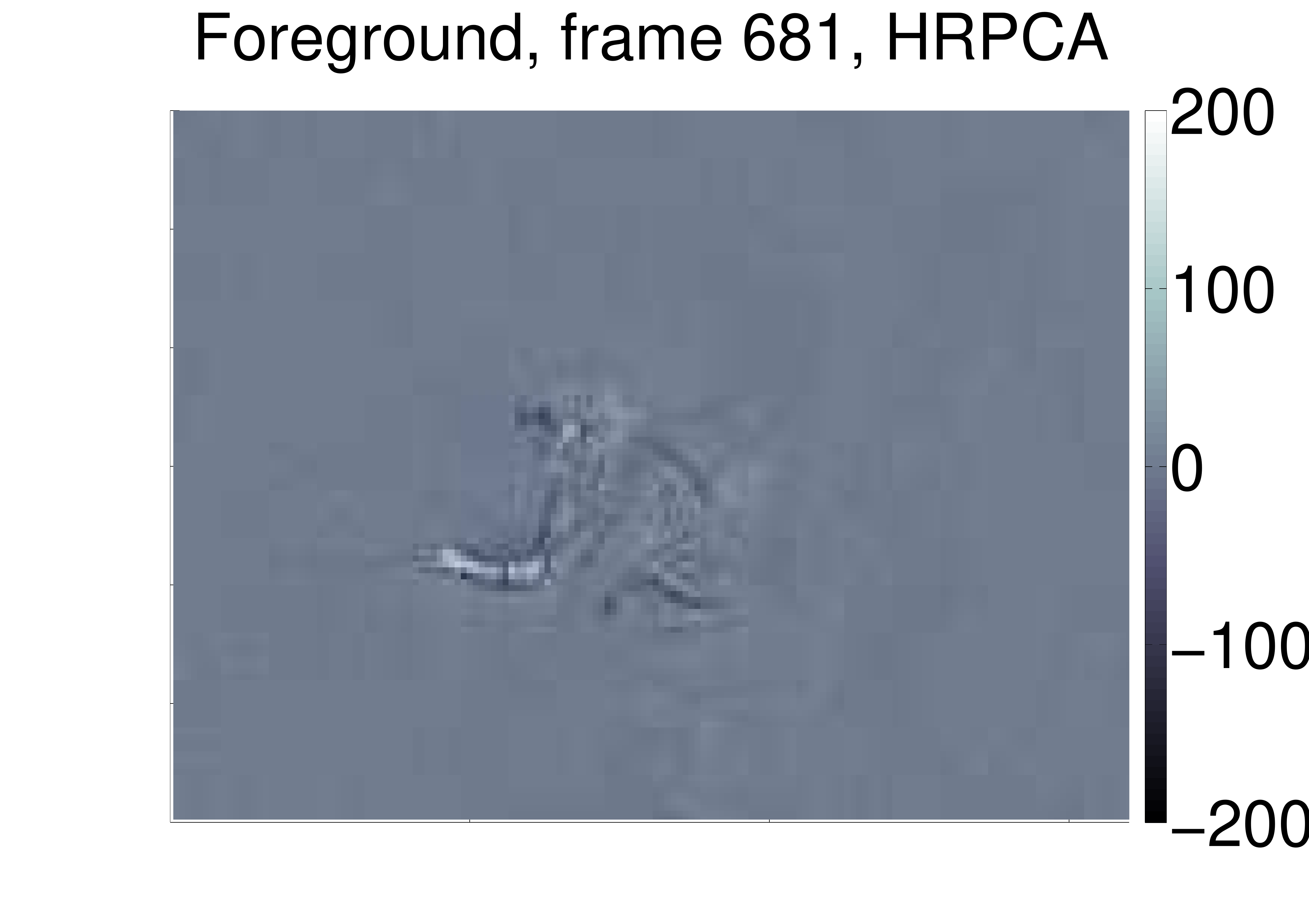} \\
\includegraphics[width=.25\columnwidth]{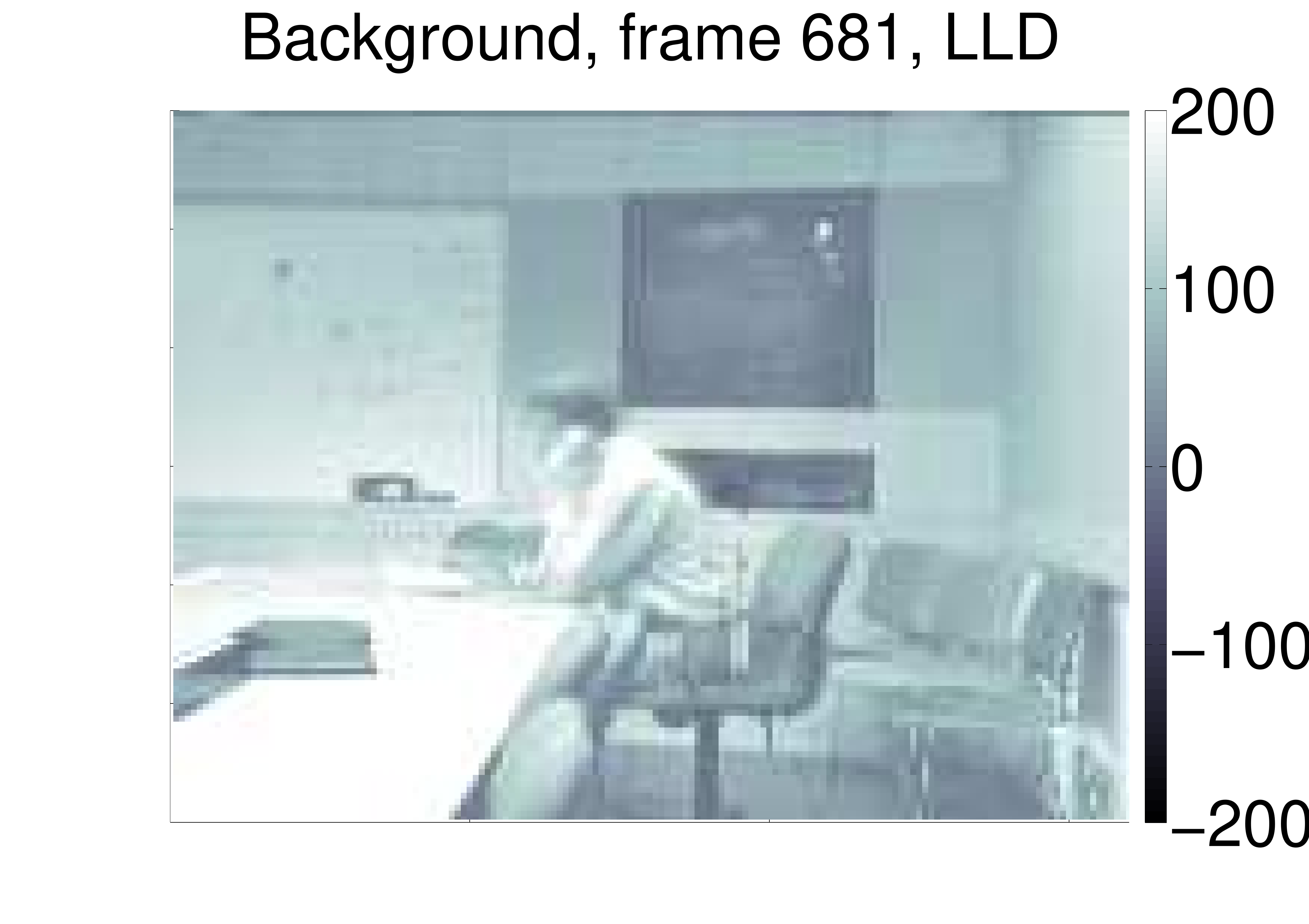} &
\includegraphics[width=.25\columnwidth]{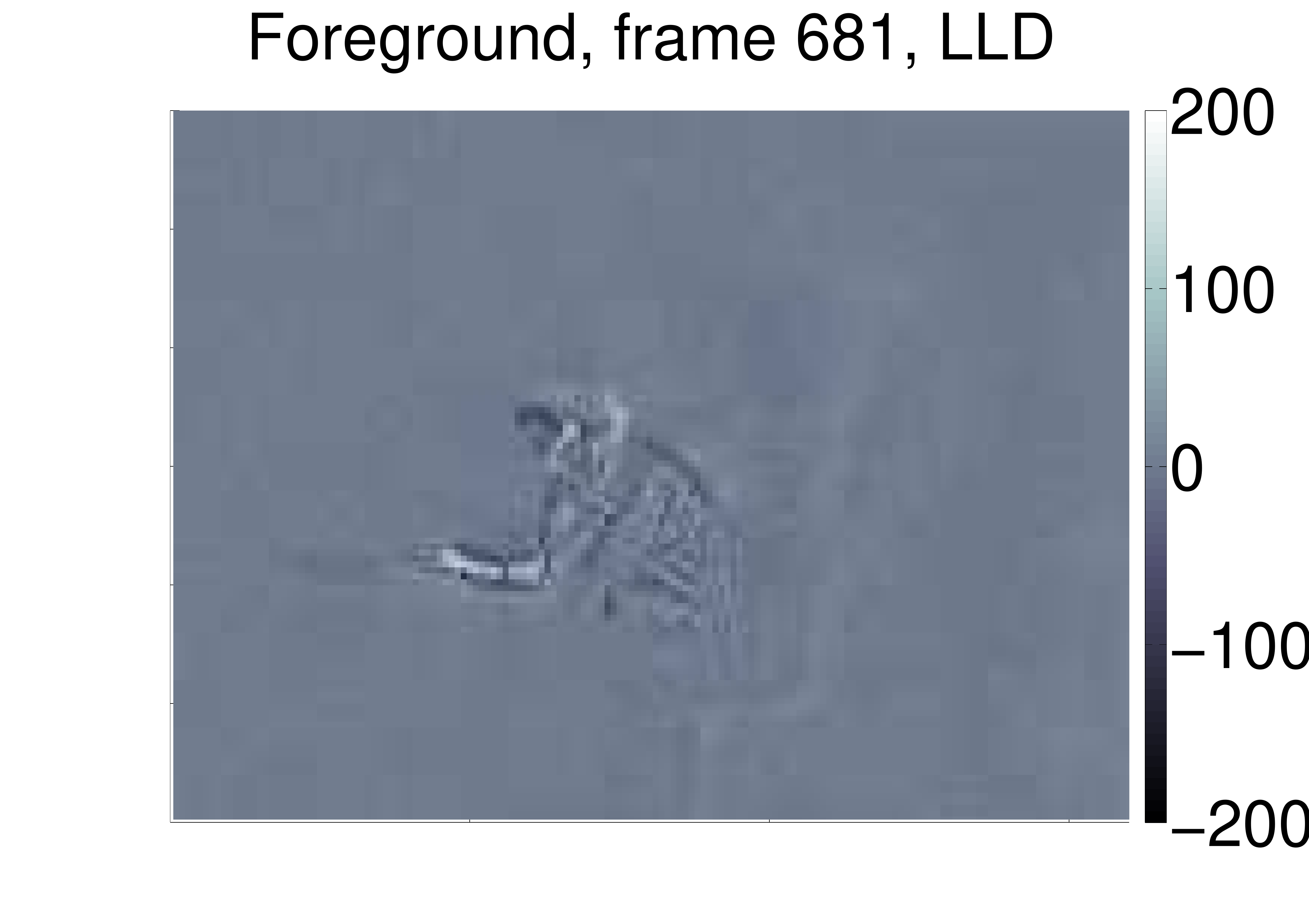} &
\includegraphics[width=.25\columnwidth]{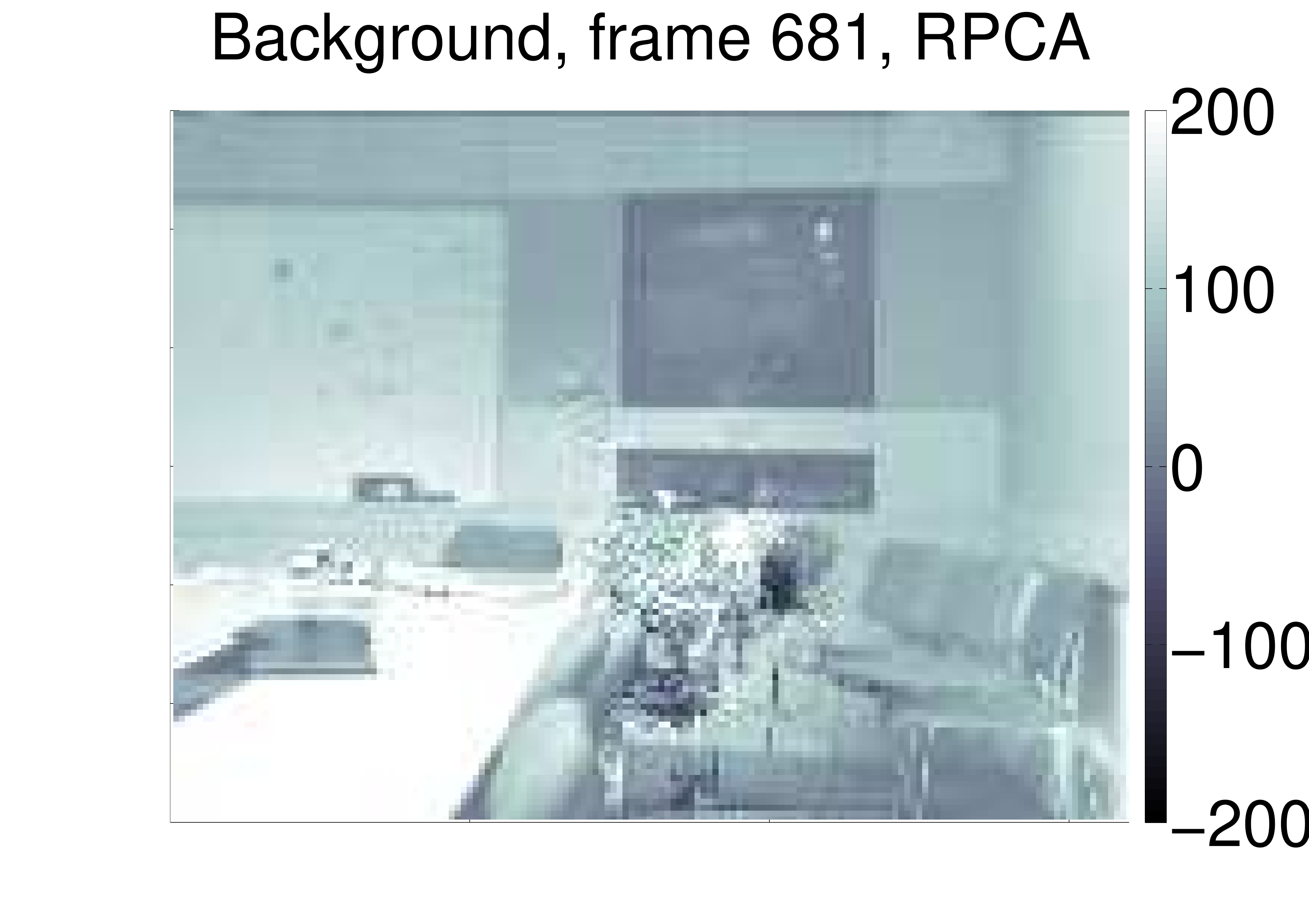} &
\includegraphics[width=.25\columnwidth]{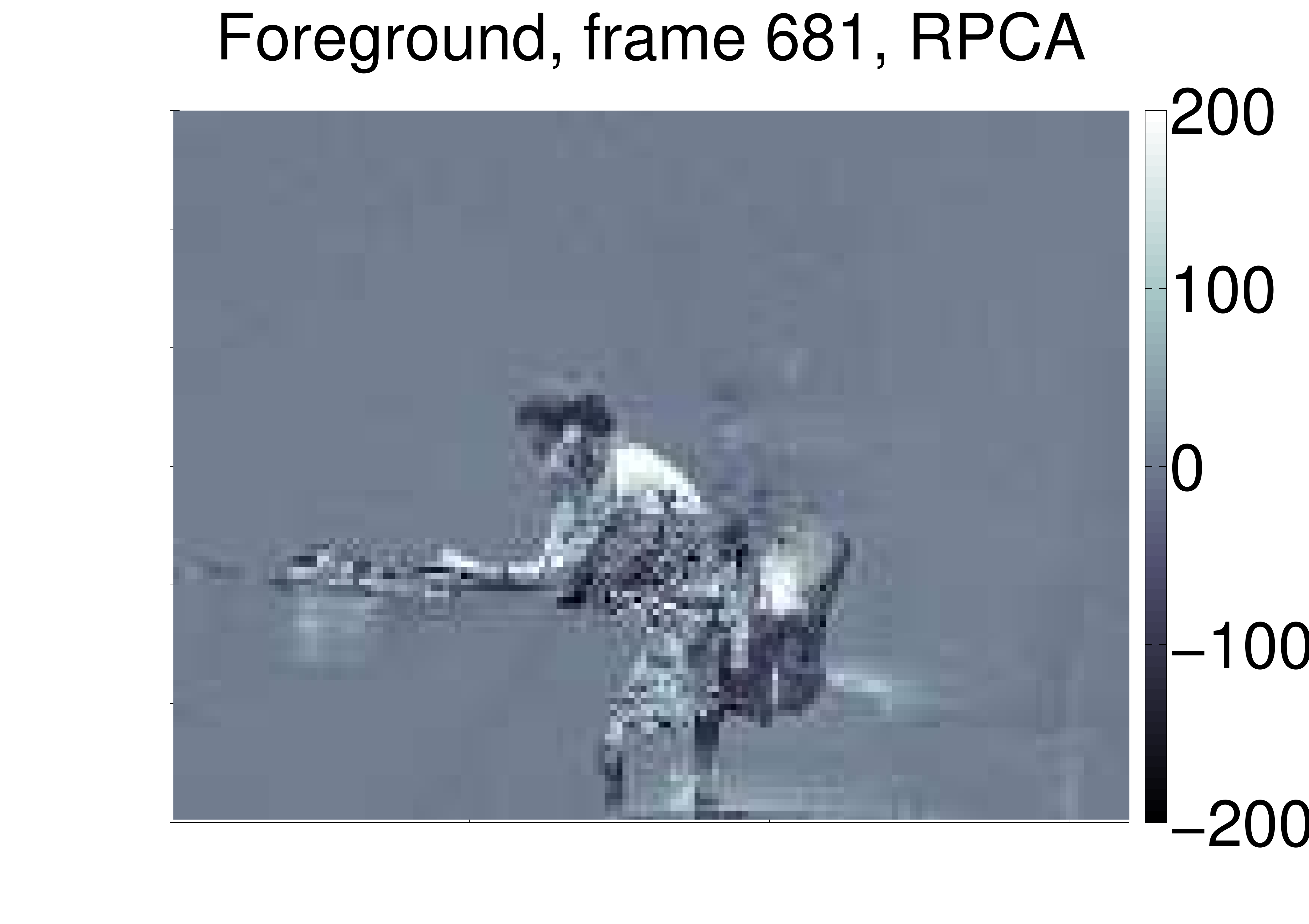}
\end{tabular}
\caption{Extracted background and foreground of frame 681 of the reduced moved object data set. The number of components is $k=10$ (scaled, compare to unscaled version in Fig.~\ref{fig:bfrmo4})
}
\label{fig:bfrmo3}
\end{figure}
\clearpage

\begin{figure}
\begin{tabular}{cccc}
\includegraphics[width=.25\columnwidth]{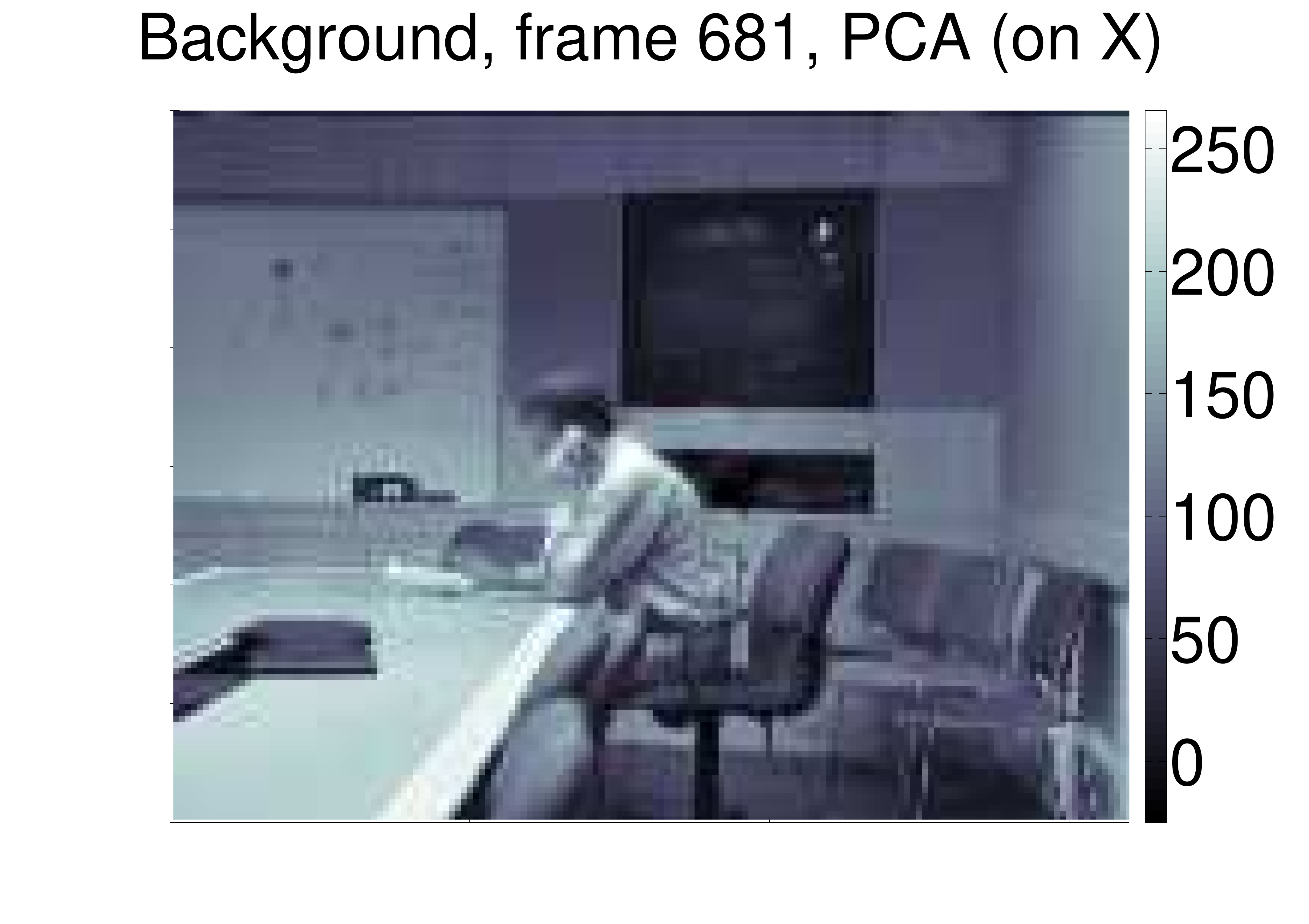} &
\includegraphics[width=.25\columnwidth]{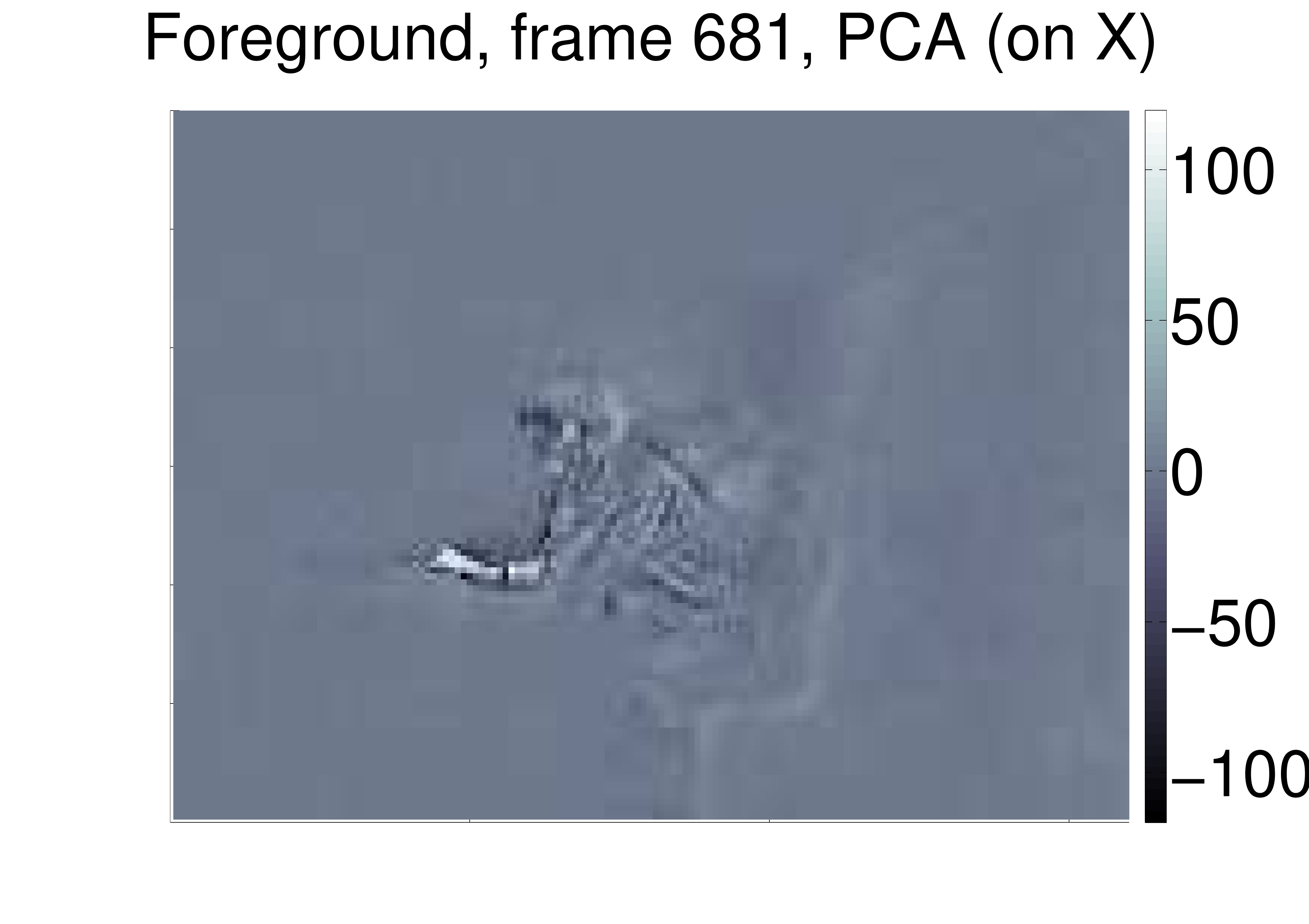} &
\includegraphics[width=.25\columnwidth]{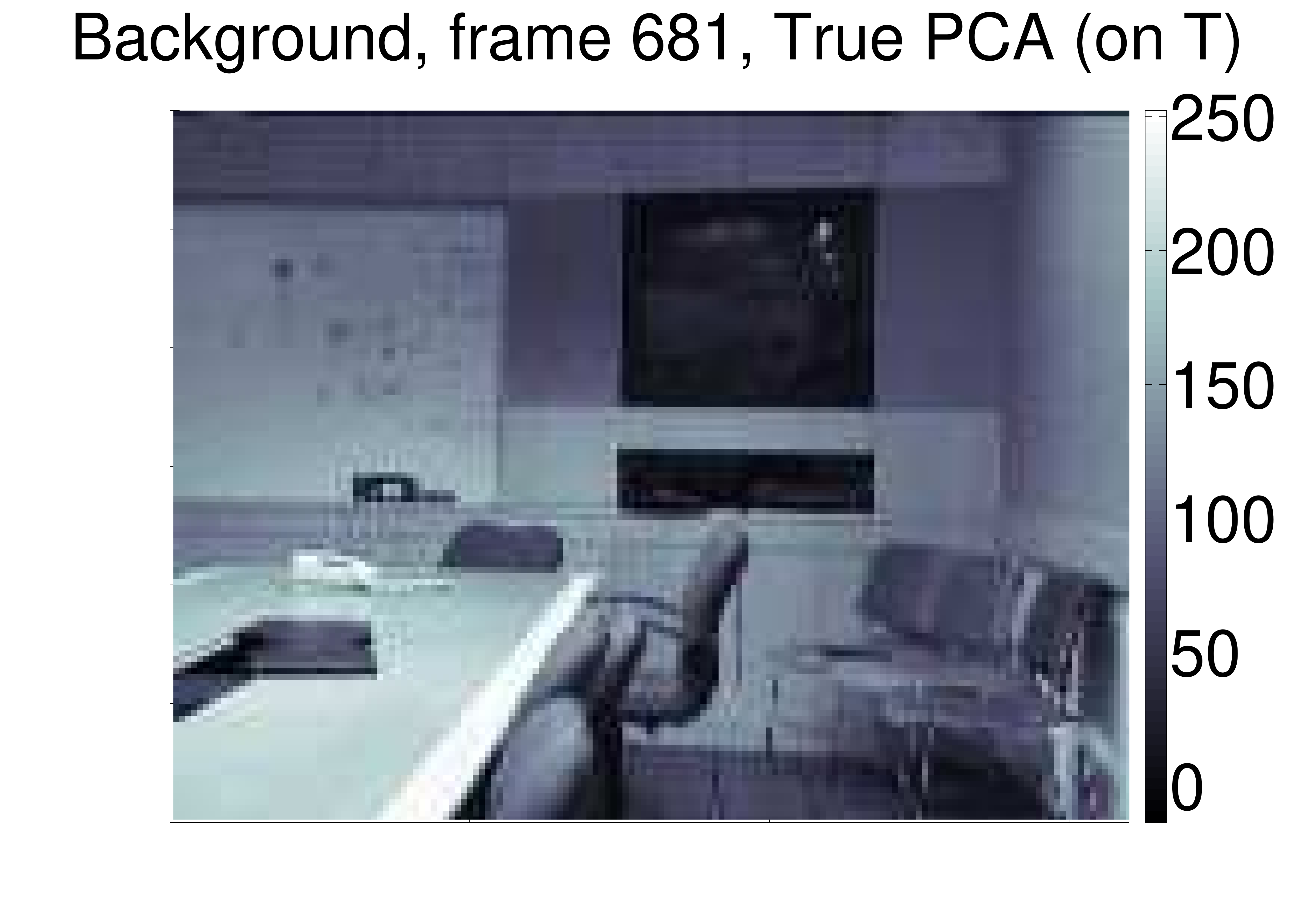} &
\includegraphics[width=.25\columnwidth]{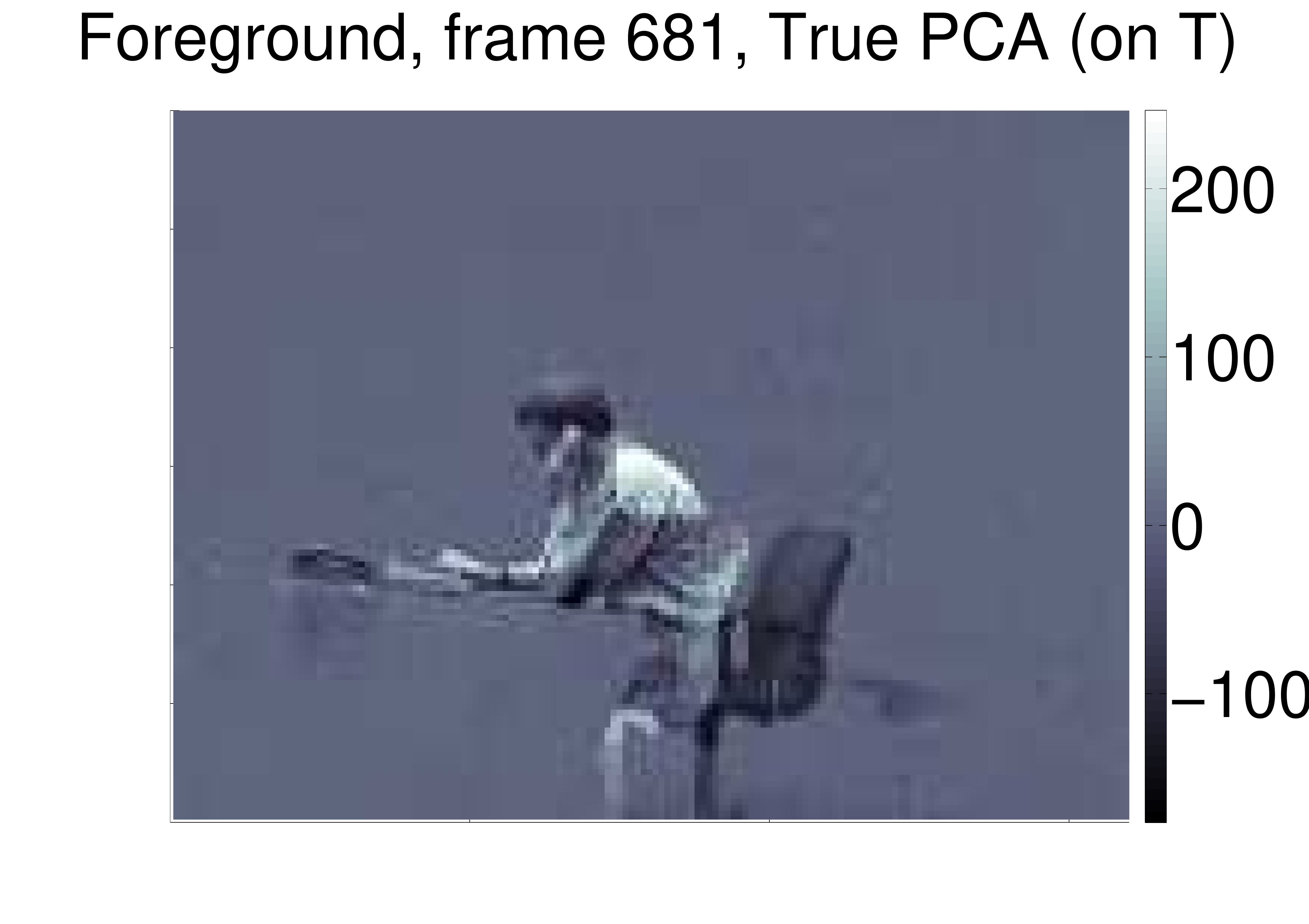} \\ \hline
\includegraphics[width=.25\columnwidth]{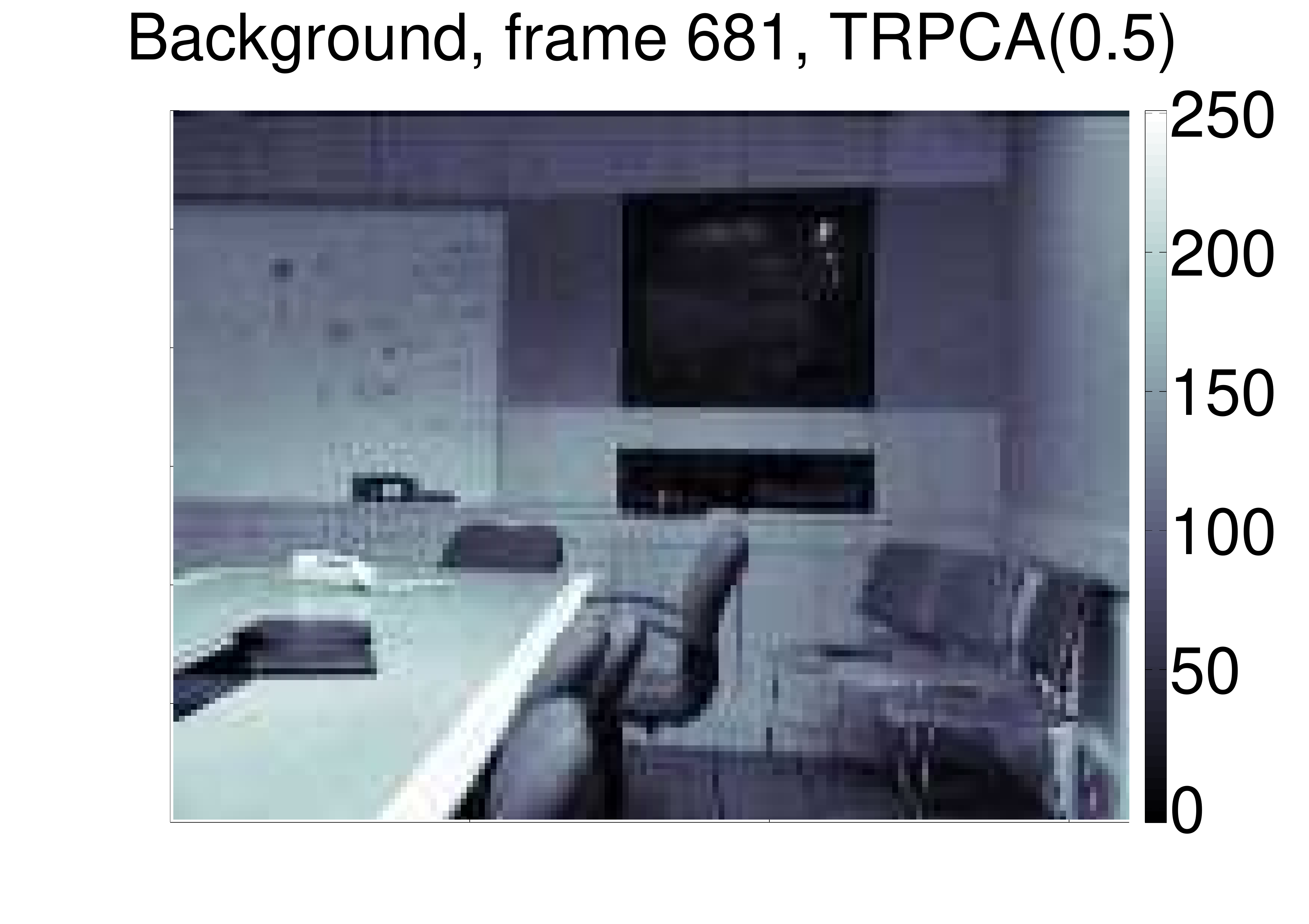} &
\includegraphics[width=.25\columnwidth]{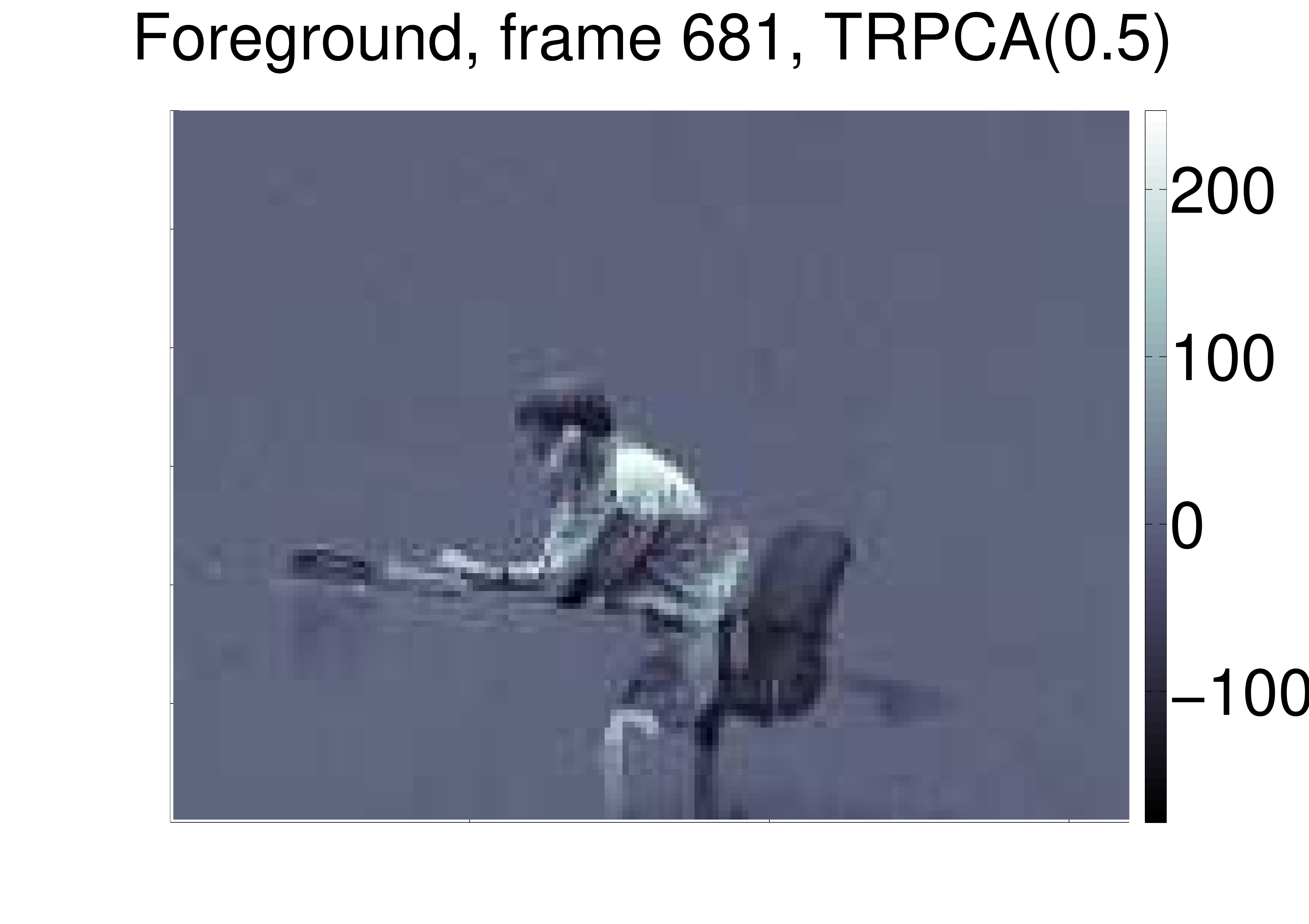} &
\includegraphics[width=.25\columnwidth]{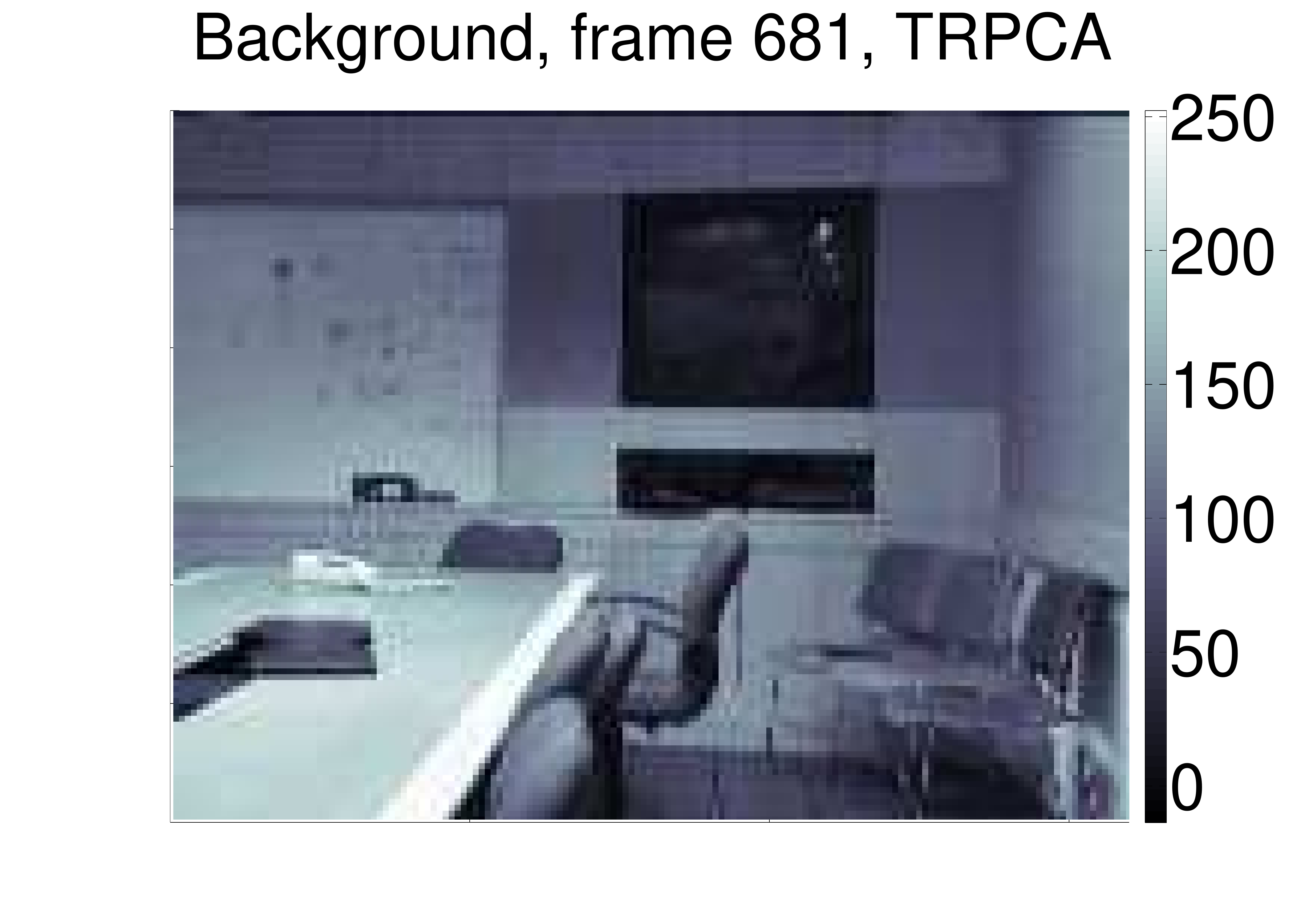} &
\includegraphics[width=.25\columnwidth]{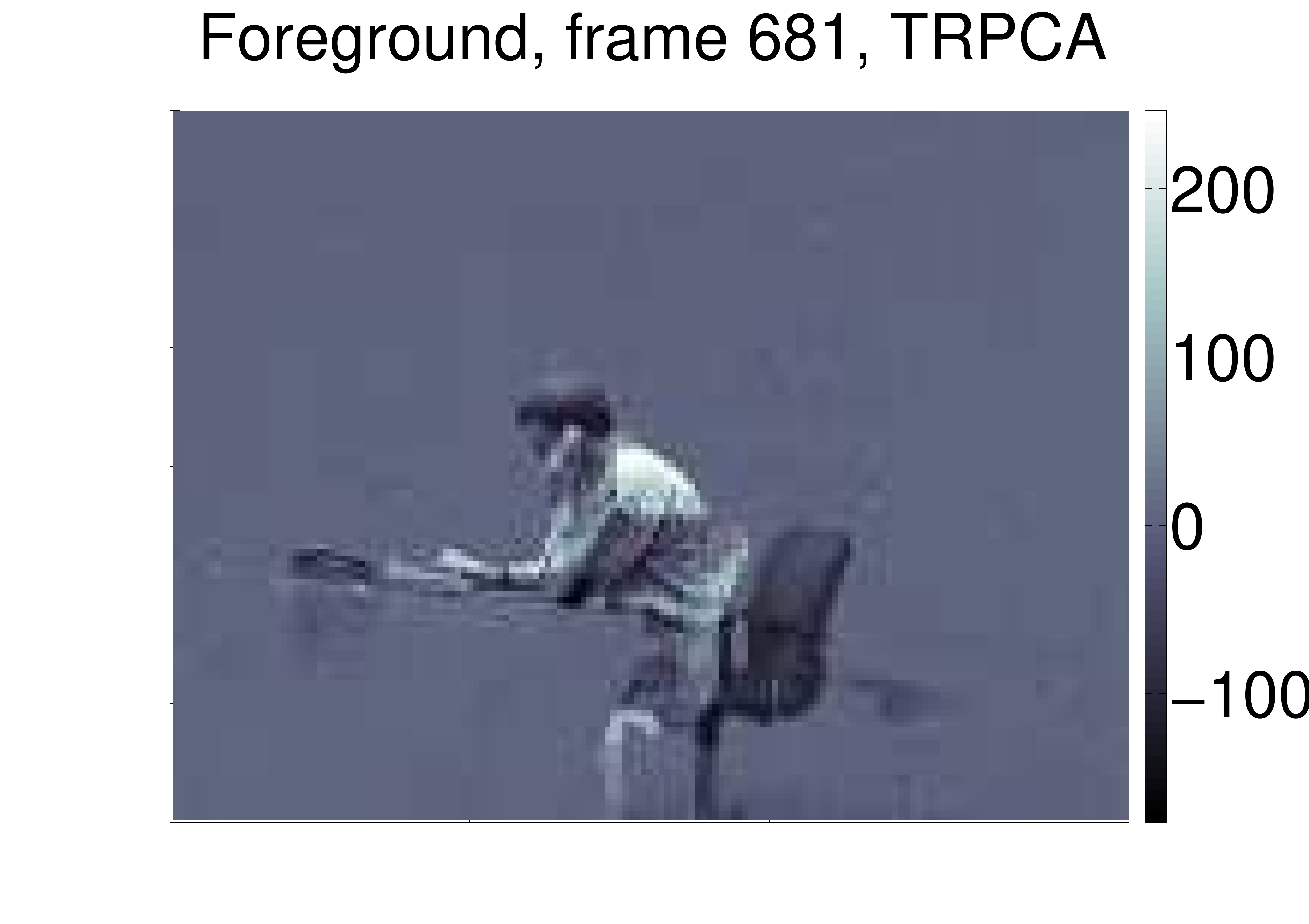} \\
\includegraphics[width=.25\columnwidth]{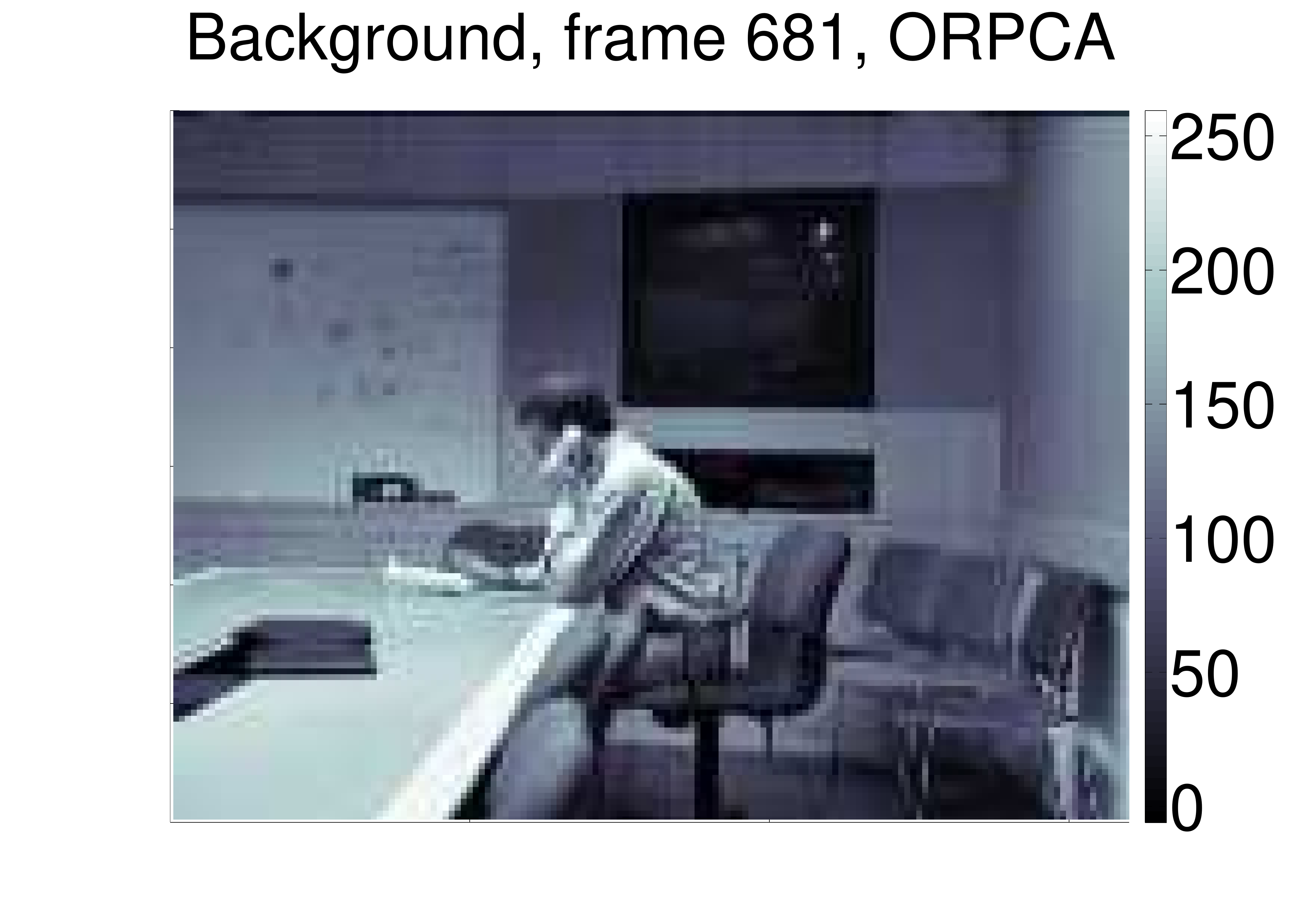} &
\includegraphics[width=.25\columnwidth]{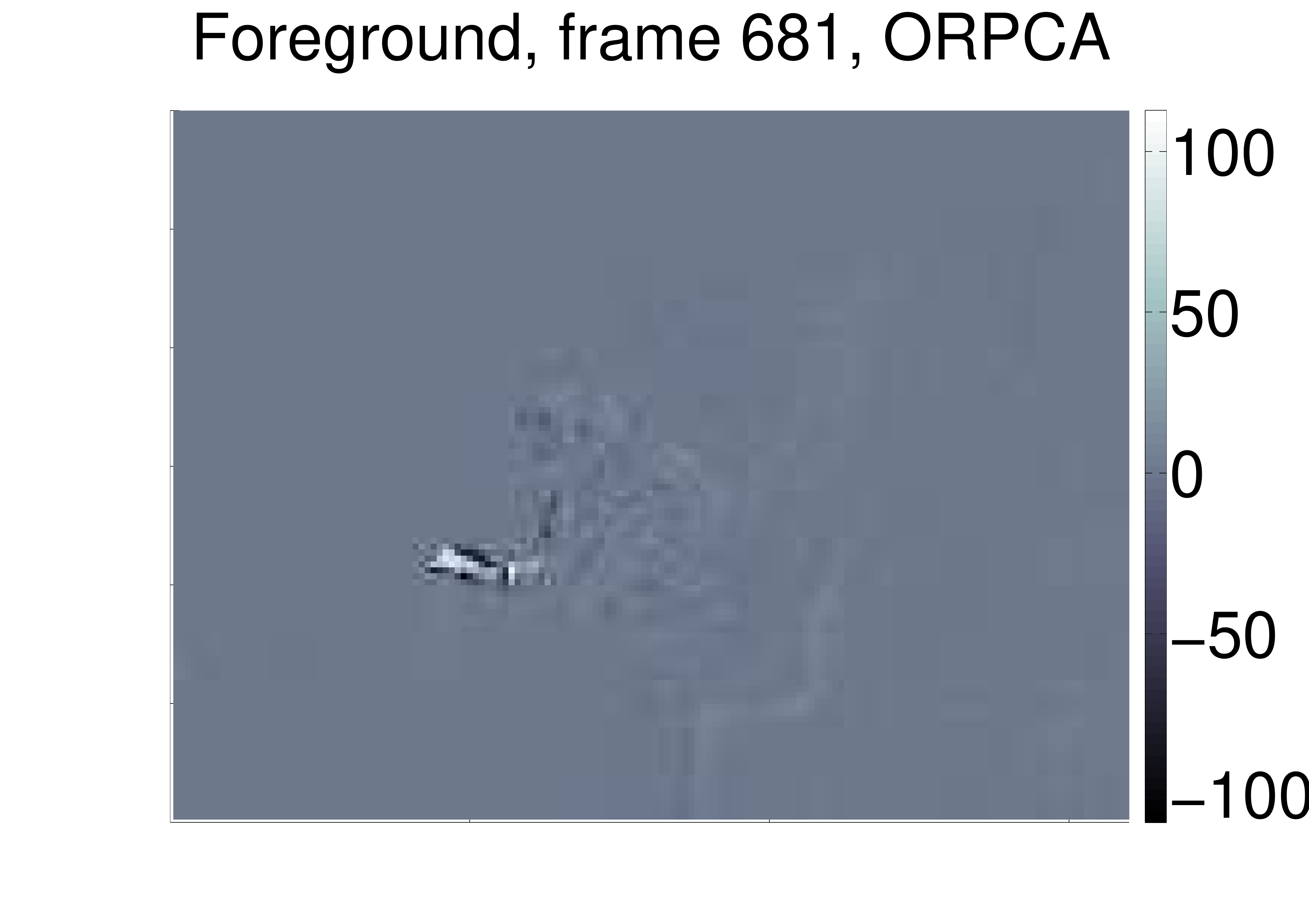} &
\includegraphics[width=.25\columnwidth]{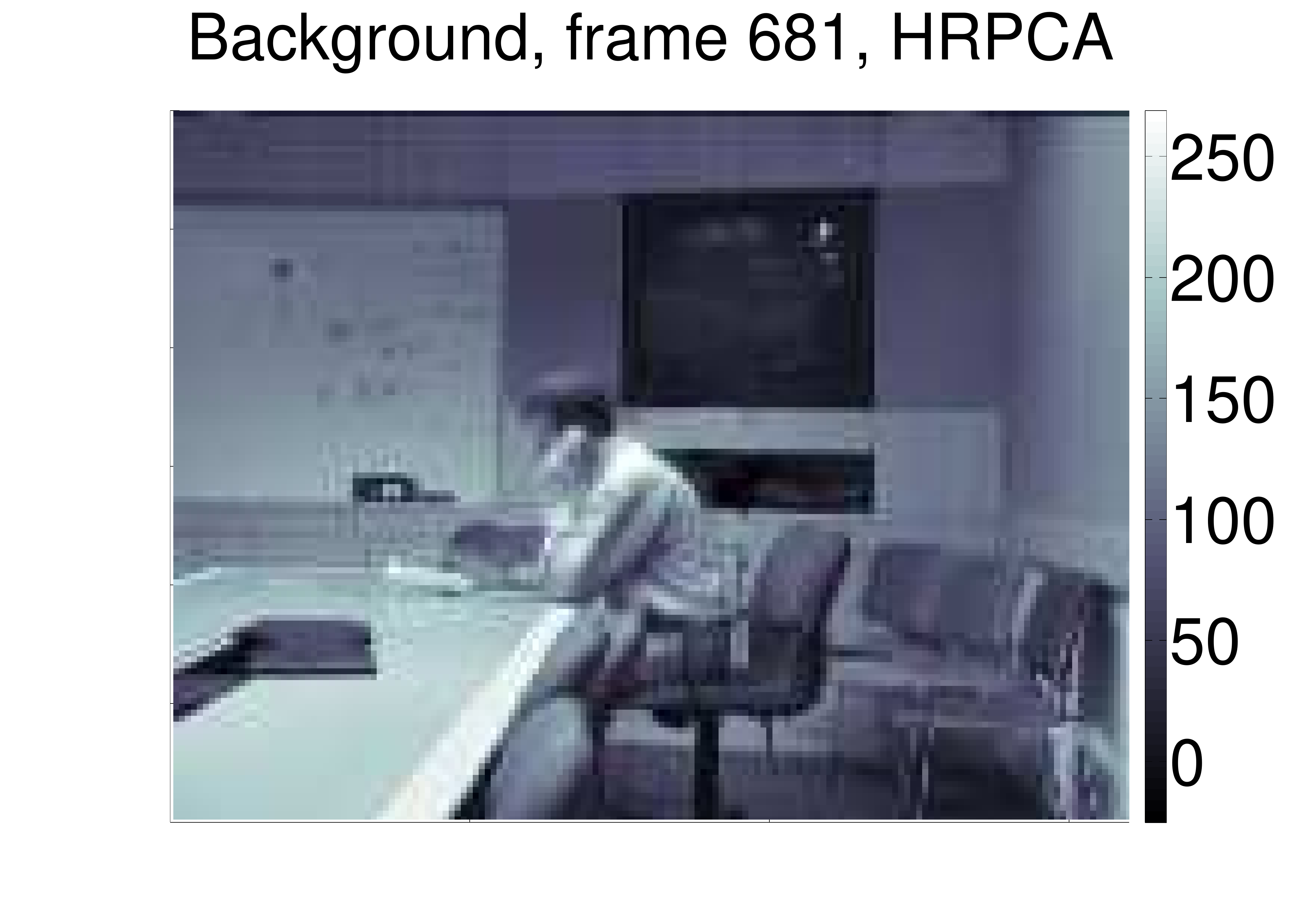} &
\includegraphics[width=.25\columnwidth]{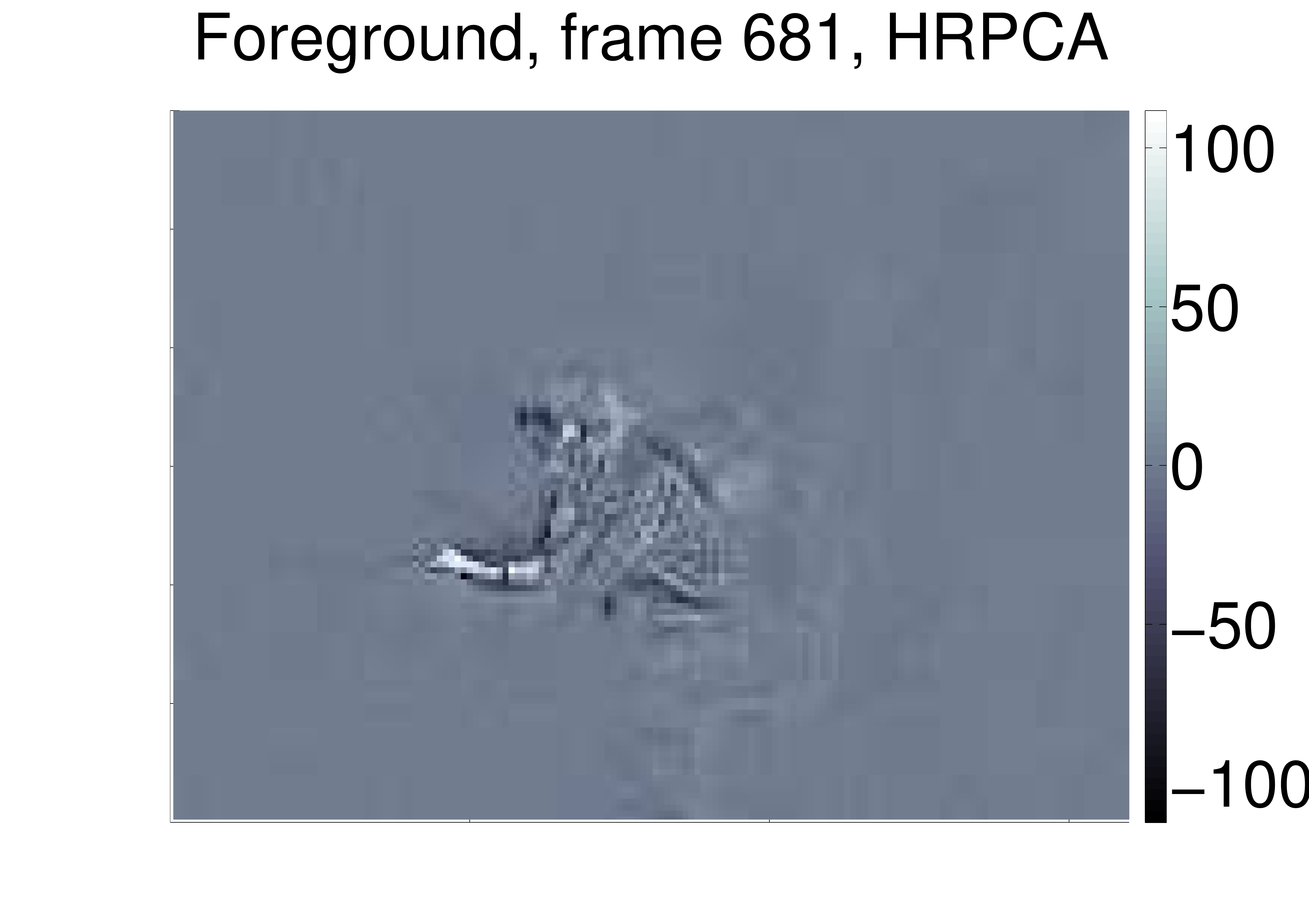} \\
\includegraphics[width=.25\columnwidth]{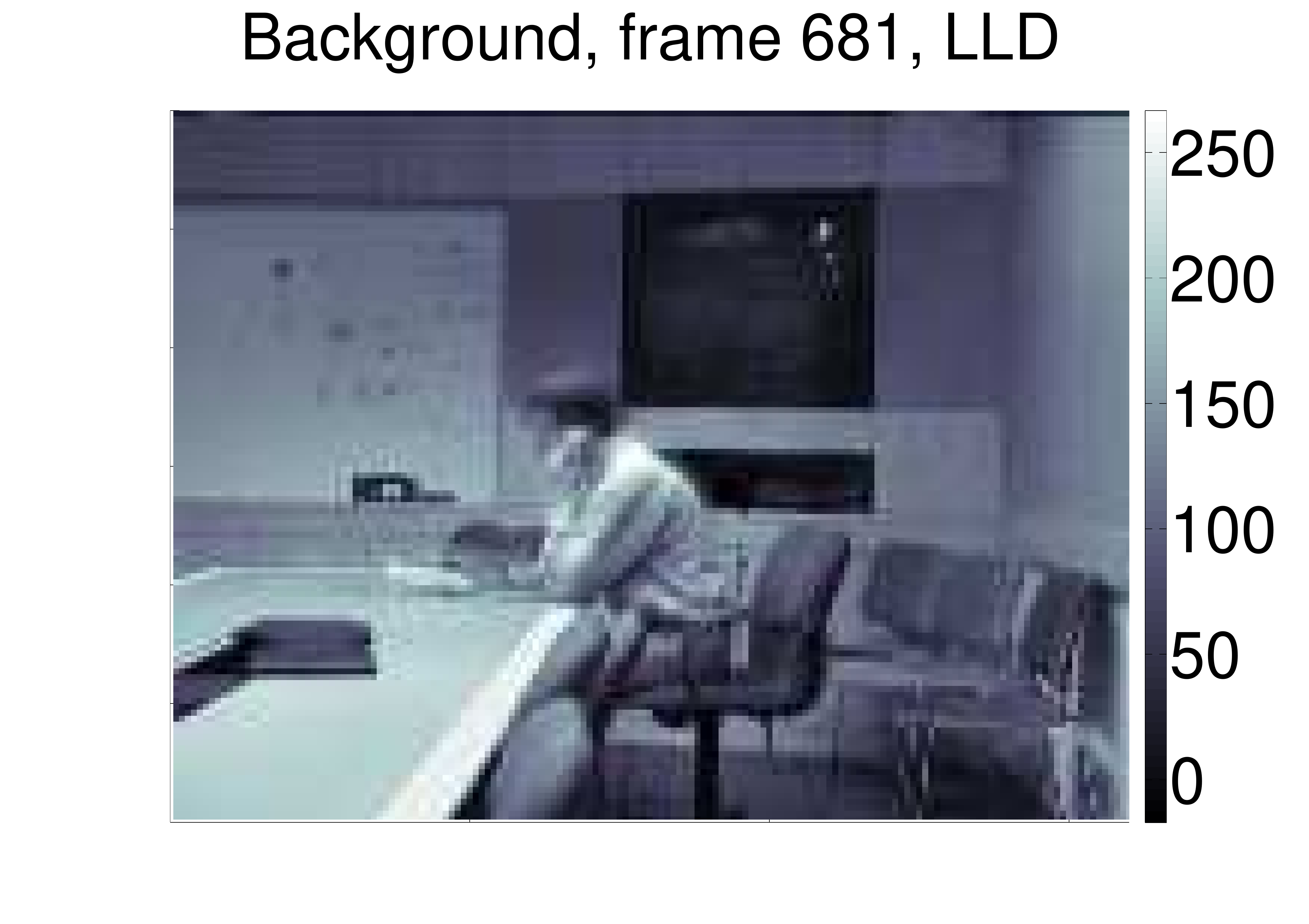} &
\includegraphics[width=.25\columnwidth]{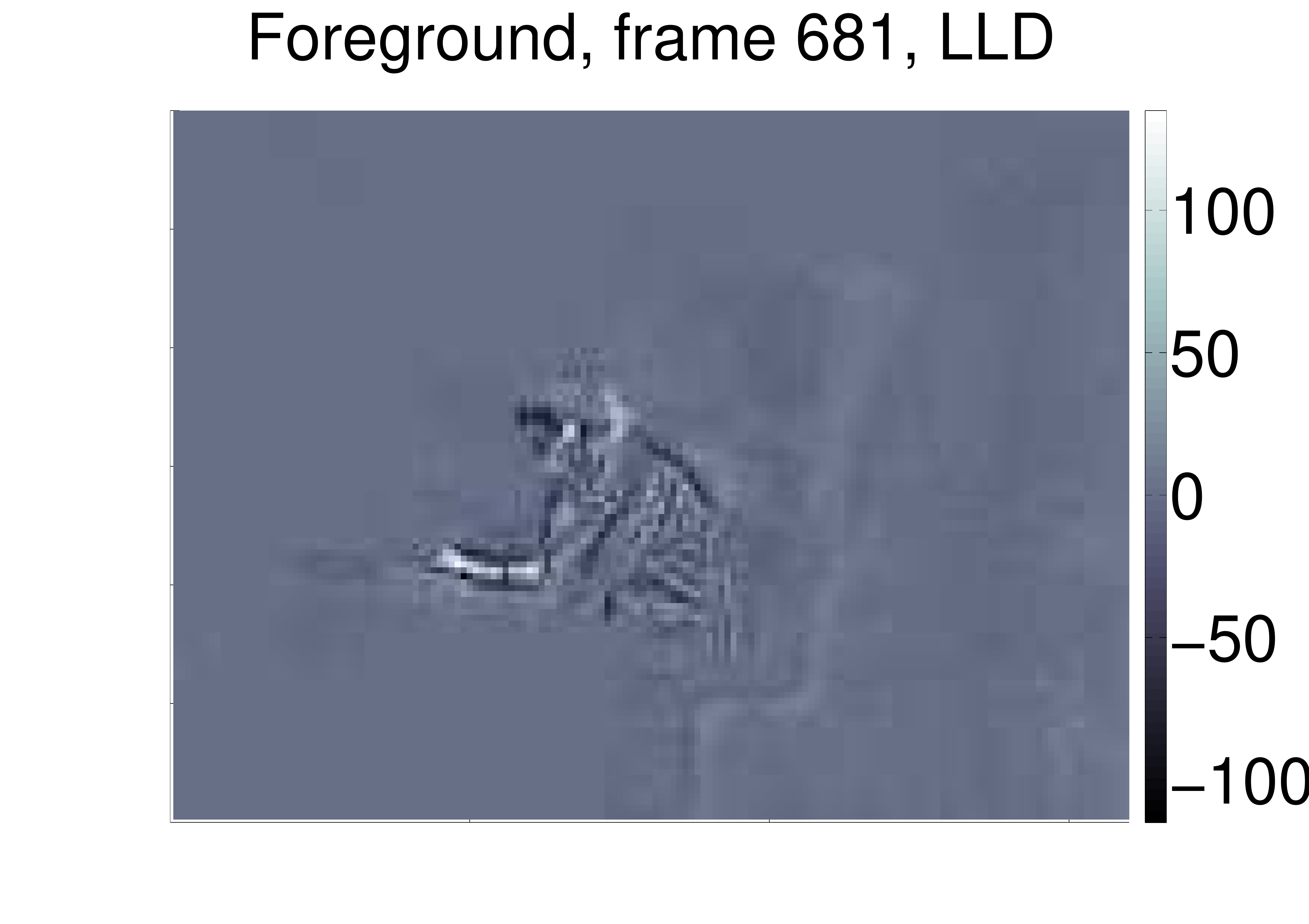} &
\includegraphics[width=.25\columnwidth]{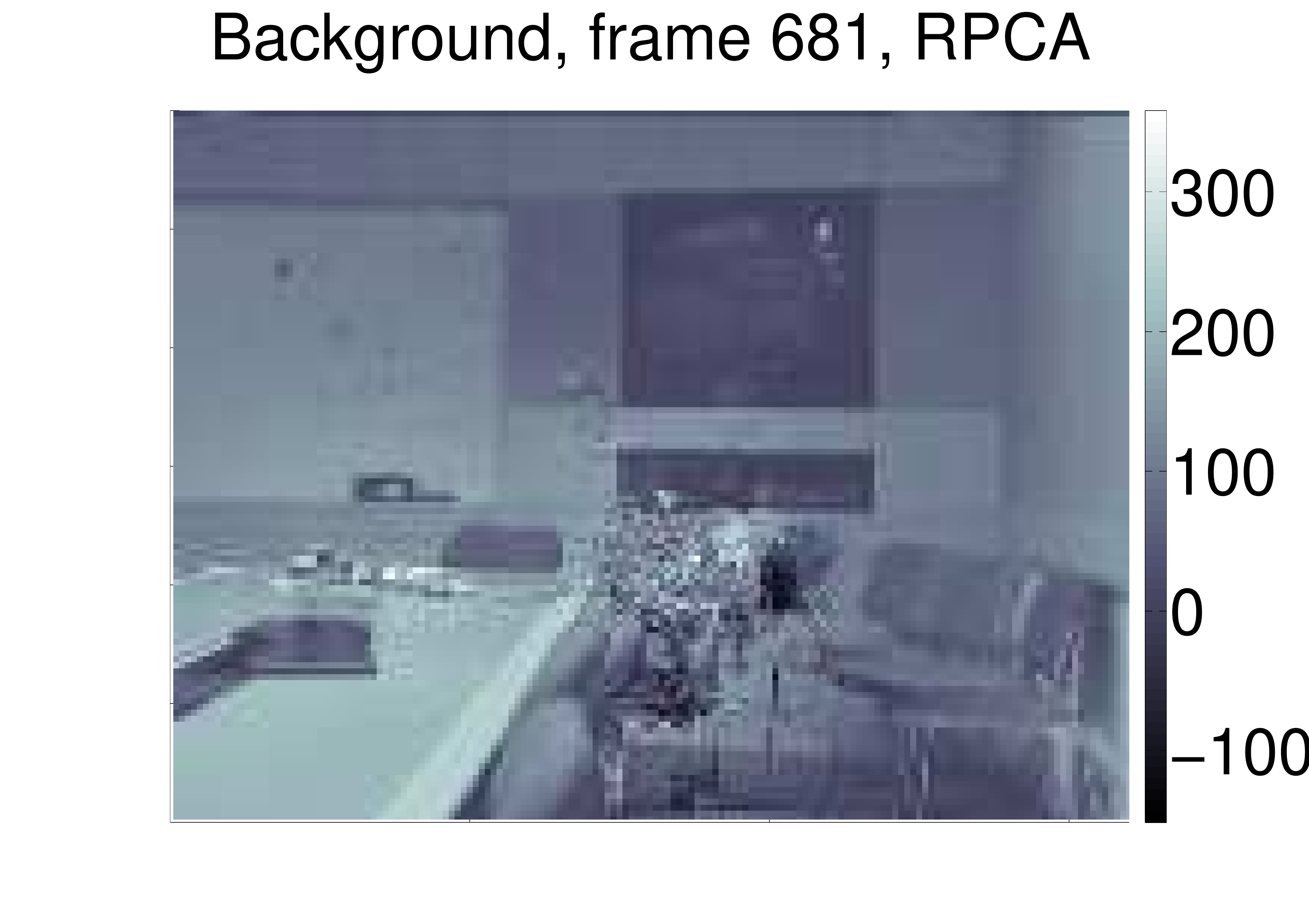} &
\includegraphics[width=.25\columnwidth]{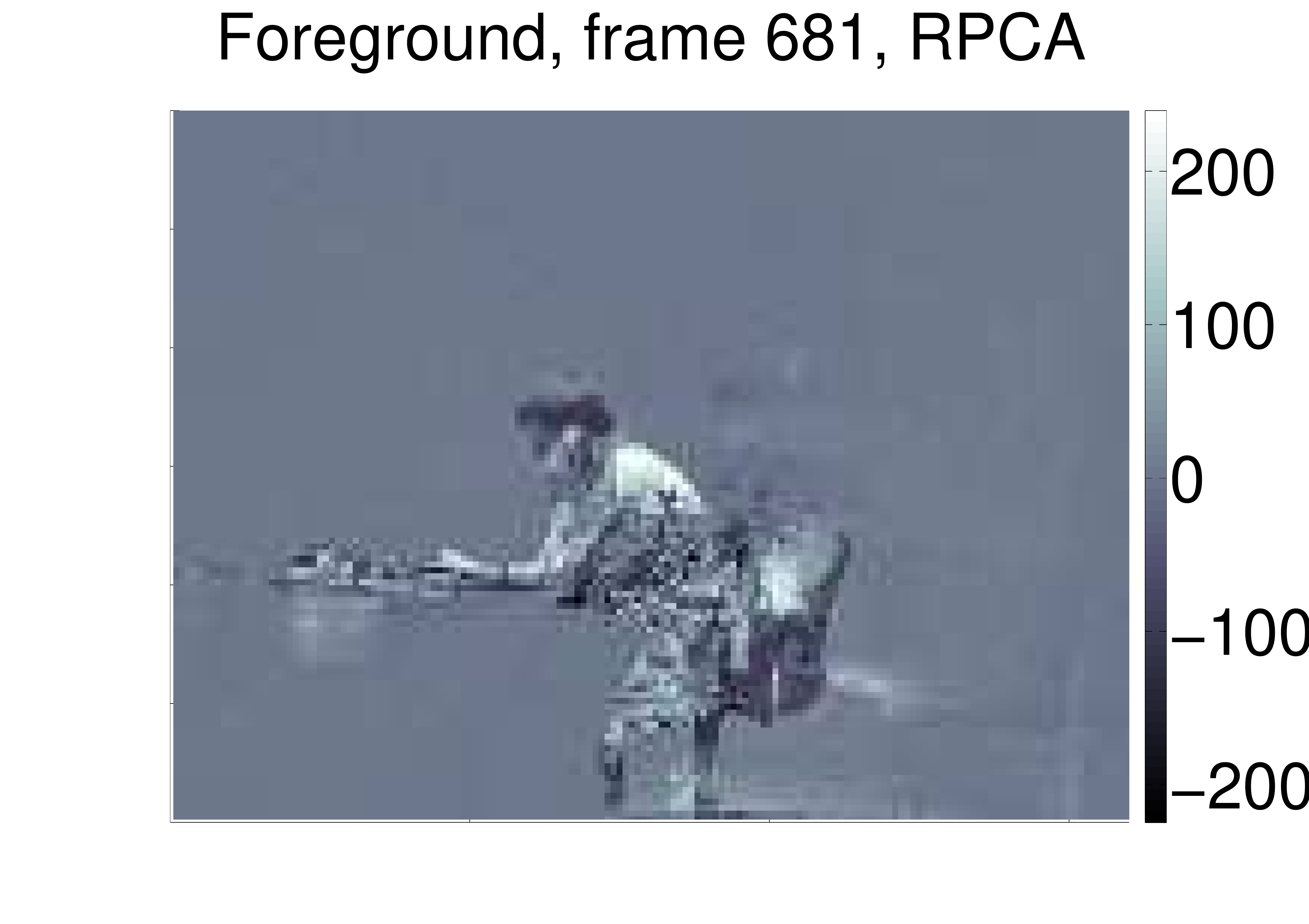} 
\end{tabular}
\caption{Extracted background and foreground of frame 681 of the reduced moved object data set. The number of components is $k=10$ (unscaled, compare to scaled version in Fig.~\ref{fig:bfrmo3})
}
\label{fig:bfrmo4}
\end{figure}
\clearpage

\begin{figure}
\begin{tabular}{cccc}
\includegraphics[width=.25\columnwidth]{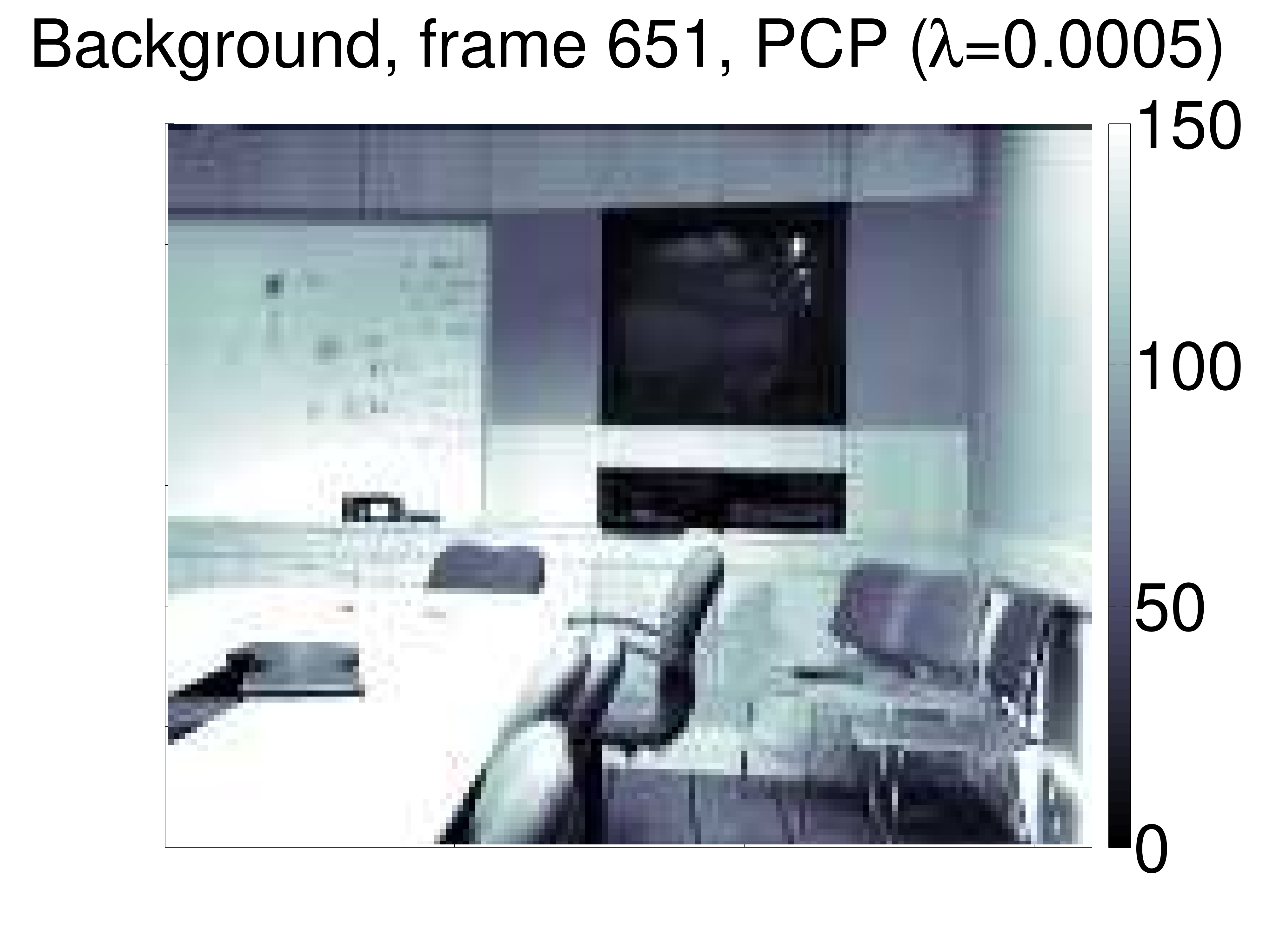} &
\includegraphics[width=.25\columnwidth]{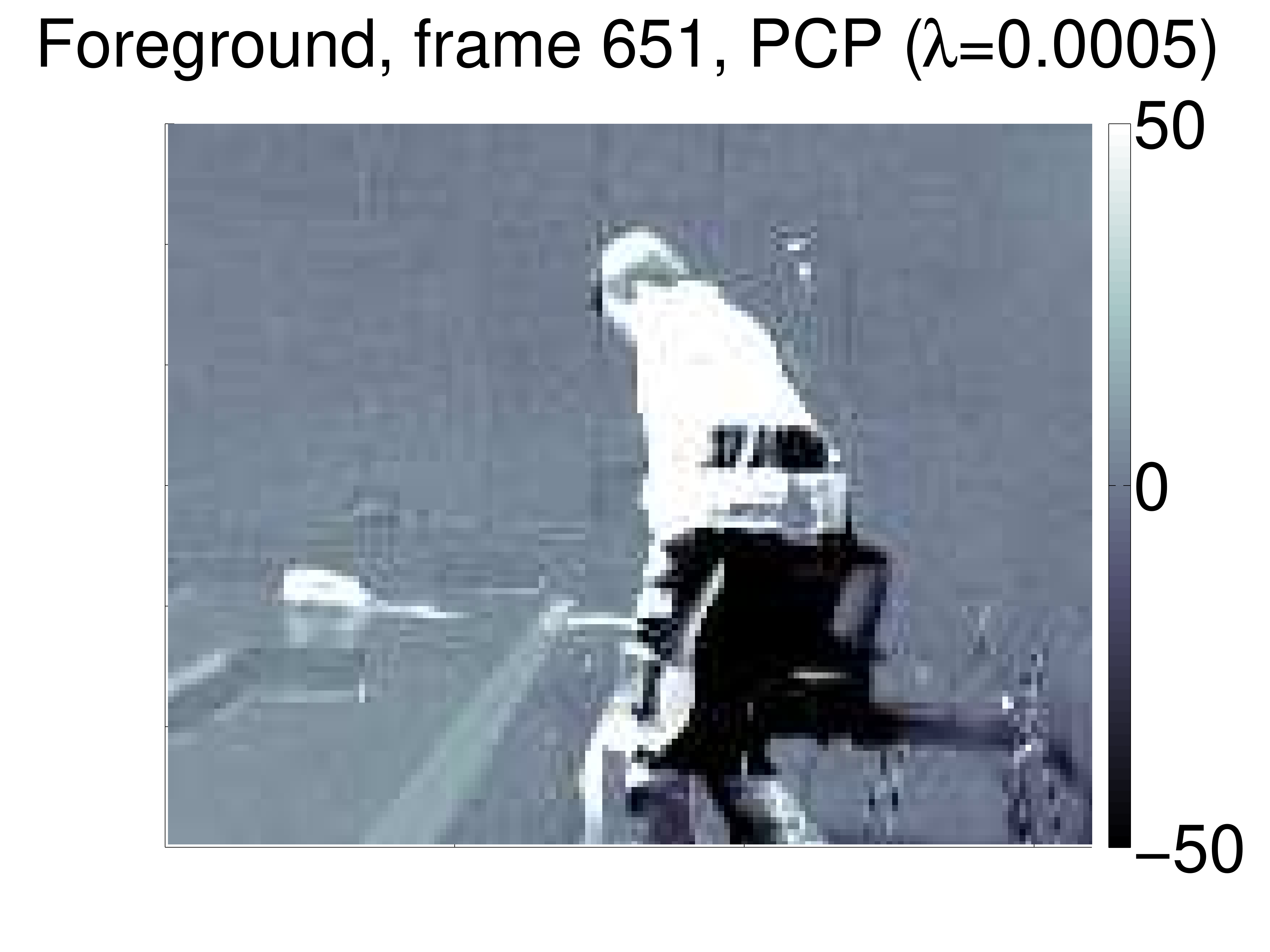} &
\includegraphics[width=.25\columnwidth]{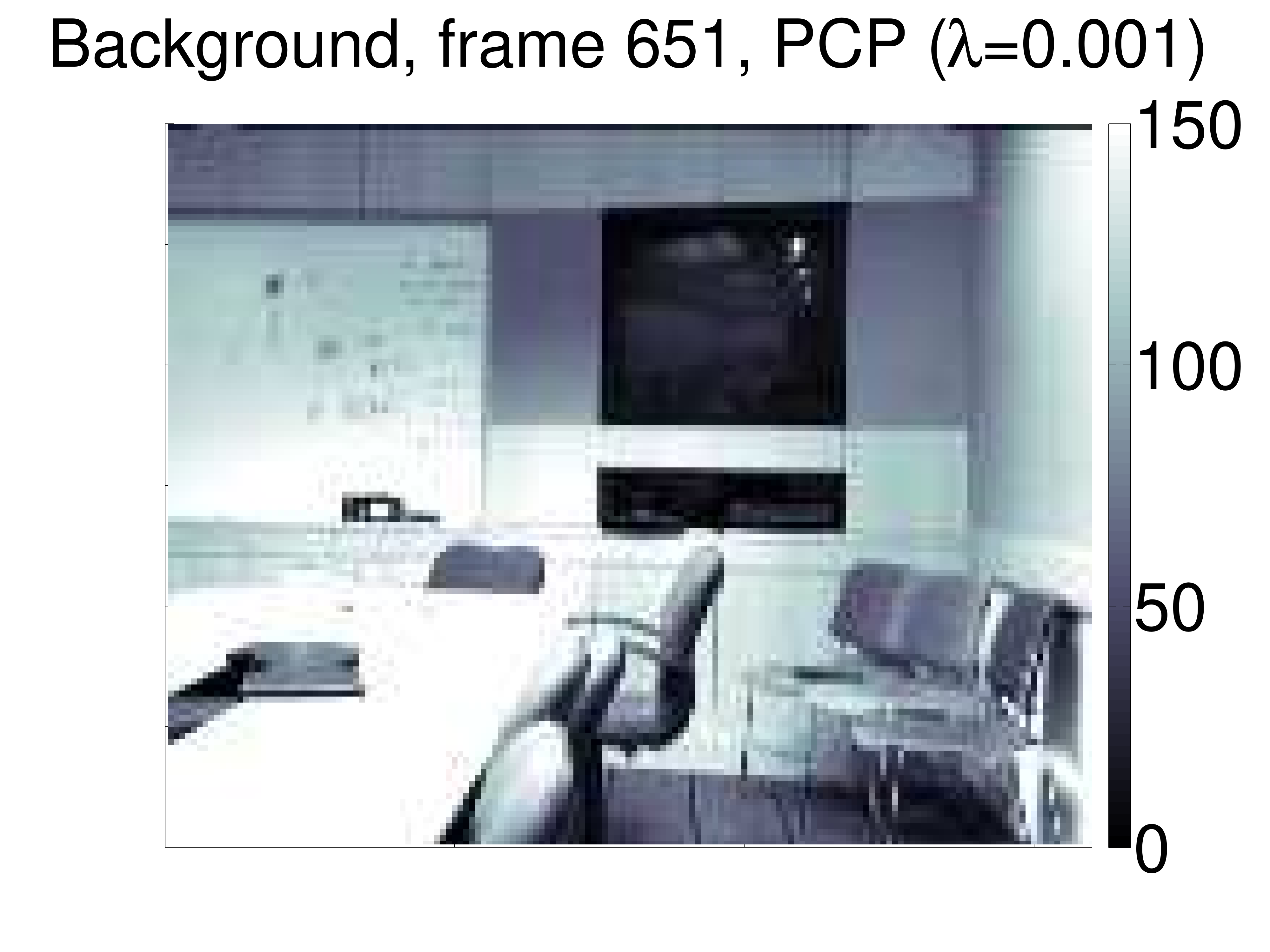} &
\includegraphics[width=.25\columnwidth]{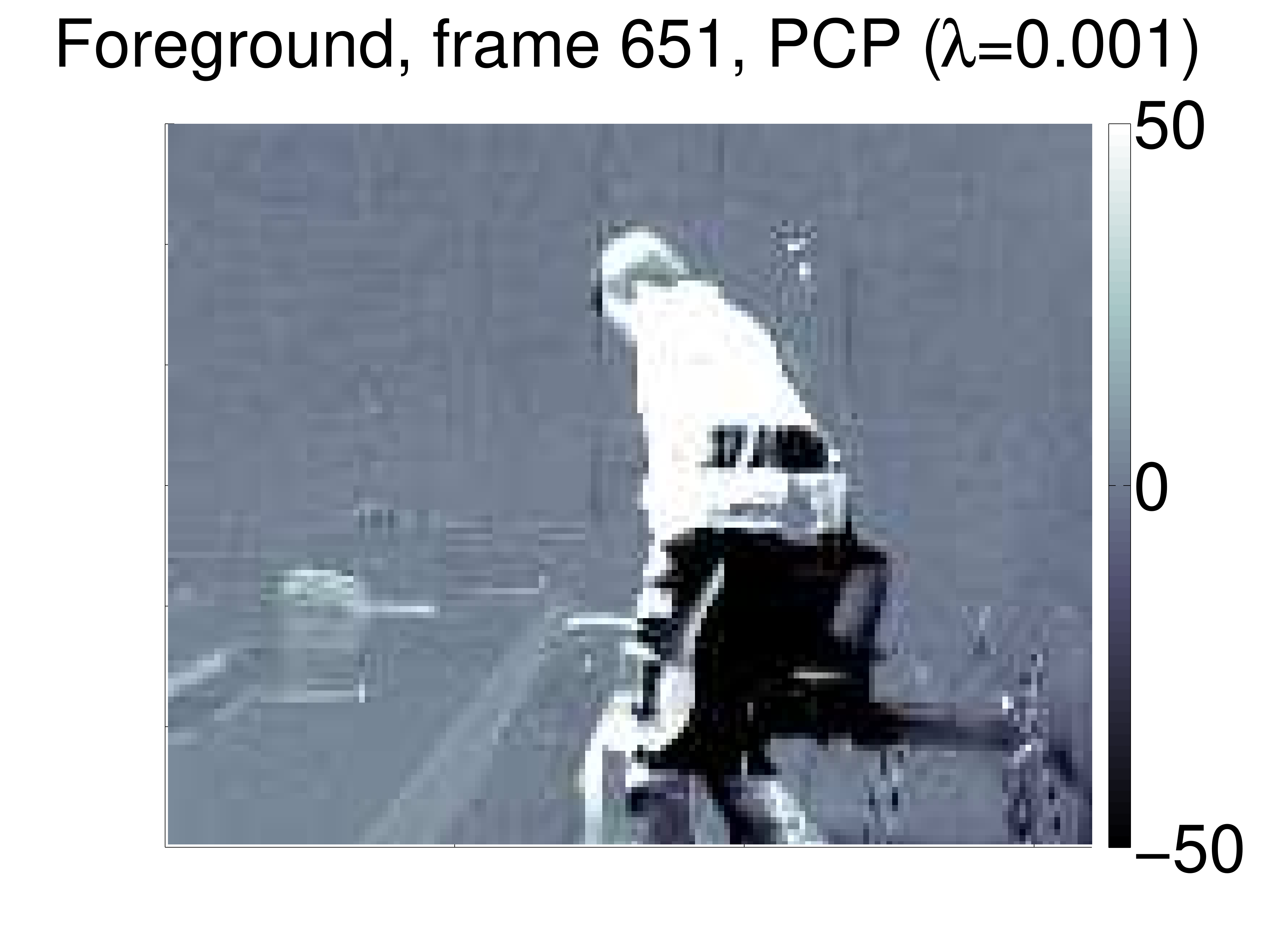} \\
\includegraphics[width=.25\columnwidth]{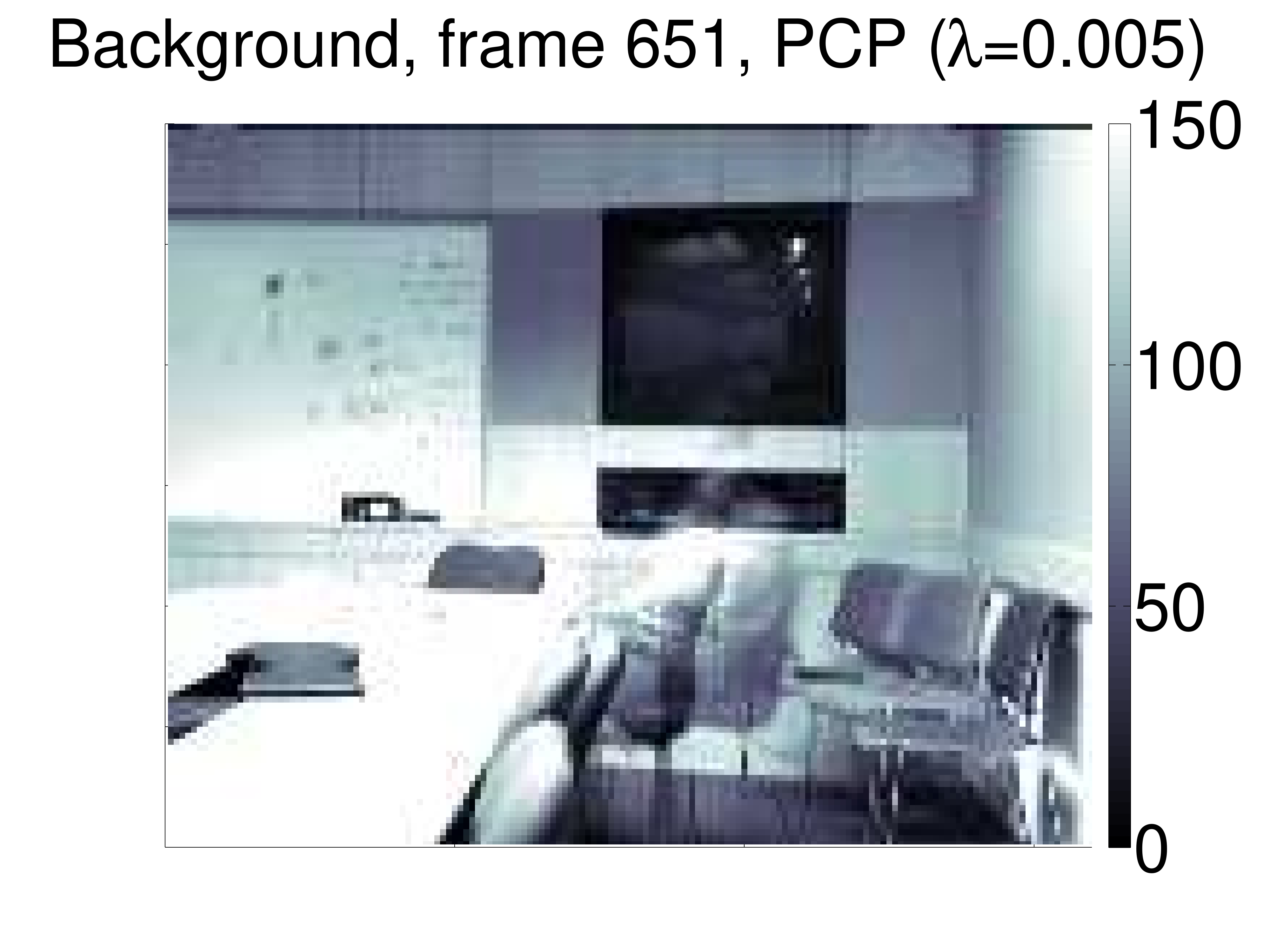} &
\includegraphics[width=.25\columnwidth]{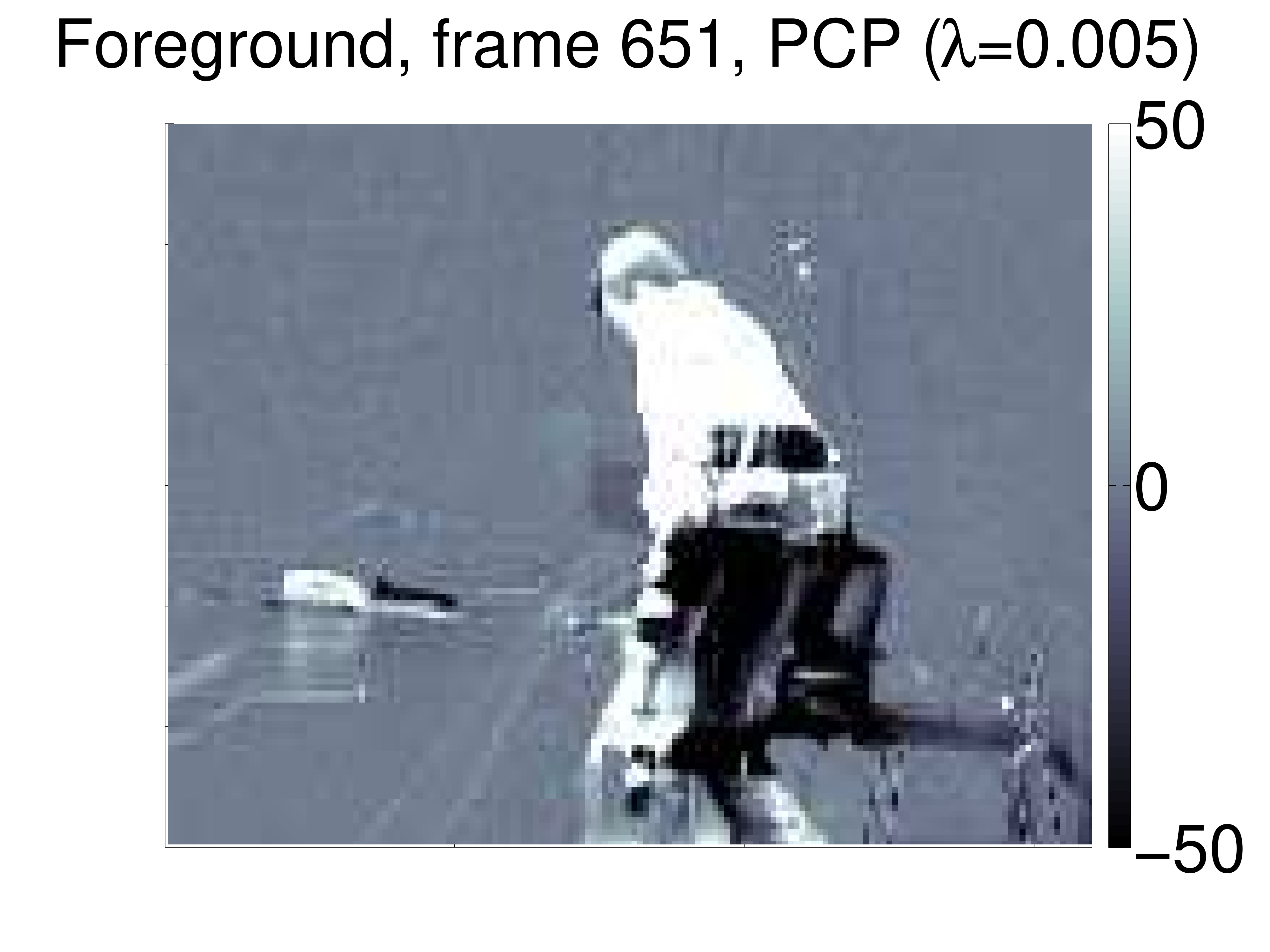} &
\includegraphics[width=.25\columnwidth]{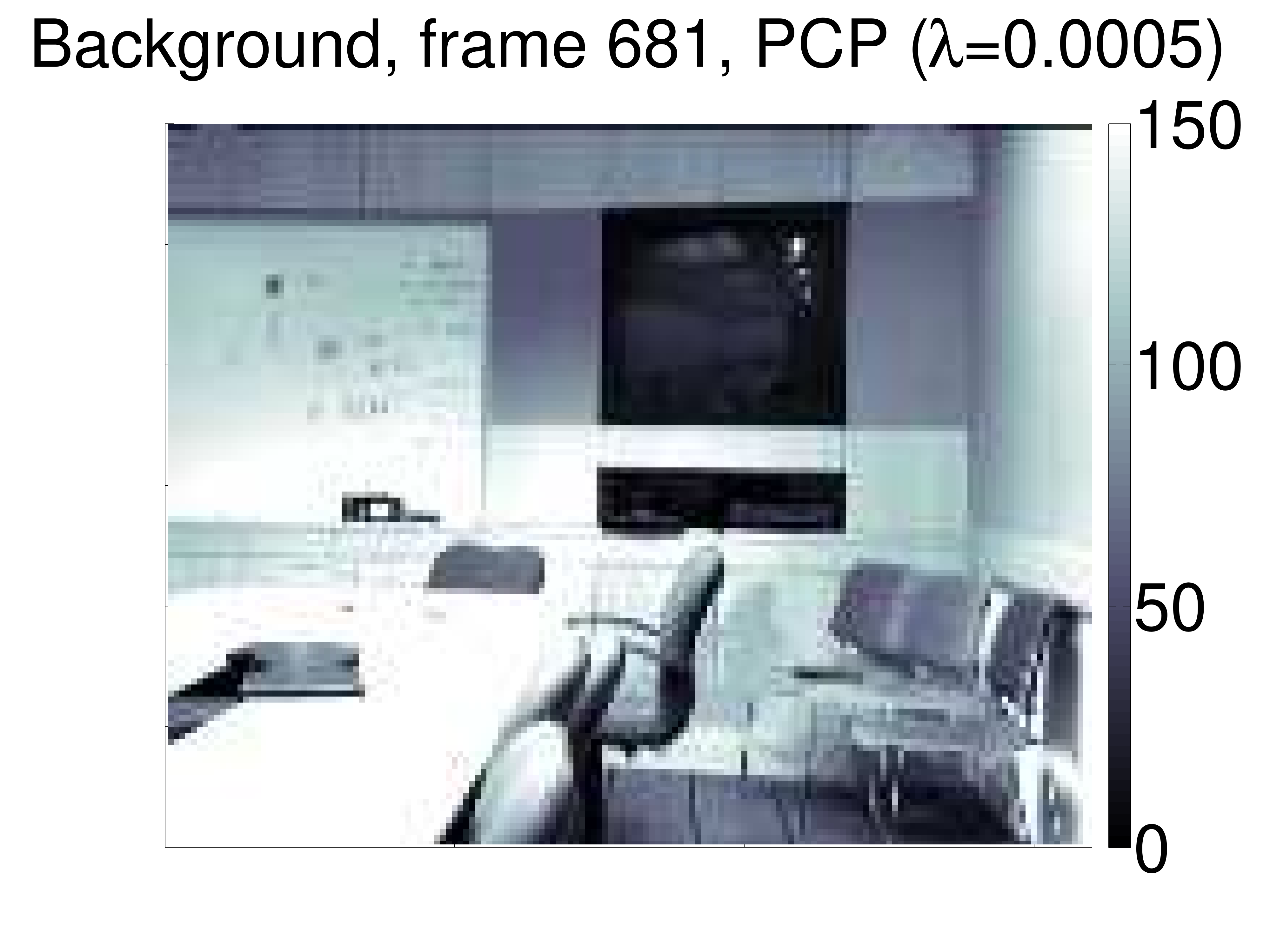}&
\includegraphics[width=.25\columnwidth]{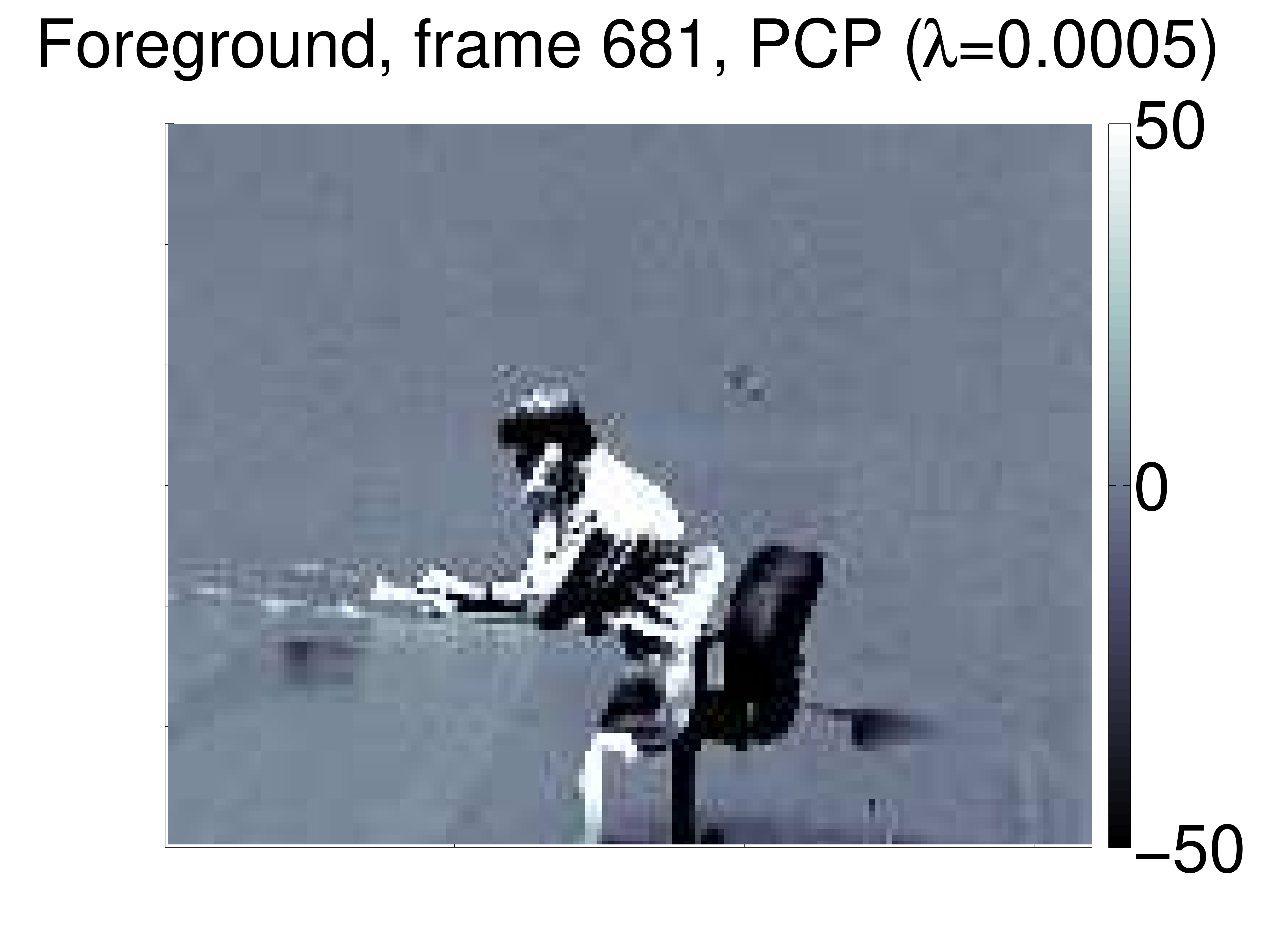}\\
\includegraphics[width=.25\columnwidth]{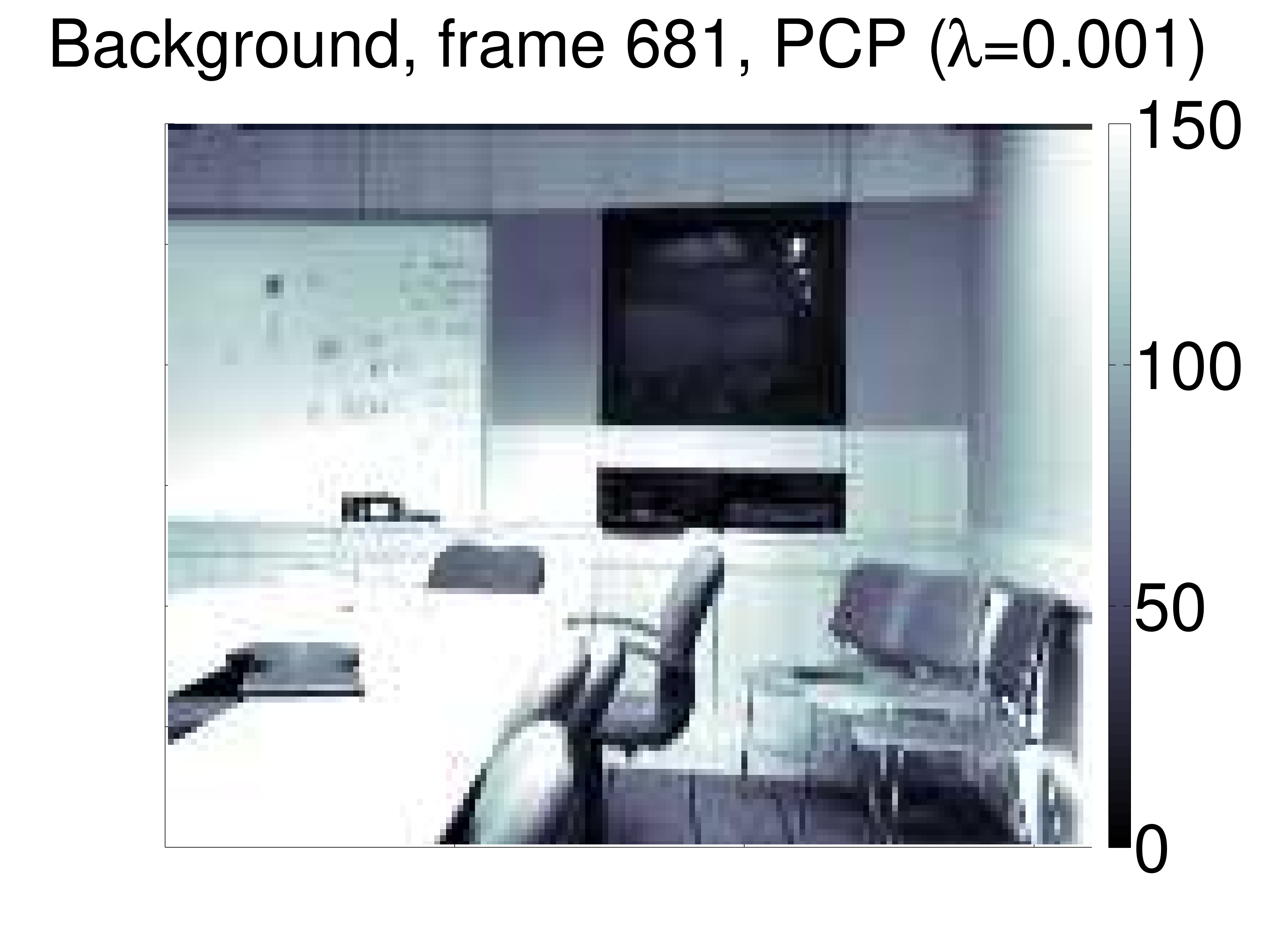}&
\includegraphics[width=.25\columnwidth]{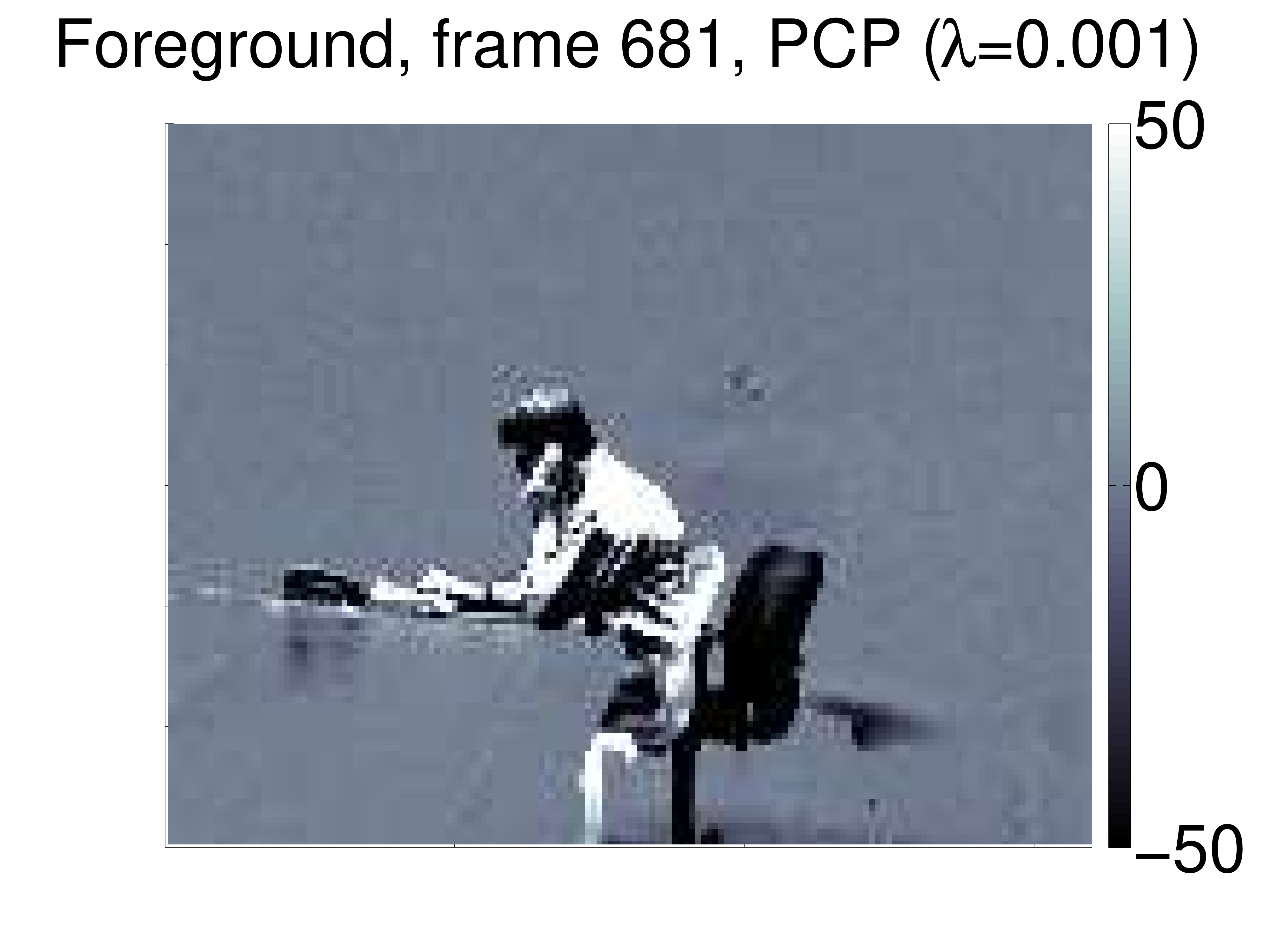}&
\includegraphics[width=.25\columnwidth]{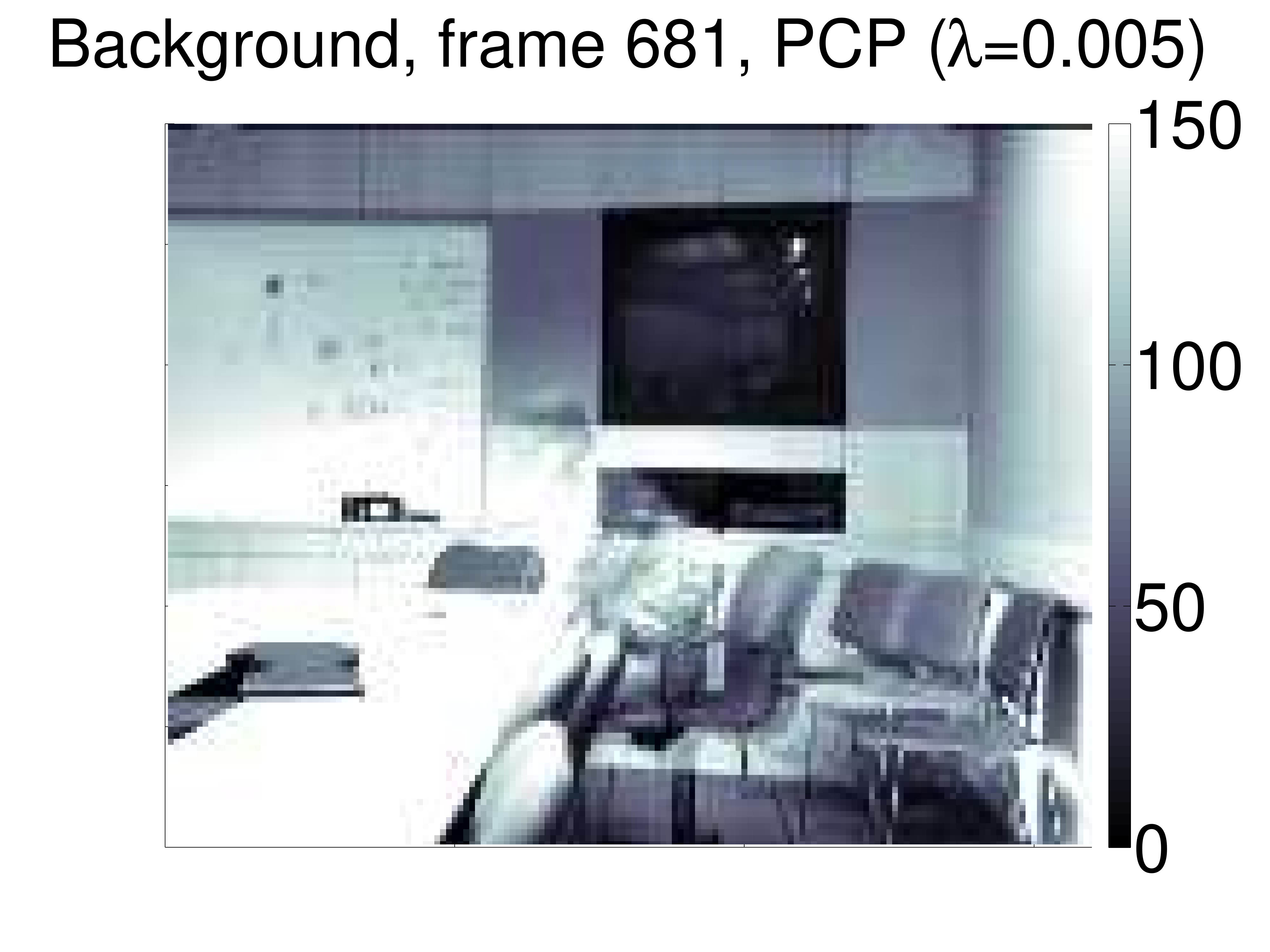}&
\includegraphics[width=.25\columnwidth]{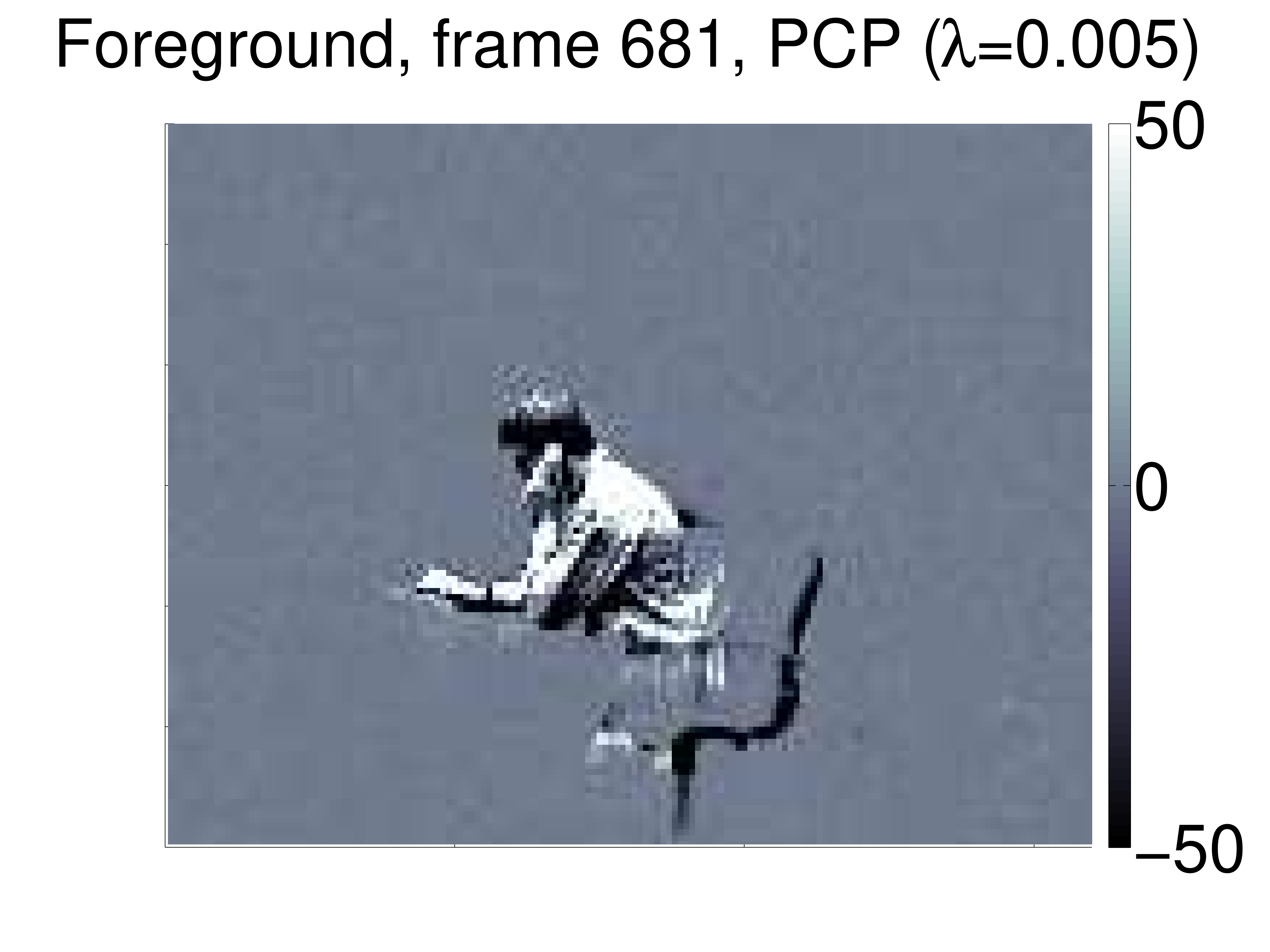}
\end{tabular}
\caption{Extracted background and foreground of frames 651 and 681 of the reduced moved object data set obtained with PCP (scaled, compare to unscaled version in Fig.~\ref{fig:bfrmopcp2})
}
\label{fig:bfrmopcp1}
\end{figure}
\clearpage

\begin{figure}
\begin{tabular}{cccc}
\includegraphics[width=.25\columnwidth]{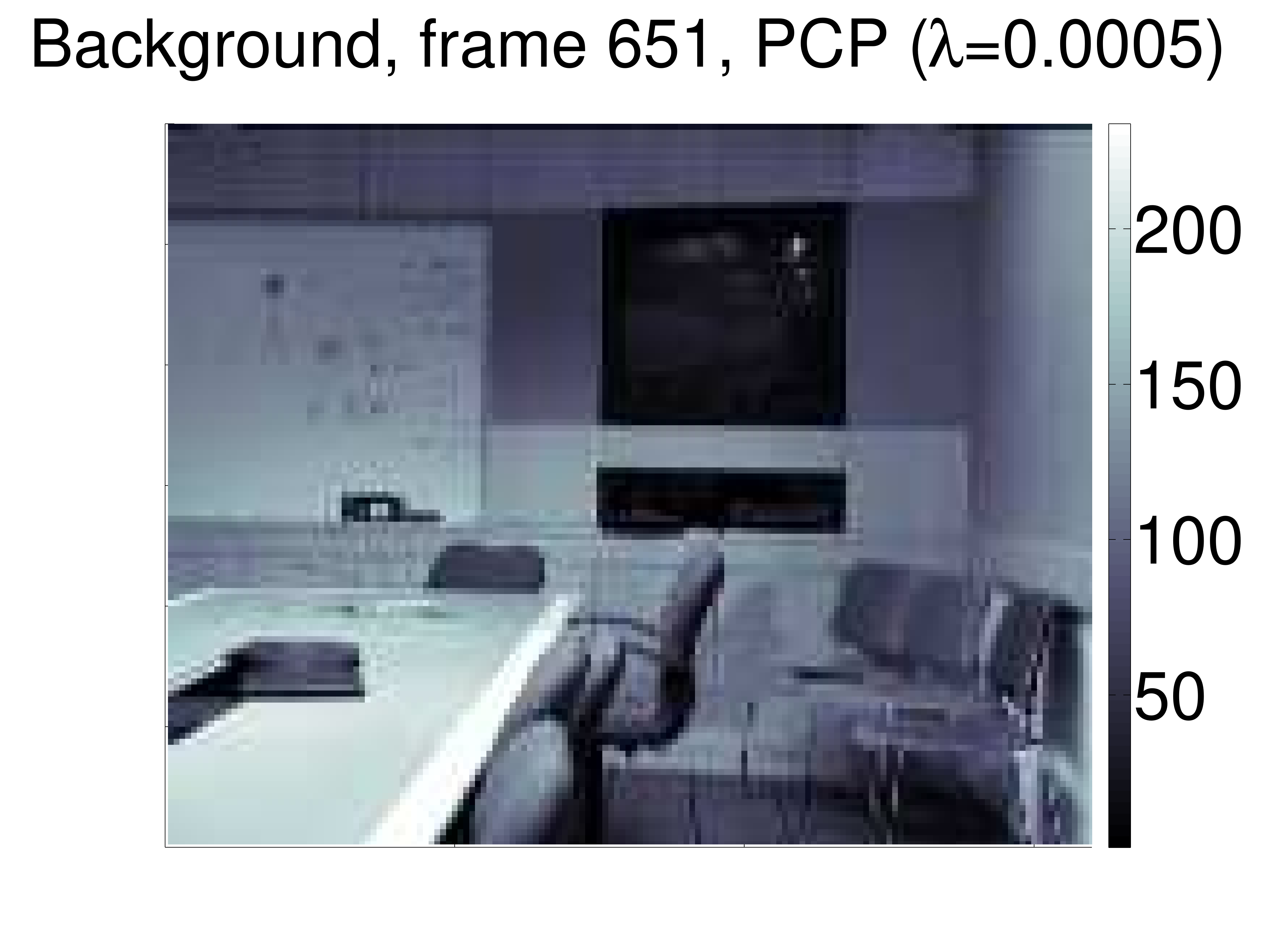} &
\includegraphics[width=.25\columnwidth]{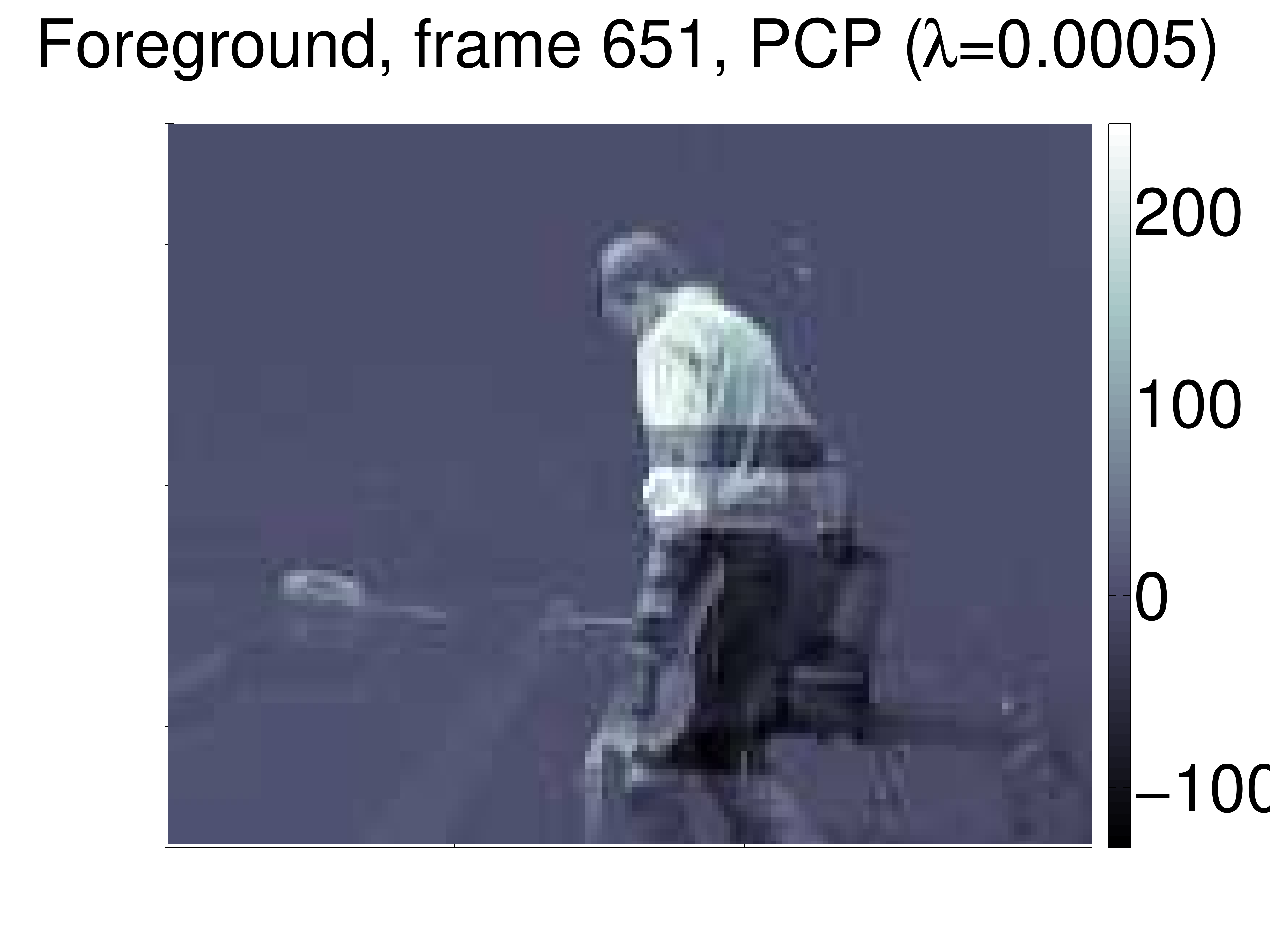} &
\includegraphics[width=.25\columnwidth]{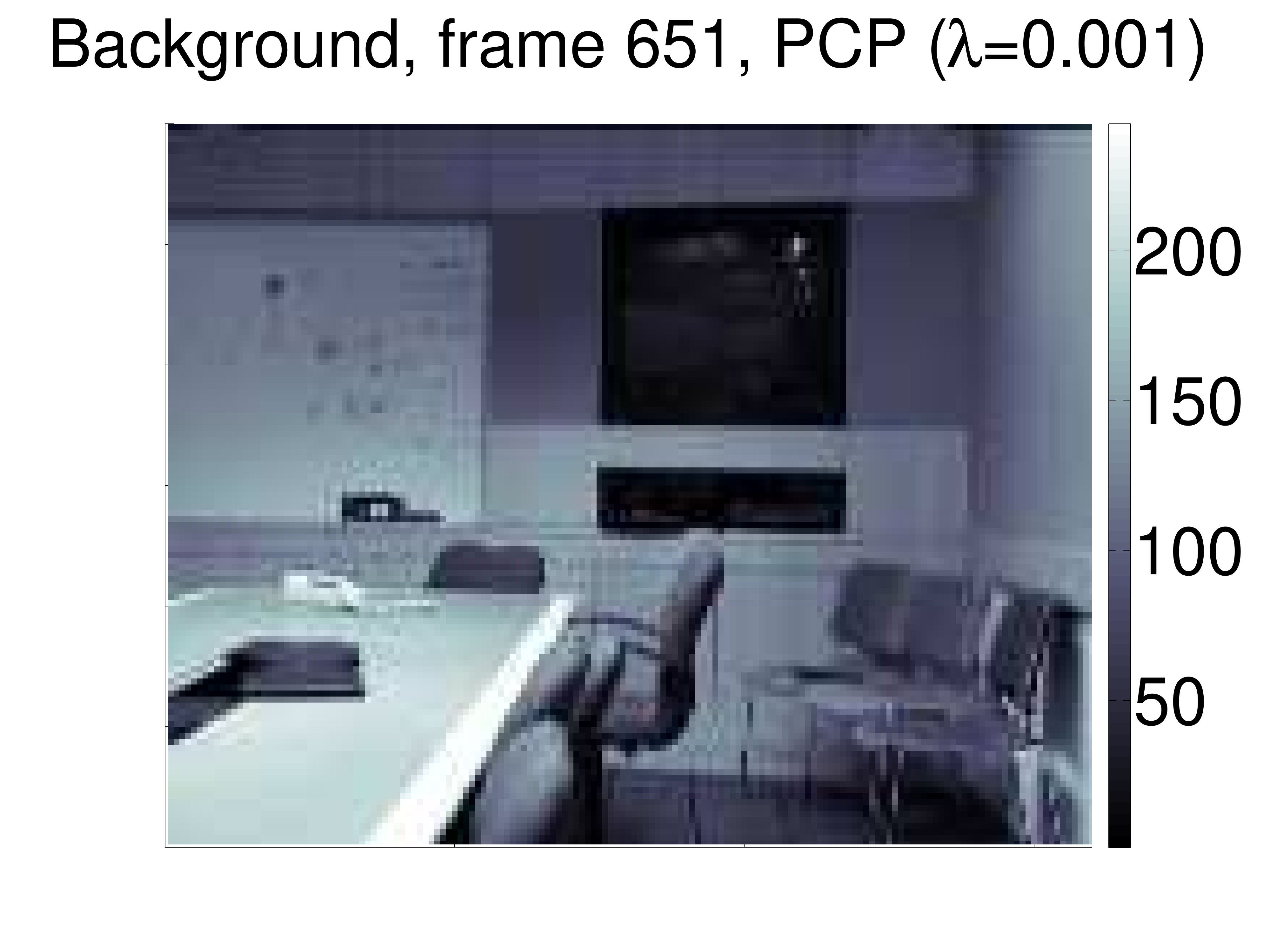} &
\includegraphics[width=.25\columnwidth]{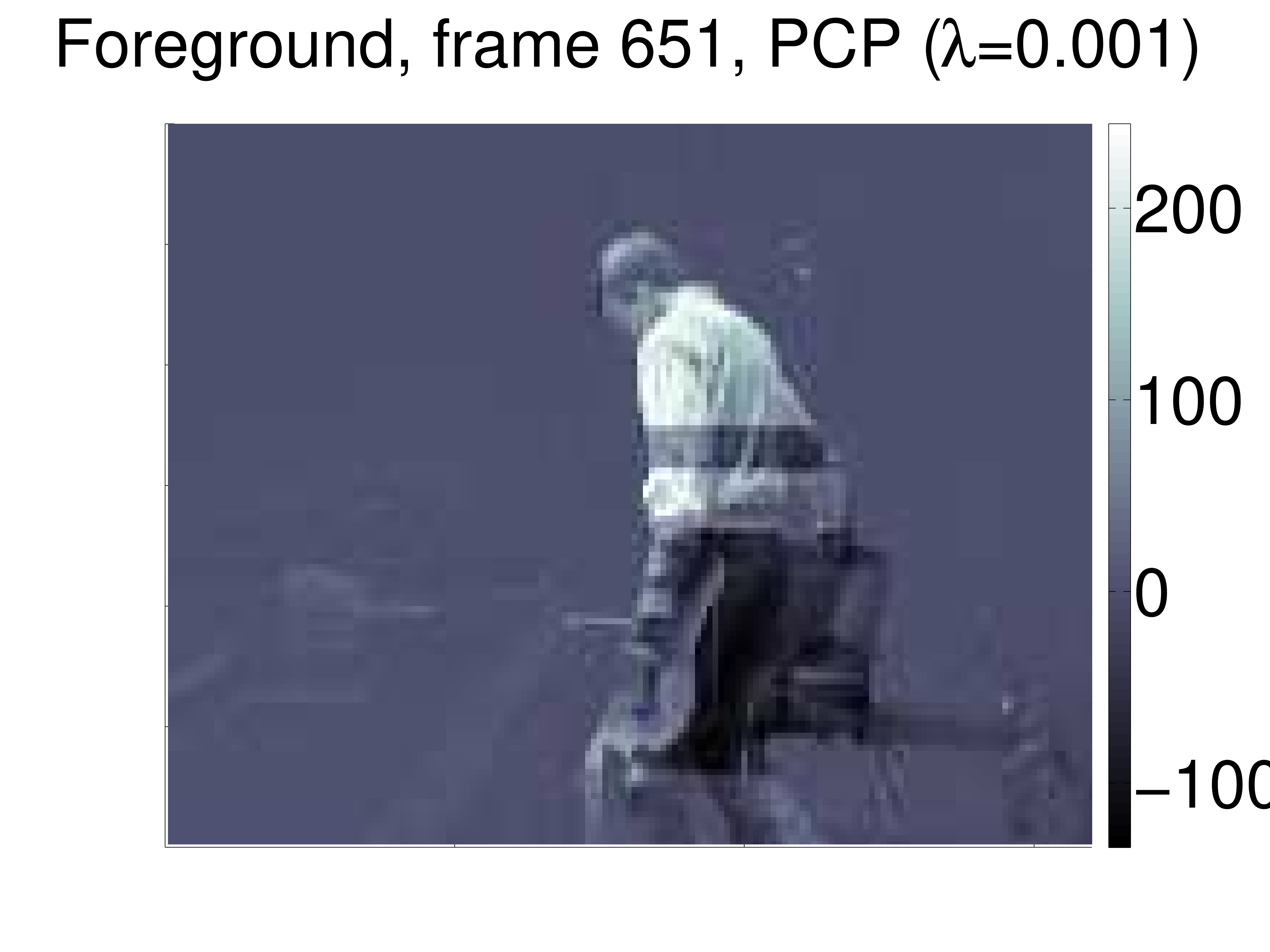} \\
\includegraphics[width=.25\columnwidth]{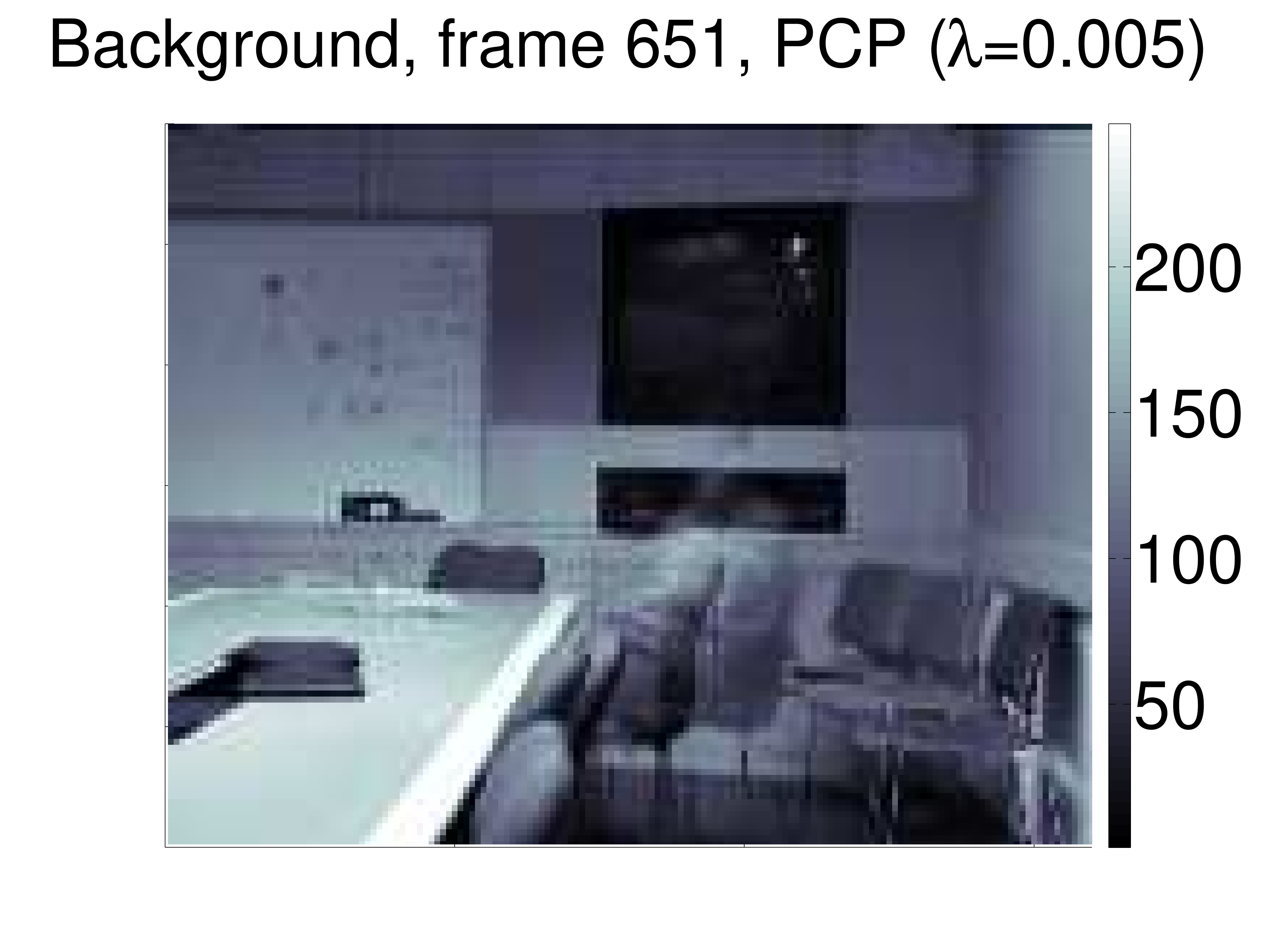} &
\includegraphics[width=.25\columnwidth]{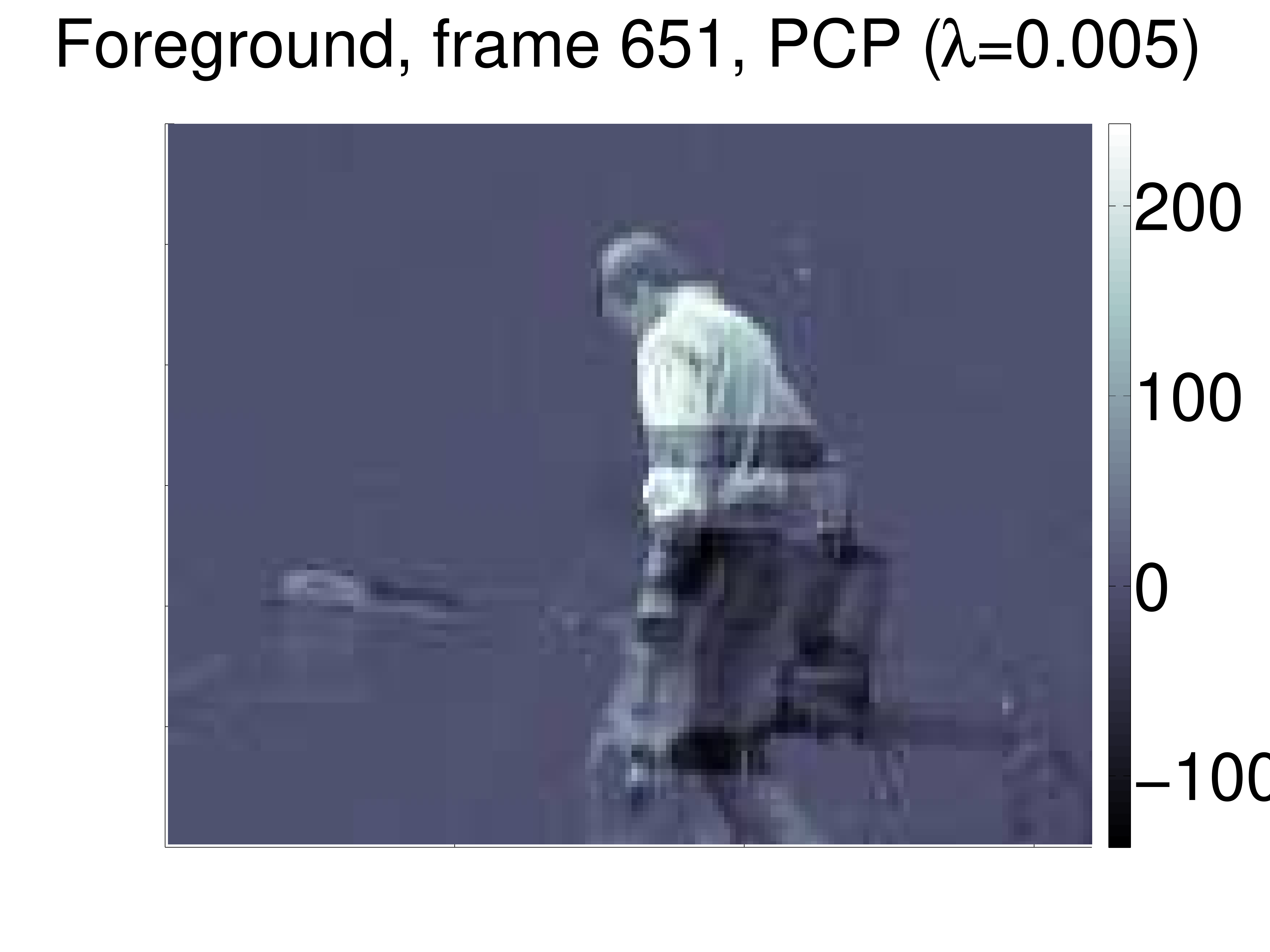} &
\includegraphics[width=.25\columnwidth]{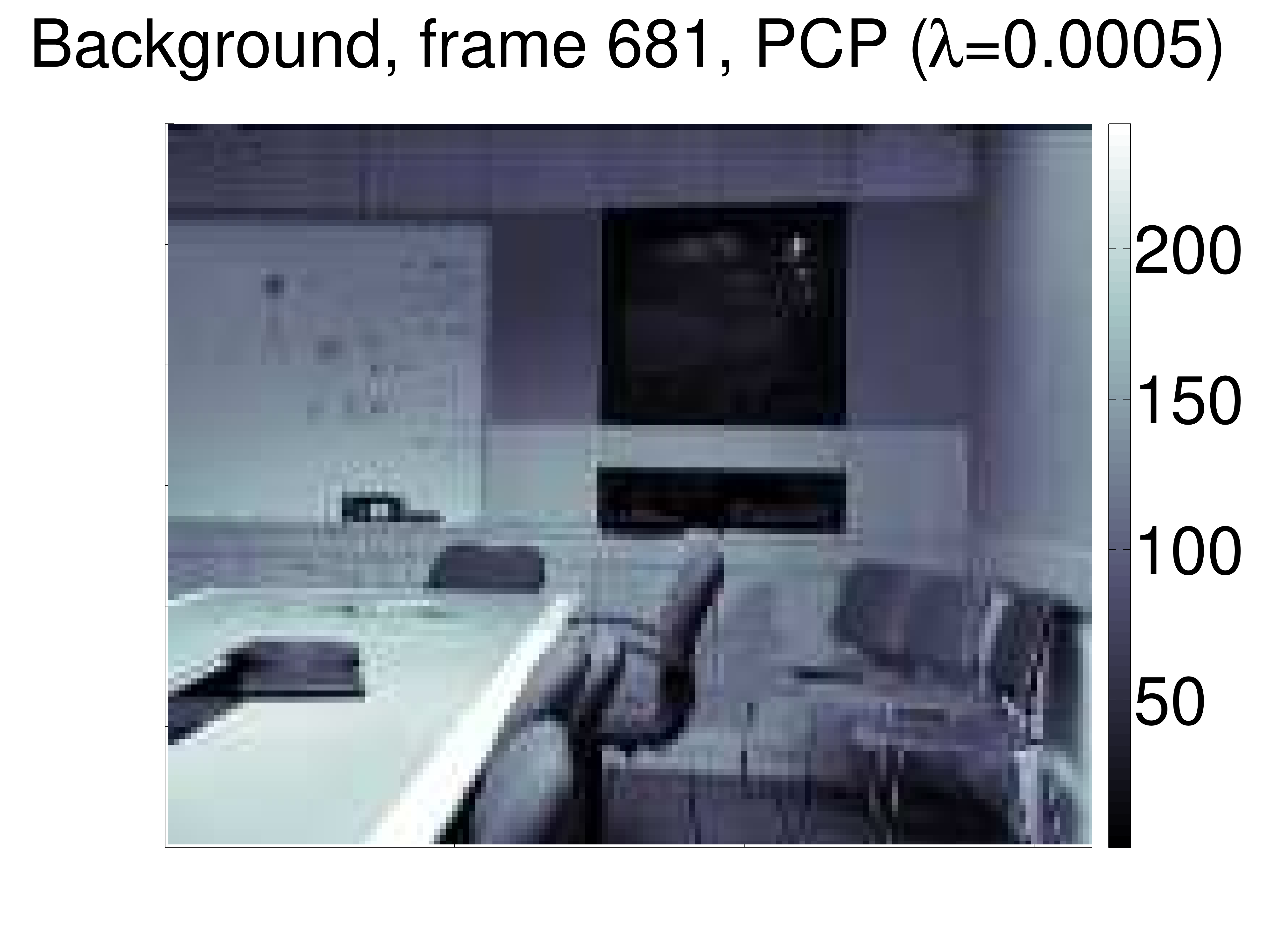}&
\includegraphics[width=.25\columnwidth]{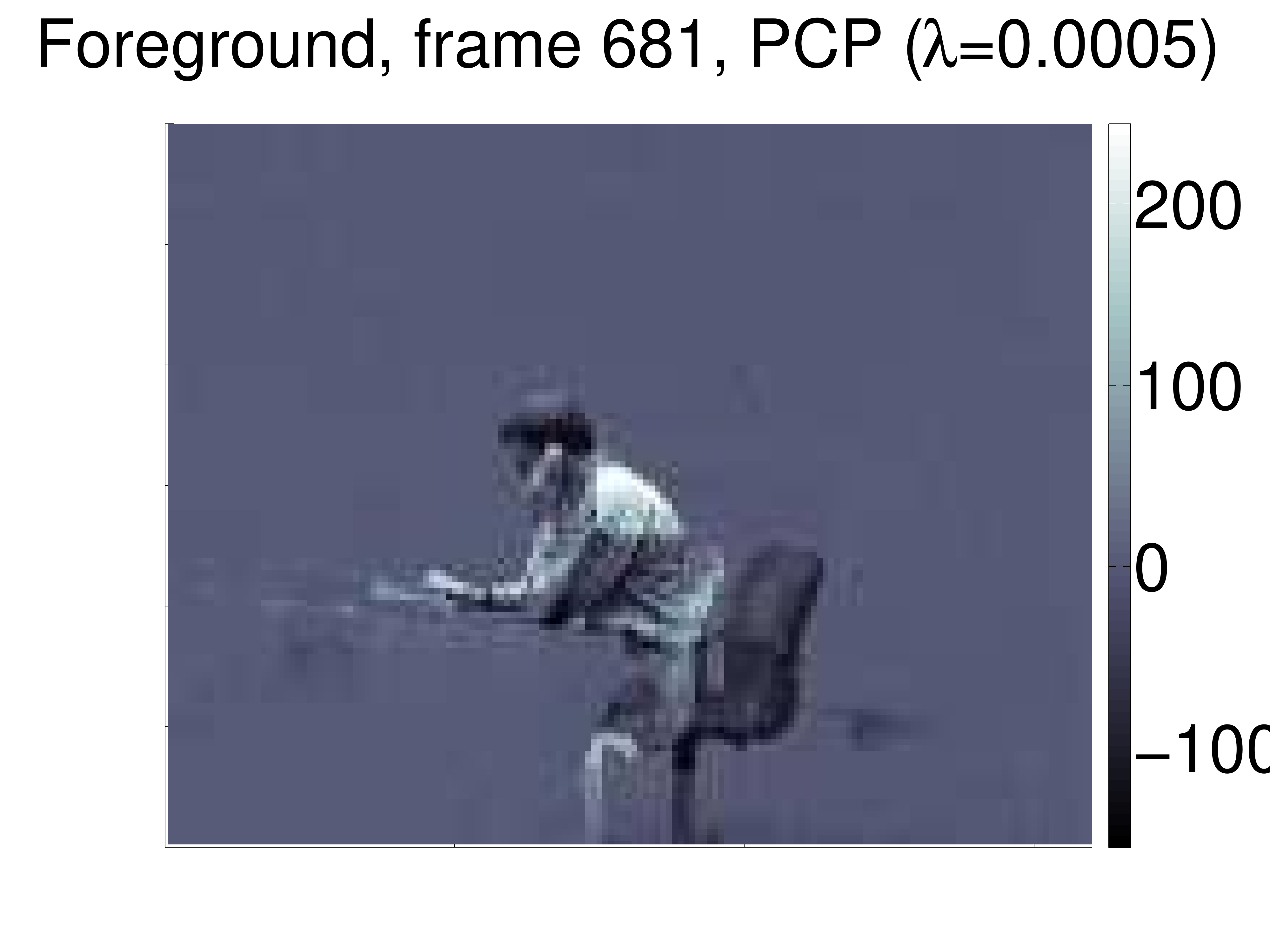}\\
\includegraphics[width=.25\columnwidth]{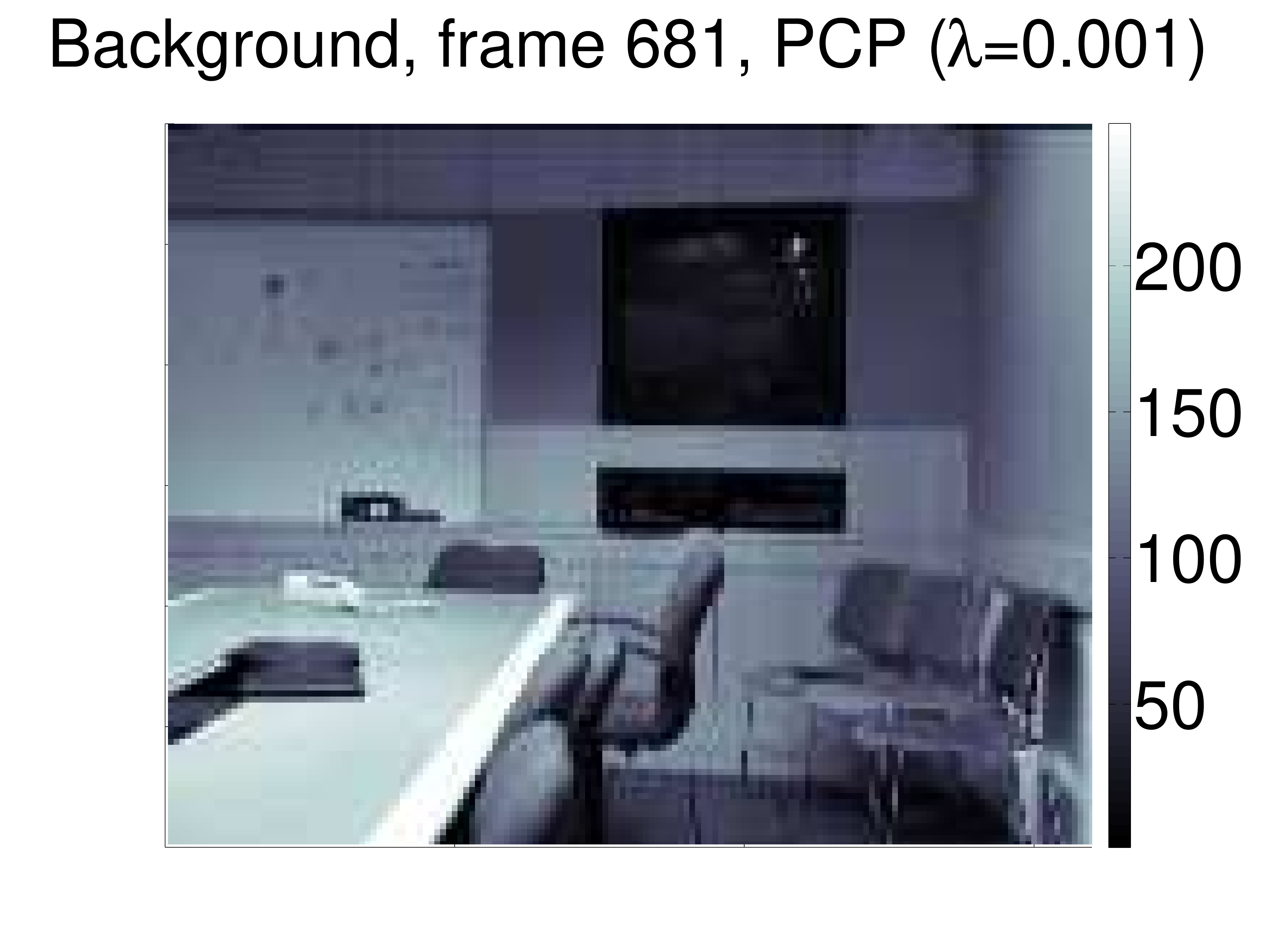}&
\includegraphics[width=.25\columnwidth]{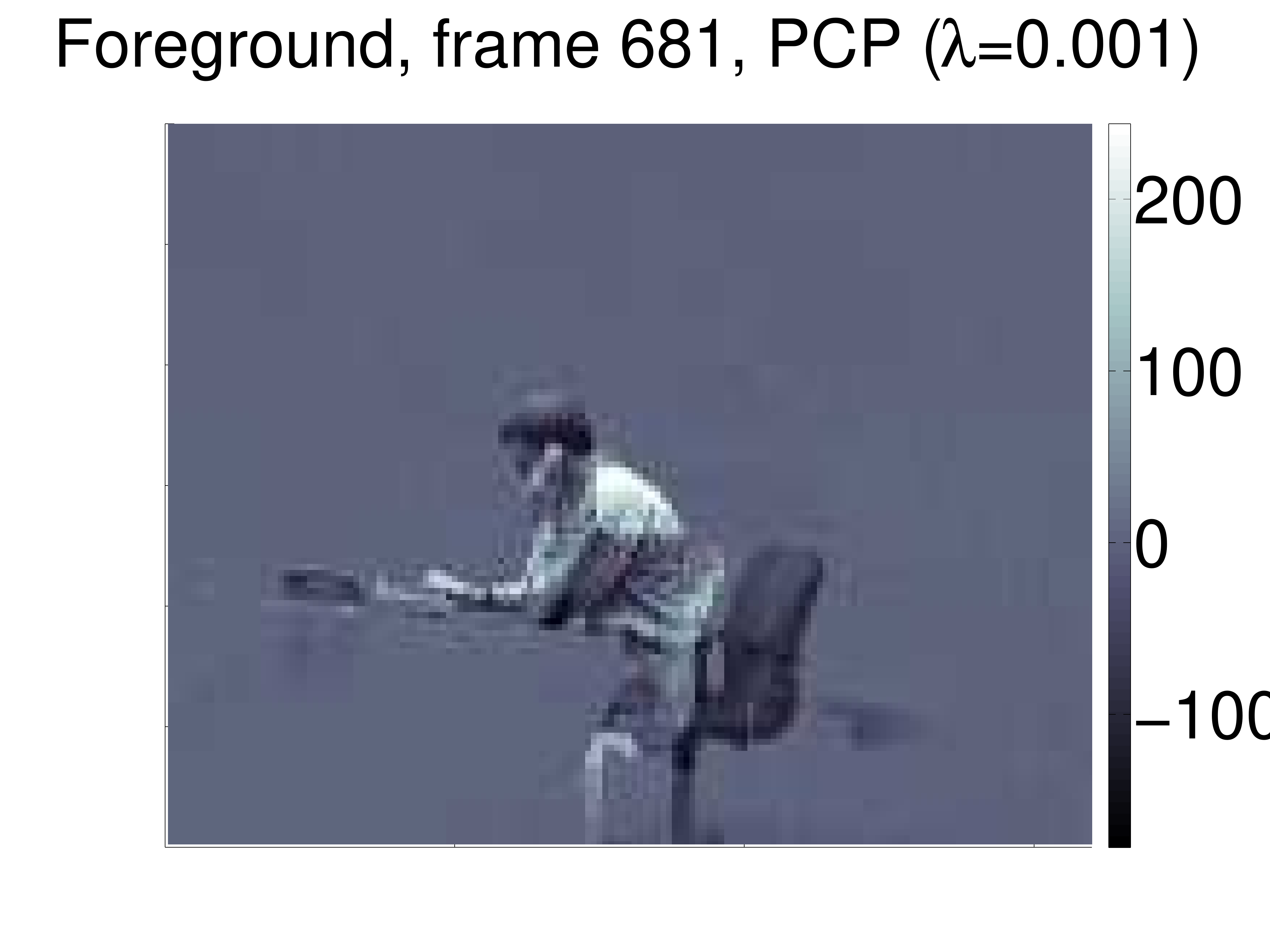}&
\includegraphics[width=.25\columnwidth]{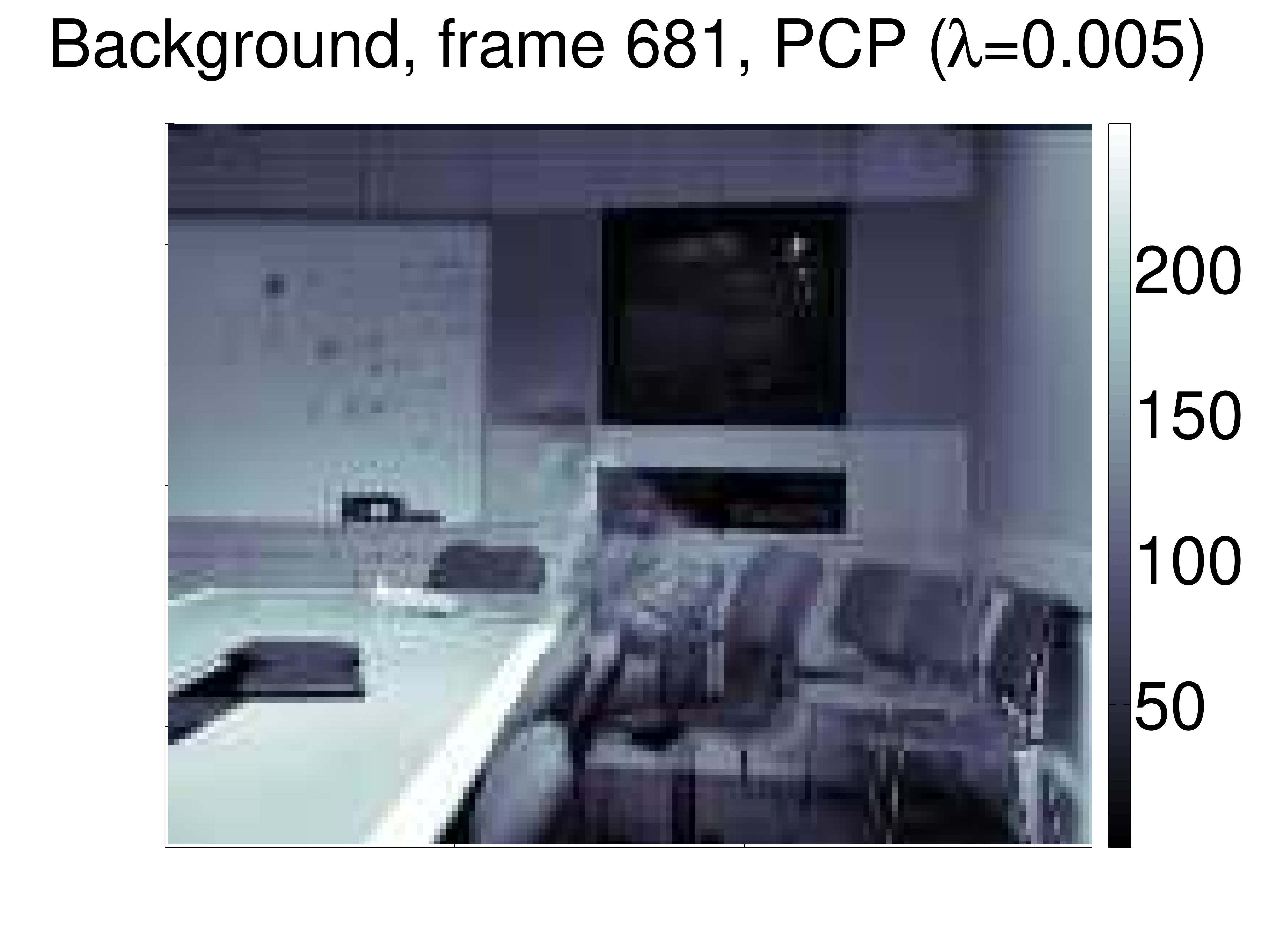}&
\includegraphics[width=.25\columnwidth]{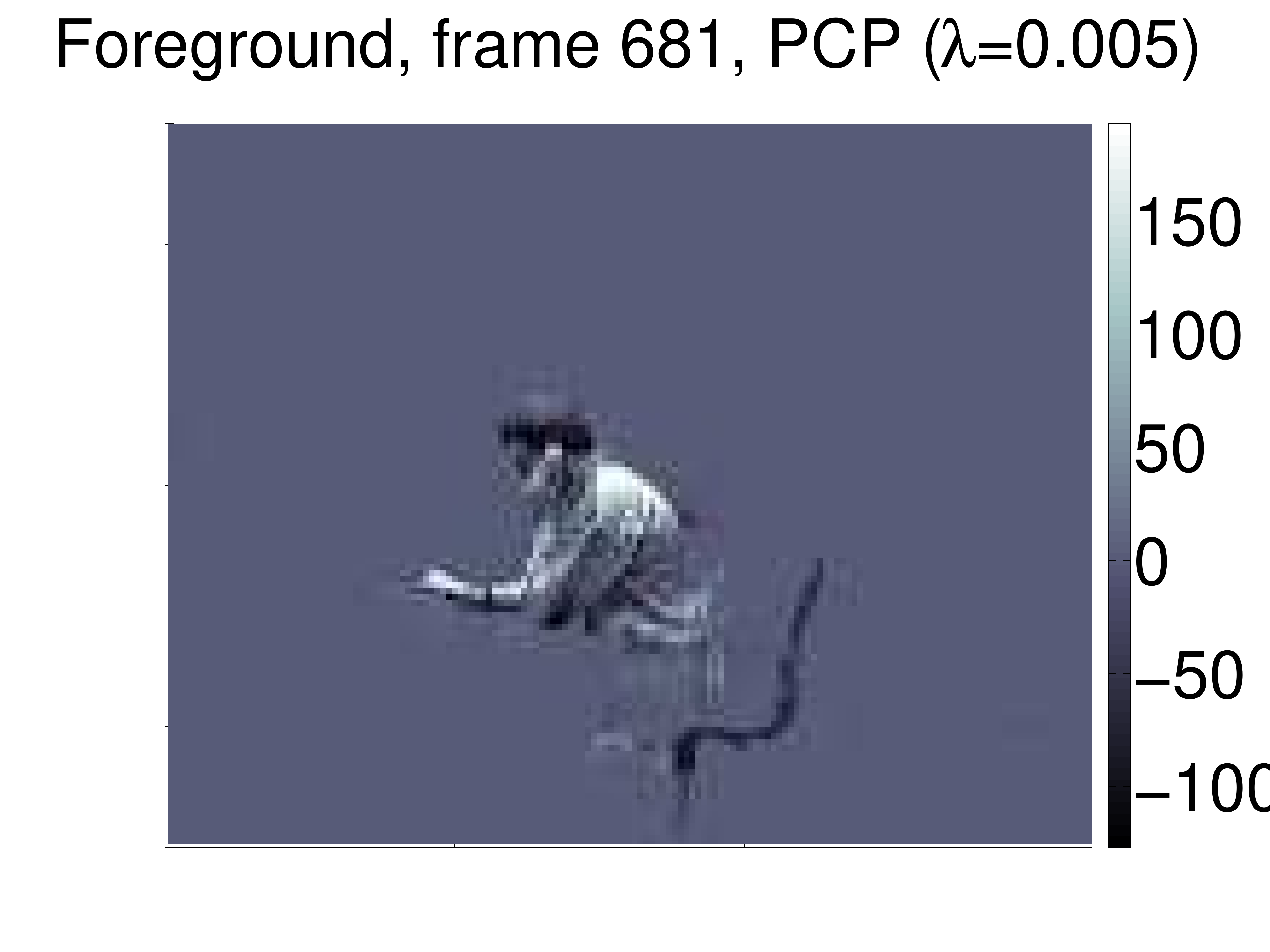}
\end{tabular}
\caption{Extracted background and foreground of frames 651 and 681 of the reduced moved object data set obtained with PCP (unscaled, compare to scaled version in Fig.~\ref{fig:bfrmopcp1})
}
\label{fig:bfrmopcp2}
\end{figure}
\clearpage

\begin{figure}
\centering
\begin{tabular}{cc}
\includegraphics[width=.5\columnwidth]{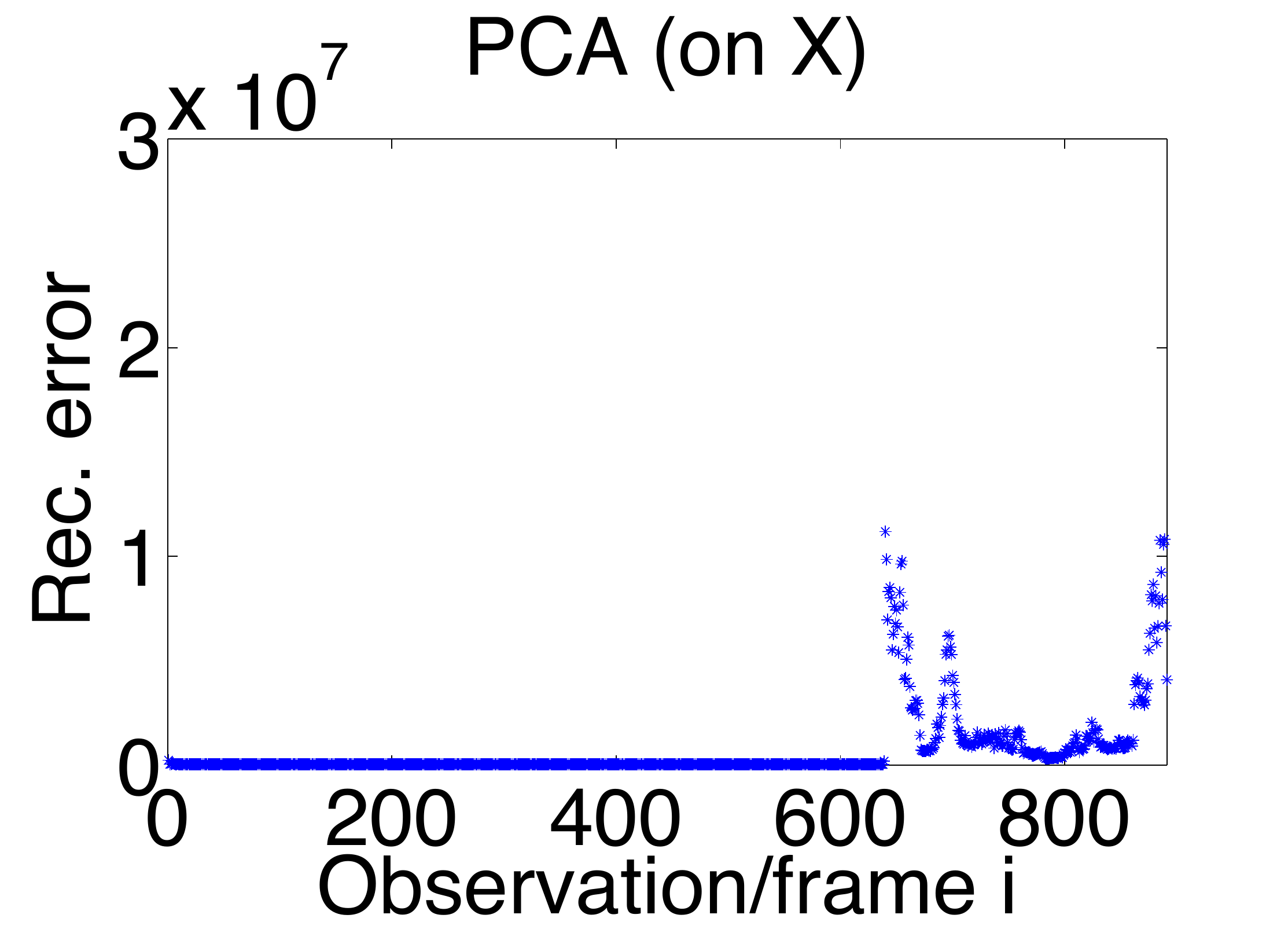} &
\includegraphics[width=.5\columnwidth]{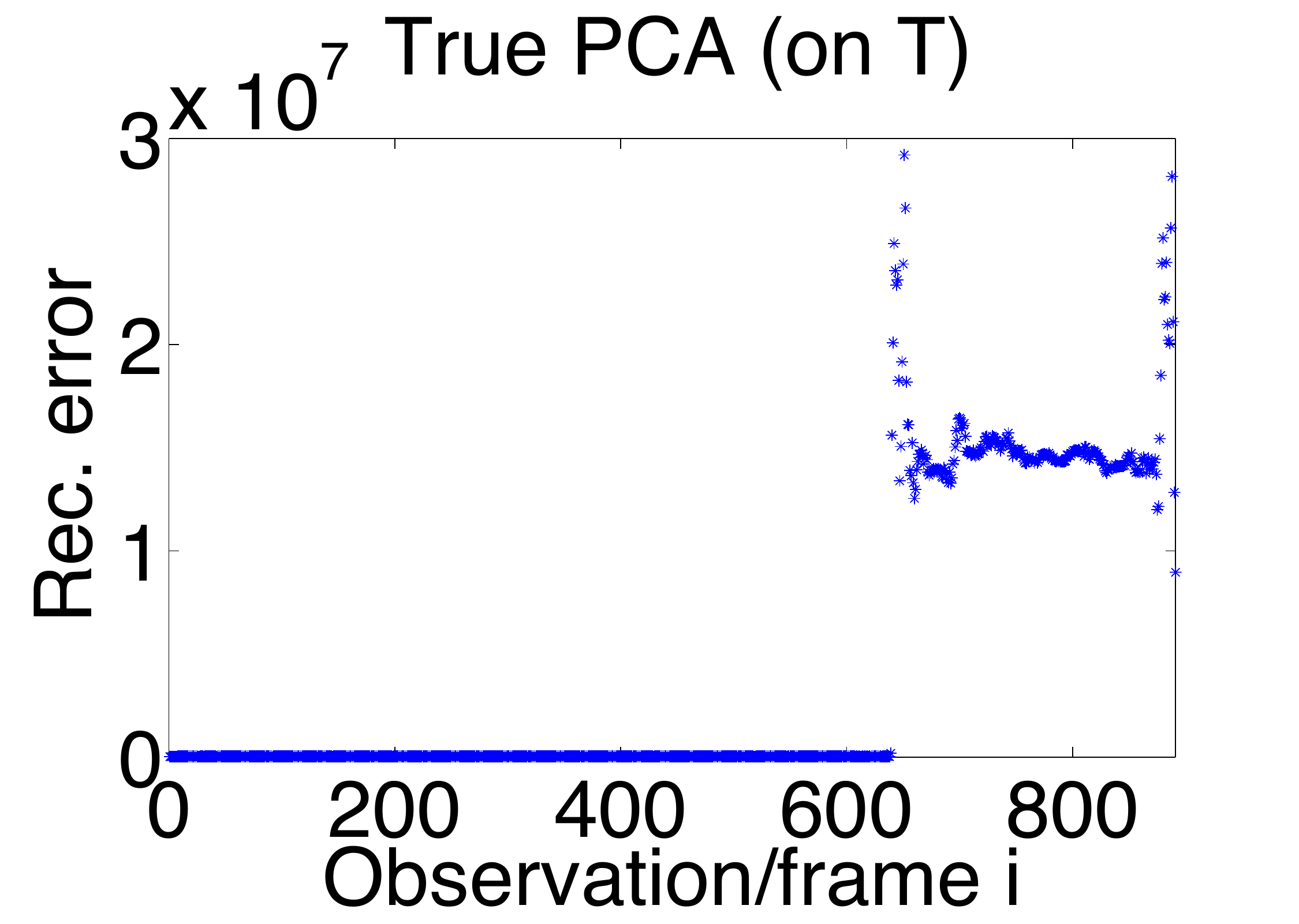} \\ \hline
\includegraphics[width=.5\columnwidth]{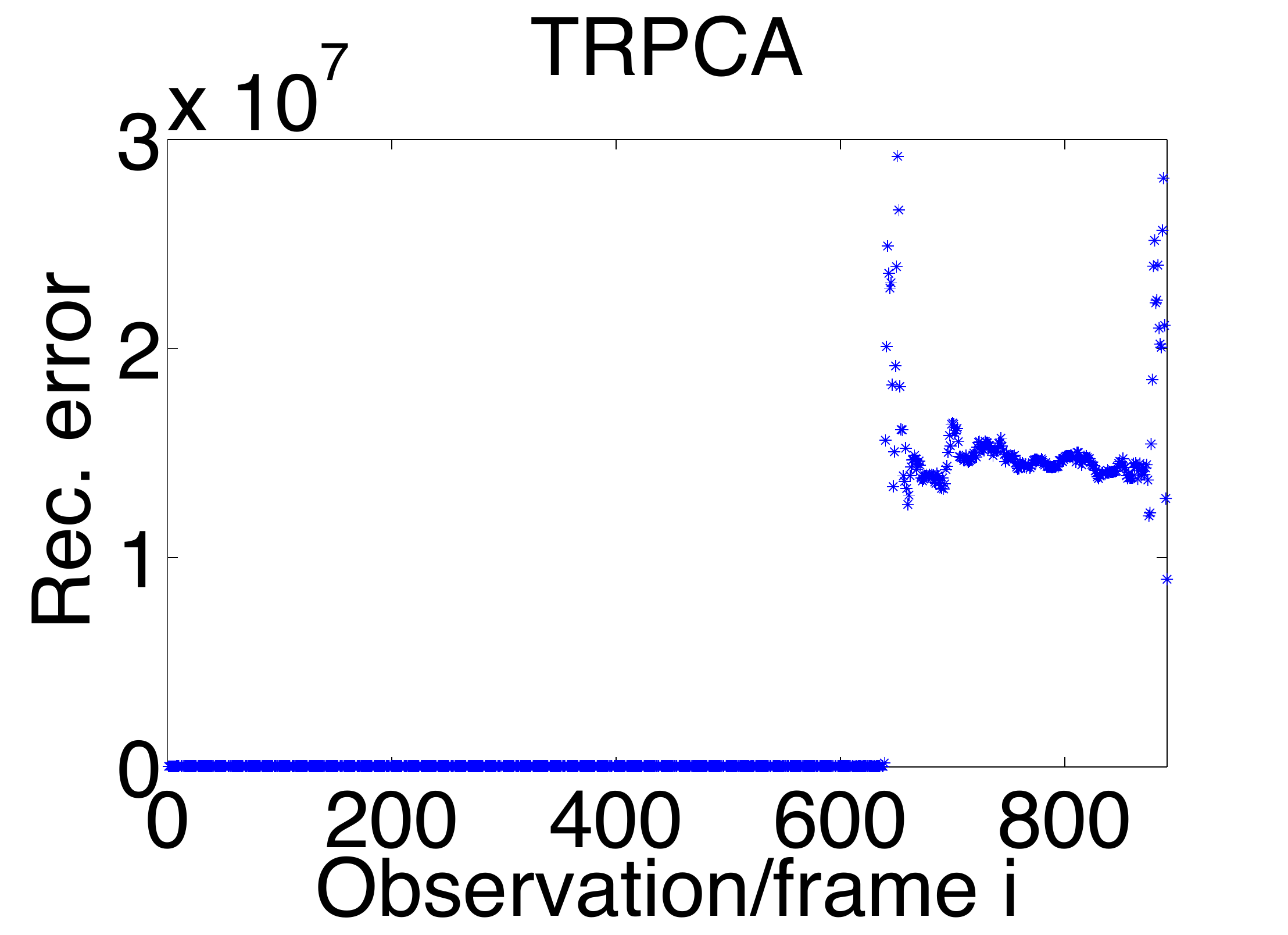} &
\includegraphics[width=.5\columnwidth]{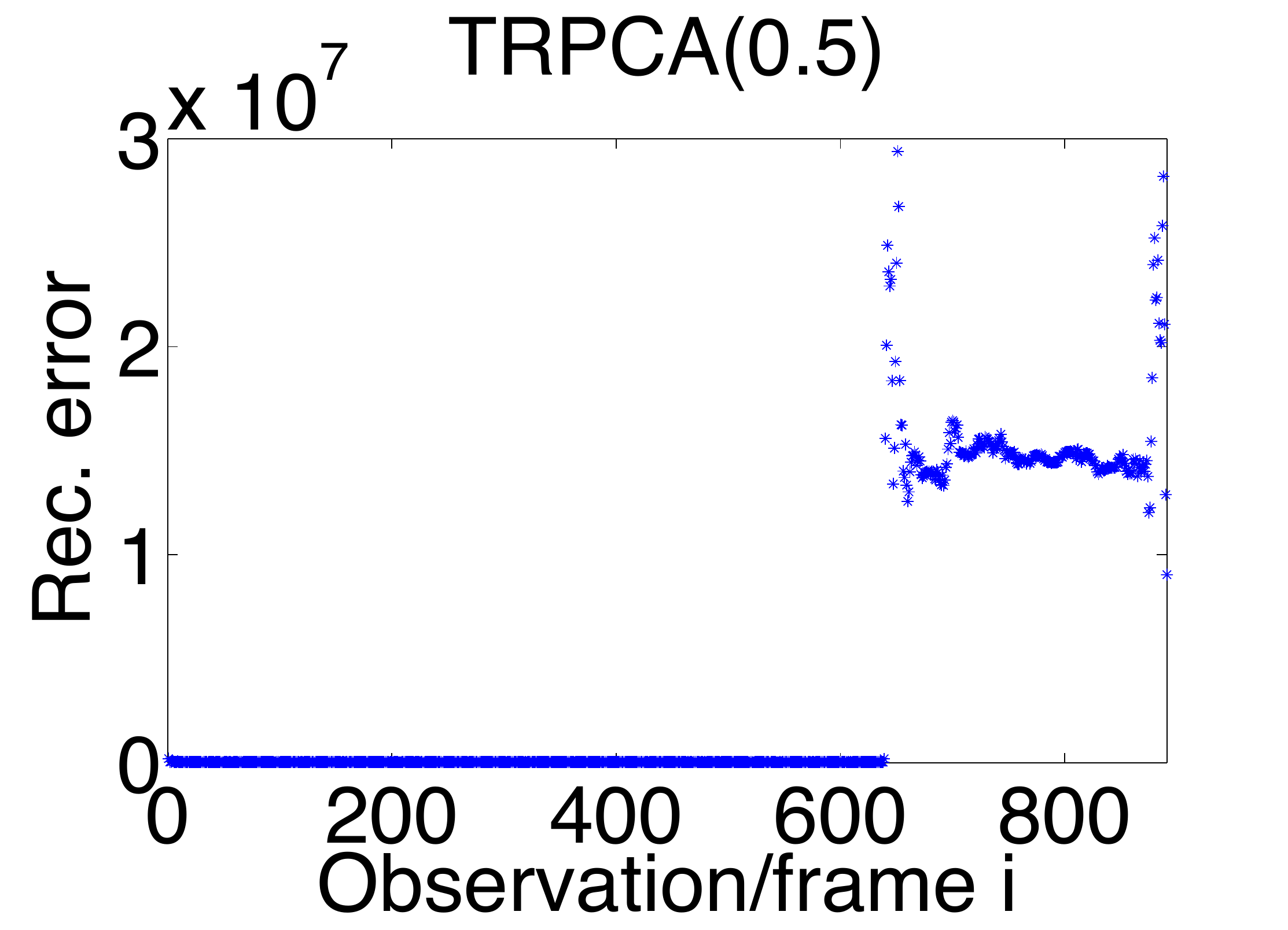} \\
\includegraphics[width=.5\columnwidth]{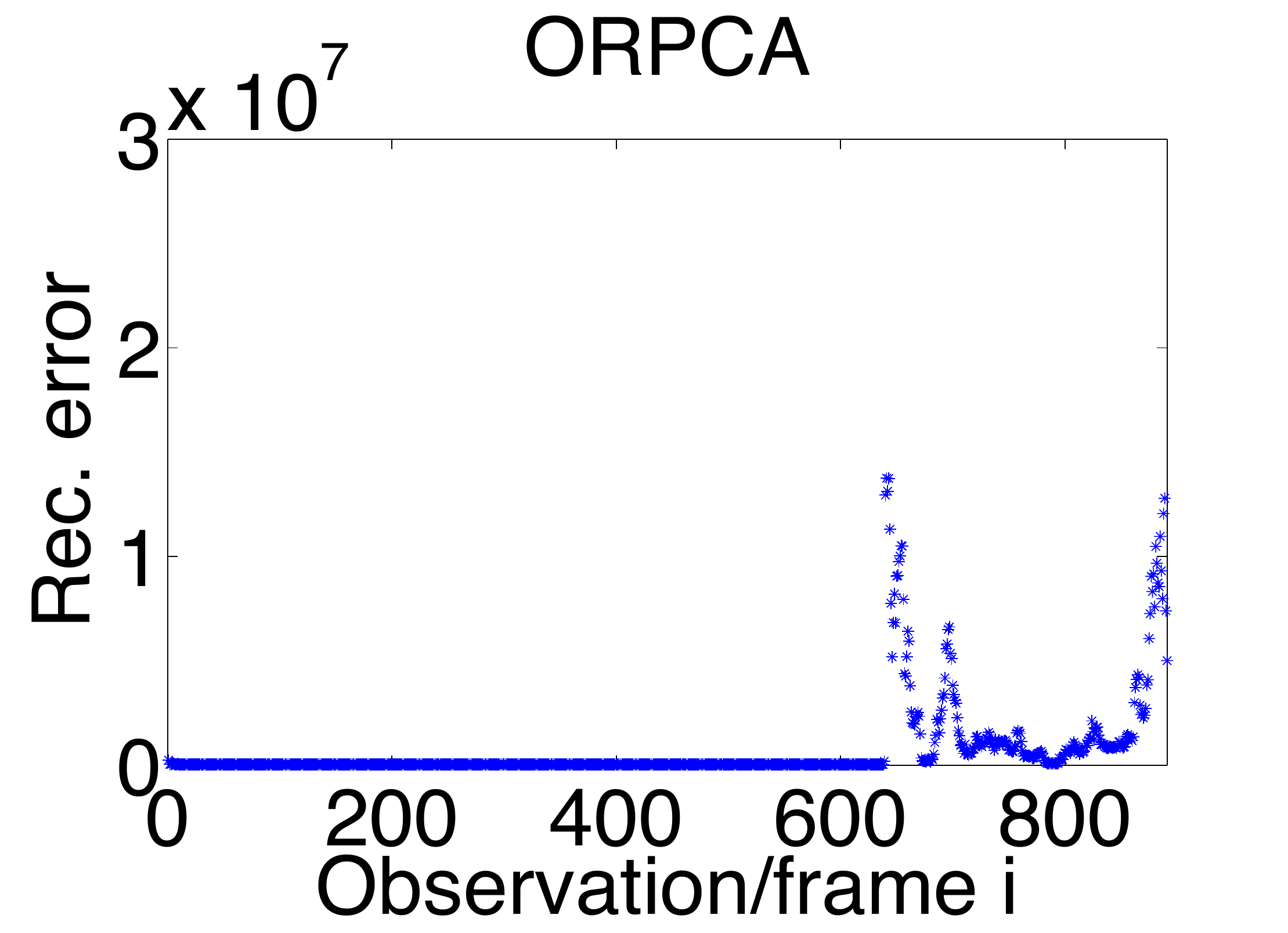} &
\includegraphics[width=.5\columnwidth]{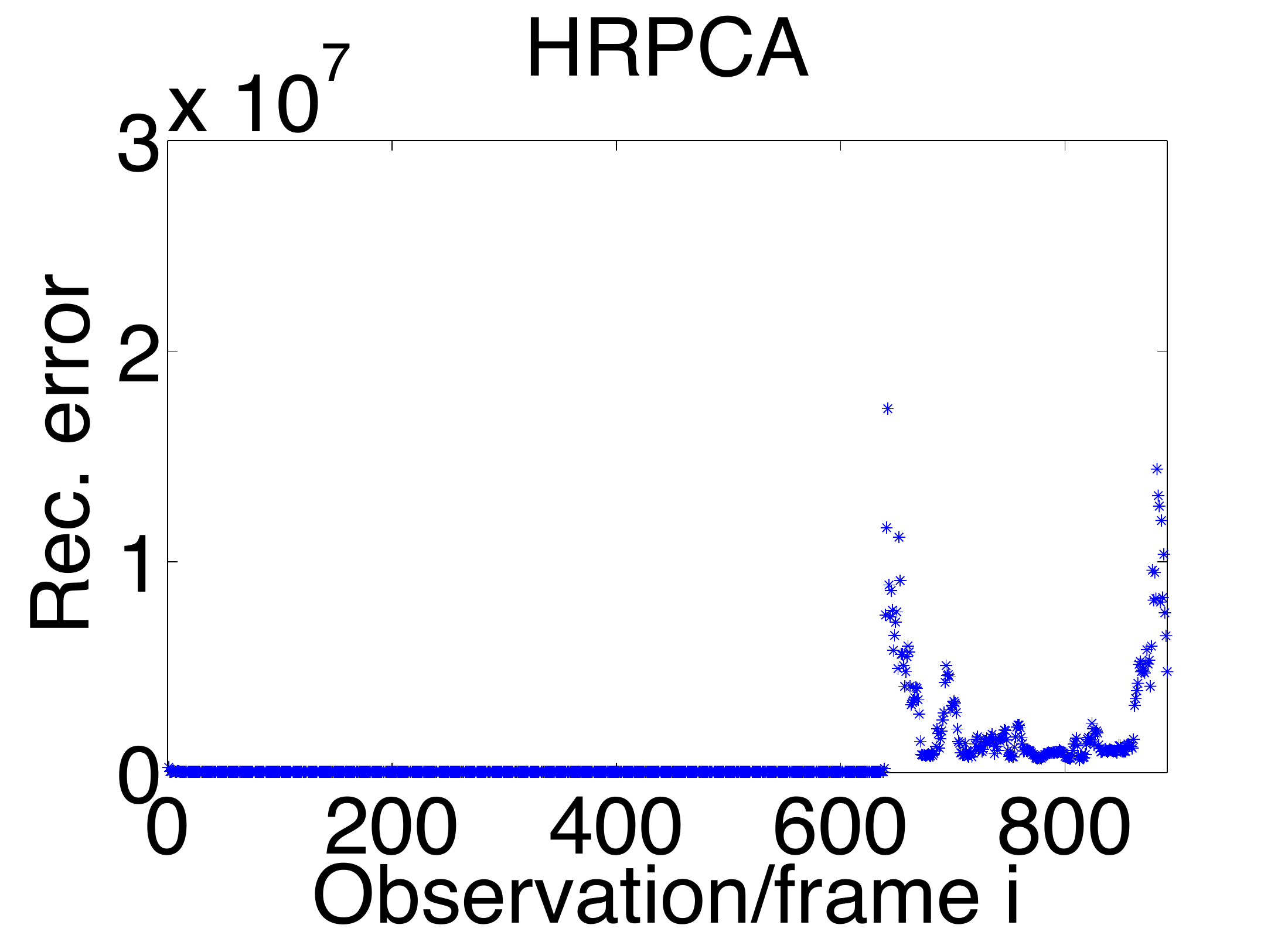} \\
\includegraphics[width=.5\columnwidth]{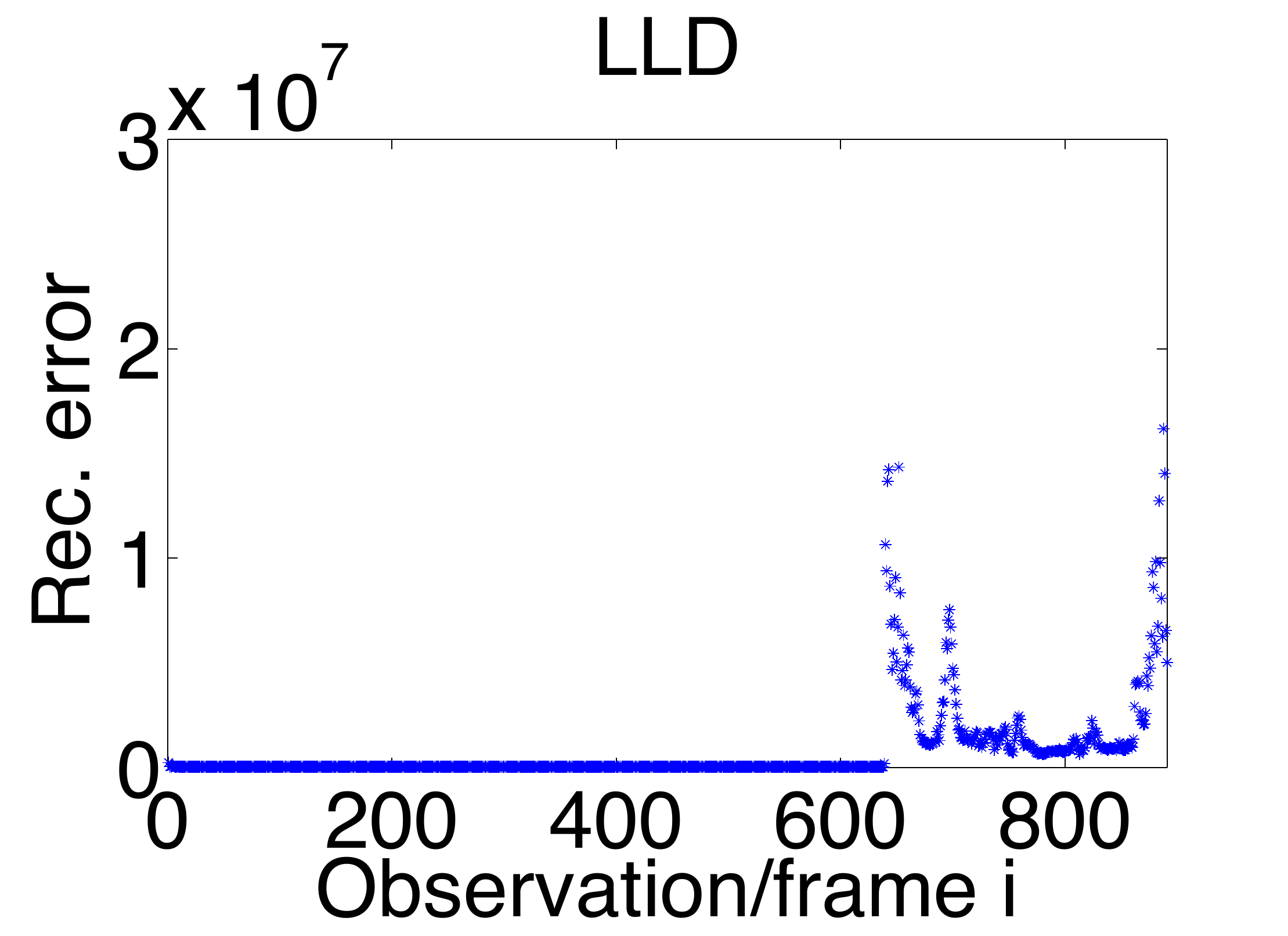} &
\includegraphics[width=.5\columnwidth]{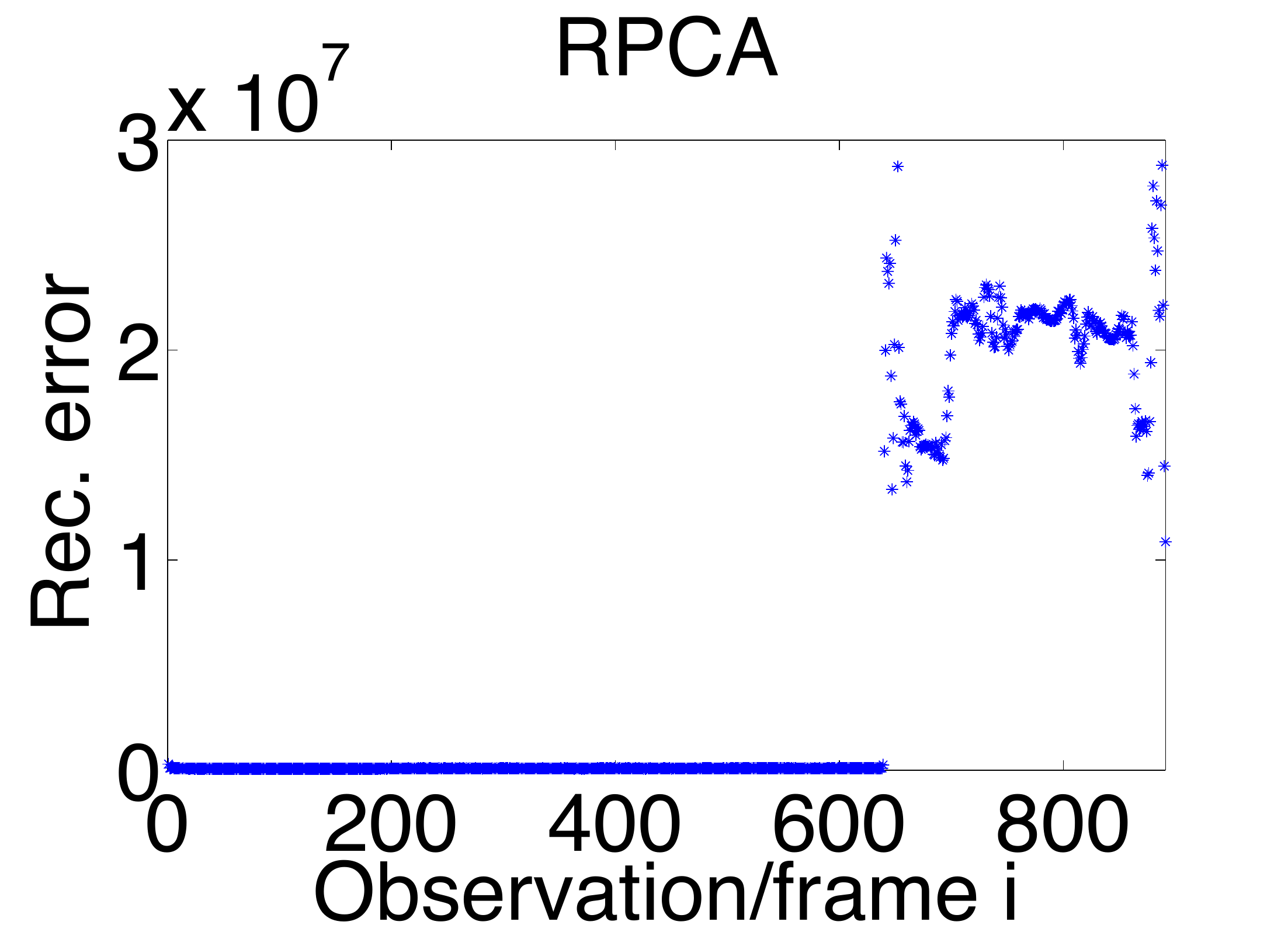} 
\end{tabular}
\caption{Reconstruction errors of different methods on the reduced moved object data set (analogous to Fig.~\ref{fig:imagesresWS}). One can see that TRPCA/TRPCA(0.5) again recovers the reconstruction errors of the true data almost perfectly as opposed to all other methods. However, note that RPCA does also well in having large reconstruction error for all frames containing the person
}
\label{fig:rmores}
\end{figure}
\clearpage


\end{document}